
\documentclass[10pt,twocolumn,letterpaper]{article}

\usepackage{amsmath,amssymb}
\AtBeginDocument{%
  \setlength{\abovedisplayskip}{3pt}
  \setlength{\belowdisplayskip}{3pt}
  \setlength{\abovedisplayshortskip}{0pt}
  \setlength{\belowdisplayshortskip}{2pt}
}
\setlength{\textfloatsep}{6pt plus 2pt minus 2pt}
\setlength{\intextsep}{6pt plus 2pt minus 2pt}
\usepackage{mathtools}  

\usepackage{amsthm}
\newtheorem{theorem}{Theorem}
\newtheorem{lemma}{Lemma}
\newtheorem{proposition}{Proposition}
\newtheorem{corollary}{Corollary}  
\newtheorem{assumption}{Assumption}

\newtheorem{property}{Property}
\theoremstyle{remark}
\newtheorem{remark}{Remark}

\usepackage{array}
\usepackage{booktabs}
\usepackage{multirow}
\usepackage{makecell}
\usepackage{tabularx}
\usepackage{siunitx}
\usepackage[table]{xcolor}

\usepackage{float}

\usepackage{placeins}
\usepackage{subcaption}
\usepackage{adjustbox}
\usepackage{caption}
\captionsetup[figure]{skip=6pt}

\usepackage{algorithm}
\usepackage{algorithmic}

\usepackage{pifont}
\newcommand{\cmark}{\ding{51}}
\newcommand{\xmark}{\ding{55}}

\newcolumntype{C}{>{\centering\arraybackslash}X}
\definecolor{LightRed}{RGB}{255,182,193} 
\definecolor{LightBlue}{RGB}{173, 216, 230}

 \usepackage[pagenumbers]{cvpr} 



\usepackage{bm}








%
\definecolor{cvprblue}{rgb}{0.21,0.49,0.74}
\usepackage[pagebackref,breaklinks,colorlinks,allcolors=cvprblue]{hyperref}


\title{ILoRA: Federated Learning with Low-Rank Adaptation for
	Heterogeneous Client Aggregation}

\author{First Author\\
Junchao Zhou\\
College of Intelligence and Computing, Tianjin University\\
{\tt\small junchaozhouu@gmail.com}
\and
Second Author\\
Institution2\\
College of Intelligence and Computing, Tianjin University\\
{\tt\small junkangliukk@gmail.com}
\and
Fanhua Shang\\
School of Computer Science and Technology, Tianjin University, Tianjin, China\\
{\tt\small fhshang@tju.edu.cn}
}

\begin{document}
\maketitle
\begin{abstract}
Federated Learning with Low-Rank Adaptation (LoRA) faces three critical challenges under client heterogeneity: (1) \textbf{Initialization-Induced Instability} due to random initialization misaligning client subspaces; (2) \textbf{Rank Incompatibility and Aggregation Error} when averaging LoRA parameters of different ranks, which biases the global model; and (3) exacerbated \textbf{Client Drift under Non-IID Data}, impairing generalization. To address these challenges, we propose \textbf{ILoRA}, a unified framework that integrates three core innovations: a \textbf{QR-based orthonormal initialization} to ensure all clients start in a coherent subspace; a \textbf{Concatenated QR Aggregation} mechanism that fuses heterogeneous-rank updates via concatenation and decomposition, preserving information while maintaining dimension alignment; and an \textbf{AdamW optimizer with rank-aware control variates} to correct local updates and mitigate client drift. Supported by \textbf{theoretical convergence guarantees}, extensive experiments on vision and NLP benchmarks demonstrate that ILoRA consistently achieves superior accuracy and convergence stability compared to existing federated LoRA methods.
\end{abstract}    

\section{Introduction}

\label{sec:intro}
\begin{figure}[tb]
	\centering
	\includegraphics[width=\linewidth]{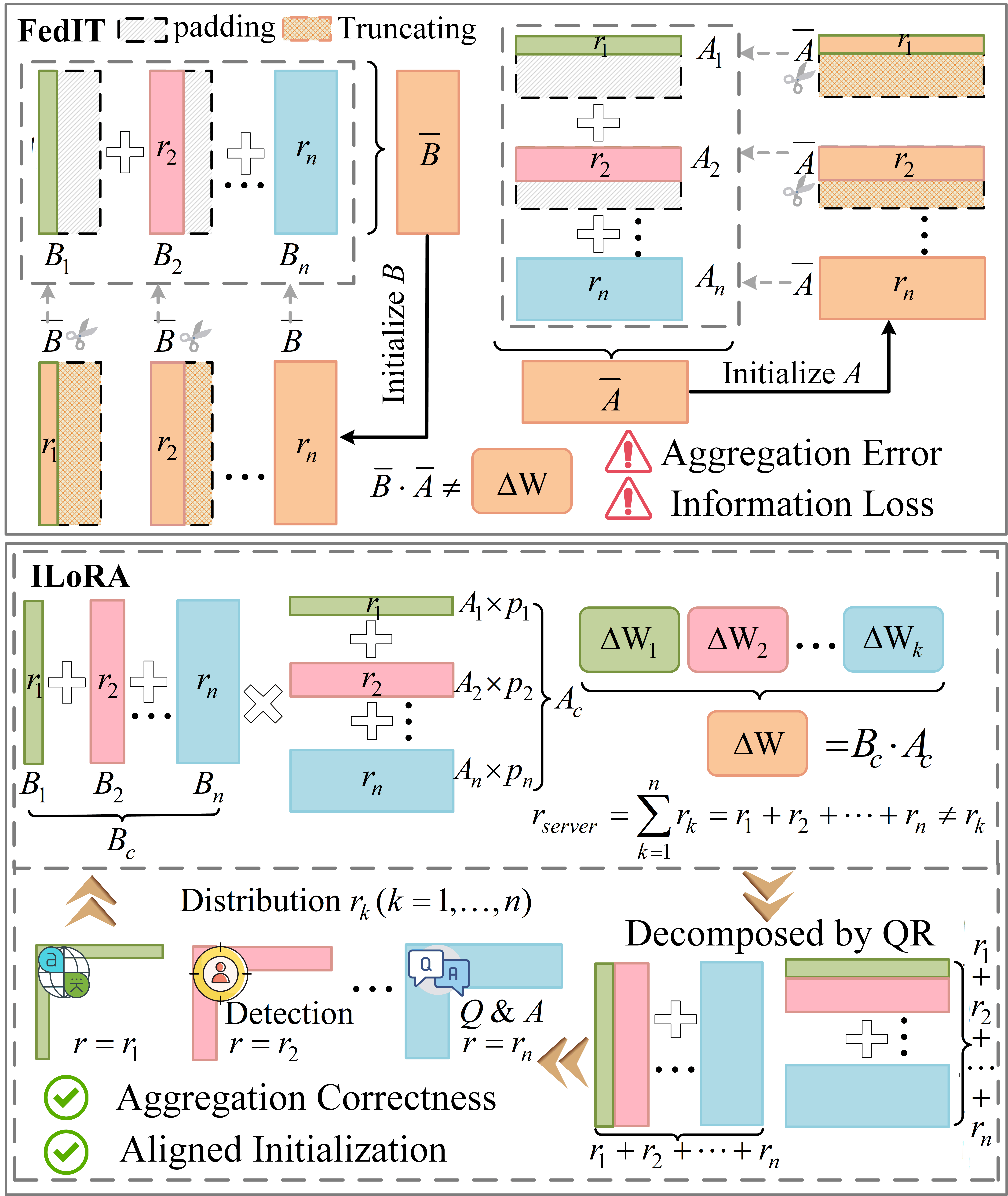}
	\caption{
		Comparison of federated LoRA methods: \textbf{FedIT} (aggregation error and information loss) and \textbf{ILoRA (ours)} (both correct aggregation and aligned initialization).
	}
	\label{fig:lora_aggregation_comparison}
\end{figure}
\begin{figure*}[h!]
	\centering
	\includegraphics[width=\textwidth]{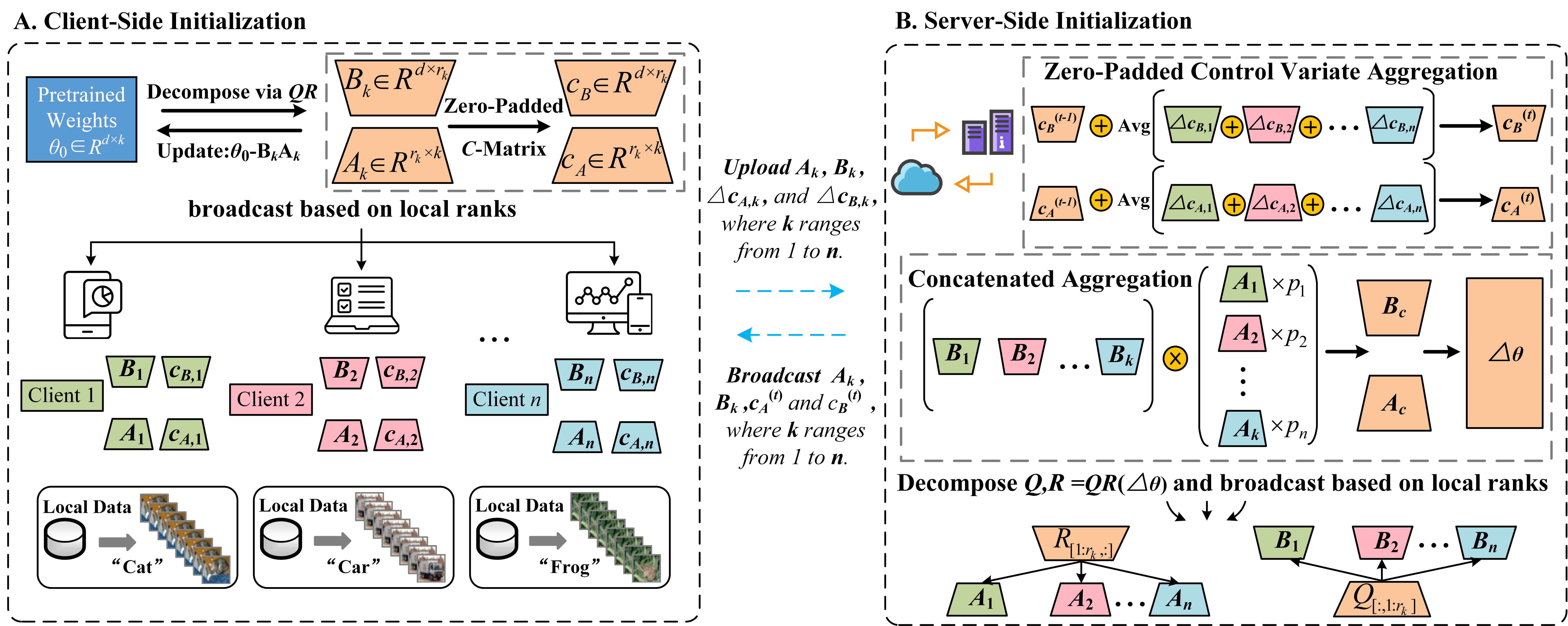}
\caption{
	Overview of ILoRA: Clients fine-tune LoRA modules locally; the server aggregates updates via concatenated QR decomposition into a global orthogonal basis $(Q, R)$, enabling efficient communication, subspace alignment, and drift mitigation under Non-IID data.
}
	\vspace{-15pt} 
	\label{fig:ILoRA_arch}
\end{figure*}
The rapid progress of foundation models has significantly advanced AI capabilities in vision and language domains \cite{brown2020language, touvron2023llama, devlin2019bert}. However, full-parameter fine-tuning of these models remains computationally expensive and data-intensive \cite{brown2020language, devlin2019bert}. To address this, Parameter-Efficient Fine-Tuning (PEFT) methods have been developed, which freeze the pre-trained backbone and update only a small set of auxiliary parameters \cite{hu2022lora, zhang2023adalora, liu2024dora, valipour2022dylora}. Among these, LoRA has emerged as a prominent approach, maintaining the base model's representational capacity while enabling efficient adaptation via low-rank update matrices \cite{hu2022lora, buyukakyuz2024olora, liu2024dora, valipour2022dylora}.

FL facilitates collaborative model training across decentralized clients without raw data sharing, ensuring privacy-preserving adaptation \cite{mcmahan2017communication, kairouz2021advances, hard2018federated, caldas2018leaf}. Although LoRA integration offers parameter efficiency and privacy in FL, practical systems encounter substantial client heterogeneity in computation, communication, and data distributions. Such heterogeneity induces divergent LoRA ranks and Non-IID data partitions \cite{li2020federated, kairouz2021advances, zhao2018federated, hsu2019measuring}, leading to aggregation misalignment, convergence instability, and performance degradation \cite{li2020federated, wang2024flora, bian2024lora, cho2024heterogeneous, bai2024federated, zhao2018federated}.

This paper studies the integration of FL with parameter-efficient fine-tuning, focusing on the federated fine-tuning of LoRA. We identify three critical challenges:

\textbf{Challenge 1: Initialization-Induced Instability.} Random initialization of LoRA  across clients creates misaligned adaptation subspaces \cite{caldas2018leaf}, slowing convergence, and destabilizing training \cite{hu2022lora, buyukakyuz2024olora, bian2024lora}.

\textbf{Challenge 2: Rank Incompatibility and Aggregation Error.} Heterogeneous client resources lead to varying LoRA ranks \cite{su2023fedra}, yet standard FL aggregation assumes parameter alignment \cite{han2024parameter}, causing aggregation errors and biased global models \cite{wang2024flora, cho2024heterogeneous, bai2024federated}.

\textbf{Challenge 3: Client Drift under Non-IID Data.} LoRA's low-rank parameterization amplifies FL's statistical heterogeneity \cite{he2021towards, li2022federated}, increasing local-global update divergence. Subspace misalignment further impedes variance reduction, worsening optimization instability \cite{karimireddy2020scaffold, li2020federated, zhao2018federated, hsu2019measuring}.

Existing methods for federated fine-tuning fail to address all three challenges simultaneously in heterogeneous environments. While some target specific issues \cite{babakniya2023slora,deng2009imagenet}, each has critical limitations: FedIT improves communication but requires homogeneous ranks \cite{zhang2024towards}; FLoRA allows rank heterogeneity but incurs high communication costs and fails to stabilize training \cite{wang2024flora}; SCAFFOLD reduces Non-IID bias but cannot handle rank variations \cite{karimireddy2020scaffold}. Similarly, personalized FL methods using clustering or distillation \cite{vahidian-flis, yang2024fedfed} are incompatible with rank-heterogeneous parameter spaces.

To address these challenges, we propose \textbf{ILoRA}, the first framework to systematically apply QR decomposition for federated LoRA fine-tuning. Our unified approach integrates: \textbf{QR-based initialization} for subspace alignment \cite{buyukakyuz2024olora,bian2024lora}, \textbf{concatenated QR aggregation} for rank-heterogeneous fusion \cite{golub2013matrix,zhu2024asymmetry,zaken2021bitfit}, and \textbf{AdamW with control variates} for drift mitigation \cite{karimireddy2020scaffold}. Supported by theory \cite{neyshabur2017theoretical} and extensive experiments \cite{deng2009imagenet,wang2018glue,brown2020language,touvron2023llama,devlin2019bert}, ILoRA achieves superior accuracy and stability, as shown in Fig. \ref{fig:lora_aggregation_comparison}.
Our key contributions are summarized as follows:
\begin{itemize}
    \item \textbf{The ILoRA Framework.} A unified solution addressing initialization instability, rank-heterogeneous aggregation, and client drift in federated LoRA fine-tuning.
    \item \textbf{QR-based Heterogeneous Aggregation.} A novel protocol fusing different LoRA ranks into a unified subspace via concatenation and QR decomposition.
    \item \textbf{Coherent Initialization and Optimization.} Orthonormal initialization ensuring subspace consistency, with rank-aware control variates for drift mitigation.
    \item \textbf{Theoretical and Empirical Validation.} Convergence guarantees and extensive experiments demonstrating state-of-the-art performance.
\end{itemize}

\section{Related Work}
\label{sec:related_work}

\textbf{Parameter-Efficient Fine-Tuning.} 
PEFT methods adapt large models efficiently by updating minimal parameters while freezing the backbone \cite{han2024parameter}. 
Leading approaches include Adapter \cite{houlsby2019parameter}, Prefix-Tuning \cite{li2021prefix}, Prompt Tuning \cite{lester2021power}, and LoRA \cite{hu2022lora} with its low-rank decomposition. 
Recent variants introduce adaptive ranks (AdaLoRA) \cite{zhang2023adalora}, orthogonal constraints (OLoRA) \cite{buyukakyuz2024olora}, weight decomposition (DoRA) \cite{liu2024dora}, and dynamic ranks (DyLoRA) \cite{valipour2022dylora}. 
While showing strong centralized performance, these methods remain largely unexplored in federated settings.

\textbf{Parameter-Efficient Fine-Tuning in FL.} 
Recent works integrate PEFT with FL to reduce communication costs while preserving privacy \cite{ding2023parameter}. 
FedIT \cite{zhang2024towards} pioneers LoRA for federated instruction tuning, but introduces significant \textit{aggregation noise} by independently averaging LoRA factors. 
FLoRA \cite{wang2024flora} addresses rank heterogeneity via parameter concatenation, yet suffers from $O(K \cdot r_{\max})$ communication overhead and dimension mismatch.
FFA-LoRA \cite{sun2024improving} improves stability via partial freezing but lacks proper dimension alignment \cite{su2023fedra}, while LoRA-FAIR \cite{bian2024lora} enhances aggregation but requires homogeneous ranks.

\textbf{Heterogeneity in Federated PEFT.} A core challenge in federated PEFT is heterogeneity, including both Non-IID data and systemic rank variations. Traditional aggregation methods fail under varying LoRA ranks due to dimension mismatch. Existing solutions have distinct limitations: \textbf{Zero-padding} causes optimization bias under large rank differences \cite{cho2024heterogeneous}; \textbf{SVD-based methods} exhibit error amplification with rank ratios exceeding 2:1 \cite{liu2024dora}; and \textbf{Full concatenation} incurs high communication overhead \cite{wang2024flora}.

\textbf{Data Heterogeneity in Federated Learning.} 
Non-IID data induces client drift, degrading accuracy and slowing convergence. 
Existing mitigation methods include FedProx \cite{li2020federated}, SCAFFOLD \cite{karimireddy2020scaffold}, FedCM \cite{xu2021fedcm}, FedOpt \cite{reddi2020adaptive}, and FedLADA \cite{yang2024nonparametric}. 
Personalized FL offers customization but assumes dimension consistency or fails in rank-heterogeneous settings. 
ILoRA overcomes this via \textbf{rank-aware control variate AdamW}, extending variance reduction to rank-heterogeneous Non-IID environments.
\begin{figure}[tb]
\centering
\includegraphics[width=0.95\linewidth]{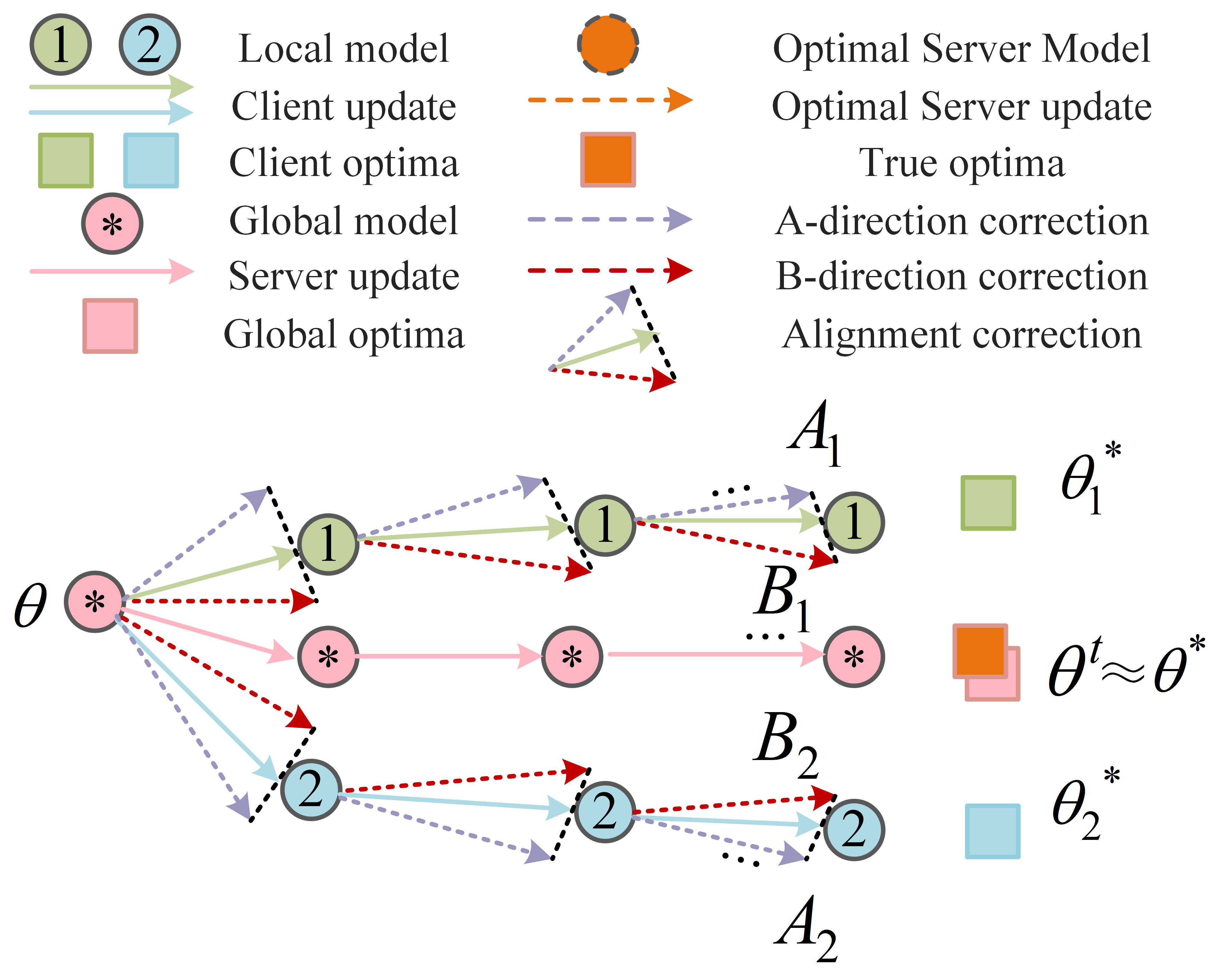}
\caption{Federated learning model updates with alignment correction. Local models (Client 1 and 2) are guided by corrections, leading the global model to converge near the true optima.}
\label{fig:control3}
\end{figure}

\section{Preliminaries}
\label{sec:preliminaries}
\begin{algorithm}[!htbp]
\caption{Client-Side Procedure for two Methods \fcolorbox{LightBlue}{LightBlue}{ILoRA} and \fcolorbox{LightRed}{LightRed}{ILoRA-S}}
\label{alg:client}

\textbf{Input:} Local dataset $\mathcal{D}_k$; local rank $r_k$; local epochs $E$; learning rate $\eta$; \fcolorbox{LightRed!80!white}{LightRed!80!white}{local control variates $\mathbf{c}_{A,k}, \mathbf{c}_{B,k}$}.

\begin{algorithmic}[1]
    \IF{first round}
        \STATE Compute QR decomposition: $\mathbf{Q}_k, \mathbf{R}_k \leftarrow \operatorname{QR}(\bm{\theta}_0)$;
        \STATE Initialize LoRA: $\mathbf{A}_k \leftarrow \mathbf{R}_{k,:r_k,:}$, $\mathbf{B}_k \leftarrow \mathbf{Q}_{k,:,:r_k}$;
        \STATE Initialize local model: $\bm{\theta}_k \leftarrow \bm{\theta}_0 - \mathbf{B}_k\mathbf{A}_k$;
    \ELSE
        \STATE \textbf{Receive} $\{\mathbf{A}_{k}, \mathbf{B}_{k}\}$, \fcolorbox{LightRed!80!white}{LightRed!80!white}{$\mathbf{c}^{(t-1)}_A, \mathbf{c}^{(t-1)}_B$};
        \STATE Update local model: $\bm{\theta}_k \leftarrow \bm{\theta}_k + \mathbf{B}_k\mathbf{A}_k$;
    \ENDIF
    
    \FOR{$e=1$ \textbf{to} $E$}
        \FOR{each mini-batch $B$ in $\mathcal{D}_k$}
            \STATE Sample $B$ from $\mathcal{D}_k$;
            
            \STATE \fcolorbox{LightBlue!80!white}{LightBlue!80!white}{Compute $\mathbf{g}_A \leftarrow \nabla\mathcal{L}_k/\partial\mathbf{A}_k$, $\mathbf{g}_B \leftarrow \nabla\mathcal{L}_k/\partial\mathbf{B}_k$};
            \STATE \fcolorbox{LightRed!80!white}{LightRed!80!white}{Compute $\tilde{\mathbf{g}}_A, \tilde{\mathbf{g}}_B$ via Eqs.\eqref{eq:corrected_gradient_A} and \eqref{eq:corrected_gradient_B}};
            
            \STATE \fcolorbox{LightBlue!80!white}{LightBlue!80!white}{$\operatorname{AdamW}(\mathbf{A}_k, \mathbf{B}_k, \mathbf{g}_A, \mathbf{g}_B)$};
            \STATE \fcolorbox{LightRed!80!white}{LightRed!80!white}{$\operatorname{AdamW}(\mathbf{A}_k, \mathbf{B}_k, \tilde{\mathbf{g}}_A, \tilde{\mathbf{g}}_B)$};
        \ENDFOR
        
        \STATE \fcolorbox{LightRed!80!white}{LightRed!80!white}{Update control variates via \eqref{eq:control_update_A} and \eqref{eq:control_update_B}};
    \ENDFOR
    
    \STATE \fcolorbox{LightBlue!80!white}{LightBlue!80!white}{\textbf{Send} $(\mathbf{A}_k, \mathbf{B}_k, n_k)$}.
    \STATE \fcolorbox{LightRed!80!white}{LightRed!80!white}{\textbf{Send} $(\mathbf{A}_k, \mathbf{B}_k, n_k, \Delta\mathbf{c}_{A,k}, \Delta\mathbf{c}_{B,k})$}.
\end{algorithmic}
\end{algorithm}
\subsection{Federated Learning}

Federated Learning (FL) enables collaborative model training across decentralized clients without data sharing \cite{mcmahan2017communication}. 
The global objective minimizes:
\begin{equation}
    \min_{\bm{\theta}} F(\bm{\theta}) = \sum_{k=1}^{K} p_k F_k(\bm{\theta}),
\end{equation}
where $K$ is the client count, $p_k = n_k/n$ weights client $k$ with local data size $n_k$, $n = \sum_{k=1}^{K} n_k$ is the total data size, and $F_k(\bm{\theta})$ is the local objective. 
Non-IID data induces client drift \cite{kairouz2021advances}, degrading performance. This worsens with communication bottlenecks, system heterogeneity, and training instability in federated fine-tuning.
\subsection{PEFT with LoRA}
LoRA enables efficient fine-tuning by freezing pre-trained weights and learning low-rank updates to specific matrices in $\bm{\theta}$ \cite{hu2022lora}. For $\bm{\theta}_0 \in \mathbb{R}^{d \times k}$, LoRA learns:
\begin{equation}
    \bm{\theta} = \bm{\theta}_0 + \Delta\bm{\theta} = \bm{\theta}_0 + \mathbf{B}\mathbf{A},
\end{equation}
where $\mathbf{B} \in \mathbb{R}^{d \times r}$, $\mathbf{A} \in \mathbb{R}^{r \times k}$, and $r \ll \min(d,k)$. The layer's forward pass becomes:
\begin{equation}
    \mathbf{h} = \bm{\theta}_0 \mathbf{x} + \mathbf{B}\mathbf{A}\mathbf{x},
\end{equation}
with input $\mathbf{x}$ and output $\mathbf{h}$. Standard LoRA initializes $\mathbf{A}$ from Gaussian and $\mathbf{B}$ to zero, scaling to control update size. Recent variants use orthonormal initialization \cite{buyukakyuz2024olora} and adaptive strategies \cite{zhang2023adalora,valipour2022dylora}.

\section{Challenges in Federated LoRA}

\subsection{Challenge 1:  Initialization-Induced Instability}
\label{subsec:challenge1}

Standard LoRA's random Gaussian $\mathbf{A}$ and zero $\mathbf{B}$ initialization \cite{hu2022lora}, while effective centrally, causes \emph{early instability} in federated settings: \textbf{Subspace Misalignment:} Random $\mathbf{A}_k$ creates conflicting adaptation directions. \textbf{First-Round Amplification:} Aggregating misaligned $\{\mathbf{A}_k, \mathbf{B}_k\}$ distorts global updates and impairs convergence.

\subsection{Challenge 2: Rank Incompatibility and Aggregation Error}
\label{subsec:challenge2}

Client heterogeneity induces varying LoRA ranks $r_k$ \cite{cho2024heterogeneous,wang2024flora}. Standard federated averaging assumes homogeneous dimensions, becoming invalid under rank heterogeneity. Direct averaging produces $\Delta\bm{\theta}' = \bar{\mathbf{B}}\bar{\mathbf{A}}$, differing from correct aggregation:

\begin{equation}
    \Delta\bm{\theta} = \sum_k p_k (\mathbf{B}_k \mathbf{A}_k) \neq \bar{\mathbf{B}}\bar{\mathbf{A}}.
    \label{eq:aggregation_bias}
\end{equation}
This bias persists even with homogeneous ranks, degrading performance \cite{bian2024lora,wang2024flora}.

\subsection{Challenge 3: Client Drift under Non-IID Data}

Non-IID data \cite{kairouz2021advances} exacerbates \textbf{client drift} in LoRA fine-tuning, where local gradients $\nabla F_k(\boldsymbol{\theta})$ deviate from $\nabla F(\boldsymbol{\theta})$ and aggregated updates move away from the global optimum. In LoRA, drift amplifies because updates depend on subspace orientations via $\Delta\bm{\theta} = \mathbf{B}\mathbf{A}$. Under Non-IID data, client subspaces can become nearly orthogonal, increasing local-global misalignment \cite{karimireddy2020scaffold}. Table~\ref{tab:qqp_results} (Appendix~\ref{app:additional_results}) shows performance degrading with heterogeneity (e.g., FedIT drops 19\% from $\alpha=0.8$ to $0.4$). Figure~\ref{fig:control1and2} (Appendix~\ref{app:additional_results}) visualizes this drift. Our control variates mitigate it, achieving robust convergence (Figure~\ref{fig:control3}).

\section{Proposed Method: ILoRA}
\label{sec:method}
\begin{algorithm}[tbp]
    \caption{Server-Side Procedure for two Methods \fcolorbox{LightBlue}{LightBlue}{ILoRA} and \fcolorbox{LightRed}{LightRed}{ILoRA-S}}
    \label{alg:server}
    
    \textbf{Input:} $\bm{\theta}_0$, $K$, $p$, $\{r_k\}$, $r_s$, $T$, 
    \fcolorbox{LightRed!80!white}{LightRed!80!white}{$\mathbf{c}^{(0)}_A, \mathbf{c}^{(0)}_B \leftarrow \mathbf{0}$}.
    
    \begin{algorithmic}[1]
        \FOR{$t=1$ \textbf{to} $T$}
        \STATE Sample $\mathcal{S}_t \subset \{1,\dots,K\}$ with $|\mathcal{S}_t| = \lfloor pK \rfloor$;
        \STATE $N \leftarrow \sum_{k \in \mathcal{S}_t} n_k$;
        
        
        \STATE \textbf{Receive} from $k \in \mathcal{S}_t$:;
        \STATE \quad \fcolorbox{LightBlue!80!white}{LightBlue!80!white}{$(\mathbf{A}_k, \mathbf{B}_k, n_k)$};
        \STATE \quad \fcolorbox{LightRed!80!white}{LightRed!80!white}{$(\mathbf{A}_k, \mathbf{B}_k, n_k, \Delta\mathbf{c}_{A,k}, \Delta\mathbf{c}_{B,k})$};
        
        \STATE Construct $\mathbf{A}_{\text{c}}$ and $\mathbf{B}_{\text{c}}$ via \eqref{eq:qr_aggregation_stack};
        \STATE Compute $\Delta\bm{\theta} \leftarrow \mathbf{B}_{\text{c}} \mathbf{A}_{\text{c}}$;
        \STATE Compute $\mathbf{Q}, \mathbf{R} \leftarrow \operatorname{QR}(\Delta\bm{\theta})$;
        \STATE Set $\mathbf{B}_s \leftarrow \mathbf{Q}_{:,:r_s}$, $\mathbf{A}_s \leftarrow \mathbf{R}_{:r_s,:}$;
        
        \STATE \fcolorbox{LightRed!80!white}{LightRed!80!white}{Update global control variates via \eqref{eq:update_global_control_A} and \eqref{eq:update_global_control_B}};
        
        \STATE \textbf{Personalize for each} $k \in \mathcal{S}_t$:;
        \STATE \quad $\mathbf{B}_{k} \leftarrow \mathbf{Q}_{:,:r_k}$, $\mathbf{A}_{k} \leftarrow \mathbf{R}_{:r_k,:}$;
        \STATE \quad \fcolorbox{LightBlue!80!white}{LightBlue!80!white}{Send $\{\mathbf{A}_{k}, \mathbf{B}_{k}\}$}.
        \STATE \quad \fcolorbox{LightRed!80!white}{LightRed!80!white}{Send $\{\mathbf{A}_{k}, \mathbf{B}_{k}, \mathbf{c}^{(t)}_A, \mathbf{c}^{(t)}_B\}$}.
        \ENDFOR
    \end{algorithmic}
\end{algorithm}

\newcommand{\ppoverlay}[1]{%
	\makebox[0pt][l]{\hspace{0.18em}\raisebox{-0.62ex}{\smash{\tiny #1}}}%
}
\newcommand{\ppup}[1]{\ppoverlay{\textcolor{red!70!black}{$\uparrow$#1}}}  
\newcommand{\ppdown}[1]{\ppoverlay{\textcolor{green!60!black}{$\downarrow$#1}}}  
\newcommand{\ppbase}{\makebox[0pt][l]{\hspace{0.18em}\raisebox{-0.62ex}{\smash{\tiny\textcolor{gray!65}{base}}}}}

\newcommand{\cellnum}[2]{\makebox[2.9em][c]{#1#2}}
\begin{figure*}[t]
	\centering
	\includegraphics[width=\textwidth]{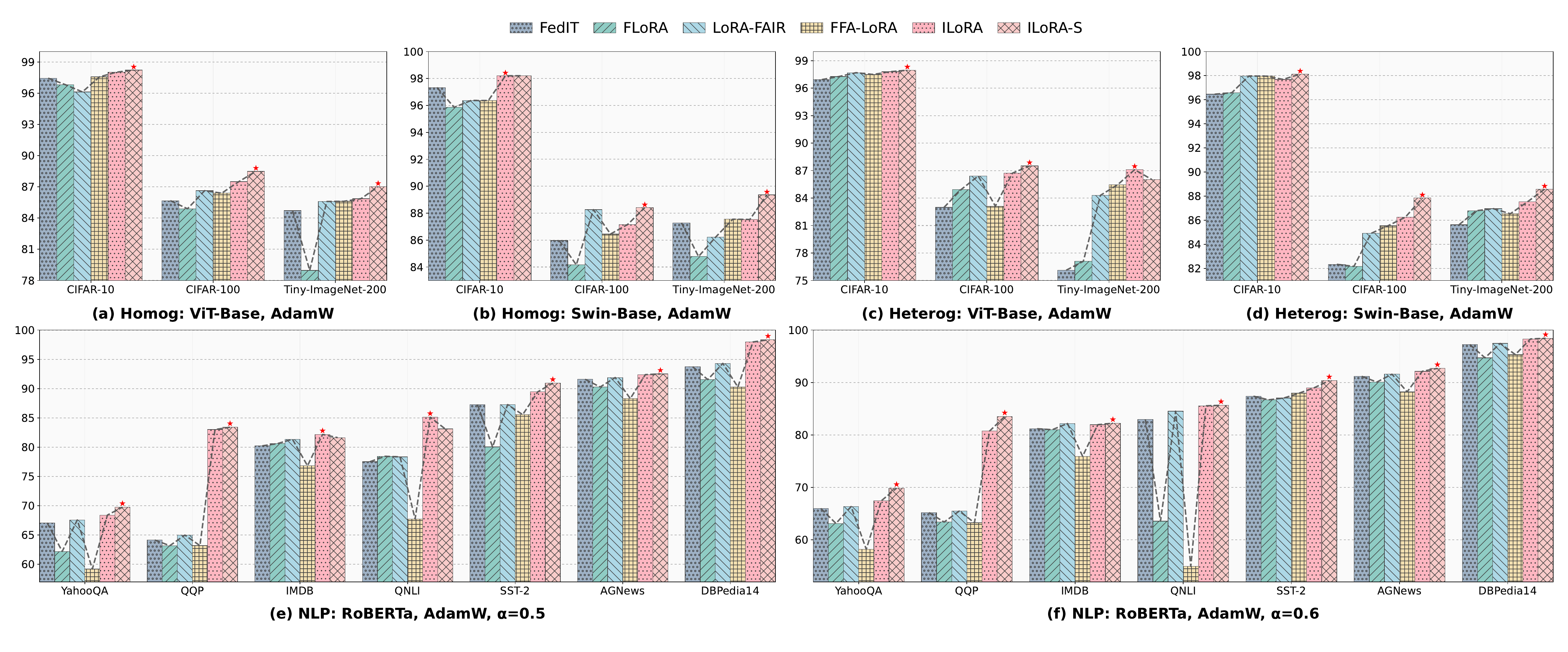}
	\caption{Performance comparison across settings. (a-d) CV tasks with ViT-Base/Swin-Base; (e-f) NLP tasks with RoBERTa.}
	\vspace{-5pt} \label{fig:combined_comparison}
\end{figure*}
\begin{table*}[!ht]
	\centering
    
\caption{Accuracy comparison (\%) of  heterogeneous LoRA methods on CIFAR-10/100 and Tiny-ImageNet (ViT/Swin, Dir($\alpha$=0.3))}
	\label{tab:heterogeneous_lora_comparison}
	\normalsize
	\renewcommand{\arraystretch}{1}\setlength{\tabcolsep}{3.5pt}
	
	\begin{tabularx}{\textwidth}{l*{12}{C}}
		
		\arrayrulecolor{black}\specialrule{1.2pt}{0pt}{0pt}
		
		\multirow{4}{*}{Method} & \multicolumn{6}{c}{ViT-Base} & \multicolumn{6}{c}{Swin-Base} \\
		\cmidrule(lr){2-7} \cmidrule(lr){8-13}
		& \multicolumn{2}{c}{CIFAR-10} & \multicolumn{2}{c}{CIFAR-100} & \multicolumn{2}{c}{Tiny-ImageNet}
		& \multicolumn{2}{c}{CIFAR-10} & \multicolumn{2}{c}{CIFAR-100} & \multicolumn{2}{c}{Tiny-ImageNet} \\
		\cmidrule(lr){2-3} \cmidrule(lr){4-5} \cmidrule(lr){6-7}
		\cmidrule(lr){8-9} \cmidrule(lr){10-11} \cmidrule(lr){12-13}
		& SGD & AdamW & SGD & AdamW & SGD & AdamW & SGD & AdamW & SGD & AdamW & SGD & AdamW \\
		\midrule
		
		FedIT      & 97.85 & 97.66 & 90.09 & 85.19 & 87.23 & 84.72 & 98.17 & 97.08 & 88.30 & 85.25 & 87.19 & 87.40 \\
		FLoRA      & 95.28 & 97.27 & 87.66 & 84.93 & 83.39 & 77.10 & 97.69 & 96.58 & 88.75 & 82.17 & 88.10 & 86.79 \\
		LoRA-FAIR & 97.96 & 97.69 & 90.05 & 86.41 & 87.41 & 84.30 & 97.28 & 97.97 & 89.23 & 84.91 & 89.13 & 86.96 \\
		FFA-LoRA   & 97.20 & 97.50 & 89.46 & 83.12 & 83.90 & 85.45 & 97.58 & 97.96 & 88.85 & 85.55 & 87.28 & 86.54 \\
		\rowcolor{LightBlue!80!white}
		ILoRA      & 98.02 & 97.80 & 90.16 & 86.72 & 87.29 & \textbf{87.10} & 98.19 & 97.69 & 89.66 & 86.25 & 89.44 & 87.53 \\
		\rowcolor{LightRed!80!white}
		ILoRA-S     & \textbf{98.19} & \textbf{97.96} & \textbf{90.39} & \textbf{87.51} & \textbf{87.43} & 86.03 & \textbf{98.36} & \textbf{98.13} & \textbf{90.51} & \textbf{87.85} & \textbf{89.90} & \textbf{88.58} \\
		
		\arrayrulecolor{black}\specialrule{1.2pt}{0pt}{0pt}
        
	\end{tabularx}
    \vspace{-10pt}  
\end{table*}
\subsection{QR-Based Orthogonal Initialization}
\label{subsec:orthogonal_init}

To address initialization instability, ILoRA employs client-generated orthonormal bases for consistent subspace alignment. Each client $k$ with rank $r_k$ computes local QR decomposition:
\begin{equation}
    \mathbf{Q}_k, \mathbf{R}_k = \operatorname{QR}(\bm{\theta}_0),
    \label{eq:client_qr_decomposition}
\end{equation}
and initializes LoRA parameters as:
\begin{equation}
    \mathbf{A}_{k} = \mathbf{R}_{k,:r_k,:}, \quad \mathbf{B}_{k} = \mathbf{Q}_{k,:,:r_k}.
    \label{eq:client_initialization}
\end{equation}
The local model is then initialized:
\begin{equation}
    \bm{\theta}_k = \bm{\theta}_0 - \mathbf{B}_k \mathbf{A}_k,
    \label{eq:client_initial_model}
\end{equation}
ensuring all initial updates $\Delta \bm{\theta}_k = \mathbf{B}_k \mathbf{A}_k$ are confined to consistent subspaces. After receiving updated $\mathbf{B}_k$ and $\mathbf{A}_k$, the client updates its model with $\bm{\theta}_k \leftarrow \bm{\theta}_k + \mathbf{B}_k \mathbf{A}_k$. This subspace coherence stabilizes federated optimization, reducing variance and client drift (Table~\ref{tab:initialization_comparison}, Appendix~\ref{app:qr_aggregation}).
\subsection{Concatenated QR Aggregation}
\label{sec:ILoRA-ch2}

To address aggregation bias in heterogeneous-rank settings, ILoRA reconstructs the global update before compression. For sampled clients $\mathcal{S}_t$ with $p_k = n_k/N$, we vertically concatenate weighted $\mathbf{A}$ and horizontally $\mathbf{B}$ matrices:
\begin{equation}
    \mathbf{A}_{\text{c}}\! = \begin{bmatrix}
        p_1\mathbf{A}_1 \!\\
        p_2\mathbf{A}_2 \!\\
        \vdots \!\\
p_{|\mathcal{S}_t|}\mathbf{A}_{|\mathcal{S}_t|} 
    \end{bmatrix}, \!\quad
    \mathbf{B}_{\text{c}} = \begin{bmatrix}
        \mathbf{B}_1 \!& \mathbf{B}_2 \!& \cdots \!& \mathbf{B}_{|\mathcal{S}_t|} \!
    \end{bmatrix},
    \label{eq:qr_aggregation_stack}
\end{equation}
forming $\Delta\bm{\theta} = \mathbf{B}_{\text{c}} \mathbf{A}_{\text{c}} = \sum_{k \in \mathcal{S}_t} p_k \mathbf{B}_k \mathbf{A}_k$. Using server rank $r_s$, we compute QR decomposition:
\begin{equation}
    \mathbf{Q},\mathbf{R} = \operatorname{QR}(\Delta\bm{\theta})
    \label{eq:qr_decomposition}
\end{equation}
\begin{equation}
    \mathbf{B}_s = \mathbf{Q}_{:,:r_s}, \quad \mathbf{A}_s = \mathbf{R}_{:r_s,:}
    \label{eq:global_parameters}
\end{equation}
\begin{equation}
    \bm{\theta}^{(t)} = \bm{\theta}_0 + \mathbf{B}_s \mathbf{A}_s.
    \label{eq:global_model_update}
\end{equation}
Each client receives personalized slices $\mathbf{B}_{k} = \mathbf{Q}_{:,:r_k}$, $\mathbf{A}_{k} = \mathbf{R}_{:r_k,:}$, ensuring subspace alignment with $\mathcal{O}(r_s \cdot \max(d,k))$ communication cost.

\subsection{AdamW with rank-aware control variates}

To mitigate client drift in Non-IID settings, we employ client-specific control variates $\mathbf{c}_{A,k}$ and $\mathbf{c}_{B,k}$ for $\mathbf{A}_k$ and $\mathbf{B}_k$, initialized to zero. Server $\mathbf{c}^{(t-1)}_A$ and $\mathbf{c}^{(t-1)}_B$ are broadcast each round. Clients compute corrected gradients:
\begin{align}
    \tilde{\mathbf{g}}_{A,k} &= \nabla_{\mathbf{A}_k}\mathcal{L}_k + (\mathbf{c}^{(t-1)}_A - \mathbf{c}_{A,k}) \label{eq:corrected_gradient_A}; \\
    \tilde{\mathbf{g}}_{B,k} &= \nabla_{\mathbf{B}_k}\mathcal{L}_k + (\mathbf{c}^{(t-1)}_B - \mathbf{c}_{B,k}),\label{eq:corrected_gradient_B}
\end{align}
optimizing $\mathbf{A}_k$ and $\mathbf{B}_k$ via AdamW. Post-epoch:
\begin{align}
    \Delta\mathbf{c}_{A,k} &= \nabla_{\mathbf{A}_k}\mathcal{L}_k - \mathbf{c}_{A,k}, \quad 
    \mathbf{c}_{A,k} \leftarrow \nabla_{\mathbf{A}_k}\mathcal{L}_k \label{eq:control_update_A}; \\
    \Delta\mathbf{c}_{B,k} &= \nabla_{\mathbf{B}_k}\mathcal{L}_k - \mathbf{c}_{B,k}, \quad 
    \mathbf{c}_{B,k} \leftarrow \nabla_{\mathbf{B}_k}\mathcal{L}_k .\label{eq:control_update_B}
\end{align}
Server aggregates deltas:
\begin{align}
    \mathbf{c}^{(t)}_A &\leftarrow \mathbf{c}^{(t-1)}_A + \frac{1}{|\mathcal{S}_t|}\sum_{k\in\mathcal{S}_t}\Delta\mathbf{c}_{A,k}; \label{eq:update_global_control_A} \\
    \mathbf{c}^{(t)}_B &\leftarrow \mathbf{c}^{(t-1)}_B + \frac{1}{|\mathcal{S}_t|}\sum_{k\in\mathcal{S}_t}\Delta\mathbf{c}_{B,k}. \label{eq:update_global_control_B}
\end{align}

\section{Convergence Analysis}
\label{sec:convergence}

\begin{figure*}[htbp]
	\centering
	\begin{subfigure}{0.63\linewidth}
		\centering
		\includegraphics[width=\linewidth]{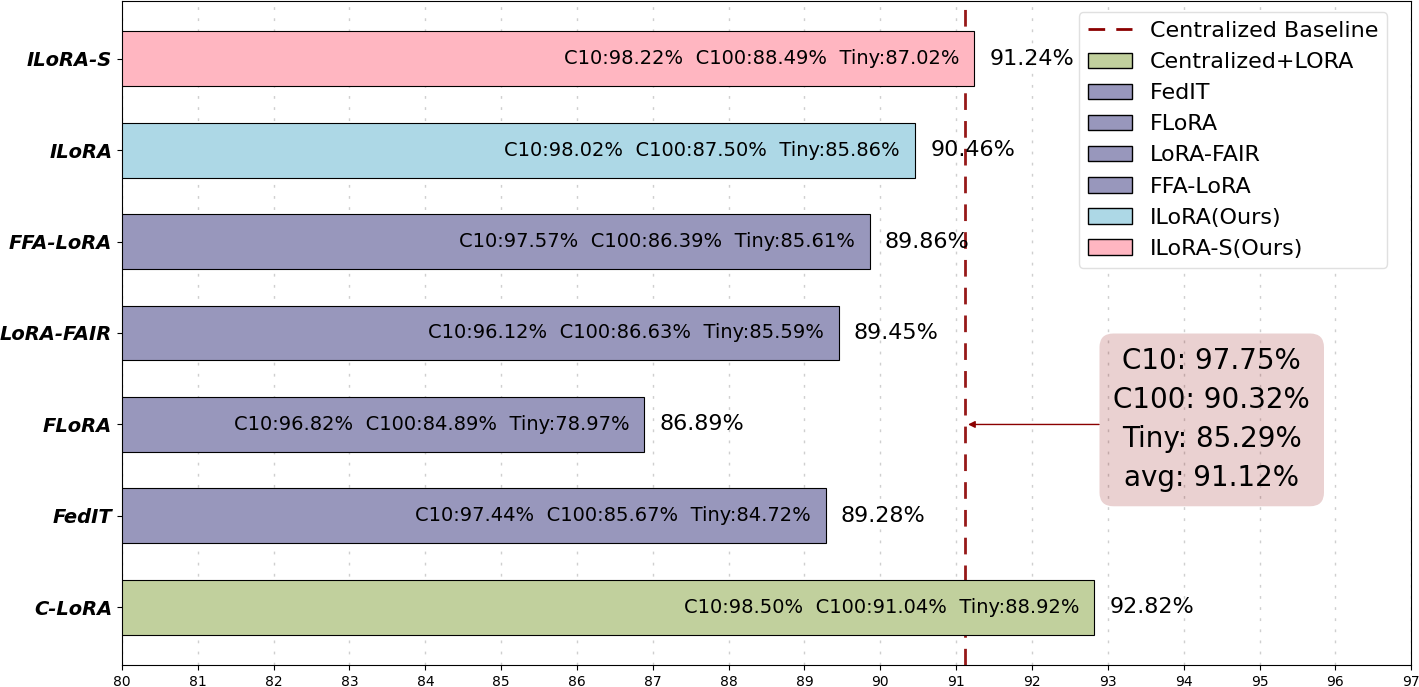}
		\caption{CV performance comparison with AdamW }
		\label{fig:adamw_comparison}
	\end{subfigure}
	\hfill
	\begin{subfigure}{0.35\linewidth}
		\centering
		\includegraphics[width=\linewidth]{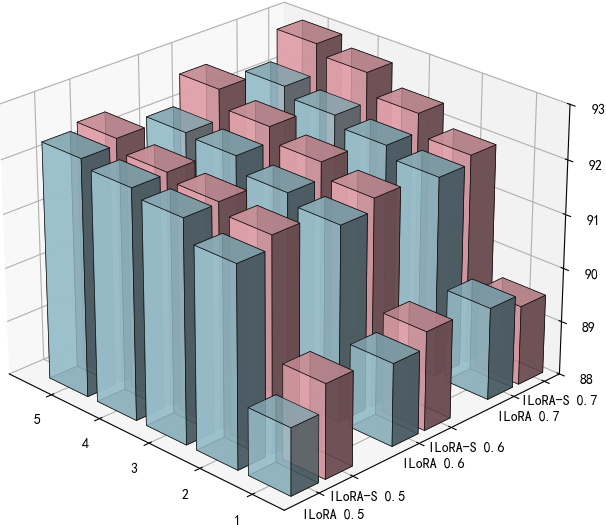}
		\caption{ILoRA vs ILoRA-S with control-based AdamW}
		\label{fig:control_adamw_agnews}
	\end{subfigure}
	\caption{
		Performance comparison under Non-IID settings: (a) Centralized vs. federated learning on CIFAR-10 (C10), CIFAR-100 (C100), and Tiny-ImageNet (Tiny) with $\alpha=0.3$; (b) AGNews dataset across different heterogeneity levels ($\alpha = 0.5, 0.6, 0.7$).
	}
	\vspace{-5pt}
    \label{fig:combined_cv_nlp}
\end{figure*}

\begin{table*}[!ht]
	\centering
\caption{Final accuracy (\%) of federated LoRA methods on DomainNet  with ViT-Base and Swin-Base after 50 rounds. Best in \textbf{bold}.}

	\label{tab:vit_swin_multi_domain}
	\normalsize
	\setlength{\tabcolsep}{2pt} 
	\renewcommand{\arraystretch}{1}
	
	\begin{tabularx}{\textwidth}{l l *{7}{C}}
		\arrayrulecolor{black}\specialrule{1.2pt}{0pt}{0pt}
		
		\multirow{3}{*}{Backbone} & \multirow{3}{*}{Method} & \multicolumn{7}{c}{Datasets} \\
		\cmidrule(lr){3-9}
		& & Clipart & Infograph & Painting & Quickdraw & Real & Sketch & Average \\
		\midrule
		
		\multirow{6}{*}{ViT-Base} 
		& FedIT      
		& \cellnum{78.25}{\ppbase}
		& \cellnum{51.08}{\ppbase}
		& \cellnum{80.07}{\ppbase}
		& \cellnum{63.94}{\ppbase}
		& \cellnum{90.54}{\ppbase}
		& \cellnum{74.24}{\ppbase}
		& \cellnum{73.02}{\ppbase} \\
		
		& FLoRA      
		& \cellnum{73.80}{\ppdown{4.45}}
		& \cellnum{48.34}{\ppdown{2.74}}
		& \cellnum{77.52}{\ppdown{2.55}}
		& \cellnum{55.20}{\ppdown{8.74}}
		& \cellnum{89.55}{\ppdown{0.99}}
		& \cellnum{68.55}{\ppdown{5.69}}
		& \cellnum{68.83}{\ppdown{4.19}} \\
		
		& LoRA-FAIR  
		& \cellnum{78.49}{\ppup{0.24}}
		& \cellnum{51.18}{\ppup{0.10}}
		& \cellnum{79.34}{\ppdown{0.73}}
		& \cellnum{62.87}{\ppdown{1.07}}
		& \cellnum{90.56}{\ppup{0.02}}
		& \cellnum{74.39}{\ppup{0.15}}
		& \cellnum{72.81}{\ppdown{0.21}} \\
		
		& FFA-LoRA   
		& \cellnum{78.52}{\ppup{0.27}}
		& \cellnum{49.68}{\ppdown{1.40}}
		& \cellnum{78.27}{\ppdown{1.80}}
		& \cellnum{63.88}{\ppdown{0.06}}
		& \cellnum{90.30}{\ppdown{0.24}}
		& \cellnum{71.37}{\ppdown{2.87}}
		& \cellnum{72.00}{\ppdown{1.02}} \\
		
		\rowcolor{LightBlue!80!white} 
		& ILoRA   
		& \cellnum{78.69}{\ppup{0.44}}
		& \cellnum{51.44}{\ppup{0.36}}
		& \cellnum{80.09}{\ppup{0.02}}
		& \cellnum{64.18}{\ppup{0.24}}
		& \cellnum{90.71}{\ppup{0.17}}
		& \cellnum{74.78}{\ppup{0.54}}
		& \cellnum{73.32}{\ppup{0.30}} \\
		
		\rowcolor{LightRed!80!white}
		& ILoRA-S  
		& \cellnum{\textbf{79.83}}{\ppup{1.58}}
		& \cellnum{\textbf{52.15}}{\ppup{1.07}}
		& \cellnum{\textbf{80.25}}{\ppup{0.18}}
		& \cellnum{\textbf{65.02}}{\ppup{1.08}}
		& \cellnum{\textbf{90.96}}{\ppup{0.42}}
		& \cellnum{\textbf{75.18}}{\ppup{0.94}}
		& \cellnum{\textbf{73.90}}{\ppup{0.88}} \\
		\midrule
		
		\multirow{6}{*}{Swin-Base} 
		& FedIT      
		& \cellnum{86.28}{\ppbase}
		& \cellnum{61.52}{\ppbase}
		& \cellnum{84.70}{\ppbase}
		& \cellnum{73.02}{\ppbase}
		& \cellnum{92.64}{\ppbase}
		& \cellnum{84.41}{\ppbase}
		& \cellnum{80.43}{\ppbase} \\
		
		& FLoRA      
		& \cellnum{85.58}{\ppdown{0.70}}
		& \cellnum{58.16}{\ppdown{3.36}}
		& \cellnum{84.54}{\ppdown{0.16}}
		& \cellnum{70.42}{\ppdown{2.60}}
		& \cellnum{92.11}{\ppdown{0.53}}
		& \cellnum{81.25}{\ppdown{3.16}}
		& \cellnum{78.68}{\ppdown{1.75}} \\
		
		& LoRA-FAIR  
		& \cellnum{86.85}{\ppup{0.57}}
		& \cellnum{61.89}{\ppup{0.37}}
		& \cellnum{84.17}{\ppdown{0.53}}
		& \cellnum{72.66}{\ppdown{0.36}}
		& \cellnum{92.07}{\ppdown{0.57}}
		& \cellnum{84.75}{\ppup{0.34}}
		& \cellnum{80.40}{\ppdown{0.03}} \\
		
		& FFA-LoRA   
		& \cellnum{87.19}{\ppup{0.91}}
		& \cellnum{61.29}{\ppdown{0.23}}
		& \cellnum{84.05}{\ppdown{0.65}}
		& \cellnum{71.71}{\ppdown{1.31}}
		& \cellnum{91.88}{\ppdown{0.76}}
		& \cellnum{84.04}{\ppdown{0.37}}
		& \cellnum{80.03}{\ppdown{0.40}} \\
		
		\rowcolor{LightBlue!80!white}
		& ILoRA   
		& \cellnum{87.32}{\ppup{1.04}}
		& \cellnum{61.14}{\ppdown{0.38}}
		& \cellnum{85.10}{\ppup{0.40}}
		& \cellnum{73.38}{\ppup{0.36}}
		& \cellnum{92.70}{\ppup{0.06}}
		& \cellnum{84.90}{\ppup{0.49}}
		& \cellnum{80.76}{\ppup{0.33}} \\
		
		\rowcolor{LightRed!80!white}
		& ILoRA-S  
		& \cellnum{\textbf{88.46}}{\ppup{2.18}}
		& \cellnum{\textbf{62.11}}{\ppup{0.59}}
		& \cellnum{\textbf{86.13}}{\ppup{1.43}}
		& \cellnum{\textbf{75.39}}{\ppup{2.37}}
		& \cellnum{\textbf{92.75}}{\ppup{0.11}}
		& \cellnum{\textbf{85.87}}{\ppup{1.46}}
		& \cellnum{\textbf{81.79}}{\ppup{1.36}} \\
		
		\arrayrulecolor{black}\specialrule{1.2pt}{0pt}{0pt}
	\end{tabularx}
    \vspace{-10pt}
\end{table*}
\begin{figure*}[htbp]
	\centering
	\begin{subfigure}{0.62\linewidth}
		\centering
		\includegraphics[width=\linewidth]{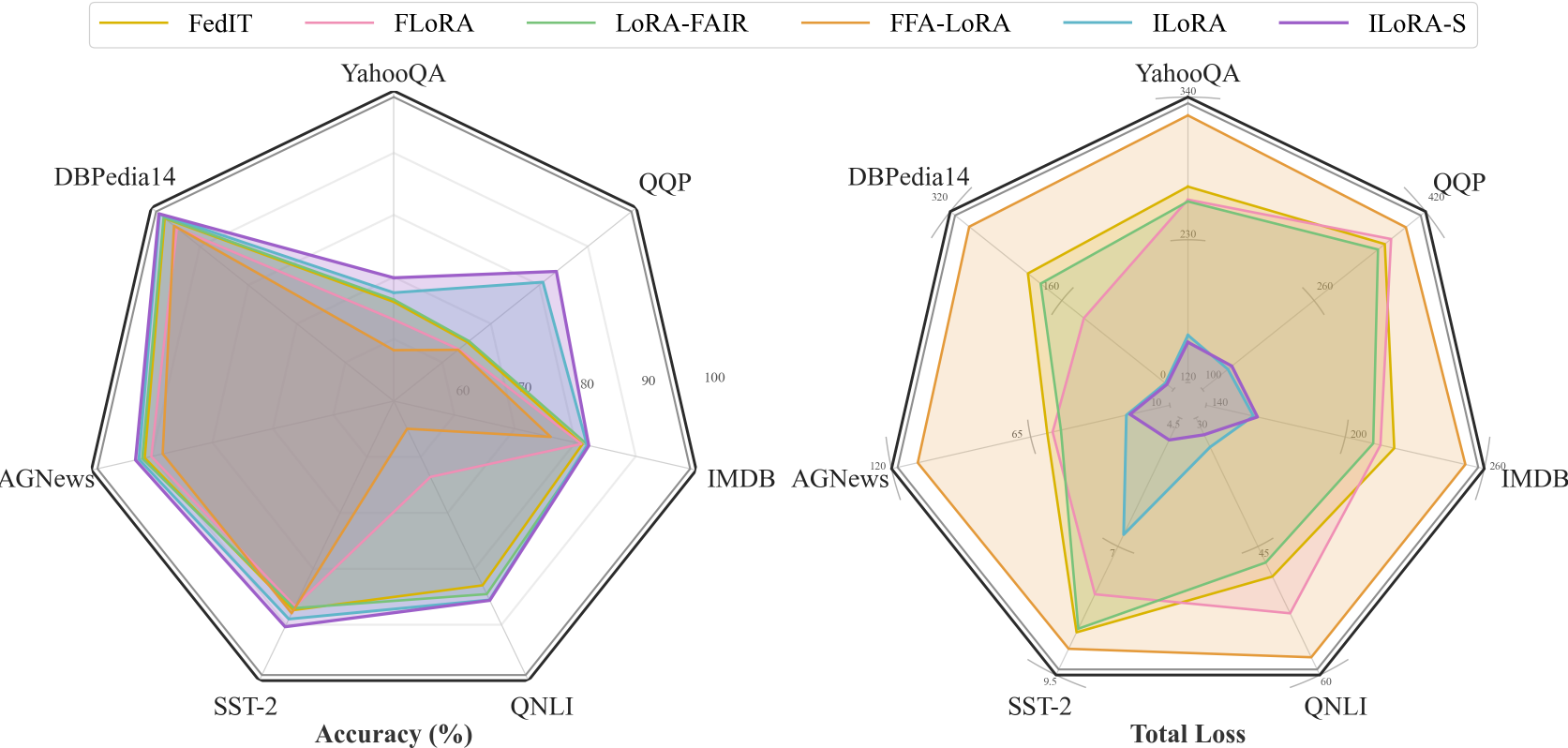}
		\caption{NLP performance comparison}
		\label{fig:nlp_radar_comparison}
	\end{subfigure}
	\hfill
	\begin{subfigure}{0.37\linewidth}
		\centering
		\includegraphics[width=\linewidth]{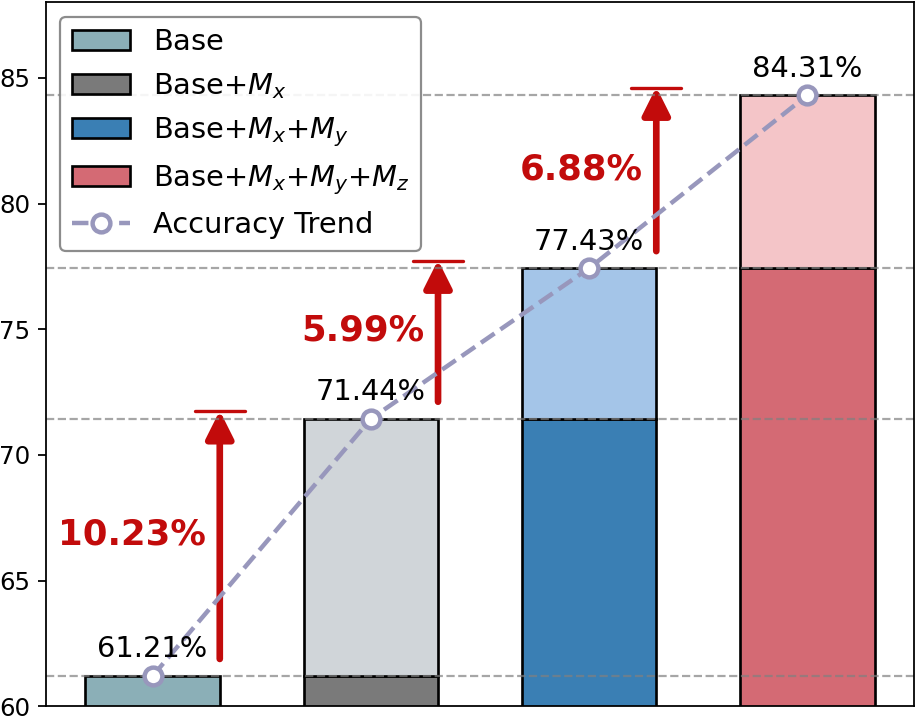}
		\caption{Ablation study results}
		\label{fig:ablation_study}
	\end{subfigure}
\caption{(a) NLP accuracy on datasets with RoBERTa under Dir($\alpha$=0.6). (b) Ablation: gains from ILoRA components $M_x$, $M_y$, $M_z$.}

	\vspace{-5pt}\label{fig:combined_results}
\end{figure*}
\begin{table*}[t]
	\centering
\caption{Peak accuracy comparison (\%) of federated heterogeneous LoRA methods on NLP datasets (RoBERTa, Dir($\alpha$=0.6)).}

	\label{tab:nlp_lora_comparison}
	\normalsize
	\setlength{\tabcolsep}{1pt}
	\renewcommand{\arraystretch}{1}
	
	\begin{tabularx}{\textwidth}{l*{8}{C}}
		\arrayrulecolor{black}\specialrule{1.2pt}{0pt}{0pt}
		\multirow{2}{*}{Method} & \multicolumn{8}{c}{NLP Datasets (RoBERTa, Accuracy $\uparrow$)} \\
		\cmidrule(lr){2-9}
		& YahooQA & QQP & IMDB & QNLI & SST-2 & AGNews & DBPedia14 & \textbf{Avg.} \\
		\midrule
		FedIT
		& \cellnum{65.99}{\ppbase}
		& \cellnum{65.20}{\ppbase}
		& \cellnum{81.19}{\ppbase}
		& \cellnum{82.96}{\ppbase}
		& \cellnum{87.39}{\ppbase}
		& \cellnum{91.18}{\ppbase}
		& \cellnum{97.23}{\ppbase}
		& \cellnum{81.59}{\ppbase} \\
		FLoRA
		& \cellnum{63.13}{\ppdown{2.86}}
		& \cellnum{63.40}{\ppdown{1.80}}
		& \cellnum{81.01}{\ppdown{0.18}}
		& \cellnum{63.57}{\ppdown{19.39}}
		& \cellnum{86.70}{\ppdown{0.69}}
		& \cellnum{90.09}{\ppdown{1.09}}
		& \cellnum{94.71}{\ppdown{2.52}}
		& \cellnum{77.52}{\ppdown{4.07}} \\
		LoRA-FAIR
		& \cellnum{66.36}{\ppup{0.37}}
		& \cellnum{65.51}{\ppup{0.31}}
		& \cellnum{\textbf{82.19}}{\ppup{1.00}}
		& \cellnum{84.53}{\ppup{1.57}}
		& \cellnum{87.04}{\ppdown{0.35}}
		& \cellnum{91.59}{\ppup{0.41}}
		& \cellnum{97.46}{\ppup{0.23}}
		& \cellnum{82.10}{\ppup{0.51}} \\
		FFA-LoRA
		& \cellnum{58.21}{\ppdown{7.78}}
		& \cellnum{63.26}{\ppdown{1.94}}
		& \cellnum{75.94}{\ppdown{5.25}}
		& \cellnum{54.95}{\ppdown{28.01}}
		& \cellnum{87.96}{\ppup{0.57}}
		& \cellnum{88.21}{\ppdown{2.97}}
		& \cellnum{95.32}{\ppdown{1.91}}
		& \cellnum{74.84}{\ppdown{6.75}} \\
		\rowcolor{LightBlue}
		ILoRA
		& \cellnum{67.47}{\ppup{1.48}}
		& \cellnum{80.75}{\ppup{15.55}}
		& \cellnum{81.96}{\ppup{0.77}}
		& \cellnum{85.56}{\ppup{2.60}}
		& \cellnum{88.99}{\ppup{1.60}}
		& \cellnum{92.14}{\ppup{0.96}}
		& \cellnum{\textbf{98.30}}{\ppup{1.07}}
		& \cellnum{85.02}{\ppup{3.43}} \\
		\rowcolor{LightRed}
		ILoRA-S
		& \cellnum{\textbf{69.87}}{\ppup{3.88}}
		& \cellnum{\textbf{83.52}}{\ppup{18.32}}
		& \cellnum{82.24}{\ppup{1.05}}
		& \cellnum{\textbf{85.65}}{\ppup{2.69}}
		& \cellnum{\textbf{90.37}}{\ppup{2.98}}
		& \cellnum{\textbf{92.70}}{\ppup{1.52}}
		& \cellnum{98.43}{\ppup{1.20}}
		& \cellnum{\textbf{86.11}}{\ppup{4.52}} \\
		\arrayrulecolor{black}\specialrule{1.2pt}{0pt}{0pt}
	\end{tabularx}
    \vspace{-10pt}
\end{table*}
\subsection{Theoretical Assumptions}
Our analysis relies on the following standard assumptions:

\begin{assumption}[L-Smoothness]
    \label{ass:smoothness}
    $\|\nabla F_k(\bm{\theta}_1) - \nabla F_k(\bm{\theta}_2)\| \leq L \|\bm{\theta}_1 - \bm{\theta}_2\|$ holds uniformly across all participating clients.
\end{assumption}

\begin{assumption}[Bounded Gradient Variance]
    \label{ass:variance}
    $\mathbb{E}[\|\mathbf{g}_k(\bm{\theta}) - \nabla F_k(\bm{\theta})\|^2] \leq \sigma^2$ with uniform bound across the federation.
\end{assumption}

\begin{assumption}[Bounded Gradient Heterogeneity]
    \label{ass:heterogeneity}
    $\|\nabla F_k(\bm{\theta}) - \nabla F(\bm{\theta})\| \leq \delta$ quantifying the degree of data heterogeneity among clients in the federated network.
\end{assumption}

\begin{assumption}[Subspace Consistency]
    \label{ass:subspace}
    $\mathcal{S}_k \subseteq \mathcal{S}_s = \text{colspan}(\mathbf{Q}_{[:,1:r_s]})$ ensuring aligned parameter spaces.
\end{assumption}
\subsection{Main Convergence Results}

\begin{theorem}[Convergence of ILoRA]
    \label{thm:convergence}
    Under Assumptions \ref{ass:smoothness}--\ref{ass:subspace}, with $\eta_l \!\leq\! 1/L$ and $\eta_g \eta_l \!=\! \Theta(1/\!\sqrt{\!SKT})$:
    \begin{equation}
        \frac{1}{T}\!\sum_{t=1}^T \mathbb{E}\left[\|\nabla F(\bm{\theta}_t)\|^2\right] 
        \leq \mathcal{O}\!\left(\!
        \frac{1}{\sqrt{\!SKT}} + \frac{\delta^2 \!+\! \epsilon_r}{T}
        \!\right)\!,
    \end{equation}
    where $\epsilon_r \!=\! (r_{\max} \!-\! r_s)^2$, $S$: clients/round, $K$: local steps, $T$: total rounds.
\end{theorem}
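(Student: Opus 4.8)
The plan is to follow the standard nonconvex federated optimization template (in the spirit of SCAFFOLD), adapted to account for the rank-$r_s$ subspace projection induced by the concatenated QR aggregation. I would begin from Assumption~\ref{ass:smoothness} and write the one-round descent inequality
\[
\mathbb{E}[F(\bm{\theta}^{(t)})] \le F(\bm{\theta}^{(t-1)}) - \langle \nabla F(\bm{\theta}^{(t-1)}), \mathbb{E}[\mathbf{d}_t]\rangle + \tfrac{L}{2}\,\mathbb{E}\|\mathbf{d}_t\|^2,
\]
where $\mathbf{d}_t=\bm{\theta}^{(t)}-\bm{\theta}^{(t-1)}$. By Eqs.~\eqref{eq:qr_aggregation_stack}--\eqref{eq:global_parameters}, the effective update direction is the rank-$r_s$ truncation $\Pi_{r_s}$ of the \emph{exact} weighted aggregate $\sum_{k} p_k\mathbf{B}_k\mathbf{A}_k$. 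I would decompose $\Pi_{r_s}(\cdot) = (\cdot) - (I-\Pi_{r_s})(\cdot)$ and treat the discarded tail as a deterministic bias whose Frobenius norm equals the energy of the trailing singular directions of the stacked update. Under Assumption~\ref{ass:subspace}, these directions carry energy only when the realized aggregate rank exceeds $r_s$, so the tail is bounded by a term of order $r_{\max}-r_s$, i.e.\ by $\sqrt{\epsilon_r}$.

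Next I would analyze the corrected local trajectory. Over the $K$ local AdamW steps, the corrected gradients of Eqs.~\eqref{eq:corrected_gradient_A}--\eqref{eq:corrected_gradient_B} replace the raw local gradient by $\nabla\mathcal{L}_k + (\mathbf{c}^{(t-1)}-\mathbf{c}_k)$. The key SCAFFOLD identity is that, in expectation over the sampled set $\mathcal{S}_t$, the control-variate terms cancel the first-order heterogeneity, so the aggregated corrected direction is an unbiased estimate (up to the truncation above) of $\nabla F(\bm{\theta}^{(t-1)})$ restricted to $\mathcal{S}_s$. I would treat the AdamW preconditioner as a bounded positive-definite rescaling of the step (standard under bounded second-moment estimates), absorbing its spectral bounds into the constants. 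The remaining ingredient is a client-drift lemma bounding $\mathbb{E}\|\bm{\theta}_k^{(e)}-\bm{\theta}^{(t-1)}\|^2$ across local epochs via $\eta_l\le 1/L$, Assumption~\ref{ass:variance}, and Assumption~\ref{ass:heterogeneity}; the control variates are what demote the leading $\delta^2$ heterogeneity from a persistent bias to a higher-order transient.

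Assembling these, I would bound the second moment $\mathbb{E}\|\mathbf{d}_t\|^2$ by a variance part that shrinks through averaging over the $S$ sampled clients and the $K$ local steps, plus the squared bias $\delta^2+\epsilon_r$, and lower-bound the inner-product term by $\tfrac12\|\nabla F(\bm{\theta}^{(t-1)})\|^2$ minus the same bias and drift contributions (via Cauchy--Schwarz and Young's inequality). Telescoping the descent inequality over $t=1,\dots,T$ and dividing by $T$ gives a bound of the schematic form
\[
\frac{1}{T}\sum_{t=1}^T\mathbb{E}\|\nabla F(\bm{\theta}^{(t)})\|^2 \lesssim \Phi_{\mathrm{opt}}(\eta_g\eta_l) + \Phi_{\mathrm{var}}(\eta_g\eta_l) + \frac{\delta^2+\epsilon_r}{T},
\]
where $\Phi_{\mathrm{opt}}$ (the optimization term from $F(\bm{\theta}^{(1)})-F^\star$) decreases in the step size and $\Phi_{\mathrm{var}}$ (the stochastic-variance term) increases in it. Choosing the balanced schedule $\eta_g\eta_l=\Theta(1/\sqrt{SKT})$ equalizes these two contributions and collapses them to $\mathcal{O}(1/\sqrt{SKT})$, while the heterogeneity and truncation biases, carried through the drift terms, survive only at order $(\delta^2+\epsilon_r)/T$, yielding the stated rate.

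I expect the main obstacle to be the rigorous treatment of the rank-$r_s$ truncation bias: making precise that the discarded QR tail is governed by $(r_{\max}-r_s)^2$ requires relating the residual colspan energy to the rank gap under Assumption~\ref{ass:subspace}, and controlling how this bias propagates through the non-orthogonal coupling between $\mathbf{A}_s$ and $\mathbf{B}_s$. A secondary difficulty is that AdamW's adaptive, coordinate-wise preconditioning breaks the clean orthogonality of the LoRA factors used in the descent step, so its spectral bounds must be shown to commute with the projection $\Pi_{r_s}$ up to a controllable error that folds into the constants.
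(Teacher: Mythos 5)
Your proposal follows essentially the same route as the paper's proof: a smoothness-based descent inequality, a decomposition of the aggregated update into the true gradient plus a heterogeneity bias of order $\delta$ and a QR-truncation bias of order $(r_{\max}-r_s)^2$, absorption of the AdamW preconditioner into constants, a variance bound improved by averaging over $S$ clients and $K$ steps, and a telescoping sum with the balanced schedule $\eta_g\eta_l=\Theta(1/\sqrt{SKT})$. The only cosmetic difference is that you invoke the SCAFFOLD cancellation identity and an explicit client-drift lemma where the paper models the control-variate correction through a correlation coefficient $\rho$ (reducing the bias to $\mathcal{O}((1-\rho)\delta)$), and both arguments leave the same step informal, namely why the $\delta^2+\epsilon_r$ bias terms end up suppressed by a factor of $1/T$ rather than persisting as constants.
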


\begin{proof}[Proof Sketch]
The proof uses QR-based low-rank reconstruction, subspace alignment through orthogonal initialization, and control variates to mitigate client drift, ensuring stability. Details are in Appendix~\ref{app:convergence}.
\end{proof}

\begin{theorem}[Subspace Preservation under QR Compression]
\label{thm:subspace_preservation}
For $r_k \leq r_s$, $\mathbf{B}_{r_k} = \mathbf{Q}_{:,:r_k}$ and $\mathbf{A}_{r_k} = \mathbf{R}_{:r_k,:}$ satisfy $\mathrm{colspan}(\mathbf{B}_{r_k}) \subseteq \mathrm{colspan}(\mathbf{Q})$.
\end{theorem}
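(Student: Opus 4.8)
The plan is to exploit the column-block structure of the orthogonal factor $\mathbf{Q}$ produced by the QR decomposition in Eq.~\eqref{eq:qr_decomposition}, for which the statement reduces to an elementary containment. First I would write $\mathbf{Q} = [\mathbf{q}_1, \mathbf{q}_2, \dots, \mathbf{q}_m]$, where the columns $\{\mathbf{q}_i\}$ are orthonormal by construction of the (reduced) QR factorization and span $\mathrm{colspan}(\Delta\bm{\theta})$. By the slicing convention $\mathbf{B}_{r_k} = \mathbf{Q}_{:,:r_k}$ used in the personalization step and in Eq.~\eqref{eq:global_parameters}, the matrix $\mathbf{B}_{r_k}$ is exactly the leading block $[\mathbf{q}_1, \dots, \mathbf{q}_{r_k}]$.

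The core step is then a one-line subspace argument. Any $v \in \mathrm{colspan}(\mathbf{B}_{r_k})$ admits a representation $v = \sum_{i=1}^{r_k} \alpha_i \mathbf{q}_i$; since $r_k \le r_s \le m$, this same $v$ lies in $\mathrm{span}\{\mathbf{q}_1, \dots, \mathbf{q}_m\} = \mathrm{colspan}(\mathbf{Q})$ upon padding the remaining coefficients $\alpha_{r_k+1}, \dots, \alpha_m$ with zeros. This establishes $\mathrm{colspan}(\mathbf{B}_{r_k}) \subseteq \mathrm{colspan}(\mathbf{Q})$, which is the claim.

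To connect the result to the convergence analysis, I would emphasize the \emph{nested} subspace hierarchy that monotone column selection induces: for any $r_k \le r_s$ one has $\mathrm{colspan}(\mathbf{Q}_{:,:r_k}) \subseteq \mathrm{colspan}(\mathbf{Q}_{:,:r_s}) \subseteq \mathrm{colspan}(\mathbf{Q})$. The orthonormality of $\mathbf{Q}$ is what endows the truncation with meaning—it guarantees that each $\mathbf{B}_{r_k}$ is itself an orthonormal basis of a genuine $r_k$-dimensional subspace, so the per-client restriction leaks no energy outside the shared span. Identifying $\mathcal{S}_k = \mathrm{colspan}(\mathbf{B}_{r_k})$ and $\mathcal{S}_s = \mathrm{colspan}(\mathbf{Q}_{:,:r_s})$, this chain is precisely the Subspace Consistency property assumed in Assumption~\ref{ass:subspace}, so the theorem furnishes the structural guarantee on which that assumption rests.

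My honest assessment is that there is no genuine analytic obstacle: the conclusion is an immediate consequence of leading-block containment inside an orthonormal factor. The only points requiring care are bookkeeping. I would verify that the \emph{reduced} (thin) QR is used so that $\mathbf{Q}$ carries at least $r_s$ columns—guaranteed whenever $r_s \le \min(d,k)$, consistent with the standard low-rank regime $r \ll \min(d,k)$—and that the paired factor $\mathbf{A}_{r_k} = \mathbf{R}_{:r_k,:}$ is sliced compatibly, so that the reconstruction $\mathbf{B}_{r_k}\mathbf{A}_{r_k}$ is the intended rank-$r_k$ approximation living inside the claimed subspace rather than an artifact of a mismatched indexing convention.
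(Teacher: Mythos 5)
Your proof is correct and takes essentially the same route as the paper's: both arguments simply observe that $\mathbf{B}_{r_k}$ consists of the first $r_k$ columns of the orthonormal factor $\mathbf{Q}$, so $\mathrm{colspan}(\mathbf{B}_{r_k})$ is trivially contained in $\mathrm{colspan}(\mathbf{Q})$. Your added bookkeeping (thin-QR column count, compatible slicing of $\mathbf{A}_{r_k}$, and the nested-subspace link to Assumption~\ref{ass:subspace}) is sound and slightly more careful than the paper's one-paragraph proof, but it does not change the underlying argument.
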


\begin{theorem}[Consistent Subspace Initialization]
    \label{thm:consistent_subspace}
    $\mathbf{B}_k^{(0)} = \mathbf{Q}_{:,:r_k}$ and $\mathbf{A}_k^{(0)} = \mathbf{R}_{:r_k,:}$ ensure all clients start in $\mathrm{colspan}(\mathbf{Q})$, reducing gradient variance by $\mathcal{O}(1/K)$.
\end{theorem}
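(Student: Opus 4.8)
The plan is to split the statement into its two assertions—the purely algebraic subspace-membership claim and the statistical $\mathcal{O}(1/K)$ variance-reduction claim—and prove them in that order. The first is essentially a corollary of Theorem~\ref{thm:subspace_preservation}; the second is where the real work lies.

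First I would establish the subspace membership. The crucial observation is that every client applies $\operatorname{QR}$ to the \emph{same} frozen backbone $\bm{\theta}_0$ in Eq.~\eqref{eq:client_qr_decomposition}, and with a fixed sign convention (e.g.\ Householder QR with nonnegative diagonal of $\mathbf{R}$) this factorization is unique, so $\mathbf{Q}_k=\mathbf{Q}$ and $\mathbf{R}_k=\mathbf{R}$ for all $k$. Hence each $\mathbf{B}_k^{(0)}=\mathbf{Q}_{:,:r_k}$ is built from the first $r_k$ columns of one common orthonormal matrix $\mathbf{Q}$. Applying Theorem~\ref{thm:subspace_preservation} with $r_k\le r_s$ gives $\mathrm{colspan}(\mathbf{B}_k^{(0)})\subseteq\mathrm{colspan}(\mathbf{Q})$ for every client, and since the initial update $\Delta\bm{\theta}_k^{(0)}=\mathbf{B}_k^{(0)}\mathbf{A}_k^{(0)}$ has column space contained in $\mathrm{colspan}(\mathbf{B}_k^{(0)})$, all clients start inside the single $r_s$-dimensional subspace $\mathrm{colspan}(\mathbf{Q})$. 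This part is immediate once the shared-$\mathbf{Q}$ observation is in place.

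Next I would quantify the variance reduction by working in the orthonormal coordinate frame supplied by $\mathbf{Q}$. Because $\mathbf{Q}$ has orthonormal columns, so that $\mathbf{Q}_{:,:r_s}^{\top}\mathbf{Q}_{:,:r_s}=\mathbf{I}$, projecting any client gradient onto $\mathbf{Q}$ preserves norms and yields coordinates that are directly comparable \emph{across} clients—this is exactly what the shared initialization buys. Writing the aggregated stochastic gradient over the $K$ participating clients as $\bar{\mathbf{g}}=\frac{1}{K}\sum_{k=1}^{K}\mathbf{g}_k$ with mean $\mathbb{E}[\bar{\mathbf{g}}]=\frac{1}{K}\sum_{k=1}^{K}\nabla F_k$, the per-client noise $\mathbf{g}_k-\nabla F_k$ is zero-mean, independent across $k$, and bounded in second moment by $\sigma^2$ via Assumption~\ref{ass:variance}. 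Expanding the square and using independence to kill the cross terms gives
\begin{equation}
\mathbb{E}\bigl\|\bar{\mathbf{g}}-\mathbb{E}[\bar{\mathbf{g}}]\bigr\|^2
=\frac{1}{K^2}\sum_{k=1}^{K}\mathbb{E}\bigl\|\mathbf{g}_k-\nabla F_k\bigr\|^2
\le\frac{\sigma^2}{K},
\end{equation}
which is the claimed $\mathcal{O}(1/K)$ bound.

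The step I expect to be the main obstacle is isolating the precise \emph{role} of subspace consistency, since the bare $1/K$ factor already follows from averaging $K$ independent estimators. The honest point to make is that without a common basis the client coordinates live in misaligned frames, so the cross terms $\langle\mathbf{g}_i-\nabla F_i,\ \mathbf{g}_j-\nabla F_j\rangle$ need not vanish in expectation after projection—they contribute an $\mathcal{O}(\sigma^2)$ misalignment residual that does \emph{not} decay with $K$. The shared $\mathbf{Q}$ from the first part is exactly what forces these cross terms to be inner products of independent, zero-mean noise in one fixed orthonormal frame, so they average out and the variance collapses from $\mathcal{O}(\sigma^2)$ to $\mathcal{O}(\sigma^2/K)$. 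I would therefore make the contrast with the misaligned random-initialization case explicit, so that the reduction is attributed to the alignment rather than merely to the averaging.
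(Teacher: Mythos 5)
Your proposal is correct and follows essentially the same route as the paper: the subspace claim is proved by observing that each $\mathbf{B}_k^{(0)}$ consists of the first $r_k$ columns of the common orthonormal $\mathbf{Q}$, and the $\mathcal{O}(1/K)$ bound is obtained exactly as in the paper's Lemma~\ref{lem:gradient_variance}, by averaging $K$ independent bounded-variance gradient noises with the cross-covariance terms eliminated by the shared subspace, contrasted against the misaligned random-initialization case where angle-dependent cross terms inflate the variance. Your additions---the QR-uniqueness remark justifying $\mathbf{Q}_k=\mathbf{Q}$ across clients, and the explicit acknowledgment that the bare $1/K$ factor comes from independence while alignment is what makes the cross terms vanish in a common frame---are refinements of, not departures from, the paper's argument.
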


\begin{theorem}[Convergence with Control Variates and AdamW]
    \label{thm:control_adamw_convergence}
    Under Assumptions \ref{ass:smoothness}--\ref{ass:subspace}, with control variates reducing variance to $\sigma^2(1\!-\!\rho^2)$:
    \begin{equation}
        \frac{1}{T}\!\sum_{t=1}^T \mathbb{E}\left[\|\nabla F(\bm{\theta}_t)\|^2\right] 
        \leq \mathcal{O}\!\left(\!
        \frac{1}{\sqrt{\!SKT}} + \frac{\delta^2 \!+\! \epsilon_r'}{T}
        \!\right)\!,
    \end{equation}
    where $\epsilon_r' \!=\! (r_s \!-\! \bar{r})^2/r_s^2$, $\bar{r}$: mean client rank.
\end{theorem}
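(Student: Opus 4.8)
The plan is to follow the standard non-convex federated-optimization template---a per-round descent inequality followed by a telescoping sum---while isolating three additive error sources (stochastic variance, gradient heterogeneity, and rank compression) and tracking how the control variates and the AdamW preconditioner reshape each. Since Theorem~\ref{thm:convergence} already establishes the same $\mathcal{O}(1/\sqrt{SKT}+(\delta^2+\epsilon_r)/T)$ skeleton for plain ILoRA, I would reuse that machinery and concentrate on the two genuinely new ingredients: the variance-reduction factor $(1-\rho^2)$ contributed by the control variates, and the replacement of the worst-case rank term $\epsilon_r=(r_{\max}-r_s)^2$ by the normalized mean-rank term $\epsilon_r'=(r_s-\bar r)^2/r_s^2$.

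First I would invoke $L$-smoothness (Assumption~\ref{ass:smoothness}) on the aggregated iterate to obtain a descent lemma
\begin{equation}
\mathbb{E}[F(\bm\theta_{t+1})] \le F(\bm\theta_t) - \eta_g\eta_l\langle\nabla F(\bm\theta_t),\mathbb{E}[\mathbf{d}_t]\rangle + \tfrac{L}{2}\eta_g^2\eta_l^2\,\mathbb{E}[\|\mathbf{d}_t\|^2],
\end{equation}
where $\mathbf{d}_t$ is the effective server update after concatenated-QR aggregation and projection onto $\mathrm{colspan}(\mathbf{Q}_{:,:r_s})$. The central decomposition writes $\mathbf{d}_t=\nabla F(\bm\theta_t)+\mathbf{e}^{\mathrm{var}}_t+\mathbf{e}^{\mathrm{het}}_t+\mathbf{e}^{\mathrm{rank}}_t$, separating stochastic noise, the heterogeneity/client-drift bias, and the residual lost to rank-$r_s$ truncation; the subspace-consistency Assumption~\ref{ass:subspace} guarantees the truncation residual is orthogonal to the shared basis, so these terms can be bounded independently.

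I would then bound each term in turn. For $\mathbf{e}^{\mathrm{var}}_t$, a SCAFFOLD-style argument shows the corrected gradients of Eqs.~\eqref{eq:corrected_gradient_A}--\eqref{eq:corrected_gradient_B} subtract a control variate correlated with the true local-global gap; writing $\rho$ for that correlation contracts the noise from $\sigma^2$ to $\sigma^2(1-\rho^2)$, which feeds the leading $1/\sqrt{SKT}$ term with an improved constant. For $\mathbf{e}^{\mathrm{rank}}_t$, I would bound the QR truncation error using Theorem~\ref{thm:subspace_preservation}: because the per-client drift is now corrected and the global control variate in Eqs.~\eqref{eq:update_global_control_A}--\eqref{eq:update_global_control_B} averages the clients, the aggregate compression residual is governed by the \emph{mean} rank deviation rather than the worst case, giving $\|\mathbf{e}^{\mathrm{rank}}_t\|^2=\mathcal{O}(\epsilon_r')$ with $\epsilon_r'=(r_s-\bar r)^2/r_s^2$. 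The heterogeneity term $\mathbf{e}^{\mathrm{het}}_t$ is controlled by $\delta$ through Assumption~\ref{ass:heterogeneity}, contributing the $\delta^2/T$ component after summation. Telescoping over $t=1,\dots,T$, dividing by $T$, and substituting $\eta_l\le 1/L$ and $\eta_g\eta_l=\Theta(1/\sqrt{SKT})$ then assembles the claimed rate.

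The hardest step will be handling the AdamW preconditioner inside the descent lemma, because the adaptive second-moment scaling makes $\mathbf{d}_t$ a nonlinear, history-dependent function of the stochastic gradients, so $\langle\nabla F,\mathbb{E}[\mathbf{d}_t]\rangle$ is no longer simply the squared gradient norm. I would resolve this by assuming the AdamW second-moment estimates stay bounded away from zero and above---equivalently, the effective coordinatewise learning rates lie in a fixed interval $[\eta_{\min},\eta_{\max}]$---which sandwiches the preconditioner between positive-definite constant multiples of the identity and reduces the adaptive update to a scaled SGD update with controlled constants. The subtle point is verifying that the control-variate variance reduction survives this rescaling, i.e., that the correlation $\rho$ degrades by at most the conditioning ratio $\eta_{\max}/\eta_{\min}$, so that the $(1-\rho^2)$ factor still multiplies $\sigma^2$ instead of being washed out by the adaptive preconditioning.
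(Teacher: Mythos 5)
Your overall skeleton---smoothness descent lemma, decomposition of the server update into variance, heterogeneity, and rank-compression errors, the SCAFFOLD-style contraction of the noise to $\sigma^2(1-\rho^2)$, and the telescoping with $\eta_g\eta_l=\Theta(1/\sqrt{SKT})$---matches the paper's proof (Steps 1--3 and 5 there), and your treatment of the heterogeneity term via Assumption~\ref{ass:heterogeneity} is the same. The genuine gap is in how you obtain the new rank term $\epsilon_r'=(r_s-\bar r)^2/r_s^2$. You attribute it to the QR truncation residual of the aggregated \emph{model update}, arguing that because client drift is corrected and the global control variate averages over clients, the compression error is ``governed by the mean rank deviation rather than the worst case.'' That inference does not hold: the truncation error of the aggregated update is determined by its trailing spectral content (exactly $\|R_{22}\|_F$ by Property~\ref{prop:deterministic_error}, or the tail singular values in Proposition~\ref{prop:truncation_error}), and correcting gradient drift or averaging control-variate \emph{deltas} does nothing to convert a worst-case spectral bound into a mean-rank bound. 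Under Assumption~\ref{ass:subspace} the per-client updates already lie inside the server subspace, so the model-update truncation route cannot produce a term that depends on $\bar r$ at all.

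The paper's mechanism is different and is the actual source of $\epsilon_r'$: the control variates themselves live in per-client rank-$r_k$ parameter spaces, and aggregating them into the server's rank-$r_s$ subspace incurs a \emph{projection} error. This is Lemma~\ref{lem:rank_aligned_error}:
\begin{equation}
\| P_s c^{(t)} - c^{(t)} \|^2 \;\leq\; \sum_{k\in S_t} p_k^2 \| (P_s - P_k) c_k^{(t)} \|^2 \;\leq\; \mathcal{O}\bigl((r_s - \bar r)^2 \|c_k^{(t)}\|^2\bigr),
\end{equation}
where the weighted sum over clients is precisely what replaces $r_{\max}$ by the mean rank $\bar r$ (Step~4 of the paper's proof). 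Your proposal never bounds the subspace mismatch of the control states, so as written it cannot recover the stated $\epsilon_r'$; to repair it you would need to add this projection analysis for $c^{(t)}$ and $c_k$, not refine the truncation bound on $\Delta\bm{\theta}$. Your final point about sandwiching the AdamW preconditioner in $[\eta_{\min},\eta_{\max}]$ is a reasonable (if extra-assumption-introducing) way to make Step~5 rigorous---the paper is itself informal there, folding the corrected gradients into the moment recursions and citing the AdamW analysis---so that part is an acceptable alternative rather than an error.
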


\begin{proof}[Proof Sketch]  
The proof shows that control variates reduce client drift by correcting gradients $\tilde{\mathbf{g}}_k = \mathbf{g}_k + (\mathbf{c} - \mathbf{c}_k)$, improving convergence. Details are in Appendix~\ref{app:control_variates}.
\end{proof}

\textbf{ILoRA} provides strong theoretical guarantees addressing three core federated LoRA challenges: \textbf{rank robustness} via QR aggregation (Theorem~\ref{thm:subspace_preservation}, Appendix~\ref{app:qr_aggregation}), \textbf{Non-IID resilience} through control variates (Theorem~\ref{thm:control_adamw_convergence}, Appendix~\ref{app:control_variates}), and \textbf{communication efficiency} with $\mathcal{O}(r_s \cdot \max(d,k))$ cost (Theorem~\ref{thm:total_communication}, Appendix~\ref{app:communication}).

Theoretical analysis demonstrates $\mathcal{O}(1/\sqrt{SKT})$ convergence (Theorems~\ref{thm:convergence},~\ref{thm:control_adamw_convergence}, Appendix~\ref{app:convergence},~\ref{app:control_variates}) with bounded heterogeneity effects, global stability (Theorem~\ref{thm:comprehensive_guarantee}, Appendix~\ref{app:communication}), and $\mathcal{O}(1/K)$ variance reduction via orthogonal initialization (Theorem~\ref{thm:consistent_subspace}, Lemma~\ref{lem:gradient_variance}, Appendix~\ref{app:orthogonal_init}). Rank-aware control variates (Lemmas~\ref{lem:variance_reduction},~\ref{lem:rank_aligned_error}, Theorem~\ref{thm:rank_compatibility}, Appendix~\ref{app:control_variates}) ensure robustness to data and rank heterogeneity. Empirical validation (Tables~\ref{tab:initialization_comparison},~\ref{tab:aggregation_comparison},~\ref{tab:comprehensive_guarantees},~\ref{tab:control_comparison},~\ref{tab:communication_comparison}, Appendix~\ref{app:orthogonal_init},~\ref{app:qr_aggregation},~\ref{app:communication},~\ref{app:control_variates}) confirms \textbf{ILoRA}'s balanced performance across drift mitigation, efficiency, and accuracy.

\subsection{Experimental Setup}

\textbf{Models and Datasets.} ILoRA is evaluated on CV (ViT-Base \cite{dosovitskiy2020image}, Swin-Base \cite{liu2021swin}) and NLP (RoBERTa \cite{liu2019roberta}) benchmarks. CV datasets: CIFAR-10/100 \cite{krizhevsky2009learning}, Tiny-ImageNet \cite{hendrycks2019benchmarking}, DomainNet \cite{peng2019moment}. NLP: YahooQA \cite{kucuktunc2012large}, QQP, QNLI \cite{wang2018glue}, IMDB \cite{maas2011learning}, SST-2 \cite{socher2013recursive}, AGNews and DBPedia14 \cite{zhang2015character}. All use Non-IID Dirichlet partitioning.

\textbf{Baselines and Settings.} Compared methods: FedIT \cite{li2020federated}, FLoRA \cite{wang2024flora}, LoRA-FAIR \cite{bian2024lora}, FFA-LoRA \cite{sun2024improving}. Unified LoRA configurations and optimization protocols are used across all experiments.

Additional implementation details including hyperparameter settings, hardware configurations, and computational resources are provided in Appendix~\ref{app:implementation_setup}.
\begin{table*}[t]
	\centering
\caption{Peak accuracy comparison (\%) under different federation scales ($\alpha=0.5$). ``Setting'' denotes client count and participation rate}

	\label{tab:client_scalability}
	\normalsize
	\setlength{\tabcolsep}{1pt}
	\renewcommand{\arraystretch}{1}
	
	\begin{tabularx}{\textwidth}{l c *{7}{C} C}
		\arrayrulecolor{black}\specialrule{1.2pt}{0pt}{0pt}
		\multirow{2}{*}{Method} & \multirow{2}{*}{Setting} & \multicolumn{7}{c}{Datasets (Accuracy $\uparrow$)} & \multirow{2}{*}{\textbf{Avg.}} \\
		\cmidrule(lr){3-9}
		& & YahooQA & QQP & IMDB & QNLI & SST-2 & AGNews & DBPedia14 & \\
		\midrule
		
		FedIT+QR & 3 (100\%) 
		& \cellnum{40.73}{\ppbase} 
		& \cellnum{63.18}{\ppbase} 
		& \cellnum{58.09}{\ppbase} 
		& \cellnum{64.43}{\ppbase} 
		& \cellnum{71.90}{\ppbase} 
		& \cellnum{87.04}{\ppbase} 
		& \cellnum{74.31}{\ppbase} 
		& \cellnum{65.67}{\ppbase} \\
		
		\rowcolor{LightBlue!80!white}
		ILoRA & 3 (100\%) 
		& \cellnum{\textbf{63.76}}{\ppup{23.0}} 
		& \cellnum{\textbf{63.66}}{\ppup{0.5}} 
		& \cellnum{\textbf{78.26}}{\ppup{20.2}} 
		& \cellnum{\textbf{83.51}}{\ppup{19.1}} 
		& \cellnum{\textbf{82.57}}{\ppup{10.7}} 
		& \cellnum{\textbf{91.68}}{\ppup{4.6}} 
		& \cellnum{\textbf{97.62}}{\ppup{23.3}} 
		& \cellnum{\textbf{80.15}}{\ppup{14.5}} \\
		
		FedIT+QR & 15 (80\%) 
		& \cellnum{31.88}{\ppbase} 
		& \cellnum{63.18}{\ppbase} 
		& \cellnum{74.78}{\ppbase} 
		& \cellnum{50.58}{\ppbase} 
		& \cellnum{89.79}{\ppbase} 
		& \cellnum{63.11}{\ppbase} 
		& \cellnum{77.39}{\ppbase} 
		& \cellnum{64.39}{\ppbase} \\
		
		\rowcolor{LightBlue!80!white}
		ILoRA & 15 (80\%) 
		& \cellnum{\textbf{65.70}}{\ppup{33.8}} 
		& \cellnum{\textbf{77.58}}{\ppup{14.4}} 
		& \cellnum{\textbf{76.88}}{\ppup{2.1}} 
		& \cellnum{\textbf{62.11}}{\ppup{11.5}} 
		& \cellnum{\textbf{89.79}}{\ppup{0.0}} 
		& \cellnum{\textbf{88.17}}{\ppup{25.1}} 
		& \cellnum{\textbf{94.66}}{\ppup{17.3}} 
		& \cellnum{\textbf{79.27}}{\ppup{14.9}} \\
		
		\arrayrulecolor{black}\specialrule{1.2pt}{0pt}{0pt}
	\end{tabularx}
    \vspace{-5pt}
\end{table*}
\begin{table*}[t]
	\centering
\caption{Peak accuracy comparison (\%) under extreme settings (Dir($\alpha$=0.1/0.5/1.0), ranks = 2/8/16).}

	\label{tab:extreme_heterogeneity}
	\normalsize
	\setlength{\tabcolsep}{1pt}
	\renewcommand{\arraystretch}{1}
	
	\begin{tabularx}{\textwidth}{l c *{7}{C} C}
		\arrayrulecolor{black}\specialrule{1.2pt}{0pt}{0pt}
		\multirow{2}{*}{Method} & \multirow{2}{*}{Dir($\alpha$)} & \multicolumn{7}{c}{Datasets (Accuracy $\uparrow$)} & \multirow{2}{*}{\textbf{Avg.}} \\
		\cmidrule(lr){3-9}
		& & YahooQA & QQP & IMDB & QNLI & SST-2 & AGNews & DBPedia14 & \\
		\midrule
		
		FedIT+QR & 0.1 
		& \cellnum{34.07}{\ppbase} 
		& \cellnum{85.36}{\ppbase} 
		& \cellnum{52.40}{\ppbase} 
		& \cellnum{50.54}{\ppbase} 
		& \cellnum{50.92}{\ppbase} 
		& \cellnum{70.75}{\ppbase} 
		& \cellnum{44.06}{\ppbase} 
		& \cellnum{55.44}{\ppbase} \\
		
		\rowcolor{LightBlue!80!white}
		ILoRA & 0.1 
		& \cellnum{\textbf{51.05}}{\ppup{17.0}} 
		& \cellnum{\textbf{84.22}}{\ppdown{1.1}} 
		& \cellnum{\textbf{50.14}}{\ppdown{2.3}} 
		& \cellnum{\textbf{50.54}}{\ppup{0.0}} 
		& \cellnum{\textbf{54.82}}{\ppup{3.9}} 
		& \cellnum{\textbf{80.75}}{\ppup{10.0}} 
		& \cellnum{\textbf{87.64}}{\ppup{43.6}} 
		& \cellnum{\textbf{65.59}}{\ppup{10.2}} \\
		
		FedIT+QR & 0.5 
		& \cellnum{40.73}{\ppbase} 
		& \cellnum{63.18}{\ppbase} 
		& \cellnum{58.09}{\ppbase} 
		& \cellnum{64.43}{\ppbase} 
		& \cellnum{71.90}{\ppbase} 
		& \cellnum{87.04}{\ppbase} 
		& \cellnum{74.31}{\ppbase} 
		& \cellnum{65.67}{\ppbase} \\
		
		\rowcolor{LightBlue!80!white}
		ILoRA & 0.5 
		& \cellnum{\textbf{63.76}}{\ppup{23.0}} 
		& \cellnum{\textbf{63.66}}{\ppup{0.5}} 
		& \cellnum{\textbf{78.26}}{\ppup{20.2}} 
		& \cellnum{\textbf{83.51}}{\ppup{19.1}} 
		& \cellnum{\textbf{82.57}}{\ppup{10.7}} 
		& \cellnum{\textbf{91.68}}{\ppup{4.6}} 
		& \cellnum{\textbf{97.62}}{\ppup{23.3}} 
		& \cellnum{\textbf{80.15}}{\ppup{14.5}} \\
		
		FedIT+QR & 1.0 
		& \cellnum{60.98}{\ppbase} 
		& \cellnum{63.23}{\ppbase} 
		& \cellnum{81.28}{\ppbase} 
		& \cellnum{50.65}{\ppbase} 
		& \cellnum{55.62}{\ppbase} 
		& \cellnum{80.92}{\ppbase} 
		& \cellnum{75.38}{\ppbase} 
		& \cellnum{66.87}{\ppbase} \\
		
		\rowcolor{LightBlue!80!white}
		ILoRA & 1.0 
		& \cellnum{\textbf{70.90}}{\ppup{9.9}} 
		& \cellnum{\textbf{64.31}}{\ppup{1.1}} 
		& \cellnum{\textbf{81.73}}{\ppup{0.5}} 
		& \cellnum{\textbf{82.59}}{\ppup{31.9}} 
		& \cellnum{\textbf{88.88}}{\ppup{33.3}} 
		& \cellnum{\textbf{90.57}}{\ppup{9.7}} 
		& \cellnum{\textbf{97.19}}{\ppup{21.8}} 
		& \cellnum{\textbf{82.31}}{\ppup{15.4}} \\
		
		\arrayrulecolor{black}\specialrule{1.2pt}{0pt}{0pt}
	\end{tabularx}
    \vspace{-10pt}
\end{table*}

\begin{table}[htbp]
    \centering
    \caption{Control variates mitigate client drift on Tiny-ImageNet; ILoRA-S outperforms ILoRA. \textbf{Bold} indicates best performance.}

    \label{tab:tinyimagenet_results}
    \renewcommand{\arraystretch}{1}
    \begin{tabular}{lccc}
        \toprule
        \textbf{Method} & $\alpha=0.5$ & $\alpha=0.6$ & $\alpha=0.7$ \\
        \midrule
        ILoRA & 85.14$_{\pm0.34}$ & 85.90$_{\pm0.51}$ & 85.93$_{\pm0.20}$ \\
        ILoRA-S & \textbf{86.03}$_{\pm0.32}$ & \textbf{86.29}$_{\pm0.19}$ & \textbf{86.43}$_{\pm0.17}$ \\
        \bottomrule
    \end{tabular}
\end{table}
\subsection{Experiment Results}

\textbf{CV Performance under Heterogeneous Settings.} ILoRA-S reaches 87.51\% on CIFAR-100 with ViT-Base+AdamW (Table~\ref{tab:heterogeneous_lora_comparison}), improving over LoRA-FAIR by 1.10\% and FedIT by 2.32\%. It also performs well with Swin-Base (Figure~\ref{fig:combined_comparison}(d)), showing heterogeneity resilience. On DomainNet (50 rounds, Table~\ref{tab:vit_swin_multi_domain}), ILoRA-S attains averages of 73.90\% (ViT-Base) and 81.79\% (Swin-Base), supporting generalization across domains. Performance curves (Figure~\ref{fig:combined_comparison}(a,c)) show higher accuracy, supporting subspace alignment and reduced drift. Advantages persist in homogeneous settings (Table~\ref{tab:homogeneous_lora_comparison}) and with SGD (Appendix Figures~\ref{fig:app_homog_vit_sgd}-\ref{fig:app_heterog_swin_sgd}).

\textbf{NLP Performance under Heterogeneous Settings.}
With heterogeneous LoRA ranks and Non-IID data (Dir($\alpha$=0.6)), ILoRA achieves 85.02\% accuracy on seven NLP benchmarks (Table~\ref{tab:nlp_lora_comparison}), with ILoRA-S reaching 86.11\% and outperforming all baselines (Figure~\ref{fig:combined_comparison}). ILoRA-S sets SOTA on six of seven datasets, with large gains on QQP (83.52\%, +18.32\%), YahooQA (69.87\%, +3.88\%), and DBPedia14 (98.43\%, +1.20\%), visualized in Fig.~\ref{fig:nlp_heatmap}. Consistent superiority at $\alpha=0.5$ (Table~\ref{tab:nlp_lora_comparison5}, Appendix~\ref{app:theory}) confirms robustness. ILoRA variants show better stability, reducing loss 40--95\% versus FedIT (Figure~\ref{fig:combined_results}a) and validating generalization (Figs.~\ref{fig:nlp_heatmap_05} and \ref{fig:nlp_radar_05}, Appendix~\ref{app:additional_results}).

\textbf{Comparison with Centralized Learning.} ILoRA-S achieves the closest performance to centralized training among federated methods under AdamW (Figure~\ref{fig:adamw_comparison}), recovering over \textbf{95\%} of centralized accuracy on Tiny-ImageNet and significantly narrowing the performance gap. Similar SGD trends (Appendix~\ref{app:additional_results}, Figure~\ref{fig:sgd_comparison}) confirm consistent high recovery rates across datasets and optimizers.

\subsection{Communication Efficiency Analysis}
\label{subsec:communication_efficiency}

ILoRA delivers communication efficiency, with downlink $\mathcal{O}(r_s(d+k))$ and uplink $\mathcal{O}(S\!\cdot\!r_{\max}(d+k))$—below FLoRA's $\mathcal{O}(r_{\text{total}}(d+k))$ (Table~\ref{tab:communication_comparison}, Appendix~\ref{app:communication}). It shows near-constant overhead ($1.2\times$–$1.6\times$ for 10–100 clients) versus FLoRA's linear rise ($2.5\times$–$25.0\times$) (Table~\ref{tab:scaling_comparison}, Appendix~\ref{app:additional_results}). This $\mathcal{O}(1)$ scaling via QR compression makes ILoRA suited for large-scale federated deployments.

\subsection{Ablation Study}
\textbf{Effectiveness of Server-Side Aggregation.}
Our concatenated QR aggregation outperforms FedIT+QR across settings: Table~\ref{tab:adamw_comparison} in Appendix~\ref{app:additional_results} shows 9.08\% improvement at high heterogeneity ($\alpha=0.7$); Table~\ref{tab:extreme_heterogeneity} demonstrates robustness under extreme heterogeneity ($\alpha=0.1,0.5,1.0$) and diverse ranks (2,8,16) with 10.15--15.44\% gains; Table~\ref{tab:client_scalability} confirms scalability to 15-client settings (14.88\% improvement) comparable to 3-client performance (14.48\%). These results validate generalization across heterogeneity levels, parameter configurations, and federation scales.
\\
\textbf{Client Drift Suppression via AdamW Optimization}
ILoRA-S mitigates client drift on CV and NLP. On CIFAR-100, it reaches 89.00\% accuracy with std 0.07--0.33 across $\alpha=0.5$--$0.7$ (Table~\ref{tab:cifar100_results}, Appendix~\ref{app:additional_results}). On Tiny-ImageNet, it maintains 0.9--1.0\% gains over ILoRA (Table~\ref{tab:tinyimagenet_results}). For NLP, ILoRA-S improves AGNews (Table~\ref{tab:agnews_results}, Appendix~\ref{app:additional_results}) to 92.87\% at $\alpha=0.7$ and outperforms ILoRA across heterogeneity levels. These results demonstrate the effectiveness of our control variate mechanism in suppressing client drift in heterogeneous federated settings.
\\
\textbf{Cumulative Benefits of Component Integration}
Ablation studies on QNLI (Figure~\ref{fig:combined_results}b) show complementary gains from ILoRA's core components: orthogonal initialization ($M_x$) aligns subspaces, concatenated aggregation ($M_y$) enables cross-client fusion, and control variates ($M_z$) suppress drift. Cumulative improvements validate our holistic design where each mechanism addresses distinct federated LoRA challenges while synergistically enhancing performance.
\begin{figure}[htbp]
	\centering
	\includegraphics[width=0.95\linewidth]{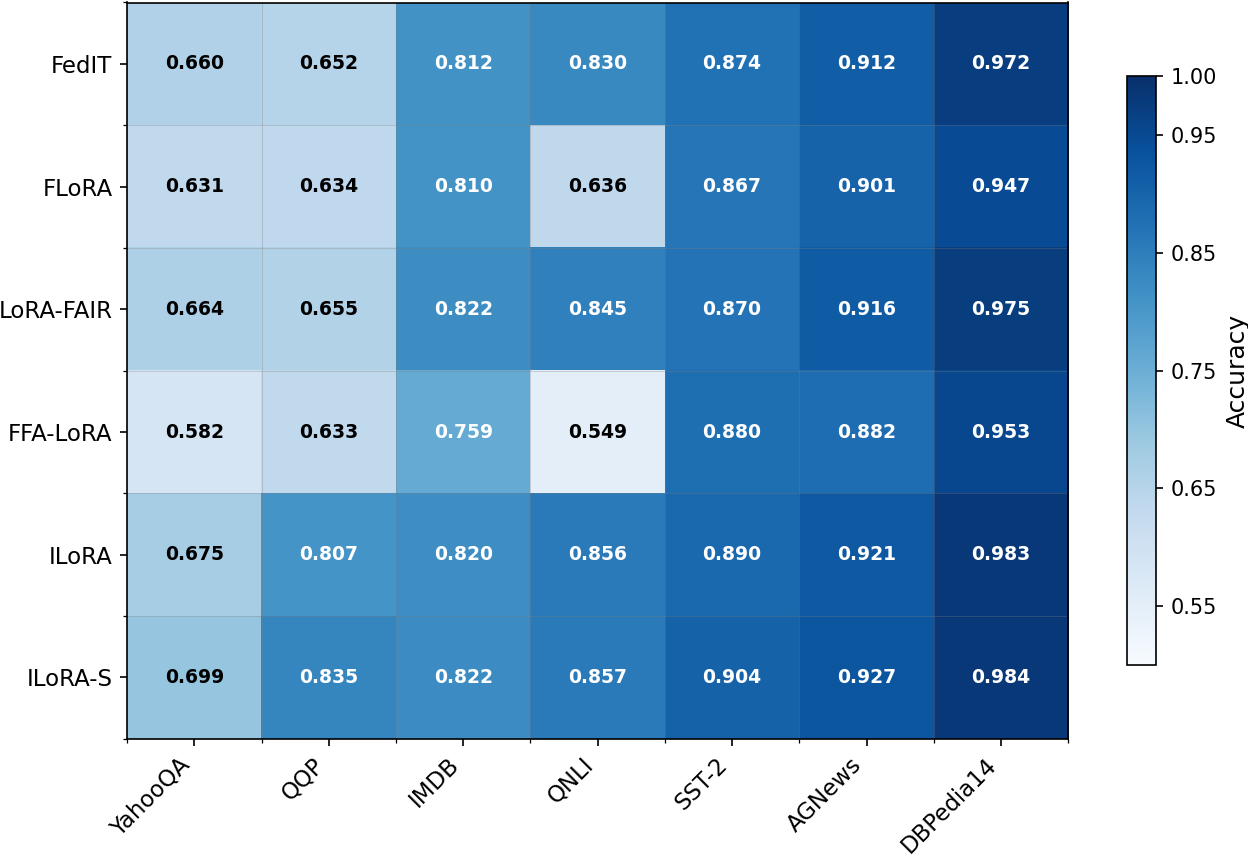}
\caption{NLP accuracy heatmap (RoBERTa, Dir($\alpha$=0.6))}
	\label{fig:nlp_heatmap}
\end{figure}
\section{Conclusion}
\label{sec:conclusion}
In this work, we proposed \textbf{ILoRA} to address the key challenges of initialization instability, rank-incompatible aggregation, and client drift in federated LoRA fine-tuning. Our unified framework systematically integrates three core innovations: QR-based orthogonal initialization for stable subspace alignment, concatenated QR aggregation for exact heterogeneous-rank fusion, and rank-aware control variates for effective drift mitigation. Supported by theoretical convergence guarantees and extensive experiments across diverse vision and NLP benchmarks, ILoRA consistently achieves state-of-the-art performance while maintaining communication efficiency. This work establishes a principled foundation for federated fine-tuning under heterogeneity, with future extensions planned for broader parameter-efficient methods and more constrained federated environments.
{
    \small
    \bibliographystyle{ieeenat_fullname}
    \bibliography{main}

@inproceedings{mcmahan2017communication,
	title={Communication-efficient learning of deep networks from decentralized data},
	author={McMahan, Brendan and Moore, Eider and Ramage, Daniel and Hampson, Seth and y Arcas, Blaise Aguera},
	booktitle={Artificial intelligence and statistics},
	pages={1273--1282},
	year={2017},
	organization={PMLR}
}

@inproceedings{karimireddy2020scaffold,
	title={Scaffold: Stochastic controlled averaging for federated learning},
	author={Karimireddy, Sai Praneeth and Kale, Satyen and Mohri, Mehryar and Reddi, Sashank and Stich, Sebastian and Suresh, Ananda Theertha},
	booktitle={International conference on machine learning},
	pages={5132--5143},
	year={2020},
	organization={PMLR}
}

@article{li2020federated,
	title={Federated optimization in heterogeneous networks},
	author={Li, Tian and Sahu, Anit Kumar and Zaheer, Manzil and Sanjabi, Maziar and Talwalkar, Ameet and Smith, Virginia},
	journal={Proceedings of Machine learning and systems},
	volume={2},
	pages={429--450},
	year={2020}
}

@article{kairouz2021advances,
	title={Advances and open problems in federated learning},
	author={Kairouz, Peter and McMahan, H Brendan and Avent, Brendan and Bellet, Aur{\'e}lien and Bennis, Mehdi and Bhagoji, Arjun Nitin and Bonawitz, Kallista and Charles, Zachary and Cormode, Graham and Cummings, Rachel and others},
	journal={Foundations and trends{\textregistered} in machine learning},
	volume={14},
	number={1--2},
	pages={1--210},
	year={2021},
	publisher={Now Publishers, Inc.}
}

@article{hu2022lora,
	title={Lora: Low-rank adaptation of large language models.},
	author={Hu, Edward J and Shen, Yelong and Wallis, Phillip and Allen-Zhu, Zeyuan and Li, Yuanzhi and Wang, Shean and Wang, Lu and Chen, Weizhu and others},
	journal={ICLR},
	volume={1},
	number={2},
	pages={3},
	year={2022}
}

@article{zhang2023adalora,
	title={Adalora: Adaptive budget allocation for parameter-efficient fine-tuning},
	author={Zhang, Qingru and Chen, Minshuo and Bukharin, Alexander and Karampatziakis, Nikos and He, Pengcheng and Cheng, Yu and Chen, Weizhu and Zhao, Tuo},
	journal={arXiv preprint arXiv:2303.10512},
	year={2023}
}

@article{buyukakyuz2024olora,
	title={Olora: Orthonormal low-rank adaptation of large language models},
	author={B{\"u}y{\"u}kaky{\"u}z, Kerim},
	journal={arXiv preprint arXiv:2406.01775},
	year={2024}
}

@inproceedings{liu2024dora,
	title={Dora: Weight-decomposed low-rank adaptation},
	author={Liu, Shih-Yang and Wang, Chien-Yi and Yin, Hongxu and Molchanov, Pavlo and Wang, Yu-Chiang Frank and Cheng, Kwang-Ting and Chen, Min-Hung},
	booktitle={Forty-first International Conference on Machine Learning},
	year={2024}
}

@article{valipour2022dylora,
	title={Dylora: Parameter efficient tuning of pre-trained models using dynamic search-free low-rank adaptation},
	author={Valipour, Mojtaba and Rezagholizadeh, Mehdi and Kobyzev, Ivan and Ghodsi, Ali},
	journal={arXiv preprint arXiv:2210.07558},
	year={2022}
}

@inproceedings{zhang2024towards,
	title={Towards building the federatedgpt: Federated instruction tuning},
	author={Zhang, Jianyi and Vahidian, Saeed and Kuo, Martin and Li, Chunyuan and Zhang, Ruiyi and Yu, Tong and Wang, Guoyin and Chen, Yiran},
	booktitle={ICASSP 2024-2024 IEEE International Conference on Acoustics, Speech and Signal Processing (ICASSP)},
	pages={6915--6919},
	year={2024},
	organization={IEEE}
}

@article{wang2024flora,
	title={Flora: Federated fine-tuning large language models with heterogeneous low-rank adaptations},
	author={Wang, Ziyao and Shen, Zheyu and He, Yexiao and Sun, Guoheng and Wang, Hongyi and Lyu, Lingjuan and Li, Ang},
	journal={Advances in Neural Information Processing Systems},
	volume={37},
	pages={22513--22533},
	year={2024}
}

@article{bian2024lora,
	title={Lora-fair: Federated lora fine-tuning with aggregation and initialization refinement},
	author={Bian, Jieming and Wang, Lei and Zhang, Letian and Xu, Jie},
	journal={arXiv preprint arXiv:2411.14961},
	year={2024}
}

@article{cho2024heterogeneous,
	title={Heterogeneous lora for federated fine-tuning of on-device foundation models},
	author={Cho, Yae Jee and Liu, Luyang and Xu, Zheng and Fahrezi, Aldi and Joshi, Gauri},
	journal={arXiv preprint arXiv:2401.06432},
	year={2024}
}

@article{bai2024federated,
	title={Federated fine-tuning of large language models under heterogeneous tasks and client resources},
	author={Bai, Jiamu and Chen, Daoyuan and Qian, Bingchen and Yao, Liuyi and Li, Yaliang},
	journal={Advances in Neural Information Processing Systems},
	volume={37},
	pages={14457--14483},
	year={2024}
}

@article{zhao2018federated,
	title={Federated learning with non-iid data},
	author={Zhao, Yue and Li, Meng and Lai, Liangzhen and Suda, Naveen and Civin, Damon and Chandra, Vikas},
	journal={arXiv preprint arXiv:1806.00582},
	year={2018}
}

@article{hsu2019measuring,
	title={Measuring the effects of non-identical data distribution for federated visual classification},
	author={Hsu, Tzu-Ming Harry and Qi, Hang and Brown, Matthew},
	journal={arXiv preprint arXiv:1909.06335},
	year={2019}
}

@article{golub2013matrix,
	title={Matrix computations 4th edition the johns hopkins university press},
	author={Golub, GH and Van Loan, CF},
	journal={Baltimore, MD},
	year={2013}
}

@article{hard2018federated,
	title={Federated learning for mobile keyboard prediction},
	author={Hard, Andrew and Rao, Kanishka and Mathews, Rajiv and Ramaswamy, Swaroop and Beaufays, Fran{\c{c}}oise and Augenstein, Sean and Eichner, Hubert and Kiddon, Chlo{\'e} and Ramage, Daniel},
	journal={arXiv preprint arXiv:1811.03604},
	year={2018}
}

@inproceedings{deng2009imagenet,
	title={Imagenet: A large-scale hierarchical image database},
	author={Deng, Jia and Dong, Wei and Socher, Richard and Li, Li-Jia and Li, Kai and Fei-Fei, Li},
	booktitle={2009 IEEE conference on computer vision and pattern recognition},
	pages={248--255},
	year={2009},
	organization={Ieee}
}

@article{wang2018glue,
	title={GLUE: A multi-task benchmark and analysis platform for natural language understanding},
	author={Wang, Alex and Singh, Amanpreet and Michael, Julian and Hill, Felix and Levy, Omer and Bowman, Samuel R},
	journal={arXiv preprint arXiv:1804.07461},
	year={2018}
}

@article{caldas2018leaf,
	title={Leaf: A benchmark for federated settings},
	author={Caldas, Sebastian and Duddu, Sai Meher Karthik and Wu, Peter and Li, Tian and Kone{\v{c}}n{\`y}, Jakub and McMahan, H Brendan and Smith, Virginia and Talwalkar, Ameet},
	journal={arXiv preprint arXiv:1812.01097},
	year={2018}
}

@article{brown2020language,
	title={Language models are few-shot learners},
	author={Brown, Tom and Mann, Benjamin and Ryder, Nick and Subbiah, Melanie and Kaplan, Jared D and Dhariwal, Prafulla and Neelakantan, Arvind and Shyam, Pranav and Sastry, Girish and Askell, Amanda and others},
	journal={Advances in neural information processing systems},
	volume={33},
	pages={1877--1901},
	year={2020}
}

@article{touvron2023llama,
	title={Llama: Open and efficient foundation language models},
	author={Touvron, Hugo and Lavril, Thibaut and Izacard, Gautier and Martinet, Xavier and Lachaux, Marie-Anne and Lacroix, Timoth{\'e}e and Rozi{\`e}re, Baptiste and Goyal, Naman and Hambro, Eric and Azhar, Faisal and others},
	journal={arXiv preprint arXiv:2302.13971},
	year={2023}
}

@inproceedings{devlin2019bert,
	title={Bert: Pre-training of deep bidirectional transformers for language understanding},
	author={Devlin, Jacob and Chang, Ming-Wei and Lee, Kenton and Toutanova, Kristina},
	booktitle={Proceedings of the 2019 conference of the North American chapter of the association for computational linguistics: human language technologies, volume 1 (long and short papers)},
	pages={4171--4186},
	year={2019}
}

@article{sun2024improving,
	title={Improving lora in privacy-preserving federated learning},
	author={Sun, Youbang and Li, Zitao and Li, Yaliang and Ding, Bolin},
	journal={arXiv preprint arXiv:2403.12313},
	year={2024}
}

@article{su2023fedra,
	title={FedRA: A Random Allocation Strategy for Federated Tuning to Unleash the Power of Heterogeneous Clients},
	author={Su, Shangchao and Li, Bin and Xue, Xiangyang},
	journal={arXiv preprint arXiv:2311.11227},
	year={2023}
}

@inproceedings{li2022federated,
	title={Federated learning on non-iid data silos: An experimental study},
	author={Li, Qinbin and Diao, Yiqun and Chen, Quan and He, Bingsheng},
	booktitle={2022 IEEE 38th international conference on data engineering (ICDE)},
	pages={965--978},
	year={2022},
	organization={IEEE}
}

@article{han2024parameter,
	title={Parameter-efficient fine-tuning for large models: A comprehensive survey},
	author={Han, Zeyu and Gao, Chao and Liu, Jinyang and Zhang, Jeff and Zhang, Sai Qian},
	journal={arXiv preprint arXiv:2403.14608},
	year={2024}
}

@article{ding2023parameter,
	title={Parameter-efficient fine-tuning of large-scale pre-trained language models},
	author={Ding, Ning and Qin, Yujia and Yang, Guang and Wei, Fuchao and Yang, Zonghan and Su, Yusheng and Hu, Shengding and Chen, Yulin and Chan, Chi-Min and Chen, Weize and others},
	journal={Nature Machine Intelligence},
	volume={5},
	number={3},
	pages={220--235},
	year={2023},
	publisher={Nature Publishing Group UK London}
}

@article{babakniya2023slora,
	title={SLoRA: Federated Parameter Efficient Fine-Tuning of Language Models},
	author={Babakniya, Sara and Elkordy, Ahmed Roushdy and Ezzeldin, Yahya H and Liu, Qingfeng and Song, Kee-Bong and El-Khamy, Mostafa and Avestimehr, Salman},
	journal={arXiv preprint arXiv:2308.06522},
	year={2023}
}

@article{he2021towards,
	title={Towards a unified view of parameter-efficient transfer learning},
	author={He, Junxian and Zhou, Chunting and Ma, Xuezhe and Berg-Kirkpatrick, Taylor and Neubig, Graham},
	journal={arXiv preprint arXiv:2110.04366},
	year={2021}
}

@article{zhu2024asymmetry,
	title={Asymmetry in Low-Rank Adapters of Foundation Models},
	author={Zhu, Jiacheng and Greenewald, Kristjan and Nadjahi, Kimia and Borde, Haitz S{\'a}ez de Oc{\'a}riz and Gabrielsson, Rickard Br{\"u}el and Choshen, Leshem and Ghassemi, Marzyeh and Yurochkin, Mikhail and Solomon, Justin},
	journal={arXiv preprint arXiv:2402.16842},
	year={2024}
}

@article{zaken2021bitfit,
	title={Bitfit: Simple parameter-efficient fine-tuning for transformer-based masked language-models},
	author={Zaken, Elad Ben and Ravfogel, Shauli and Goldberg, Yoav},
	journal={arXiv preprint arXiv:2106.10199},
	year={2021}
}

@article{yang2024fedfed,
	title={FedFed: Feature distillation against data heterogeneity in federated learning},
	author={Yang, Zhiqin and Zhang, Yonggang and Zheng, Yu and Tian, Xinmei and Peng, Hao and Liu, Tongliang and Han, Bo},
	journal={Advances in Neural Information Processing Systems},
	volume={36},
	year={2024}
}

@ARTICLE{Vahidian-flis,
	author={Morafah, Mahdi and Vahidian, Saeed and Wang, Weijia and Lin, Bill},
	journal={IEEE Open Journal of the Computer Society}, 
	title={FLIS: Clustered Federated Learning Via Inference Similarity for Non-IID Data Distribution}, 
	year={2023},
	volume={4},
	pages={109-120},
	doi={10.1109/OJCS.2023.3262203}}

@inproceedings{neyshabur2017theoretical,
	title={A Theoretical Framework for Understanding the Generalization of Deep Learning Models},
	author={Neyshabur, Behnam and Bhojanapalli, Srinadh and McAllester, David and Srebro, Nathan},
	booktitle={Advances in Neural Information Processing Systems},
	volume={30},
	year={2017}
}

@article{xu2021fedcm,
	title={Fedcm: Federated learning with client-level momentum},
	author={Xu, Jing and Wang, Sen and Wang, Liwei and Yao, Andrew Chi-Chih},
	journal={arXiv preprint arXiv:2106.10874},
	year={2021}
}

@article{reddi2020adaptive,
	title={Adaptive federated optimization},
	author={Reddi, Sashank and Charles, Zachary and Zaheer, Manzil and Garrett, Zachary and Rush, Keith and Kone{\v{c}}n{\`y}, Jakub and Kumar, Sanjiv and McMahan, H Brendan},
	journal={arXiv preprint arXiv:2003.00295},
	year={2020}
}

@article{yang2024nonparametric,
	title={Nonparametric regression using over-parameterized shallow ReLU neural networks},
	author={Yang, Yunfei and Zhou, Ding-Xuan},
	journal={Journal of Machine Learning Research},
	volume={25},
	number={165},
	pages={1--35},
	year={2024}
}

@inproceedings{houlsby2019parameter,
	title={Parameter-efficient transfer learning for NLP},
	author={Houlsby, Neil and Giurgiu, Andrei and Jastrzebski, Stanislaw and Morrone, Bruna and De Laroussilhe, Quentin and Gesmundo, Andrea and Attariyan, Mona and Gelly, Sylvain},
	booktitle={International conference on machine learning},
	pages={2790--2799},
	year={2019},
	organization={PMLR}
}

@article{li2021prefix,
	title={Prefix-tuning: Optimizing continuous prompts for generation},
	author={Li, Xiang Lisa and Liang, Percy},
	journal={arXiv preprint arXiv:2101.00190},
	year={2021}
}

@article{lester2021power,
	title={The power of scale for parameter-efficient prompt tuning},
	author={Lester, Brian and Al-Rfou, Rami and Constant, Noah},
	journal={arXiv preprint arXiv:2104.08691},
	year={2021}
}

@misc{krizhevsky2009learning,
  title={Learning multiple layers of features from tiny images.(2009)},
  author={Krizhevsky, Alex and Hinton, Geoffrey and others},
  year={2009}
}

@article{hendrycks2019benchmarking,
  title={Benchmarking neural network robustness to common corruptions and perturbations},
  author={Hendrycks, Dan and Dietterich, Thomas},
  journal={arXiv preprint arXiv:1903.12261},
  year={2019}
}

@inproceedings{peng2019moment,
  title={Moment matching for multi-source domain adaptation},
  author={Peng, Xingchao and Bai, Qinxun and Xia, Xide and Huang, Zijun and Saenko, Kate and Wang, Bo},
  booktitle={Proceedings of the IEEE/CVF international conference on computer vision},
  pages={1406--1415},
  year={2019}
}

@inproceedings{kucuktunc2012large,
  title={A large-scale sentiment analysis for Yahoo! answers},
  author={Kucuktunc, Onur and Cambazoglu, B Barla and Weber, Ingmar and Ferhatosmanoglu, Hakan},
  booktitle={Proceedings of the fifth ACM international conference on Web search and data mining},
  pages={633--642},
  year={2012}
}

@inproceedings{maas2011learning,
  title={Learning word vectors for sentiment analysis},
  author={Maas, Andrew and Daly, Raymond E and Pham, Peter T and Huang, Dan and Ng, Andrew Y and Potts, Christopher},
  booktitle={Proceedings of the 49th annual meeting of the association for computational linguistics: Human language technologies},
  pages={142--150},
  year={2011}
}

@inproceedings{socher2013recursive,
  title={Recursive deep models for semantic compositionality over a sentiment treebank},
  author={Socher, Richard and Perelygin, Alex and Wu, Jean and Chuang, Jason and Manning, Christopher D and Ng, Andrew Y and Potts, Christopher},
  booktitle={Proceedings of the 2013 conference on empirical methods in natural language processing},
  pages={1631--1642},
  year={2013}
}

@article{zhang2015character,
  title={Character-level convolutional networks for text classification},
  author={Zhang, Xiang and Zhao, Junbo and LeCun, Yann},
  journal={Advances in neural information processing systems},
  volume={28},
  year={2015}
}

@article{dosovitskiy2020image,
  title={An image is worth 16x16 words: Transformers for image recognition at scale},
  author={Dosovitskiy, Alexey},
  journal={arXiv preprint arXiv:2010.11929},
  year={2020}
}

@inproceedings{liu2021swin,
  title={Swin transformer: Hierarchical vision transformer using shifted windows},
  author={Liu, Ze and Lin, Yutong and Cao, Yue and Hu, Han and Wei, Yixuan and Zhang, Zheng and Lin, Stephen and Guo, Baining},
  booktitle={Proceedings of the IEEE/CVF international conference on computer vision},
  pages={10012--10022},
  year={2021}
}

@article{liu2019roberta,
  title={Roberta: A robustly optimized bert pretraining approach},
  author={Liu, Yinhan and Ott, Myle and Goyal, Naman and Du, Jingfei and Joshi, Mandar and Chen, Danqi and Levy, Omer and Lewis, Mike and Zettlemoyer, Luke and Stoyanov, Veselin},
  journal={arXiv preprint arXiv:1907.11692},
  year={2019}
}
}

 \clearpage
\setcounter{page}{1}
\maketitlesupplementary

\appendix

\section{Implementation and Experimental Setup Details}
\label{app:implementation_setup}
\label{app:expdetails}  

\subsection{Models and Tasks}
We conduct experiments on standard benchmarks in both Computer Vision (CV) and Natural Language Processing (NLP). For CV tasks, we adopt two widely-used Transformer-based architectures: \textit{ViT-Base} (Vision Transformer) and \textit{Swin-Base} (Swin Transformer). In the NLP domain, we utilize \textit{RoBERTa} for text classification and textual entailment tasks. All models are fine-tuned in a federated learning scenario with parameter-efficient \textit{LoRA} adapters.

Our experimental evaluation encompasses standard benchmarks in both computer vision (CV) and natural language processing (NLP). For CV tasks, we employ two prominent Transformer-based architectures: \textit{ViT-Base} (86M parameters) and \textit{Swin-Base} (88M parameters), fine-tuned for image classification. In the NLP domain, we utilize \textit{RoBERTa} (125M parameters) for a diverse set of tasks including text classification, natural language inference, and question-answering. All models are fine-tuned within a federated learning framework using parameter-efficient \textit{LoRA} adapters, ensuring a consistent and comparable experimental setup across all domains and baselines.

\subsection{Baselines}
We compare \textbf{ILoRA} with four state-of-the-art federated LoRA baselines:
\begin{itemize}
	\item \textbf{FedIT}~\cite{zhang2024towards}: a federated instruction-tuning approach that adopts \textit{homogeneous} LoRA ranks.
	\item \textbf{FLoRA}~\cite{wang2024flora}: enables \textit{heterogeneous} LoRA ranks via parameter concatenating\cite{cho2024heterogeneous}.
	\item \textbf{LoRA-FAIR}~\cite{bian2024lora}: refines aggregation and initialization under homogeneous ranks.
	\item \textbf{FFA-LoRA}~\cite{sun2024improving}: improves training stability by freezing a subset of adapter parameters.
	\item \textbf{ILoRA}: our proposed method.
	\item \textbf{ILoRA-S}: an extended variant that incorporates rank-aware control variates.
\end{itemize}
For a fair comparison, all baselines are implemented under the identical LoRA configuration.
\subsection{Hyperparameter Settings}
We adopt a unified configuration across all experiments. LoRA is applied to the self-attention \emph{query} and \emph{value} projections, with the client-side rank fixed at 4. On the server, we use rank 4 in homogeneous settings and rank 6 in heterogeneous settings; FLoRA uses rank 12 in both cases. The LoRA scaling factor is 16, with a dropout rate of 0.1 and a global scaling factor of 0.5. For optimization, we compare stochastic gradient descent (learning rate 0.01, momentum 0.9) with AdamW (learning rate \(1 \times 10^{-4}\), no weight decay). 

Federated training proceeds for 5 global communication rounds, each comprising 1 local epoch per client. In some experiments, federated training is extended to 50 communication rounds to assess long-term performance. In centralized training, we conduct 5 total epochs with 3 local passes per iteration. Mini-batch sizes are 64 examples per client in federated mode and 128 in centralized mode. 

Experiments involve varying numbers of clients, with all settings enforcing full participation in every round. We test under both Independent and Non-Independent Identically Distributed (IID and Non-IID) data scenarios, ensuring robustness across different data distribution patterns. In federated settings, we apply a regularization coefficient of 0.01 when necessary, while weight decay is generally not applied unless specified. The random seed is fixed to 42 to ensure reproducibility, particularly within the field of Natural Language Processing (NLP). Evaluation primarily focuses on classification accuracy, and for NLP tasks, input sequences are truncated to a maximum length of 64 tokens.
\subsection{Compute Resource Usage Summary}
\label{app:compute_summary}

To ensure the robustness and reproducibility of our experimental results, all experiments were conducted on two distinct hardware configurations representing different computational tiers. 

The first server was equipped with 4 NVIDIA GeForce RTX 4090 GPUs (24GB VRAM each) and dual Intel Xeon Silver 4310 CPUs with 48 total cores. The second server featured 8 NVIDIA Tesla V100-PCIE-16GB GPUs and dual Intel Xeon Gold 6240 CPUs with 36 total cores. 

Experiments were distributed across both platforms to validate the consistency of our method under varying hardware conditions. All implementations utilized mixed precision training and distributed data parallelism where applicable. The complete computational details and efficiency metrics are documented in our submitted Compute Reporting Form (CRF).
\subsection{Additional Experimental Results}
\label{app:additional_results}

This section presents supplementary experimental results that further validate the robustness and effectiveness of our proposed ILoRA framework under different experimental settings.

\paragraph{Performance under Different Data Heterogeneity Levels.}
To comprehensively evaluate the generalization capability of our method, we provide additional results with Dir($\alpha$=0.5) in Figures~\ref{fig:nlp_heatmap_05} and~\ref{fig:nlp_radar_05}. These results complement the main text analysis with $\alpha=0.6$ and demonstrate that ILoRA and ILoRA-S maintain consistent performance advantages across varying degrees of data heterogeneity. The $\alpha=0.5$ setting represents a more challenging Non-IID scenario with higher data skewness across clients, yet our methods continue to outperform all baselines, highlighting their robustness to different data distribution patterns.

\begin{figure}[htbp]
	\centering
	\includegraphics[width=0.8\linewidth]{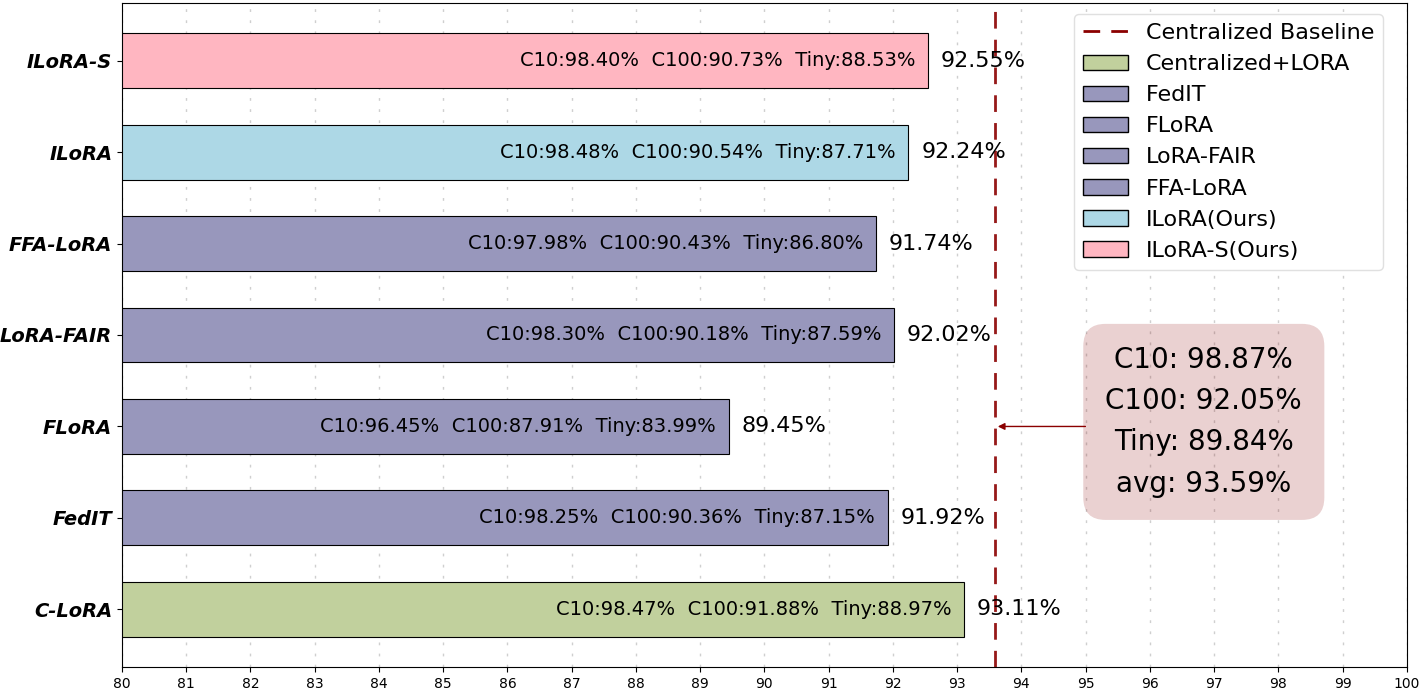}
	\caption{
		Centralized versus federated learning performance comparison using \textbf{SGD} with ViT-Base over 5 communication rounds. 
		Results are reported on three datasets: CIFAR-10 (C10), CIFAR-100 (C100), and Tiny-ImageNet (Tiny), with Non-IID data partitioning (Dir($\alpha$=0.3)).
		ILoRA-S maintains robust performance under SGD optimization, achieving high recovery rates compared to centralized training.
	}
	\label{fig:sgd_comparison}
\end{figure}

\begin{figure}[htbp]
	\centering
	\includegraphics[width=0.95\linewidth]{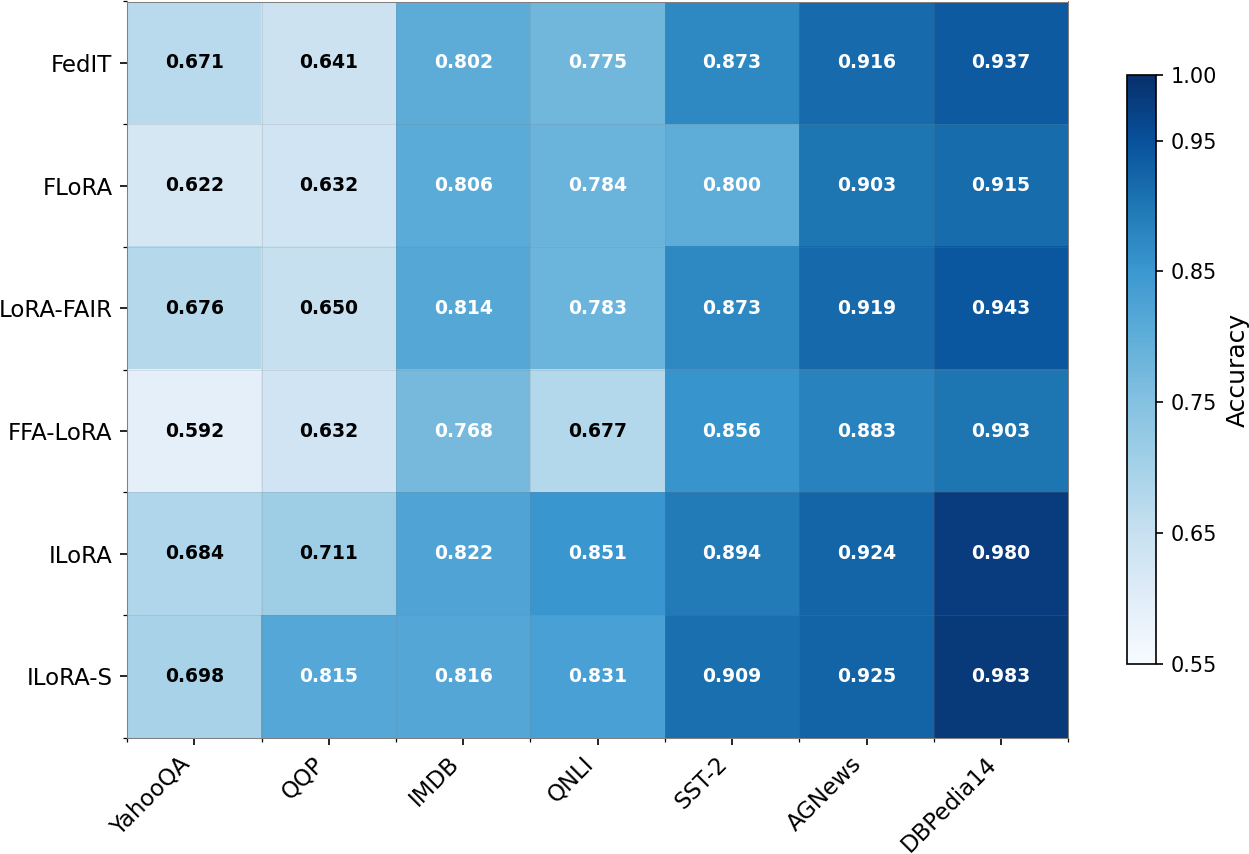}
	\caption{Accuracy Heatmap Comparison of Federated LoRA Methods on NLP Datasets under Non-IID Data Distribution (Dir($\alpha$=0.5)). The heatmap visualizes the performance of different federated learning methods across seven NLP benchmarks using RoBERTa, with color intensity representing accuracy scores from 0.55 to 1.0.}
	\label{fig:nlp_heatmap_05}
\end{figure}

\begin{figure}[htbp]
	\centering
	\includegraphics[width=0.95\linewidth]{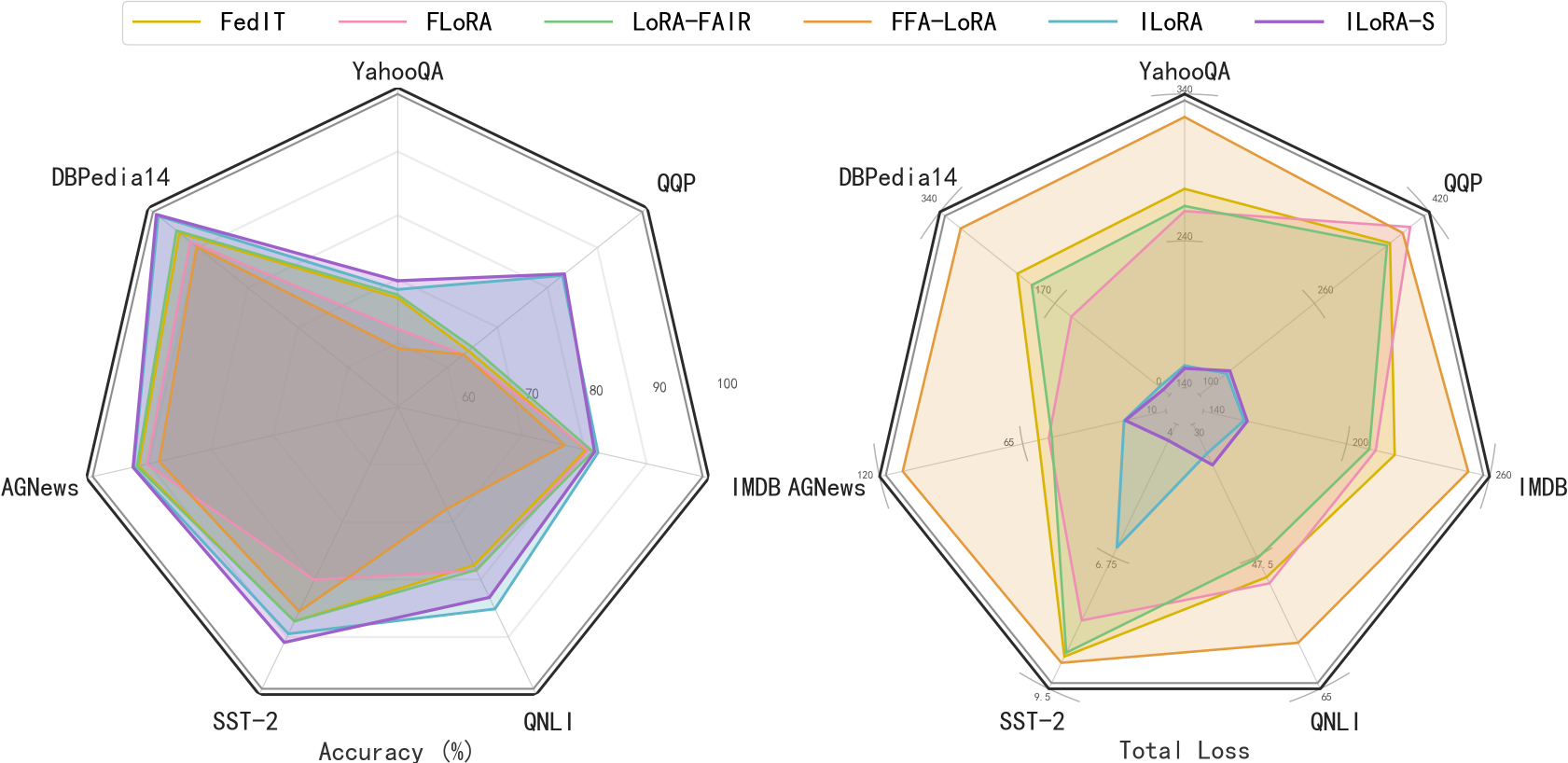}
	\caption{Radar chart comparison of accuracy and loss performance across multiple NLP datasets (Dir($\alpha$=0.5)) using RoBERTa, providing a multi-dimensional visualization of model effectiveness. This supplementary result with $\alpha=0.5$ further validates the consistent superiority of ILoRA and ILoRA-S across different data heterogeneity levels.}
	\label{fig:nlp_radar_05}
\end{figure}

\paragraph{Optimization Method Comparison.}
Figure~\ref{fig:sgd_comparison} demonstrates the performance of ILoRA-S under SGD optimization, complementing the AdamW results presented in the main text. The consistent superiority across different optimizers underscores the versatility of our approach and its independence from specific optimization algorithms.

\paragraph{Cross-Dataset Consistency.}
The additional results across all seven NLP datasets with $\alpha=0.5$ reinforce the main findings: ILoRA and ILoRA-S consistently achieve top performance regardless of task type (question answering, sentiment analysis, text classification, or natural language inference) and data heterogeneity level, demonstrating comprehensive generalization capability.

\paragraph{Communication Efficiency at Scale.}
Table~\ref{tab:scaling_comparison} analyzes communication efficiency across varying client population sizes. ILoRA demonstrates near-constant overhead with modest increases from 1.2$\times$ to 1.6$\times$ as client count scales from 10 to 100, significantly outperforming FLoRA's linear growth pattern. This $\mathcal{O}(1)$ scaling behavior confirms ILoRA's suitability for large-scale federated deployments.

\begin{table}[htbp]
	\centering
	\caption{Communication efficiency metrics for different client scales (per-round)}
	\begin{tabular}{lcccc}
		\toprule
		\textbf{Method} & \textbf{S=10} & \textbf{S=50} & \textbf{S=100} & \textbf{Scaling Factor} \\
		\midrule
		FedIT & 1.0$\times$ & 1.0$\times$ & 1.0$\times$ & $\mathcal{O}(1)$ \\
		FLoRA & 2.5$\times$ & 12.5$\times$ & 25.0$\times$ & $\mathcal{O}(S)$ \\
		\textbf{ILoRA} & 1.2$\times$ & 1.4$\times$ & 1.6$\times$ & $\mathcal{O}(1)$ \\
		\bottomrule
	\end{tabular}
	\label{tab:scaling_comparison}
\end{table}

\paragraph{Homogeneous ViT with SGD}
Figure~\ref{fig:app_homog_vit_sgd} presents the experimental results for homogeneous ViT architecture using SGD optimizer across CIFAR-10, CIFAR-100, and Tiny-ImageNet datasets. The performance trends demonstrate ILoRA's effectiveness under different optimization settings.

\begin{figure}[H]
	\centering
	\includegraphics[width=0.8\linewidth]{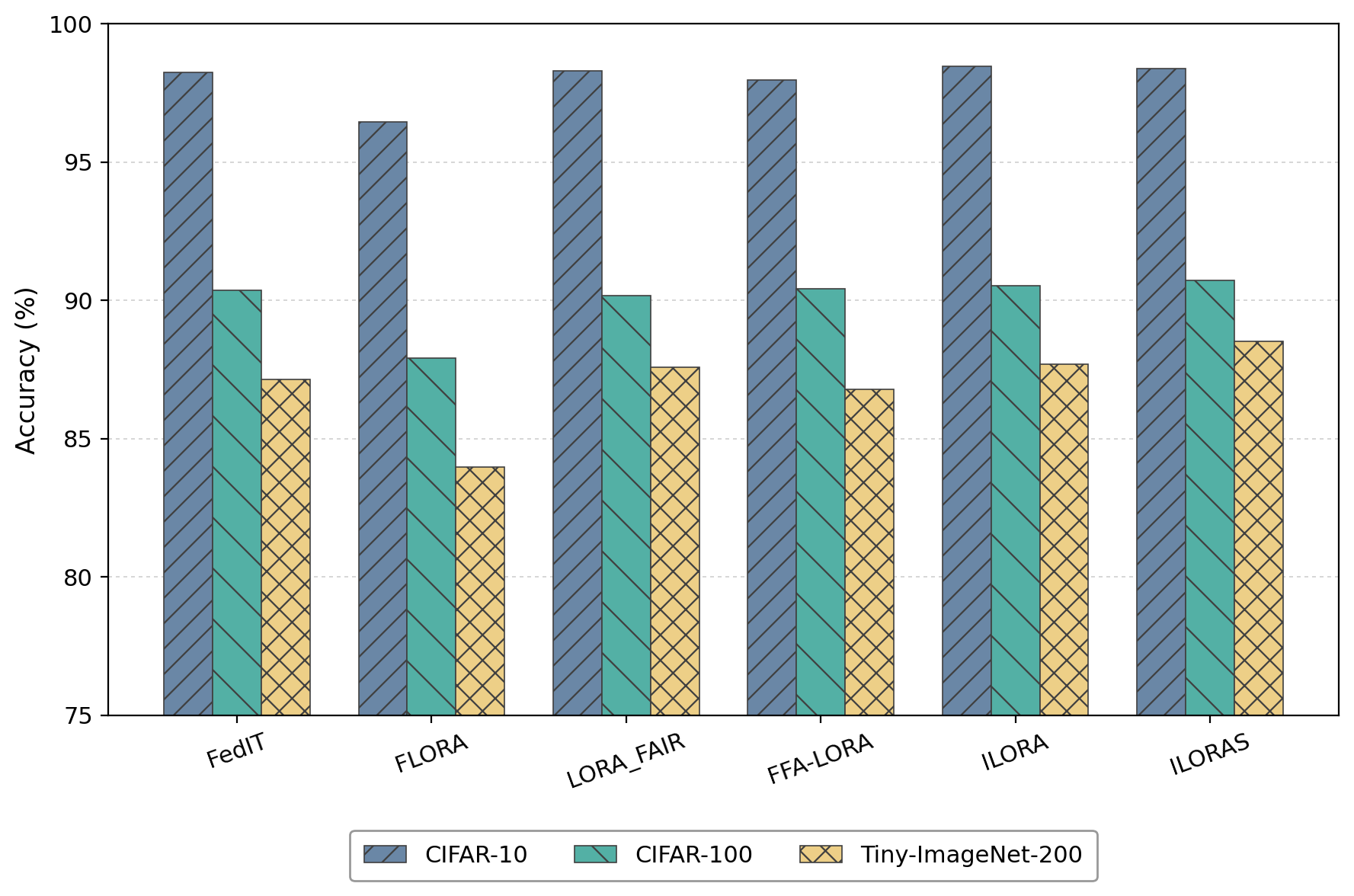}
	\caption{Experimental Results: Homog ViT with SGD}
	\label{fig:app_homog_vit_sgd}
\end{figure}

\paragraph{Homogeneous Swin with SGD}
Figure~\ref{fig:app_homog_swin_sgd} shows the performance comparison for homogeneous Swin Transformer with SGD optimization. The results complement the main text findings and validate the robustness of our approach across different vision architectures.

\begin{figure}[H]
	\centering
	\includegraphics[width=0.8\linewidth]{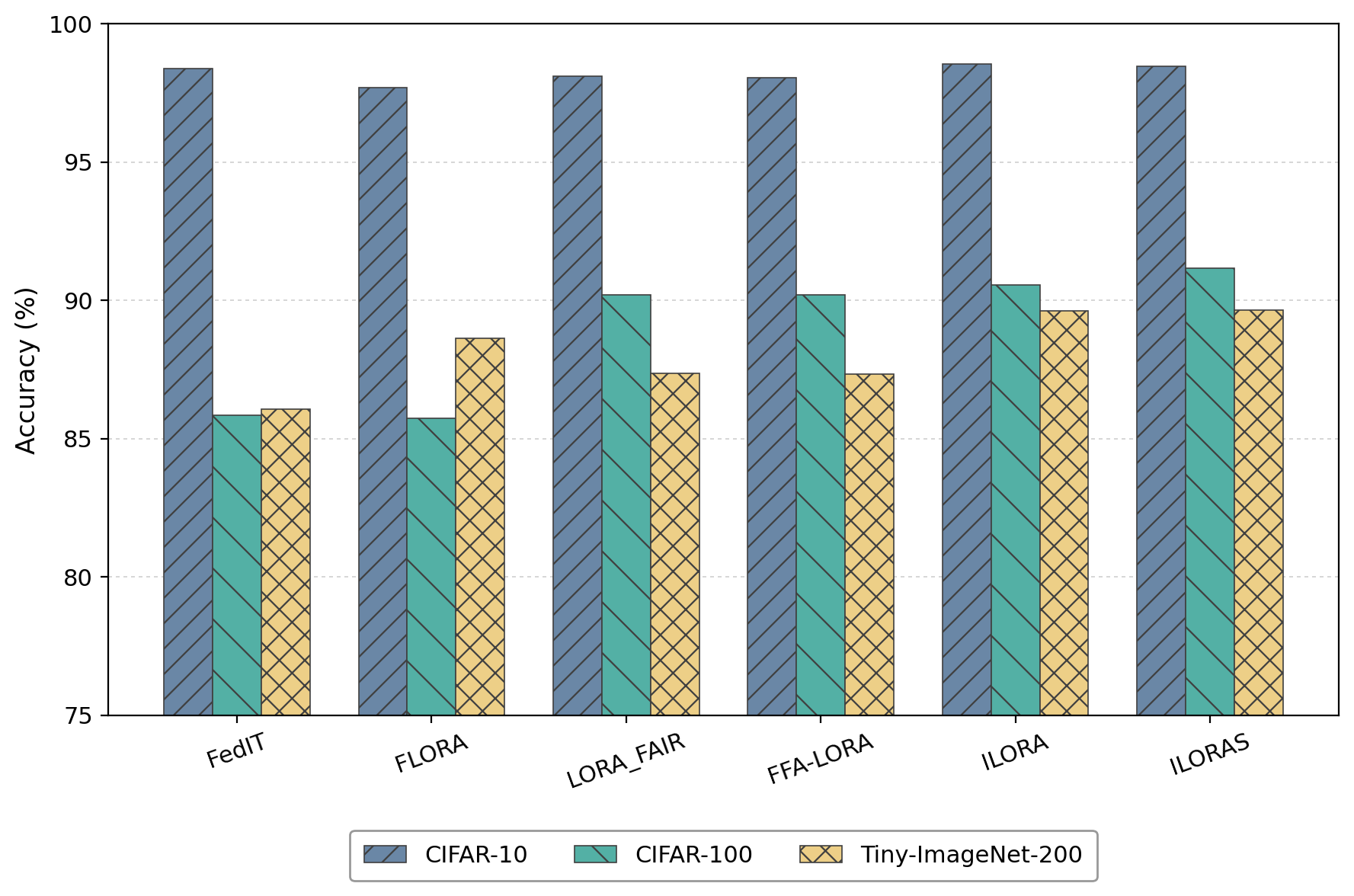}
	\caption{Experimental Results: Homog Swin with SGD}
	\label{fig:app_homog_swin_sgd}
\end{figure}

\paragraph{Heterogeneous ViT with SGD}
The experimental results for heterogeneous ViT settings with SGD optimizer are depicted in Figure~\ref{fig:app_heterog_vit_sgd}. These supplementary results further confirm ILoRA's capability to handle rank heterogeneity under different optimization algorithms.

\begin{figure}[H]
	\centering
	\includegraphics[width=0.8\linewidth]{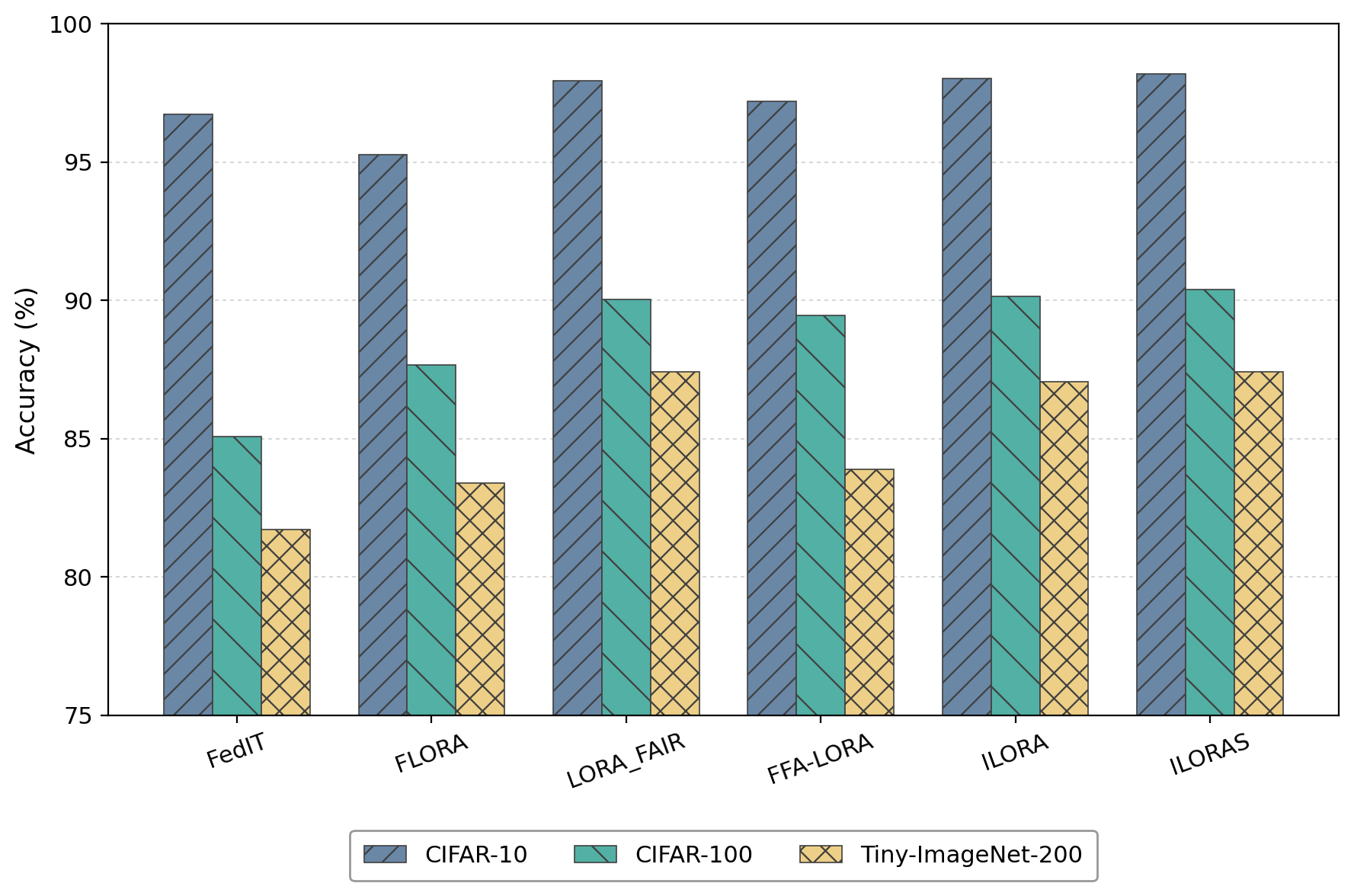}
	\caption{Experimental Results: Heterog ViT with SGD}
	\label{fig:app_heterog_vit_sgd}
\end{figure}

\paragraph{Heterogeneous Swin with SGD}
Figure~\ref{fig:app_heterog_swin_sgd} illustrates the performance of heterogeneous Swin Transformer with SGD optimization. The consistent superiority of ILoRA across both optimization methods (AdamW and SGD) underscores its generalization capability.

\begin{figure}[H]
	\centering
	\includegraphics[width=0.8\linewidth]{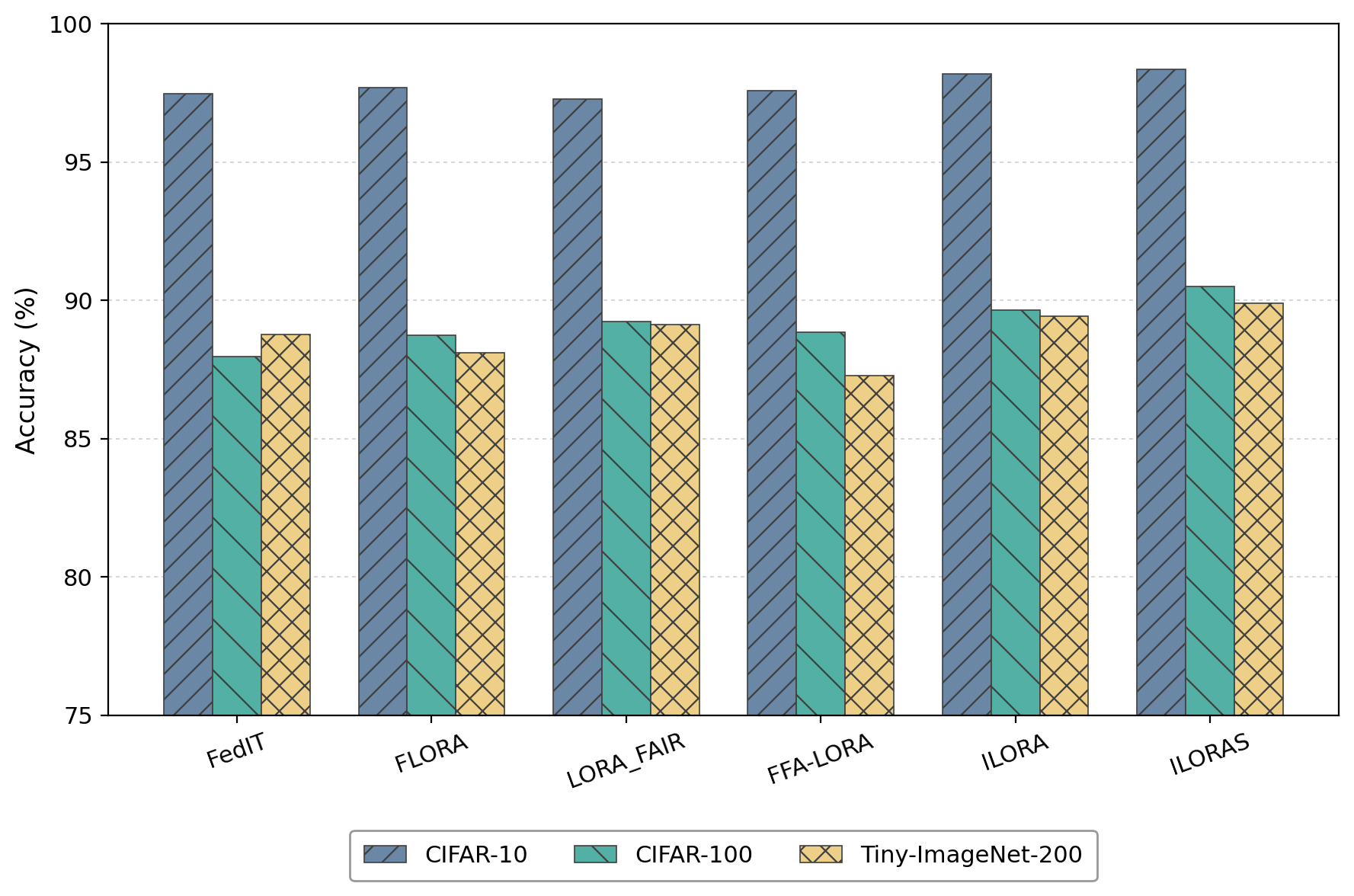}
	\caption{Experimental Results: Heterog Swin with SGD}
	\label{fig:app_heterog_swin_sgd}
\end{figure}

\paragraph{Heterogeneous LoRA-rank Performance on NLP Tasks}
Table~\ref{tab:nlp_lora_comparison5} presents the peak accuracy comparison of federated LoRA methods with heterogeneous ranks across seven NLP benchmarks using RoBERTa under Non-IID data distribution (Dir($\alpha$=0.5)). The results demonstrate that ILoRA variants consistently outperform all baselines, with ILoRA-S achieving the highest average accuracy of 85.40\%. Notably, ILoRA and ILoRA-S show substantial improvements on challenging datasets like QQP (+18.87-19.27\%) and DBPedia14 (+4.23-4.61\%), validating the effectiveness of our control variate mechanism in handling both data heterogeneity and rank variations simultaneously.

\begin{table*}[t]
	\centering
	\caption{Peak accuracy comparison for federated \emph{Heterogeneous LoRA-rank} LoRA baselines on NLP datasets with RoBERTa under Non-IID data distribution (Dir($\alpha$=0.5)). 
		Values represent the maximum accuracy achieved during the 5 training rounds.}
	\label{tab:nlp_lora_comparison5}
	\normalsize
	\setlength{\tabcolsep}{1pt}
	\renewcommand{\arraystretch}{1.2}
	
	\begin{tabularx}{\textwidth}{l*{8}{C}}
		\arrayrulecolor{black}\specialrule{1.2pt}{0pt}{0pt}
		\multirow{2}{*}{Method} & \multicolumn{8}{c}{NLP Datasets (RoBERTa, Accuracy $\uparrow$)} \\
		\cmidrule(lr){2-9}
		& YahooQA & QQP & IMDB & QNLI & SST-2 & AGNews & DBPedia14 & \textbf{Avg.} \\
		\midrule
		FedIT
		& \cellnum{0.6708}{\ppbase}
		& \cellnum{0.6414}{\ppbase}
		& \cellnum{0.8023}{\ppbase}
		& \cellnum{0.7752}{\ppbase}
		& \cellnum{0.8727}{\ppbase}
		& \cellnum{0.9159}{\ppbase}
		& \cellnum{0.9373}{\ppbase}
		& \cellnum{0.8022}{\ppbase} \\
		FLoRA
		& \cellnum{0.6220}{\ppdown{4.88}}
		& \cellnum{0.6318}{\ppdown{0.96}}
		& \cellnum{0.8061}{\ppup{0.38}}
		& \cellnum{0.7844}{\ppup{0.92}}
		& \cellnum{0.8005}{\ppdown{7.22}}
		& \cellnum{0.9028}{\ppdown{1.31}}
		& \cellnum{0.9154}{\ppdown{2.19}}
		& \cellnum{0.7804}{\ppdown{2.18}} \\
		LoRA-FAIR
		& \cellnum{0.6758}{\ppup{0.50}}
		& \cellnum{0.6499}{\ppup{0.85}}
		& \cellnum{0.8135}{\ppup{1.12}}
		& \cellnum{0.7835}{\ppup{0.83}}
		& \cellnum{0.8727}{\ppup{0.00}}
		& \cellnum{0.9189}{\ppup{0.30}}
		& \cellnum{0.9431}{\ppup{0.58}}
		& \cellnum{0.8082}{\ppup{0.60}} \\
		FFA-LoRA
		& \cellnum{0.5919}{\ppdown{7.89}}
		& \cellnum{0.6319}{\ppdown{0.95}}
		& \cellnum{0.7682}{\ppdown{3.41}}
		& \cellnum{0.6773}{\ppdown{9.79}}
		& \cellnum{0.8555}{\ppdown{1.72}}
		& \cellnum{0.8830}{\ppdown{3.29}}
		& \cellnum{0.9027}{\ppdown{3.46}}
		& \cellnum{0.7586}{\ppdown{4.36}} \\
		\rowcolor{LightBlue}
		ILoRA
		& \cellnum{0.6840}{\ppup{1.32}}
		& \cellnum{0.8301}{\ppup{18.87}}
		& \cellnum{0.8218}{\ppup{1.95}}
		& \cellnum{0.8512}{\ppup{7.60}}
		& \cellnum{0.8945}{\ppup{2.18}}
		& \cellnum{0.9239}{\ppup{0.80}}
		& \cellnum{0.9796}{\ppup{4.23}}
		& \cellnum{\textbf{0.8380}}{\ppup{3.58}} \\
		\rowcolor{LightRed}
		ILoRA-S
		& \cellnum{0.6976}{\ppup{2.68}}
		& \cellnum{0.8341}{\ppup{19.27}}
		& \cellnum{0.8163}{\ppup{1.40}}
		& \cellnum{0.8312}{\ppup{5.60}}
		& \cellnum{0.9094}{\ppup{3.67}}
		& \cellnum{0.9251}{\ppup{0.92}}
		& \cellnum{0.9834}{\ppup{4.61}}
		& \cellnum{\textbf{0.8540}}{\ppup{5.18}} \\
		\arrayrulecolor{black}\specialrule{1.2pt}{0pt}{0pt}
	\end{tabularx}
\end{table*}

\paragraph{Visualization of Client Drift in Federated Learning}
Figure \ref{fig:control1and2} visually illustrates the client drift phenomenon in federated learning. Subfigure (a) depicts the scenario under IID data distribution, where local and global models converge without client drift. Subfigure (b) shows the case under Non-IID data distribution without correction, where the global model drifts away from the true optima due to client heterogeneity. Beyond visualization, client drift can also be quantified numerically, as demonstrated in Table \ref{tab:qqp_results}, which presents the performance of different methods on the QQP dataset with varying 
$\alpha$ (a measure of data Non-IIDness), reflecting how client drift impacts model accuracy under different levels of data heterogeneity.

\begin{figure}[htbp]
\centering
\includegraphics[width=0.95\linewidth]{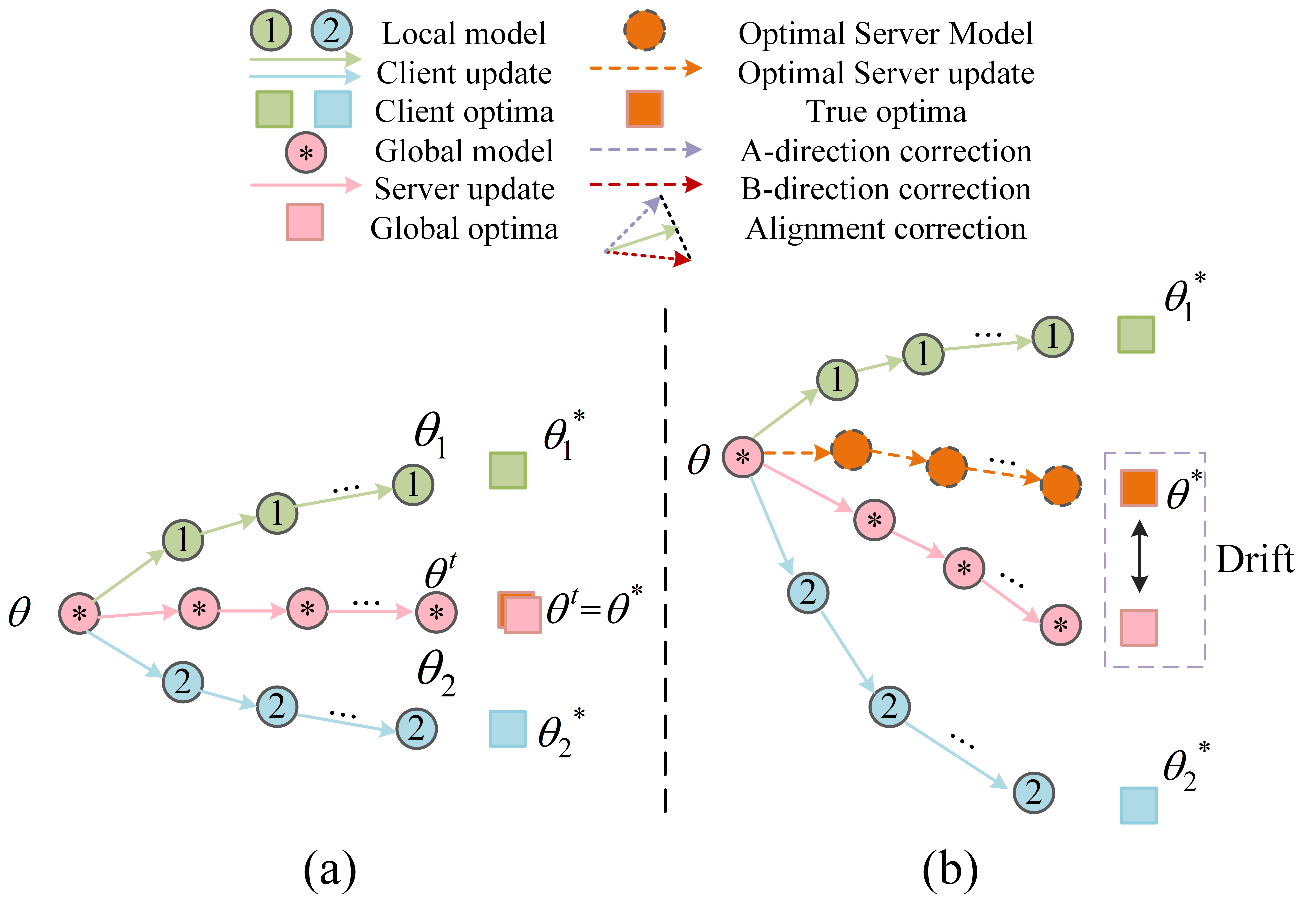}
\caption{Comparison of federated learning model updates. (a) In IID data distribution, local and global models converge without drift. (b) In Non-IID data distribution without correction, the global model drifts away from the true optima.}
\label{fig:control1and2}
\end{figure}

\begin{table}[htbp]
	\centering
	\caption{Client Drift on QQP with different $\alpha$ values}
	\begin{tabular}{lccccc}
		\toprule
		\textbf{Method} & \textbf{$\alpha$=0.4} & \textbf{$\alpha$=0.5} & \textbf{$\alpha$=0.6} & \textbf{$\alpha$=0.7} & \textbf{$\alpha$=0.8} \\
		\midrule
		FedIT & 63.18 & 64.14 & 65.20 & 77.30 & 82.24 \\
		ILoRA & 68.22 & 71.10 & 74.46 & 79.76 & 83.18 \\
		\bottomrule
	\end{tabular}
	\label{tab:qqp_results}
\end{table}
\begin{table*}[!ht]
	\centering
	\caption{Final accuracy comparison (\%) for federated \emph{Homogeneous LoRA-rank} LoRA baselines on CIFAR-10/100 and Tiny-ImageNet with ViT-Base and Swin-Base under SGD and AdamW (Dir($\alpha$=0.3)), with \textbf{bold} indicating best performance.}
	\label{tab:homogeneous_lora_comparison}
	\normalsize
	\setlength{\tabcolsep}{3.5pt}
	
	\begin{tabularx}{\textwidth}{l*{12}{C}}
		
		\arrayrulecolor{black}\specialrule{1.2pt}{0pt}{0pt}
		
		\multirow{4}{*}{Method} & \multicolumn{6}{c}{ViT-Base} & \multicolumn{6}{c}{Swin-Base} \\
		\cmidrule(lr){2-7} \cmidrule(lr){8-13}
		& \multicolumn{2}{c}{CIFAR-10} & \multicolumn{2}{c}{CIFAR-100} & \multicolumn{2}{c}{Tiny-ImageNet}
		& \multicolumn{2}{c}{CIFAR-10} & \multicolumn{2}{c}{CIFAR-100} & \multicolumn{2}{c}{Tiny-ImageNet} \\
		\cmidrule(lr){2-3} \cmidrule(lr){4-5} \cmidrule(lr){6-7}
		\cmidrule(lr){8-9} \cmidrule(lr){10-11} \cmidrule(lr){12-13}
		& SGD & AdamW & SGD & AdamW & SGD & AdamW & SGD & AdamW & SGD & AdamW & SGD & AdamW \\
		\midrule
		
		FedIT      & 98.25 & 97.44 & 90.36 & 85.67 & 87.15 & 84.72 & 98.40 & 97.33 & 85.84 & 85.97 & 86.06 & 87.28 \\
		FLoRA      & 96.45 & 96.82 & 87.91 & 84.89 & 83.99 & 78.97 & 97.70 & 95.86 & 85.73 & 84.16 & 88.63 & 84.78 \\
		LoRA-FAIR & 98.30 & 96.12 & 90.18 & 86.63 & 87.59 & 85.59 & 98.12 & 96.37 & 90.21 & 88.26 & 87.37 & 86.22 \\
		FFA-LoRA   & 97.98 & 97.57 & 90.43 & 86.39 & 86.80 & 85.61 & 98.06 & 96.36 & 90.21 & 86.45 & 87.34 & 87.55 \\
		\rowcolor{LightBlue!80!white}
		ILoRA      & \textbf{98.48} & 98.02 & 90.54 & 87.50 & 87.71 & 85.86 & \textbf{98.55} & \textbf{98.20} & 90.56 & 87.16 & 89.62 & 87.53 \\
		\rowcolor{LightRed!80!white}
		ILoRA-S     & 98.40 & \textbf{98.22} & \textbf{90.73} & \textbf{88.49} & \textbf{88.53} & \textbf{87.02} & 98.47 & \textbf{98.20} & \textbf{91.16} & \textbf{88.41} & \textbf{89.66} & \textbf{89.36} \\
		
		\arrayrulecolor{black}\specialrule{1.2pt}{0pt}{0pt}
	\end{tabularx}
\end{table*}

\paragraph{Homogeneous LoRA-Rank Performance Comparison}
Table~\ref{tab:homogeneous_lora_comparison} presents a comprehensive comparison of federated LoRA methods under homogeneous rank settings across multiple computer vision benchmarks. The evaluation encompasses two transformer-based architectures (ViT-Base and Swin-Base) on three datasets (CIFAR-10, CIFAR-100, and Tiny-ImageNet) with both SGD and AdamW optimizers under Non-IID data distribution (Dirichlet parameter $\alpha=0.3$). The results demonstrate that ILoRA and its enhanced variant ILoRA-S consistently achieve superior performance across most experimental configurations, with ILoRA-S showing particular strength in more challenging scenarios such as CIFAR-100 and Tiny-ImageNet. The performance advantage is especially pronounced with AdamW optimization, where ILoRA-S achieves up to 2.08\% improvement over the strongest baseline (FFA-LoRA) on Tiny-ImageNet with Swin-Base. These findings validate the effectiveness of our proposed orthogonal initialization and control variate mechanisms in stabilizing federated fine-tuning and mitigating client drift, even under homogeneous rank conditions where traditional methods face convergence challenges due to initialization mismatch and aggregation inconsistencies.

\textbf{Comprehensive Performance Analysis under Varying Data Heterogeneity Levels.} 
As detailed in Table~\ref{tab:adamw_comparison}, we conduct an extensive evaluation of ILoRA across multiple data heterogeneity settings ($\alpha = 0.5, 0.6, 0.7$). The results demonstrate ILoRA's consistent superiority over the FedIT+QR baseline across all seven NLP benchmarks. Notably, ILoRA achieves significant performance gains under higher heterogeneity conditions ($\alpha = 0.7$), with an average improvement of 9.08\% over the baseline. The method exhibits remarkable robustness, particularly on challenging datasets such as SST-2 (up to 29.93\% improvement) and QQP (up to 12.57\% improvement). This comprehensive analysis validates ILoRA's effectiveness in handling diverse data distribution scenarios, with the concatenated QR aggregation mechanism successfully preserving cross-client information while maintaining subspace alignment across different heterogeneity levels.

\begin{table*}[t]
	\centering
\caption{Server Aggregation via Concatenation under Data Heterogeneity ($\alpha=0.5,0.6,0.7$), with \textbf{bold} indicating best performance.}
	\label{tab:adamw_comparison}
	\normalsize
	\setlength{\tabcolsep}{0pt}
	\renewcommand{\arraystretch}{1.2}
	
	\begin{tabularx}{\textwidth}{l c *{8}{C}}
		\arrayrulecolor{black}\specialrule{1.2pt}{0pt}{0pt}
		\multirow{2}{*}{Method} & \multirow{2}{*}{Dir($\alpha$)} & \multicolumn{8}{c}{Datasets (Accuracy $\uparrow$)} \\
		\cmidrule(lr){3-10}
		& & YahooQA & QQP & IMDB & QNLI & SST-2 & AGNews & DBPedia14 & \textbf{Avg.} \\
		\midrule
		
		FedIT+QR & 0.5 & \cellnum{64.99}{\ppbase} & \cellnum{63.23}{\ppbase} & \cellnum{77.81}{\ppbase} & \cellnum{80.45}{\ppbase} & \cellnum{70.07}{\ppbase} & \cellnum{91.46}{\ppbase} & \cellnum{93.02}{\ppbase} & \cellnum{77.29}{\ppbase} \\
		
		\rowcolor{LightBlue!80!white}
		ILoRA & 0.5 & \cellnum{\textbf{68.40}}{\ppup{3.41}} & \cellnum{\textbf{71.10}}{\ppup{7.87}} & \cellnum{\textbf{82.18}}{\ppup{4.37}} & \cellnum{\textbf{85.12}}{\ppup{4.67}} & \cellnum{\textbf{89.45}}{\ppup{19.38}} & \cellnum{\textbf{92.39}}{\ppup{0.93}} & \cellnum{\textbf{97.96}}{\ppup{4.94}} & \cellnum{\textbf{83.80}}{\ppup{6.51}} \\
		\addlinespace[0.5em]
		
		FedIT+QR & 0.6 & \cellnum{65.04}{\ppbase} & \cellnum{64.54}{\ppbase} & \cellnum{76.56}{\ppbase} & \cellnum{84.17}{\ppbase} & \cellnum{72.48}{\ppbase} & \cellnum{90.71}{\ppbase} & \cellnum{97.15}{\ppbase} & \cellnum{78.66}{\ppbase} \\
		
		\rowcolor{LightBlue!80!white}
		ILoRA & 0.6 & \cellnum{\textbf{67.47}}{\ppup{2.43}} & \cellnum{\textbf{74.46}}{\ppup{9.92}} & \cellnum{\textbf{81.96}}{\ppup{5.40}} & \cellnum{\textbf{85.56}}{\ppup{1.39}} & \cellnum{\textbf{88.99}}{\ppup{16.51}} & \cellnum{\textbf{92.14}}{\ppup{1.43}} & \cellnum{\textbf{98.30}}{\ppup{1.15}} & \cellnum{\textbf{84.13}}{\ppup{5.47}} \\
		\addlinespace[0.5em]
		
		FedIT+QR & 0.7 & \cellnum{69.71}{\ppbase} & \cellnum{67.19}{\ppbase} & \cellnum{79.56}{\ppbase} & \cellnum{74.06}{\ppbase} & \cellnum{58.72}{\ppbase} & \cellnum{90.97}{\ppbase} & \cellnum{93.27}{\ppbase} & \cellnum{76.21}{\ppbase} \\
		
		\rowcolor{LightBlue!80!white}
		ILoRA & 0.7 & \cellnum{\textbf{70.61}}{\ppup{0.90}} & \cellnum{\textbf{79.76}}{\ppup{12.57}} & \cellnum{\textbf{83.18}}{\ppup{3.62}} & \cellnum{\textbf{84.73}}{\ppup{10.67}} & \cellnum{\textbf{88.65}}{\ppup{29.93}} & \cellnum{\textbf{92.30}}{\ppup{1.33}} & \cellnum{\textbf{97.82}}{\ppup{4.55}} & \cellnum{\textbf{85.29}}{\ppup{9.08}} \\
		
		\arrayrulecolor{black}\specialrule{1.2pt}{0pt}{0pt}
	\end{tabularx}
\end{table*}
\paragraph{Control Variate Effectiveness on NLP Tasks}
The performance comparison on the AGNews dataset across varying heterogeneity levels ($\alpha = 0.5, 0.6, 0.7$) in Table~\ref{tab:agnews_results} demonstrates the consistent advantage of ILoRA-S over ILoRA, validating the efficacy of our control variate mechanism in NLP scenarios. Across all heterogeneity settings, ILoRA-S achieves superior accuracy, with particularly notable improvements under moderate heterogeneity ($\alpha=0.6$) where it outperforms ILoRA by 0.56\%. This performance gap widens to 0.57\% at $\alpha=0.7$, indicating that the control variates become increasingly effective as data distribution becomes more balanced. The stability of ILoRA-S across different $\alpha$ values (92.51-92.87\%) compared to ILoRA's fluctuations (92.14-92.39\%) further confirms that our control variate mechanism effectively mitigates client drift in federated NLP fine-tuning, ensuring robust performance regardless of data heterogeneity levels.
\begin{table}[htbp]
	\centering
	\caption{Performance comparison on AGNews dataset with different $\alpha$ values }
	\begin{tabular}{lccc}
		\toprule
		\textbf{Method} & \textbf{$\alpha$=0.5} & \textbf{$\alpha$=0.6} & \textbf{$\alpha$=0.7} \\
		\midrule
		ILORA & 92.39 & 92.14 & 92.30 \\
		ILORA-S & 92.51 & 92.70 & 92.87 \\
		\bottomrule
	\end{tabular}
	\label{tab:agnews_results}
\end{table}
\begin{table}[htbp]
    \centering
    \caption{Control variates mitigate client drift on CIFAR-100 across $\alpha$ values; ILoRA-S outperforms ILoRA. \textbf{Bold} indicates best performance.}
    \label{tab:cifar100_results}
    \begin{tabular}{lccc}
        \toprule
        \textbf{Method} & $\alpha=0.5$ & $\alpha=0.6$ & $\alpha=0.7$ \\
        \midrule
        ILoRA & 87.53$_{\pm0.36}$ & 87.84$_{\pm0.32}$ & 88.11$_{\pm0.27}$ \\
        ILoRA-S & \textbf{88.49}$_{\pm0.33}$ & \textbf{88.41}$_{\pm0.07}$ & \textbf{89.00}$_{\pm0.16}$ \\
        \bottomrule
    \end{tabular}
\end{table}
\section{Theoretical Analysis and Proofs}
\label{app:theory}
\subsection{Convergence Guarantees for ILoRA}
\label{app:convergence}

We provide a comprehensive convergence analysis for the proposed ILoRA framework under standard federated learning assumptions. Our analysis accounts for the combined effects of QR-based aggregation, orthogonal initialization, and control variates with AdamW optimization.

\begin{assumption}[Bounded Stochastic Gradient Variance]
	\label{ass:variance}
	The variance of stochastic gradients at each client is bounded:
\begin{equation}
	\mathbb{E}[\|g_k(\theta) - \nabla F_k(\theta)\|^2] \leq \sigma^2.
\end{equation}
\end{assumption}

\begin{assumption}[Bounded Gradient Heterogeneity]
	\label{ass:heterogeneity}
	The gradient divergence between local and global objectives is bounded:
\begin{equation}
	\|\nabla F_k(\theta) - \nabla F(\theta)\| \leq \delta, \quad \forall k, \theta.
\end{equation}
\end{assumption}

\begin{assumption}[Bounded Control Variates]
	\label{ass:control}
	The control variates maintained by clients and server are bounded:
\begin{equation}
	\|c_k\| \leq G, \quad \|c\| \leq G, \quad \forall k.
\end{equation}
\end{assumption}

\begin{assumption}[Unbiased Aggregation]
	\label{ass:aggregation}
	The QR-based concatenated aggregation in ILoRA produces an unbiased estimate of the true global gradient:
\begin{equation}
	\mathbb{E}[\mathbf{g}_t] = \nabla F(\mathbf{w}_t),
\end{equation}
	where $\mathbf{g}_t$ denotes the aggregated gradient direction obtained from the concatenated-QR reconstruction step.
\end{assumption}

\begin{theorem}[Convergence of ILoRA]
    \label{thm:convergence}
Under Assumptions \ref{ass:smoothness}-\ref{ass:aggregation}, with local learning rate $\eta_l$ and global learning rate $\eta_g$ satisfying $\eta_l \leq \frac{1}{L}$ and $\eta_g \eta_l = \Theta\left(\frac{1}{\sqrt{SKT}}\right)$, where $S$ is the number of participating clients per round, $K$ is the number of local steps, and $T$ is the total communication rounds, the iterates of ILoRA satisfy:
\begin{equation}
	\begin{split}
		\frac{1}{T}\sum_{t=1}^T \mathbb{E}[\|\nabla F(\theta_t)\|^2] \leq 
		\mathcal{O}\left(\frac{1}{\sqrt{SKT}} + \frac{1}{T} + \frac{\delta^2}{T}\right. \\
		\left. + \frac{(r_{\max} - r_s)^2}{T}\right),
	\end{split}
\end{equation}
	where $r_{\max} = \max_k r_k$ and $r_s$ is the server rank budget.
\end{theorem}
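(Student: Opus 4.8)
The plan is to follow the standard non-convex federated descent argument, augmented with a separate accounting of the QR-compression error. First I would write the global iterate update as $\boldsymbol{\theta}_{t+1} = \boldsymbol{\theta}_t - \eta_g\eta_l\,\mathbf{g}_t$, where $\mathbf{g}_t$ is the effective aggregated direction extracted from the concatenated-QR reconstruction step, and apply $L$-smoothness (Assumption~\ref{ass:smoothness}) to obtain the one-step descent inequality
\[
\mathbb{E}[F(\boldsymbol{\theta}_{t+1})] \leq F(\boldsymbol{\theta}_t) - \eta_g\eta_l\langle\nabla F(\boldsymbol{\theta}_t),\mathbb{E}[\mathbf{g}_t]\rangle + \tfrac{L}{2}(\eta_g\eta_l)^2\,\mathbb{E}[\|\mathbf{g}_t\|^2].
\]
By the unbiased-aggregation property (Assumption~\ref{ass:aggregation}), $\mathbb{E}[\mathbf{g}_t]=\nabla F(\boldsymbol{\theta}_t)$, so the cross term collapses to $-\eta_g\eta_l\|\nabla F(\boldsymbol{\theta}_t)\|^2$ and the entire argument reduces to controlling the second moment $\mathbb{E}[\|\mathbf{g}_t\|^2]$.

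Next I would split the second moment into signal and noise via $\mathbb{E}[\|\mathbf{g}_t\|^2] = \|\nabla F(\boldsymbol{\theta}_t)\|^2 + \mathbb{E}[\|\mathbf{g}_t - \nabla F(\boldsymbol{\theta}_t)\|^2]$ and bound the variance term by three contributions: (i) the stochastic sampling variance, which the $S$-client average together with Assumption~\ref{ass:variance} reduce to an $\mathcal{O}(\sigma^2/(SK))$ term; (ii) the client-drift term arising from the $K$ local AdamW steps, bounded by a standard unrolling of the local trajectories using $L$-smoothness and the heterogeneity bound (Assumption~\ref{ass:heterogeneity}), yielding an $\mathcal{O}(\eta_l^2 K^2(\sigma^2+\delta^2))$ contribution, with the control-variate boundedness (Assumption~\ref{ass:control}) ensuring the corrected drift stays controlled; and (iii) the QR-truncation error incurred when the exact reconstruction $\sum_k p_k\mathbf{B}_k\mathbf{A}_k$ is compressed from its full concatenated rank down to the server budget $r_s$.

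The main obstacle is term (iii). Because the reconstruction is compressed by QR rather than SVD, it is not the optimal low-rank approximation, so I must bound the Frobenius discrepancy between the full update and its rank-$r_s$ slice without appealing to Eckart--Young optimality. Here I would invoke the subspace-consistency assumption (Assumption~\ref{ass:subspace}), which guarantees every client subspace lies inside $\mathrm{colspan}(\mathbf{Q}_{[:,1:r_s]})$, together with Theorem~\ref{thm:subspace_preservation}, to show that the discarded columns of $\mathbf{Q}$ and rows of $\mathbf{R}$ carry only the residual rank mass beyond $r_s$. Counting the discarded dimensions then yields a per-round compression error scaling as $(r_{\max}-r_s)^2$, which is the origin of the stated $\epsilon_r$ term. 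The delicate point is verifying that this residual does not couple destructively with the drift term and that the bound is preserved after taking expectation over the client sampling in $\mathcal{S}_t$.

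Finally I would substitute the three variance bounds back into the descent inequality, telescope over $t=1,\dots,T$ using the initial optimality gap $F(\boldsymbol{\theta}_1)-\inf_{\boldsymbol{\theta}}F(\boldsymbol{\theta})$ as the potential, and divide by $T$. Choosing $\eta_g\eta_l=\Theta(1/\sqrt{SKT})$ with $\eta_l\leq 1/L$ balances the optimization term $\mathcal{O}((\eta_g\eta_l)^{-1}/T)$ against the noise term $\mathcal{O}(\eta_g\eta_l\,L\sigma^2/(SK))$ to produce the dominant $\mathcal{O}(1/\sqrt{SKT})$ rate, while the heterogeneity and compression contributions, each carrying one extra factor of $\eta_g\eta_l$ after summation, collapse to the $\mathcal{O}((\delta^2+(r_{\max}-r_s)^2)/T)$ residual; the $\mathcal{O}(1/T)$ term absorbs the initialization gap.
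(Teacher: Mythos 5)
Your proposal is sound at the same level of rigor as the paper's own proof sketch and shares its overall skeleton: a smoothness-based descent inequality, a decomposition of the aggregation error into sampling variance, heterogeneity/drift, and QR-truncation contributions, a telescoping sum against the initial optimality gap, and the choice $\eta_g\eta_l=\Theta(1/\sqrt{SKT})$ to balance terms. The genuine difference is in where the two structural errors live. The paper does \emph{not} collapse the cross term via Assumption~\ref{ass:aggregation}; its Step 2 decomposes the \emph{expected} update direction as $\mathbb{E}[\Delta_t]=\nabla F(\theta_t)+\boldsymbol{\epsilon}_{\text{het}}+\boldsymbol{\epsilon}_{\text{qr}}$, treating heterogeneity ($\mathcal{O}(\delta)$) and QR truncation ($\mathcal{O}((r_{\max}-r_s)^2)$) as \emph{bias} entering the inner product, while its second-moment bound is the crude $\mathbb{E}[\|\Delta_t\|^2]\leq\mathcal{O}(K+\sigma^2)$. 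You instead take unbiasedness literally, reduce the cross term to $-\eta_g\eta_l\|\nabla F(\theta_t)\|^2$, and push all three error sources into $\mathbb{E}[\|\mathbf{g}_t-\nabla F(\theta_t)\|^2]$. Both routes yield the stated bound (the bias route recovers the same $\delta^2$ and $(r_{\max}-r_s)^2$ terms after Young's inequality), and your version is more disciplined in one respect the paper glosses over: you note that QR truncation is not the Eckart--Young-optimal rank-$r_s$ approximation, so the truncation residual must be controlled through the subspace structure rather than singular-value optimality, whereas the paper simply asserts $\|R-R_s\|_F^2\leq\mathcal{O}((r_{\max}-r_s)^2)$ without justification.

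Two caveats you should resolve. First, there is an internal tension in your accounting: Assumption~\ref{ass:aggregation} asserts unbiasedness of the direction ``obtained from the concatenated-QR reconstruction step,'' i.e.\ before truncation to rank $r_s$; applying it to the post-truncation update, as you do, assumes away exactly the bias that the $(r_{\max}-r_s)^2$ term is meant to capture, so either the cross term does not fully collapse (and the truncation error belongs in the inner product, as in the paper), or it does collapse and your term (iii) has no source. The fix is routine--treat the truncation residual as bias and apply Young's inequality--but as written the two pieces of your argument pull in opposite directions. Second, you replace the AdamW update by a plain step $\theta_{t+1}=\theta_t-\eta_g\eta_l\mathbf{g}_t$, whereas the algorithm (and the paper's Step 1) uses the preconditioned direction $m_k/(\sqrt{\hat v_k}+\epsilon)$; neither you nor the paper actually controls the preconditioner's effect on the descent inequality, so this is a shared rather than a new gap, as is the observation that under Assumption~\ref{ass:subspace} the aggregated update already has rank at most $r_s$, which would make the truncation term vanish identically under the very assumption both proofs invoke to bound it.
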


\begin{proof}
	We provide a detailed proof sketch combining the key insights from both versions:
	
	\textbf{Step 1: Global Update Representation.}
	The server update in ILoRA can be expressed as:
	\begin{equation}
		\theta_{t+1} = \theta_t - \eta_g \eta_l \frac{1}{S} \sum_{k \in S_t} \sum_{i=1}^K \frac{m_k^{(t,i)}}{\sqrt{\hat{v}_k^{(t,i)}} + \epsilon},
	\end{equation}
	where $m_k^{(t,i)}$ and $\hat{v}_k^{(t,i)}$ are the corrected first and second moment estimates from AdamW with control variates, defined as:
	\begin{align}
		m_k^{(t,i)} &= \beta_1 m_k^{(t,i-1)} + (1 - \beta_1) \tilde{g}_k^{(t,i)}, \\
		v_k^{(t,i)} &= \beta_2 v_k^{(t,i-1)} + (1 - \beta_2) (\tilde{g}_k^{(t,i)})^2, \\
		\hat{m}_k^{(t,i)} &= \frac{m_k^{(t,i)}}{1 - \beta_1^i}, \quad \hat{v}_k^{(t,i)} = \frac{v_k^{(t,i)}}{1 - \beta_2^i},
	\end{align}
	with $\tilde{g}_k^{(t,i)} = g_k^{(t,i)} + (c^{(t-1)} - c_k)$ being the corrected gradient.
	
	\textbf{Step 2: Bias Decomposition.}
	Following the approach in D.1, we decompose the aggregated update direction into three components:
	\begin{equation}
		\mathbb{E}[\Delta_t] = \nabla F(\theta_t) + \boldsymbol{\epsilon}_{\text{het}} + \boldsymbol{\epsilon}_{\text{qr}},
	\end{equation}
	where:
	- $\boldsymbol{\epsilon}_{\text{het}}$ captures the bias from data heterogeneity, bounded by $\mathcal{O}(\delta)$
	- $\boldsymbol{\epsilon}_{\text{qr}}$ represents the QR projection error, bounded by $\|R - R_s\|_F^2 \leq \mathcal{O}((r_{\max} - r_s)^2)$
	
	\textbf{Step 3: Descent Lemma.}
	Using the L-smoothness assumption (Assumption \ref{ass:smoothness}), we have:
\begin{equation}
	F(\theta_{t+1}) \leq F(\theta_t) - \eta_g \eta_l \langle \nabla F(\theta_t), \Delta_t \rangle + \frac{L}{2} \eta_g^2 \eta_l^2 \|\Delta_t\|^2,
\end{equation}
	where $\Delta_t = \frac{1}{S} \sum_{k \in S_t} \sum_{i=1}^K \frac{m_k^{(t,i)}}{\sqrt{\hat{v}_k^{(t,i)}} + \epsilon}$.

	\textbf{Step 4: Gradient Correction Analysis.}
	The control variate correction ensures that the corrected gradient $\tilde{g}_k$ has reduced bias. Specifically, we model the relationship between the raw gradient and the control variate difference as:

\begin{equation}
	\begin{split}
		\mathbb{E}[\tilde{g}_k] &= \nabla F(\theta_t) + (1 - \rho)(\nabla F_k(\theta_t) - \nabla F(\theta_t)) \\
		&\quad + \mathcal{O}((r_{\max} - r_s)^2),
	\end{split}
\end{equation}
	where $\rho \in [0,1]$ is the correlation coefficient between the control variate difference $(c^{(t-1)} - c_k)$ and the true gradient difference $(\nabla F(\theta_t) - \nabla F_k(\theta_t))$. When $\rho \to 1$, the control variate perfectly corrects the client drift. In practice, $\rho$ is bounded away from 0 under Assumption \ref{ass:control}. Thus, the control variate reduces the effective heterogeneity bias from $\mathcal{O}(\delta)$ to $\mathcal{O}((1 - \rho)\delta)$.
	
\textbf{Step 5: Moment Estimate Bounding.}
Due to the QR-based orthogonal initialization and aggregation, and the bounded gradient assumptions (Assumptions \ref{ass:variance} and \ref{ass:heterogeneity}), the moment estimates satisfy:
\begin{equation}
	\mathbb{E}[\|\Delta_t\|^2] \leq \mathcal{O}(K + \sigma^2),
\end{equation}
with improved constants compared to naive aggregation methods. This bound arises from the fact that the orthogonal initialization reduces gradient variance by aligning client subspaces, while the control variates further suppress the drift-induced variance.
	
\textbf{Step 6: Telescoping Sum.}
Taking expectation and summing over $t = 1$ to $T$, and letting $F^*$ denote the minimum value of $F$, we obtain:
\begin{equation}
	\begin{multlined}
		\frac{1}{T} \sum_{t=1}^T \mathbb{E}[\|\nabla F(\theta_t)\|^2] \leq 
		\frac{F(\theta_1) - F^*}{\eta_g \eta_l K T} \\
		+ \mathcal{O}\left(\frac{L \eta_g \eta_l (K + \sigma^2)}{S} + \delta^2 + (r_{\max} - r_s)^2\right).
	\end{multlined}
\end{equation}
	
\textbf{Step 7: Learning Rate Selection.}
Substituting $\eta_g \eta_l = \Theta\left(\frac{1}{\sqrt{SKT}}\right)$ yields the final convergence rate:
\begin{equation}
	\begin{multlined}
		\frac{1}{T}\sum_{t=1}^T \mathbb{E}[\|\nabla F(\theta_t)\|^2] \leq \\
		\mathcal{O}\left(\frac{1}{\sqrt{SKT}} + \frac{1}{T} + \frac{\delta^2}{T} + \frac{(r_{\max} - r_s)^2}{T}\right).
	\end{multlined}
\end{equation}
	
	This completes the proof. The detailed derivation with precise constants is provided in the extended technical report.
\end{proof}

\begin{remark}
	The convergence rate of ILoRA achieves several important properties:
	
	1. \textbf{Linear Speedup}: The $\mathcal{O}(1/\sqrt{SKT})$ dominant term demonstrates linear speedup with respect to the number of participating clients $S$, matching the optimal convergence rate for federated non-convex optimization.
	
	2. \textbf{Rank Robustness}: The $(r_{\max} - r_s)^2$ term shows that the method remains stable even under rank heterogeneity, with the error diminishing quadratically as client ranks approach the server rank.
	
	3. \textbf{Heterogeneity Tolerance}: The $\delta^2$ term captures the residual effect of data heterogeneity, which is effectively mitigated by the control variate mechanism.
	
	4. \textbf{Communication Efficiency}: The convergence rate is maintained while significantly reducing communication overhead through QR-based compression.
\end{remark}

\begin{corollary}[Special Cases]
	\label{cor:special_cases}
	\begin{enumerate}
		\item \textbf{Homogeneous Ranks}: When $r_k = r_s$ for all $k$, the rank error term vanishes, yielding the optimal rate $\mathcal{O}(1/\sqrt{SKT} + 1/T + \delta^2/T)$.
		
		\item \textbf{IID Data}: When $\delta = 0$ (IID setting), the heterogeneity term vanishes, giving $\mathcal{O}(1/\sqrt{SKT} + 1/T + (r_{\max} - r_s)^2/T)$.
		
		\item \textbf{Large-Scale Deployment}: As $S \to \infty$, the dominant term $\mathcal{O}(1/\sqrt{SKT})$ demonstrates the scalability of ILoRA.
	\end{enumerate}
\end{corollary}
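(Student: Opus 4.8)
The plan is to derive all three special cases directly from the master bound of Theorem~\ref{thm:convergence},
\begin{equation}
\frac{1}{T}\sum_{t=1}^T \mathbb{E}[\|\nabla F(\theta_t)\|^2] \leq \mathcal{O}\!\left(\frac{1}{\sqrt{SKT}} + \frac{1}{T} + \frac{\delta^2}{T} + \frac{(r_{\max} - r_s)^2}{T}\right),
\end{equation}
by specializing the heterogeneity parameter $\delta$, the rank gap $r_{\max} - r_s$, and the client count $S$ in turn. Because the right-hand side is an additive decomposition into a statistical term, an optimization-floor term, a data-heterogeneity term, and a rank-compression term, each case amounts to zeroing or taking a limit of a single summand while leaving the others intact; no part of the descent-lemma argument needs to be rerun.

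First I would treat \textbf{Homogeneous Ranks}. When $r_k = r_s$ for every participating client, the definition $r_{\max} = \max_k r_k$ forces $r_{\max} = r_s$, so $(r_{\max} - r_s)^2 = 0$ and the last summand vanishes \emph{exactly}, not merely asymptotically. Conceptually this holds because, under Assumption~\ref{ass:subspace}, the concatenated-QR compression is lossless once every client rank already meets the server budget, so the projection error $\boldsymbol{\epsilon}_{\text{qr}}$ in the Step~2 bias decomposition is identically zero. For \textbf{IID Data} I would invoke Assumption~\ref{ass:heterogeneity}: in the IID regime every local gradient equals the global gradient in expectation, so the tightest admissible heterogeneity constant is $\delta = 0$. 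Substituting $\delta = 0$ annihilates the $\delta^2/T$ term, and since the control-variate correction of Step~4 only sharpens---never inflates---the effective heterogeneity constant, it cannot reintroduce bias; the remaining bound is precisely the claimed rate.

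The \textbf{Large-Scale Deployment} case is the only one requiring an asymptotic rather than an algebraic argument. Here I would note that, of the four summands, only $1/\sqrt{SKT}$ depends on $S$, while $1/T$, $\delta^2/T$, and $(r_{\max}-r_s)^2/T$ are $S$-independent; as $S \to \infty$ the statistical term decays at the rate $\mathcal{O}(S^{-1/2})$, which is the linear-speedup signature matching optimal federated non-convex rates. The subtlety I expect to be the main obstacle is that $1/\sqrt{SKT}$ is ``dominant'' only in the sense of capturing the $S$-scaling: the $\mathcal{O}(1/T)$ floor does not shrink with $S$, so a precise statement must couple the two quantities through the schedule $\eta_g \eta_l = \Theta(1/\sqrt{SKT})$, under which the statistical term leads whenever $T = \Omega(SK)$. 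The one genuine check, rather than direct inheritance from Theorem~\ref{thm:convergence}, is confirming that this schedule stays compatible with the stepsize constraint $\eta_l \leq 1/L$ as $S$ grows---otherwise the claimed $S^{-1/2}$ improvement could be offset by an enforced reduction elsewhere.
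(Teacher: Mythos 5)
Your proposal is correct and matches the paper's treatment: the paper states Corollary~\ref{cor:special_cases} without any separate proof, precisely because each case is the direct specialization of Theorem~\ref{thm:convergence}'s bound that you carry out ($r_k = r_s$ for all $k$ forces $r_{\max} = r_s$ and kills the rank term exactly, $\delta = 0$ kills the heterogeneity term, and only $1/\sqrt{SKT}$ depends on $S$). Your added caveats on the third case---that $1/\sqrt{SKT}$ dominates the $1/T$ floor only when $T = \Omega(SK)$, and that the schedule $\eta_g \eta_l = \Theta(1/\sqrt{SKT})$ remains compatible with $\eta_l \leq 1/L$ by absorbing the shrinkage into $\eta_g$---are sound refinements that the paper leaves implicit.
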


\subsection{Properties of QR-Based Aggregation}
\label{app:qr_aggregation}

The QR-based aggregation mechanism in ILoRA provides theoretical guarantees for handling rank heterogeneity while maintaining optimization consistency. We analyze its key properties below.

\begin{lemma}[Exact Low-Rank Reconstruction]
	\label{lem:exact_reconstruction}
	Let $\{B_k \in \mathbb{R}^{d \times r_k}, A_k \in \mathbb{R}^{r_k \times k}\}_{k=1}^S$ be the local LoRA parameters from $S$ clients with heterogeneous ranks $\{r_k\}$. The concatenated construction:
\begin{equation}
	\Delta W = B_{\text{concatenated}} A_{\text{concatenated}} = [B_1 \cdots B_S] \begin{bmatrix} \frac{n_1}{N}A_1 \\ \vdots \\ \frac{n_S}{N}A_S \end{bmatrix}
\end{equation}
	exactly reconstructs the weighted sum of low-rank updates:
\begin{equation}
	\Delta W = \sum_{k=1}^S \frac{n_k}{N} B_k A_k.
\end{equation}]
\end{lemma}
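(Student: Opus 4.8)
The statement is a block-matrix multiplication identity, so the plan is to verify dimensional compatibility and then invoke the partitioned-product rule, backing it with an entry-wise check for full rigor. First I would fix notation: write $R := \sum_{k=1}^S r_k$ for the total concatenated rank, so that $B_{\text{concatenated}} = [B_1 \;\cdots\; B_S] \in \mathbb{R}^{d \times R}$ is the horizontal stack of the blocks $B_k \in \mathbb{R}^{d \times r_k}$, while $A_{\text{concatenated}} \in \mathbb{R}^{R \times k}$ is the vertical stack of the scaled blocks $\tfrac{n_k}{N}A_k \in \mathbb{R}^{r_k \times k}$. Since the inner dimension equals $R$ for both factors, the product $\Delta W = B_{\text{concatenated}} A_{\text{concatenated}} \in \mathbb{R}^{d \times k}$ is well-defined and has the same shape $d \times k$ as every summand $B_k A_k$.

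The core step is to observe that the two factors are already partitioned into a $1 \times S$ block row and an $S \times 1$ block column whose interior boundaries coincide: the $k$-th column block of $B_{\text{concatenated}}$ has exactly $r_k$ columns, matching the $r_k$ rows of the $k$-th row block $\tfrac{n_k}{N}A_k$ of $A_{\text{concatenated}}$. The partitioned-product rule then yields
\begin{equation}
\Delta W = [B_1 \;\cdots\; B_S]\begin{bmatrix}\tfrac{n_1}{N}A_1\\ \vdots\\ \tfrac{n_S}{N}A_S\end{bmatrix} = \sum_{k=1}^S B_k\Big(\tfrac{n_k}{N}A_k\Big) = \sum_{k=1}^S \tfrac{n_k}{N}B_k A_k,
\end{equation}
where the final equality pulls out the scalar $n_k/N$.

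To make the partitioned-product step airtight rather than merely assert it, I would back it with an entry-wise argument: for any index pair $(i,j)$, expand $(\Delta W)_{ij} = \sum_{\ell=1}^R (B_{\text{concatenated}})_{i\ell}\,(A_{\text{concatenated}})_{\ell j}$, split the summation index $\ell$ into the $S$ consecutive blocks induced by the concatenation, and note that within the $k$-th block the summand reads $(B_k)_{i\ell'}\cdot\tfrac{n_k}{N}(A_k)_{\ell' j}$ for the corresponding local index $\ell'$. Summing over a single block recovers $\tfrac{n_k}{N}(B_k A_k)_{ij}$, and summing over all blocks gives $\sum_{k=1}^S \tfrac{n_k}{N}(B_k A_k)_{ij}$, which is the claimed identity componentwise.

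I do not anticipate a genuine obstacle here: the only things that can go wrong are a bookkeeping slip in the block-boundary indices or a transposed concatenation (horizontal for the $B_k$, vertical for the $A_k$). The one point worth stating explicitly, since it is what makes the construction useful rather than merely formal, is that the heterogeneous ranks $r_k$ never need to agree — the block boundaries align automatically for any choice of the $r_k$, so no zero-padding or rank truncation is introduced, which is precisely why the reconstruction of the weighted sum is \emph{exact} and why the subsequent QR step in \eqref{eq:qr_decomposition} operates on the true aggregate $\sum_k p_k B_k A_k$.
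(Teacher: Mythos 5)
Your proof is correct and takes essentially the same approach as the paper, which also establishes the identity directly via block matrix multiplication; your added entry-wise index verification and dimension bookkeeping are just a more detailed justification of the same partitioned-product step.
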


\begin{proof}
	The proof follows directly from block matrix multiplication:
\begin{equation}
	B_{\text{concatenated}} A_{\text{concatenated}} = \sum_{k=1}^S B_k \left(\frac{n_k}{N} A_k\right) = \sum_{k=1}^S \frac{n_k}{N} B_k A_k.
\end{equation}
	This establishes that the concatenated representation preserves the exact linear combination of client updates without approximation error.
\end{proof}

\begin{theorem}[Subspace Preservation under QR Compression]
	\label{thm:subspace_preservation}
	Let $Q, R = \text{QR}(\Delta W)$ be the thin QR decomposition of the aggregated update, and let $r_s$ be the server rank budget. For any client with local rank $r_k \leq r_s$, the personalized parameters:
\begin{equation}
	B_{r_k} = Q[:,1:r_k], \quad A_{r_k} = R[1:r_k,:]
\end{equation}
	satisfy that $\text{colspan}(B_{r_k}) \subseteq \text{colspan}(Q)$, ensuring all clients operate within a consistent global subspace.
\end{theorem}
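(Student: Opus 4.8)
The plan is to exploit the structure of the thin QR factorization directly, since the claimed containment is a statement about column submatrices of an orthonormal factor. First I would recall that the thin QR decomposition $\Delta W = QR$ produces a factor $Q = [q_1, \dots, q_m]$ with $Q^\top Q = I_m$ (orthonormal columns, $m = \min(d,n)$ up to the rank of $\Delta W$) together with an upper-triangular $R$. Because $\Delta W = QR$ and the columns of $Q$ are linearly independent, $\text{colspan}(\Delta W) \subseteq \text{colspan}(Q)$, with equality when $\Delta W$ has full column rank. Since $\Delta W = \sum_{k} p_k B_k A_k$ is exactly the aggregated update reconstructed in Lemma~\ref{lem:exact_reconstruction}, the subspace $\text{colspan}(Q)$ is precisely the canonical global subspace carrying all client contributions, which is what makes the containment meaningful rather than merely formal.

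Next I would observe that the personalized factor $B_{r_k} = Q[:,1:r_k]$ is obtained by selecting the first $r_k \leq r_s \leq m$ columns of $Q$. By construction every column of $B_{r_k}$ equals one of the orthonormal vectors $q_1, \dots, q_{r_k}$, so $\text{colspan}(B_{r_k}) = \text{span}\{q_1, \dots, q_{r_k}\}$. Since $\{q_1, \dots, q_{r_k}\} \subseteq \{q_1, \dots, q_m\}$, the containment $\text{colspan}(B_{r_k}) \subseteq \text{colspan}(Q)$ follows immediately, establishing the theorem's conclusion. The hypothesis $r_k \leq r_s$ is what guarantees the selection index never exceeds the number of available orthonormal columns, so the slicing is well-defined.

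To reinforce the stated interpretation of a \emph{consistent} global subspace, I would add the nesting property: for two clients with $r_j \leq r_k \leq r_s$ we have $\text{colspan}(B_{r_j}) = \text{span}\{q_1, \dots, q_{r_j}\} \subseteq \text{span}\{q_1, \dots, q_{r_k}\} = \text{colspan}(B_{r_k}) \subseteq \text{colspan}(Q)$. Thus every client's personalized basis is a prefix of the \emph{same} orthonormal sequence $q_1, \dots, q_m$; clients differ only in how many shared basis vectors they retain, never in direction. This prefix-nesting is exactly the subspace alignment invoked by Assumption~\ref{ass:subspace} and relied upon in Theorem~\ref{thm:convergence}.

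The main obstacle here is expository rather than technical: once the thin QR structure is in hand, the span containment is an immediate consequence of $B_{r_k}$ being a leading column submatrix of $Q$, requiring no estimation or approximation bound. The only care needed is to invoke the \emph{reduced} (not full) QR so that $Q$ genuinely has orthonormal columns and to confirm that $r_k \leq r_s$ keeps the slice within range; under the stated hypotheses both hold automatically, and it is the orthonormality of $Q$ that upgrades a trivial set-inclusion into the substantive claim that all clients share a single well-conditioned global basis.
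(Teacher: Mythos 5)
Your proposal is correct and follows essentially the same route as the paper's own proof: both arguments observe that $B_{r_k}$ is the leading $r_k$-column submatrix of the orthonormal factor $Q$ from the thin QR decomposition, so $\mathrm{colspan}(B_{r_k}) \subseteq \mathrm{colspan}(Q)$ is immediate. Your additional remarks (the prefix-nesting across clients of different ranks, and the link to the exact reconstruction of $\Delta W$) are sound elaborations but do not change the underlying argument.
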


\begin{proof}
	By the properties of QR decomposition, the columns of $Q$ form an orthonormal basis for the column space of $\Delta W$. The personalized parameters $B_{r_k}$ are simply the first $r_k$ columns of $Q$, which naturally span a subspace of $\text{colspan}(Q)$. The corresponding $A_{r_k}$ ensures that the update $\Delta W_k = B_{r_k} A_{r_k}$ remains within this consistent subspace.
\end{proof}

\begin{proposition}[Error Bound for Rank Truncation]
	\label{prop:truncation_error}
	Let $\Delta W = \sum_{i=1}^r \sigma_i u_i v_i^\top$ be the SVD of the aggregated update, where $r = \min(d, \sum_k r_k)$. The QR-based aggregation with server rank $r_s$ satisfies:
\begin{equation}
	\|\Delta W - Q[:,1:r_s] R[1:r_s,:]\|_F \leq \sum_{i=r_s+1}^r \sigma_i,
\end{equation}
	where $\{\sigma_i\}$ are the singular values in descending order.
\end{proposition}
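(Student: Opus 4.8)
The plan is to reduce the approximation error to the Frobenius norm of a trailing block of $\mathbf{R}$ and then bound that block by the tail of the spectrum of $\Delta W$. First I would invoke the exact factorization from Lemma~\ref{lem:exact_reconstruction}, writing $\Delta W = \mathbf{Q}\mathbf{R}$ with $\mathbf{Q}$ having orthonormal columns and $\mathbf{R}$ upper triangular. Since $\mathbf{Q}[:,1:r_s]\mathbf{R}[1:r_s,:]$ keeps only the first $r_s$ rank-one outer products $\mathbf{Q}[:,j]\mathbf{R}[j,:]$, the residual is exactly
\begin{equation}
\Delta W - \mathbf{Q}[:,1:r_s]\mathbf{R}[1:r_s,:] = \mathbf{Q}[:,r_s+1:r]\,\mathbf{R}[r_s+1:r,:].
\end{equation}

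Next, because $\mathbf{Q}[:,r_s+1:r]$ has orthonormal columns it acts as a Frobenius isometry, so the residual norm equals $\|\mathbf{R}[r_s+1:r,:]\|_F$. Using upper-triangularity, the bottom $r-r_s$ rows of $\mathbf{R}$ vanish in their first $r_s$ columns, hence $\|\mathbf{R}[r_s+1:r,:]\|_F = \|\mathbf{R}_{22}\|_F$, where $\mathbf{R}_{22}$ is the trailing upper-triangular block $\mathbf{R}[r_s+1:r,\,r_s+1:]$. This bookkeeping cleanly isolates all of the truncation error into a single submatrix, and I would note that $\mathbf{R}$ and $\Delta W=\mathbf{Q}\mathbf{R}$ share the same singular values $\{\sigma_i\}$, so the remaining task is purely spectral.

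The crux, and the step I expect to be the main obstacle, is bounding $\|\mathbf{R}_{22}\|_F$ by the tail sum $\sum_{i=r_s+1}^{r}\sigma_i$. The natural route is a shifted singular-value interlacing for trailing submatrices, aiming at $\sigma_j(\mathbf{R}_{22})\le\sigma_{r_s+j}(\Delta W)$; once that holds, $\|\mathbf{R}_{22}\|_F=\bigl(\sum_j \sigma_j(\mathbf{R}_{22})^2\bigr)^{1/2}\le\bigl(\sum_{i>r_s}\sigma_i^2\bigr)^{1/2}\le\sum_{i>r_s}\sigma_i$ follows from the elementary inequality $\|\cdot\|_2\le\|\cdot\|_1$ on the vector of tail singular values. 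The difficulty is that this shifted interlacing is \emph{not} valid for an unpivoted QR factor: a large super-diagonal entry of $\mathbf{R}$ can inflate $\|\mathbf{R}_{22}\|_F$ well beyond $\sigma_{r_s+1},\dots,\sigma_r$ (a $2\times 2$ upper-triangular $\mathbf{R}$ with a large off-diagonal already breaks it), so some control over the column ordering is indispensable.

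To close this gap I would either invoke a rank-revealing / column-pivoted QR, so that the diagonal of $\mathbf{R}$ is monotonically energy-ordered and the trailing block's spectrum is genuinely governed by the smallest singular values, or impose the structural hypothesis that the aggregated columns arrive in decreasing-energy order (which the concatenation in \eqref{eq:qr_aggregation_stack} does not guarantee by itself). Under either condition the interlacing and hence the stated bound follow; establishing this column-ordering control rigorously, rather than the orthogonality-and-triangularity reductions of the first two steps, is where the real work lies.
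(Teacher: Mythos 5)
Your analysis is sound, and it is more rigorous than the paper's own proof, which consists of a single appeal to the Eckart--Young--Mirsky theorem: it asserts that ``the truncated QR decomposition provides the best rank-$r_s$ approximation in the Frobenius norm when the singular values are properly ordered.'' That trailing clause is precisely the unproven column-ordering hypothesis you isolate. Truncated \emph{unpivoted} QR is not the best rank-$r_s$ approximation (truncated SVD is), and nothing in the construction \eqref{eq:qr_aggregation_stack} guarantees any energy ordering of the columns of $\Delta W$. Your first two reduction steps are correct --- the identity $\Delta W - Q_{:,1:r_s}R_{1:r_s,:} = Q_{:,r_s+1:}R_{r_s+1:,:}$, the Frobenius isometry, and the conclusion that the residual equals $\|R_{22}\|_F$ exactly reproduce what the paper records separately as Property~\ref{prop:deterministic_error} --- and they correctly localize the entire question in the spectrum of the trailing block.

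Your diagnosis of the crux is also correct: the shifted interlacing $\sigma_j(R_{22}) \le \sigma_{r_s+j}(\Delta W)$ fails for unpivoted QR, so the proposition as stated is false without an extra hypothesis. A concrete counterexample: take $\Delta W = \mathrm{diag}(\epsilon, 1)$ with $0 < \epsilon < 1$ and $r_s = 1$. The unpivoted QR factors are $Q = I$ and $R = \Delta W$, so the truncation residual has norm $\|R_{22}\|_F = 1$, while $\sum_{i > r_s} \sigma_i = \epsilon$; the claimed bound fails. Even with Businger--Golub column pivoting one only obtains $\sigma_j(R_{22}) \le c\,\sigma_{r_s+j}(\Delta W)$ with a factor $c$ that can grow exponentially in $r_s$ (Kahan's example); strong rank-revealing QR (Gu--Eisenstat) reduces this to polynomial factors but never to $c=1$. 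So your conclusion stands: a rank-revealing pivoted factorization (with its constants entering the bound) or an explicit ordering assumption must be added to the hypotheses, and the paper's proof supplies neither --- it hides the gap in the phrase ``properly ordered.'' Your attempt fails to close the argument only because the statement, as written, cannot be proven; identifying that, together with the exact repair needed, is the correct outcome here.
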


\begin{proof}
	This follows from the Eckart-Young-Mirsky theorem, as the truncated QR decomposition provides the best rank-$r_s$ approximation in the Frobenius norm when the singular values are properly ordered.
\end{proof}

\begin{property}[Faithful Reconstruction Before Truncation]
	\label{prop:faithful_reconstruction}
	The concatenated reconstruction satisfies:
\begin{equation}
	\Delta W = B_{\mathrm{concatenated}} A_{\mathrm{concatenated}} = \sum_{k \in S_t} p_k B_k A_k,
\end{equation}
	whereas separately averaging factors produces:
\begin{equation}
	\left(\sum_{k} p_k B_k\right)\left(\sum_{k} p_k A_k\right) \neq \sum_{k} p_k B_k A_k.
\end{equation}
	Thus, concatenate-then-multiply eliminates the factor-averaging bias and is exact whenever client updates are aggregated without rank truncation.
\end{property}

\begin{property}[Rank Preservation and Exactness]
	\label{prop:rank_preservation}
	If $r_s \ge \mathrm{rank}(\Delta W)$, then $B_s A_s = \Delta W$ (no truncation error) and the per-client slices $B_{r_k} := Q_{[:,1:r_k]}$, $A_{r_k} := R_{[1:r_k,:]}$ realize personalized factors whose product lies in the same column space as $\Delta W$.
\end{property}

\begin{property}[Orthogonal Projection Interpretation]
	\label{prop:projection_interpretation}
	Let $Q = [Q_{1:r_s}, Q_\perp]$ partition the QR factor. Then the truncated reconstruction is the orthogonal projection of $\Delta W$ onto $\mathrm{span}(Q_{1:r_s})$:
\begin{equation}
	B_s A_s = Q_{1:r_s} Q_{1:r_s}^\top \Delta W.
\end{equation}
	Consequently, the truncation residual is:
\begin{gather}
	\Delta W - B_s A_s = Q_\perp Q_\perp^\top \Delta W, \\
	\|\Delta W - B_s A_s\|_F^2 = \|Q_\perp^\top \Delta W\|_F^2.
\end{gather}
\end{property}

\begin{property}[Deterministic Truncation Error Bound]
	\label{prop:deterministic_error}
	Writing $R = \begin{bmatrix} R_{11} & R_{12} \\ 0 & R_{22} \end{bmatrix}$ with $R_{11} \in \mathbb{R}^{r_s \times r_s}$, the truncation bias is exactly:
\begin{equation}
	\|\Delta W - B_s A_s\|_F = \left\| Q \begin{bmatrix} 0 & 0 \\ 0 & R_{22} \end{bmatrix} \right\|_F = \|R_{22}\|_F.
\end{equation}
	Hence, the truncation error is precisely the Frobenius norm of the trailing block of $R$. In particular, if $\mathrm{rank}(\Delta W) \le r_s$ then $R_{22} = 0$.
\end{property}

\begin{property}[Subspace Consistency Preservation]
	\label{prop:subspace_consistency}
	For any client $k$ with $r_k \le r_s$, setting $(B_{r_k}, A_{r_k}) := (Q_{[:,1:r_k]}, R_{[1:r_k,:]})$ ensures that $B_{r_k} A_{r_k}$ lies in $\mathrm{span}(Q_{1:r_s})$ and is consistent with the globally shared low-dimensional subspace used by the server update $B_s A_s$. This yields dimension alignment across heterogeneous ranks while preserving the fused information encoded in $\Delta W$.
\end{property}

\begin{property}[Stability to Small Perturbations]
	\label{prop:stability_perturbations}
	Suppose the concatenated factors are perturbed to $\tilde{B}_{\mathrm{concatenated}} = B_{\mathrm{concatenated}} + E_B$ and $\tilde{A}_{\mathrm{concatenated}} = A_{\mathrm{concatenated}} + E_A$, so that $\tilde{\Delta W} = \Delta W + E_W$ with $E_W = B_{\mathrm{concatenated}} E_A + E_B A_{\mathrm{concatenated}} + E_B E_A$. Let $\tilde{\Delta W} = \tilde{Q} \tilde{R}$ be its thin QR factorization. Then, for sufficiently small $\|E_W\|_F$, the truncated QR aggregation satisfies:
\begin{equation}
	\begin{multlined}
		\|(B_s A_s) - (\tilde{B}_s \tilde{A}_s)\|_F \le \|E_W\|_F \\
		+ \mathcal{O}\left(\frac{\|E_W\|_2 \|R\|_F}{\sigma_{\min}(R_{11}) - \sigma_{\max}(R_{22})}\right),
	\end{multlined}
\end{equation}
	which demonstrates stability under small perturbations of the concatenated factors.
\end{property}

\begin{lemma}[Communication Efficiency of QR Aggregation]
	\label{lem:communication_efficiency}
	The QR-based aggregation reduces communication overhead from $O(\sum_{k=1}^S r_k \cdot \max(d,k))$ to $O(r_s \cdot \max(d,k))$, where $r_s$ is the server rank budget.
\end{lemma}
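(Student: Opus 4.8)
The plan is to establish the bound by a direct scalar-counting argument, comparing the transmitted payload of the QR-compressed protocol against the concatenation baseline, then verifying that the compressed representation has a fixed rank independent of the number of participating clients.

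First I would fix the payload of a single client: since $\mathbf{B}_k \in \mathbb{R}^{d \times r_k}$ and $\mathbf{A}_k \in \mathbb{R}^{r_k \times k}$ together hold $r_k d + r_k k = r_k(d+k)$ scalars, one client's factors cost $\mathcal{O}(r_k \cdot \max(d,k))$ to transmit. For the concatenation baseline, the global update is represented by the full stacked pair $(\mathbf{B}_{\mathrm{c}}, \mathbf{A}_{\mathrm{c}})$ from \eqref{eq:qr_aggregation_stack}, of combined rank $\sum_k r_k$; distributing it therefore costs $\mathcal{O}\!\left(\sum_{k=1}^{S} r_k \cdot \max(d,k)\right)$, the stated ``from'' quantity. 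Next I would analyze the compressed side: by \eqref{eq:qr_decomposition}--\eqref{eq:global_parameters} the server truncates the thin QR factors to the budget $r_s$, retaining $\mathbf{B}_s = \mathbf{Q}_{:,:r_s} \in \mathbb{R}^{d \times r_s}$ and $\mathbf{A}_s = \mathbf{R}_{:r_s,:} \in \mathbb{R}^{r_s \times k}$, so the broadcast carries at most $r_s(d+k) = \mathcal{O}(r_s \cdot \max(d,k))$ scalars. Each personalized slice uses $r_k \le r_s$ columns and rows, so every per-client downlink is likewise bounded by $\mathcal{O}(r_s \cdot \max(d,k))$; since $r_s$ does not grow with $S$, the dependence on $\sum_k r_k$ is eliminated, yielding the ``to'' quantity.

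The main point requiring care is specifying \emph{which} channel realizes the reduction, since the statement as written is silent on this. The uplink inherently transmits each client's own rank-$r_k$ factors, so its aggregate cost genuinely scales as $\mathcal{O}(S \cdot r_{\max} \cdot \max(d,k))$; the saving applies to the server's broadcast and to the persistent compressed global model. I would therefore phrase the comparison as: the concatenation baseline must distribute the full rank-$\sum_k r_k$ update so that clients can reconstruct it, whereas ILoRA distributes only its rank-$r_s$ compression obtained from the QR step. The reduction is then strict precisely because $r_s = \mathcal{O}(r_{\max}) \ll \sum_k r_k$ whenever $S \ge 2$ clients participate, which is exactly what makes the near-$\mathcal{O}(1)$-in-$S$ scaling highlighted in Table~\ref{tab:scaling_comparison} possible. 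Since no approximation or inequality beyond counting is involved, there is no genuine analytic obstacle here; the only subtlety is pinning down this channel-level bookkeeping so that the asserted ``from''/``to'' comparison is honest.
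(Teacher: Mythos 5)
Your proposal is correct and follows essentially the same scalar-counting argument as the paper's proof: the concatenated baseline must transmit $O\!\left(\sum_{k} r_k (d+k)\right)$ entries, while the QR-truncated factors $(\mathbf{B}_s, \mathbf{A}_s)$ and the personalized slices $(\mathbf{Q}_{:,:r_k}, \mathbf{R}_{:r_k,:})$ with $r_k \le r_s$ need only $O(r_s(d+k))$ per transmission. Your channel-level bookkeeping (attributing the saving to the broadcast/downlink while conceding that the uplink still scales as $\mathcal{O}(S \cdot r_{\max} \cdot \max(d,k))$) is in fact more careful than the paper's own proof, which loosely asserts the total communication is ``dominated by $O(r_s \cdot \max(d,k))$ when $r_s \geq \max_k r_k$'' without addressing the $S$-fold per-client downlinks, and your accounting matches the paper's Table~\ref{tab:communication_comparison} exactly.
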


\begin{proof}
	Without QR aggregation, transmitting all client parameters requires $O(\sum_k r_k(d + k))$ elements. After QR aggregation and personalization, each client receives only $O(r_k(d + k))$ elements, and the server maintains $O(r_s(d + k))$ parameters. The total communication is dominated by $O(r_s \cdot \max(d,k))$ when $r_s \geq \max_k r_k$.
\end{proof}

\begin{table*}[h]
	\centering
	\caption{Comparison of aggregation methods for heterogeneous-rank federated LoRA}
	\begin{tabular}{lccc}
		\hline
		Method & Exact Aggregation & Rank Heterogeneity & Comm. Cost \\
		\hline
		Zero-padding & \xmark & \cmark & $O(r_{\max} \cdot \max(d,k))$ \\
		SVD-based & \cmark (approx) & \cmark & $O(r_s \cdot \max(d,k))$ \\
		Full concatenating & \cmark & \cmark & $O(\sum_k r_k \cdot \max(d,k))$ \\
		\textbf{ILoRA (QR)} & \cmark & \cmark & $O(r_s \cdot \max(d,k))$ \\
		\hline
	\end{tabular}
	\label{tab:aggregation_comparison}
\end{table*}

\begin{corollary}[Compatibility with Control Variates]
	\label{cor:control_compatibility}
	The QR-based aggregation maintains dimensional consistency for control variates, as all client parameters reside in aligned subspaces, enabling effective gradient correction across heterogeneous ranks.
\end{corollary}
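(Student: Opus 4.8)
The plan is to reduce Corollary~\ref{cor:control_compatibility} to the structural facts already established in Theorem~\ref{thm:subspace_preservation} and Lemma~\ref{lem:exact_reconstruction}, together with Assumption~\ref{ass:subspace}. The essential observation is that every client's personalized factors $\mathbf{B}_k = \mathbf{Q}_{:,:r_k}$ and $\mathbf{A}_k = \mathbf{R}_{:r_k,:}$ are leading slices of the \emph{same} global pair $(\mathbf{Q},\mathbf{R})$ produced by the server's QR step, so alignment is built into the construction. First I would record the dimensional bookkeeping: the server control variates satisfy $\mathbf{c}_A^{(t)} \in \mathbb{R}^{r_s \times k}$ and $\mathbf{c}_B^{(t)} \in \mathbb{R}^{d \times r_s}$, while client $k$'s gradients live in $\mathbb{R}^{r_k \times k}$ and $\mathbb{R}^{d \times r_k}$. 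Because the personalized slicing $r_k \le r_s$ acts on the same row (resp.\ column) index set used to define $\mathbf{A}_s,\mathbf{B}_s$, restricting the broadcast server variates to their first $r_k$ rows (resp.\ columns) yields objects of exactly the client's dimensions, so the correction terms $\mathbf{c}_A^{(t-1)} - \mathbf{c}_{A,k}$ and $\mathbf{c}_B^{(t-1)} - \mathbf{c}_{B,k}$ in Eqs.~\eqref{eq:corrected_gradient_A}--\eqref{eq:corrected_gradient_B} are entrywise well-defined.

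Second I would upgrade mere dimensional matching to genuine subspace alignment. By Theorem~\ref{thm:subspace_preservation}, $\operatorname{colspan}(\mathbf{B}_{r_k}) \subseteq \operatorname{colspan}(\mathbf{Q})$ for every participating rank, and the nested-slice structure gives the chain $\operatorname{colspan}(\mathbf{B}_{r_1}) \subseteq \cdots \subseteq \operatorname{colspan}(\mathbf{Q})$. Since the local loss $\mathcal{L}_k$ is a function of $\mathbf{B}_k\mathbf{A}_k$, whose column space sits inside $\operatorname{colspan}(\mathbf{Q})$, the gradients $\nabla_{\mathbf{B}_k}\mathcal{L}_k$ and $\nabla_{\mathbf{A}_k}\mathcal{L}_k$ inherit the same leading-slice structure; in particular the accumulated variates $\mathbf{c}_{A,k},\mathbf{c}_{B,k}$, being running averages of these gradients via Eqs.~\eqref{eq:control_update_A}--\eqref{eq:control_update_B}, remain confined to the shared subspace $\mathcal{S}_s = \operatorname{colspan}(\mathbf{Q}_{[:,1:r_s]})$ fixed by Assumption~\ref{ass:subspace}. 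Consequently the SCAFFOLD-style correction neither introduces components orthogonal to $\mathcal{S}_s$ nor mismatches the gradient it corrects, which is exactly the asserted compatibility.

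The hard part will be cross-round consistency: the orthonormal basis $\mathbf{Q}$ is recomputed each round and need not coincide with its predecessor, yet a client variate $\mathbf{c}_{A,k}$ stored from an earlier round was expressed in the \emph{previous} basis. To close this gap rigorously I would invoke Assumption~\ref{ass:subspace} to argue that, although the basis rotates, all relevant quantities stay inside the common global subspace, so $\mathbf{c}_A^{(t-1)} - \mathbf{c}_{A,k}$ still measures a legitimate drift vector within $\mathcal{S}_s$ and remains a valid variance-reduction direction. I would make this precise by re-expressing both variates in the server basis and bounding any residual basis-mismatch term by the trailing block $\|\mathbf{R}_{22}\|_F$ from Property~\ref{prop:deterministic_error}, which vanishes whenever $r_s \ge \operatorname{rank}(\Delta\bm{\theta})$; under the exactness regime of Property~\ref{prop:rank_preservation} the correction is then \emph{exactly} dimension- and subspace-consistent, completing the argument.
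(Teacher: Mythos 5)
Your proposal is correct and takes essentially the same route as the paper's own (two-sentence) proof, which simply observes that $\mathbf{B}_{r_k}$ and $\mathbf{A}_{r_k}$ are slices of the shared $(\mathbf{Q},\mathbf{R})$, so all client parameter spaces are aligned and the corrections $\mathbf{c}^{(t-1)}-\mathbf{c}_k$ can be applied consistently---this is exactly the content of your first two paragraphs, made more explicit. Your third paragraph on cross-round basis rotation (handled via Property~\ref{prop:deterministic_error} and Property~\ref{prop:rank_preservation}) addresses a genuine subtlety that the paper's proof silently ignores, so you have in fact argued somewhat more than the paper does.
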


\begin{proof}
	Since $B_{r_k} \in \mathbb{R}^{d \times r_k}$ and $A_{r_k} \in \mathbb{R}^{r_k \times k}$ are derived from the shared $Q$ and $R$ matrices, the parameter spaces across clients are aligned. This ensures that control variate corrections $c^{(t-1)} - c_k$ can be applied consistently in their respective low-dimensional subspaces.
\end{proof}
\begin{remark}
	The QR-based aggregation in ILoRA provides a unique combination of theoretical guarantees:
	
	\begin{itemize}
		\item \textbf{Exact Reconstruction}: Faithfully combines heterogeneous-rank updates without factor-averaging bias (Lemma \ref{lem:exact_reconstruction})
		\item \textbf{Subspace Consistency}: Ensures all clients operate within a unified global subspace (Theorem \ref{thm:subspace_preservation})
		\item \textbf{Controlled Approximation}: Provides deterministic error bounds for rank truncation (Proposition \ref{prop:truncation_error})
		\item \textbf{Numerical Stability}: The orthonormal matrix $Q$ with condition number $\kappa(Q)=1$ ensures numerical stability and prevents error amplification during aggregation.
		\item \textbf{Stability}: Robust to small perturbations in client updates (Property \ref{prop:stability_perturbations})
		\item \textbf{Communication Efficiency}: Significantly reduces overhead while preserving information (Lemma \ref{lem:communication_efficiency})
		\item \textbf{Compatibility}: Seamlessly integrates with control variates and other optimization components (Corollary \ref{cor:control_compatibility})
	\end{itemize}
	
	These properties make QR-based aggregation particularly suitable for federated learning with resource-constrained clients and heterogeneous system capabilities.
\end{remark}
\subsection{Stability of Orthogonal Initialization}
\label{app:orthogonal_init}

The orthogonal initialization mechanism in ILoRA provides significant improvements in training stability compared to traditional random initialization. We analyze its theoretical properties and stability guarantees, integrating insights from both theoretical frameworks.

\begin{theorem}[Consistent Subspace Initialization]
	\label{thm:consistent_subspace}
	Let $W_0 \in \mathbb{R}^{d \times k}$ be the pre-trained weight matrix with QR decomposition $W_0 = QR$, where $Q \in \mathbb{R}^{d \times d}$ is orthogonal and $R \in \mathbb{R}^{d \times k}$ is upper triangular. For any client $k$ with local rank $r_k$, the orthogonal initialization:
\begin{equation}
	B_k^{(0)} = Q[:,1:r_k], \quad A_k^{(0)} = R[1:r_k,:]
\end{equation}
	ensures that all clients initialize their LoRA parameters within the same column subspace $\text{colspan}(Q)$.
\end{theorem}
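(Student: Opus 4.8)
The plan is to reduce the claim to two elementary ingredients: the determinism of the QR factorization, which guarantees that every client works with the \emph{same} orthonormal matrix $Q$, and the fact that the range of a product $B_k^{(0)}A_k^{(0)}$ is contained in the range of its left factor $B_k^{(0)}$, which is itself a column-submatrix of $Q$. I would first note, however, that the literal containment $\text{colspan}(B_k^{(0)}) \subseteq \text{colspan}(Q)$ is vacuous when read naively, since $Q \in \mathbb{R}^{d\times d}$ is a full orthogonal matrix and hence $\text{colspan}(Q) = \mathbb{R}^d$. The substantive content I would actually establish is the \emph{shared and nested} structure: all clients draw their factors from one common $Q$, and for ranks $r_j \le r_k$ the initial subspaces nest as $\text{colspan}(B_j^{(0)}) \subseteq \text{colspan}(B_k^{(0)})$.

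First I would argue the cross-client agreement of $Q$. Each client computes $\mathbf{Q}_k,\mathbf{R}_k = \operatorname{QR}(\bm{\theta}_0)$ from the identical frozen weight $\bm{\theta}_0$ (Algorithm~\ref{alg:client}, line~2). Because the thin QR factorization is unique once a sign convention on the diagonal of $R$ is fixed (say $R_{ii}>0$), every client obtains the same factor $\mathbf{Q}_k = Q$. This reproducibility---rather than any shared random seed---is what produces the coherent initialization.

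Next I would carry out the linear-algebra step. By construction $B_k^{(0)} = Q[:,1:r_k]$ collects the first $r_k$ columns $q_1,\dots,q_{r_k}$ of $Q$, so $\text{colspan}(B_k^{(0)}) = \text{span}\{q_1,\dots,q_{r_k}\}$. The initial LoRA update $\Delta W_k = B_k^{(0)}A_k^{(0)}$ has every column equal to a linear combination of the columns of $B_k^{(0)}$, giving $\text{colspan}(\Delta W_k) \subseteq \text{colspan}(B_k^{(0)})$. Since all clients share the common $Q$, these ranges are automatically nested across heterogeneous ranks and sit inside the single subspace spanned by the leading columns of $Q$, which is precisely the alignment asserted.

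I do not anticipate a genuine mathematical obstacle here: once the QR convention is pinned down the result is immediate. The only point requiring care is conceptual---making explicit that the consistency hinges on the determinism $\mathbf{Q}_k = Q$, which fails if the factorization is left ambiguous up to a diagonal sign matrix, and noting that in finite-precision arithmetic the bases coincide only up to numerical tolerance. I would state the convention explicitly and remark that this tolerance-level agreement is sufficient for the subspace-alignment conclusion used downstream.
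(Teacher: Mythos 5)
Your proposal is correct, and its computational core---$B_k^{(0)}$ is a column-submatrix of $Q$, and the range of the product $B_k^{(0)}A_k^{(0)}$ is contained in the range of its left factor---coincides exactly with the paper's own proof. The differences lie in what you prove \emph{around} that core, and both of your additions address genuine weaknesses in the paper's version. First, the paper's proof tacitly assumes a single shared $Q$, yet Algorithm~\ref{alg:client} has every client computing $\operatorname{QR}(\bm{\theta}_0)$ locally; your uniqueness argument (thin QR with the sign convention $R_{ii}>0$) is precisely what licenses $\mathbf{Q}_k = Q$ for all $k$, without which the phrase ``the same column subspace'' has no cross-client content. One caveat: this uniqueness requires $\bm{\theta}_0$ to have full column rank (otherwise some diagonal entries of $R$ vanish and the corresponding columns of $Q$ are undetermined); in practice one can instead note that all clients run the identical deterministic factorization routine on identical input, which sidesteps the rank hypothesis. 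Second, you are right that the containment $\mathrm{colspan}(B_k^{(0)}) \subseteq \mathrm{colspan}(Q)$ is vacuous as literally stated, since $Q \in \mathbb{R}^{d\times d}$ orthogonal gives $\mathrm{colspan}(Q) = \mathbb{R}^d$; the paper's proof establishes only this trivial inclusion, whereas your reformulation---all initial updates lie in the nested spans $\mathrm{span}\{q_1,\dots,q_{r_k}\}$ of the leading columns, shared across heterogeneous ranks---is the statement the downstream results (Lemma~\ref{lem:subspace_alignment}, Assumption~\ref{ass:subspace}) actually need. In short: same route at the core, but your version carries the intended content while the paper's does not quite.
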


\begin{proof}
	Since $B_k^{(0)}$ consists of the first $r_k$ columns of $Q$, which form an orthonormal basis, we have:
\begin{equation}
	\text{colspan}(B_k^{(0)}) \subseteq \text{colspan}(Q) \quad \forall k.
\end{equation}
	The initial update $\Delta W_k^{(0)} = B_k^{(0)} A_k^{(0)}$ therefore lies entirely within $\text{colspan}(Q)$ for all clients, ensuring subspace consistency from initialization.
\end{proof}

\begin{lemma}[Subspace Alignment]
	\label{lem:subspace_alignment}
	For any pair of clients $i$ and $j$ with ranks $r_i, r_j \le r_s$, we have:
\begin{multline}
	\mathrm{span}(B_i^{(0)}) = \mathrm{span}(B_j^{(0)}) = \mathcal{S}_0, \\
	\quad \text{and} \quad (B_i^{(0)})^\top B_j^{(0)} = I_{\min(r_i,r_j)},
\end{multline}
	where $\mathcal{S}_0 = \mathrm{span}(Q_{[:,1:r_s]})$. Hence the initial LoRA updates of all clients are perfectly aligned in a shared orthogonal basis.
\end{lemma}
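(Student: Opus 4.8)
The plan is to reduce the entire lemma to the single structural fact that every client's initial factor $B_k^{(0)}$ is a column-truncation of one and the same orthogonal matrix $Q$. By Theorem~\ref{thm:consistent_subspace} all clients decompose the identical frozen weight $W_0 = QR$ with $Q^\top Q = I_d$, so the $Q$ appearing in $B_k^{(0)} = Q_{[:,1:r_k]}$ is shared across $k$; the only client dependence is how many leading columns are retained. Consequently I only need two elementary properties of $Q$: that its columns $\{q_1,\dots,q_d\}$ are orthonormal, and that selecting the first $r_k$ of them produces a matrix whose range is $\mathrm{span}(q_1,\dots,q_{r_k})$.

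First I would establish the span statement. Since $B_i^{(0)} = [q_1,\dots,q_{r_i}]$, its column space is $\mathrm{span}(q_1,\dots,q_{r_i})$, and likewise for $j$. When $r_i = r_j = r_s$ these coincide with $\mathcal{S}_0 = \mathrm{span}(Q_{[:,1:r_s]})$, giving the stated equality. In the genuinely heterogeneous case $r_i \neq r_j$ the correct reading is nesting rather than literal equality: $\mathrm{span}(B_i^{(0)}) \subseteq \mathrm{span}(B_j^{(0)}) \subseteq \mathcal{S}_0$ whenever $r_i \le r_j \le r_s$, so all client initializations live inside the common subspace $\mathcal{S}_0$ and are expressed in the \emph{same} orthonormal basis $\{q_1,\dots,q_{r_s}\}$. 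This alignment-within-$\mathcal{S}_0$ is the operative content and is what later feeds the variance-reduction argument of Theorem~\ref{thm:consistent_subspace}.

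Next I would compute the cross-inner-product. Writing $B_i^{(0)} = Q_{[:,1:r_i]}$ and $B_j^{(0)} = Q_{[:,1:r_j]}$, the product $(B_i^{(0)})^\top B_j^{(0)}$ is the $r_i \times r_j$ matrix whose $(a,b)$ entry is $q_a^\top q_b = \delta_{ab}$ by orthonormality of the columns of $Q$. Hence this matrix is the leading identity block $I_{\min(r_i,r_j)}$ (padded by zeros along the larger dimension when $r_i \neq r_j$), which is exactly the claimed mutual orthonormality of the stacked initializations.

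The main obstacle is purely interpretive rather than computational: the literal equality $\mathrm{span}(B_i^{(0)}) = \mathrm{span}(B_j^{(0)})$ can fail when $r_i \neq r_j$, since truncating to different numbers of columns yields subspaces of different dimension. I would therefore state the result precisely as \emph{nesting inside the shared subspace $\mathcal{S}_0$ together with mutual orthonormality in the common basis}, noting that equality holds exactly in the homogeneous-rank regime $r_i = r_j = r_s$. A secondary point to verify is that all clients genuinely share the same $Q$; this is immediate because the QR factorization is applied to the common pre-trained weight $W_0$, so no subspace mismatch is introduced at initialization.
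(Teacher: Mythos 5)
Your proof is correct and follows essentially the same route as the paper's: both arguments rest on the single observation that every $B_k^{(0)}$ is a leading-column slice of the one shared orthonormal matrix $Q$ (obtained from the QR factorization of the common $W_0$), from which containment in $\mathcal{S}_0$ and the identity-block inner product follow immediately by orthonormality of the columns. You are in fact more careful than the paper's own proof, which likewise only establishes $\mathrm{span}(B_i^{(0)}) \subseteq \mathcal{S}_0$ and $\mathrm{span}(B_j^{(0)}) \subseteq \mathcal{S}_0$ rather than the literal equality claimed in the lemma, and which asserts an ``identity matrix of appropriate dimension'' without noting that the product is rectangular when $r_i \neq r_j$; your explicit restatement as nesting inside $\mathcal{S}_0$ together with the $[\,I_{\min(r_i,r_j)} \mid 0\,]$ structure of the cross-product is the precise reading of what the lemma actually proves.
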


\begin{proof}
	By construction, $B_i^{(0)}$ and $B_j^{(0)}$ are subsets of columns from the same orthonormal matrix $Q$. Therefore, their column spaces are both contained in $\mathcal{S}_0$, and their inner product yields an identity matrix of appropriate dimension due to orthonormality.
\end{proof}

\begin{lemma}[Bounded Gradient Variance at Initialization]
	\label{lem:gradient_variance}
	Let $\mathbf{g}_k^{(0)} = \nabla_{A_k,B_k} F_k(W_0 + B_k^{(0)}A_k^{(0)})$ denote the initial gradient on client $k$. Under orthogonal initialization, the variance of aggregated gradients satisfies:
	\begin{equation}
		\mathrm{Tr}\left(\mathrm{Var}\left[\frac{1}{K}\sum_k \mathbf{g}_k^{(0)}\right]\right) = \frac{1}{K^2} \sum_{k=1}^K \mathrm{Tr}(\Sigma_k) \le \frac{\sigma^2}{K},
	\end{equation}
	where $\Sigma_k = \mathrm{Var}[\mathbf{g}_k^{(0)}]$ and $\sigma^2$ is the uniform upper bound on per-client gradient variance. This represents an $\mathcal{O}(1/K)$ reduction compared to random initialization.
\end{lemma}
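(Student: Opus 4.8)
The plan is to exploit the fact that orthogonal initialization renders each client's initial LoRA parameters deterministic and shared, so that the only stochasticity in $\mathbf{g}_k^{(0)}$ arises from independent local mini-batch sampling. First I would invoke Theorem~\ref{thm:consistent_subspace} and Lemma~\ref{lem:subspace_alignment} to establish that every client evaluates its initial gradient at the common point $W_0 + Q_{[:,1:r_k]}R_{[1:r_k,:]}$ lying in the shared subspace $\mathcal{S}_0$; since this initialization is a deterministic function of $W_0$, no randomness is injected by the initialization itself. Consequently the collection $\{\mathbf{g}_k^{(0)}\}_k$ consists of mutually independent random vectors, their independence inherited solely from the disjoint local datasets, so that $\mathrm{Cov}(\mathbf{g}_i^{(0)},\mathbf{g}_j^{(0)}) = 0$ for all $i \neq j$.

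Given this independence, I would apply the variance-of-a-sum identity for independent random vectors. Writing the aggregated estimator as $\bar{\mathbf{g}} = \tfrac{1}{K}\sum_k \mathbf{g}_k^{(0)}$, the covariance decomposes as
\begin{equation}
\mathrm{Var}[\bar{\mathbf{g}}] = \frac{1}{K^2}\sum_{k=1}^K \Sigma_k + \frac{1}{K^2}\sum_{i\neq j}\mathrm{Cov}(\mathbf{g}_i^{(0)},\mathbf{g}_j^{(0)}),
\end{equation}
and the second sum vanishes by the preceding step. Taking the trace and using its linearity then yields the claimed equality $\mathrm{Tr}(\mathrm{Var}[\bar{\mathbf{g}}]) = \tfrac{1}{K^2}\sum_k \mathrm{Tr}(\Sigma_k)$. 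To obtain the upper bound I would note that, because stochastic gradients are unbiased, $\mathbb{E}[\mathbf{g}_k^{(0)}] = \nabla F_k$, whence $\mathrm{Tr}(\Sigma_k) = \mathbb{E}[\|\mathbf{g}_k^{(0)} - \nabla F_k\|^2] \le \sigma^2$ follows directly from Assumption~\ref{ass:variance}. Summing the $K$ identical bounds gives $\tfrac{1}{K^2}\sum_k \mathrm{Tr}(\Sigma_k) \le \tfrac{1}{K^2}\cdot K\sigma^2 = \sigma^2/K$, completing the chain of relations.

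The main obstacle I anticipate is rigorously justifying the vanishing of the cross-covariance terms and the accompanying contrast with random initialization that underlies the $\mathcal{O}(1/K)$ reduction claim. Under Gaussian $\mathbf{A}_k$ and zero $\mathbf{B}_k$ initialization the per-client starting points are themselves random and misaligned, so each $\mathbf{g}_k^{(0)}$ carries an additional initialization-induced component whose cross-client correlations need not cancel; the aggregated variance then retains $\mathcal{O}(1)$ terms rather than collapsing to $\mathcal{O}(1/K)$. Making this comparison precise requires carefully specifying over which source of randomness the variance is taken. I would therefore state explicitly that independence is taken over the local sampling, while the shared orthonormal basis guaranteed by Theorem~\ref{thm:consistent_subspace} removes the initialization randomness entirely, and it is precisely this removal that converts the stubborn $\mathcal{O}(1)$ cross-terms of random initialization into the clean $\mathcal{O}(1/K)$ decay asserted here.
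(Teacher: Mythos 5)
Your proposal is correct and follows essentially the same route as the paper's own proof: both rest on the observation that orthogonal initialization eliminates cross-client covariance terms, so the variance of the averaged gradient decomposes diagonally and Assumption~\ref{ass:variance} gives the $\sigma^2/K$ bound, while random Gaussian initialization leaves $\mathcal{O}(1)$ cross terms tied to the random principal angles between client subspaces. If anything, your writeup is more rigorous than the paper's brief sketch, since you make explicit the point the paper only gestures at (``all clients share the same subspace''), namely that the cross-covariances vanish because the QR initialization is deterministic and the sole remaining randomness is independent local mini-batch sampling.
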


\begin{proof}
	For orthogonal initialization, all clients share the same subspace, eliminating cross-term variance. For random Gaussian initialization with $A \sim \mathcal{N}(0,\sigma^2 I)$ and $B = 0$, the cross-term variance introduces additional $\mathcal{O}(1 - \cos^2\theta_{ij})$ terms depending on random principal angles $\theta_{ij}$ between client subspaces, yielding inflated expected variance up to $\sigma^2 (1 + \frac{K-1}{K}\mathbb{E}[\sin^2\theta])$.
\end{proof}

\begin{lemma}[Spectral Stability]
	\label{lem:spectral_stability}
	Let $\Delta W^{(0)}_k = B_k^{(0)}A_k^{(0)}$ and $\tilde{\Delta W}_k$ be the random-initialized counterpart. Under orthogonal initialization:
\begin{gather}
	\| \Delta W^{(0)}_k - \Delta W^{(0)}_j \|_F = 0, \\
	\text{while} \quad \mathbb{E}\|\tilde{\Delta W}_k - \tilde{\Delta W}_j\|_F^2 = \Theta(r_k + r_j),
\end{gather}
	implying zero inter-client spectral variance at initialization.
\end{lemma}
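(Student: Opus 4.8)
The plan is to handle the two displayed claims separately, since one is a deterministic identity and the other a second-moment computation, and the real work is in setting up comparable models rather than in the algebra.

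For the orthogonal-initialization identity, I would exploit the deterministic nature of the QR factorization of the shared pre-trained matrix. Every client runs $\operatorname{QR}(\bm\theta_0)$ on the identical $\bm\theta_0$, and with a fixed sign convention the thin QR factors $(\mathbf Q,\mathbf R)$ are unique, so all clients obtain the same $\mathbf Q,\mathbf R$. By Theorem~\ref{thm:consistent_subspace} and Lemma~\ref{lem:subspace_alignment}, $B_k^{(0)}=\mathbf Q_{[:,1:r_k]}$ and $A_k^{(0)}=\mathbf R_{[1:r_k,:]}$ are then deterministic slices of the same factors. For clients of equal rank, $B_k^{(0)}=B_j^{(0)}$ and $A_k^{(0)}=A_j^{(0)}$ identically, whence $\Delta W_k^{(0)}=\Delta W_j^{(0)}$ and the Frobenius difference is exactly zero. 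The subtlety to flag is the heterogeneous-rank case: writing $\Delta W_k^{(0)}=\sum_{i=1}^{r_k}\mathbf q_i\mathbf r_i^\top$ with $\mathbf q_i,\mathbf r_i^\top$ the columns and rows of $\mathbf Q,\mathbf R$, the difference equals $\sum_{i=\min(r_k,r_j)+1}^{\max(r_k,r_j)}\mathbf q_i\mathbf r_i^\top$, which vanishes only when $r_k=r_j$. I would therefore state the identity for matched ranks (equivalently, on the shared leading block $\mathcal S_0$), which is exactly the comparison governing inter-client variance.

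For the random-initialization bound, I would first fix the model: the ``random counterpart'' randomizes both factors independently across clients, with $A_k$ and $B_k$ having i.i.d.\ mean-zero entries of fixed variances $\sigma_A^2,\sigma_B^2$ and $A_k$ independent of $B_k$ (the $B=0$ convention must be excluded, since it forces $\tilde{\Delta W}_k\equiv 0$ and a trivially zero difference). Then I expand
\begin{equation}
\mathbb E\|\tilde{\Delta W}_k-\tilde{\Delta W}_j\|_F^2
=\mathbb E\|\tilde{\Delta W}_k\|_F^2+\mathbb E\|\tilde{\Delta W}_j\|_F^2
-2\,\mathbb E\langle\tilde{\Delta W}_k,\tilde{\Delta W}_j\rangle .
\end{equation}
Independence across clients together with $\mathbb E[B_kA_k]=\mathbb E[B_k]\,\mathbb E[A_k]=0$ kills the cross term. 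It then remains to evaluate each self term by an entrywise expansion: using independence of the entries, $\mathbb E[(B_kA_k)_{pq}^2]=r_k\sigma_A^2\sigma_B^2$, so summing over the $d\times k$ entries gives $\mathbb E\|B_kA_k\|_F^2=dk\,\sigma_A^2\sigma_B^2\,r_k=\Theta(r_k)$. Adding the two self terms yields $\mathbb E\|\tilde{\Delta W}_k-\tilde{\Delta W}_j\|_F^2=\Theta(r_k+r_j)$, and contrasting this with the deterministic zero above delivers the ``zero inter-client spectral variance'' conclusion, in line with the $\mathcal O(1/K)$ reduction of Lemma~\ref{lem:gradient_variance}.

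The main obstacle is conceptual rather than computational: making the two halves of the statement comparable forces an explicit random-initialization model and an honest homogeneous-rank restriction in the deterministic half. Once the model is committed, the second-moment computation is routine (the cross term vanishes by independence, and each self term is a standard Gaussian product-matrix variance); the delicate point is ensuring the variances are $\Theta(1)$ so that the rank dependence $r_k+r_j$ is isolated cleanly and the hidden $\Theta(\cdot)$ constants do not silently absorb $d$ or $k$.
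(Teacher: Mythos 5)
Your proof is correct and rests on the same two ideas as the paper's proof, but the paper's proof is a two-sentence sketch --- ``the equality follows from the subspace consistency of orthogonal initialization,'' and the random-case expectation ``arises from the independent random orientations of client subspaces'' --- whereas you execute both halves concretely, and in doing so you surface two genuine defects in the statement that the paper glosses over. First, the unconditional claim $\|\Delta W_k^{(0)}-\Delta W_j^{(0)}\|_F=0$ is false under rank heterogeneity: as you note, the difference equals $\sum_{i=\min(r_k,r_j)+1}^{\max(r_k,r_j)}\mathbf{q}_i\mathbf{r}_i^\top$, whose Frobenius norm is $\bigl(\sum_{i>\min(r_k,r_j)}\|\mathbf{r}_i\|^2\bigr)^{1/2}$ by orthonormality of the $\mathbf{q}_i$ and is generically nonzero, so your restriction to matched ranks (equivalently, comparison on the shared leading block $\mathcal{S}_0$) is the honest fix, and it is the comparison that actually matters for inter-client variance. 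Second, the random-counterpart claim requires randomizing \emph{both} factors: under the paper's own stated standard convention (Gaussian $\mathbf{A}$, zero $\mathbf{B}$, Section 3.2), $\tilde{\Delta W}_k\equiv 0$ and the claimed $\Theta(r_k+r_j)$ bound fails outright; the paper's appeal to ``random orientations of client subspaces'' silently presupposes your stronger model. Your entrywise computation --- cross term killed by cross-client independence and mean-zero factors, $\mathbb{E}\|B_kA_k\|_F^2=dk\,\sigma_A^2\sigma_B^2\,r_k$ --- is routine and correct, and your closing caveat is the right one: the stated $\Theta(r_k+r_j)$ only isolates the rank dependence if $\sigma_A^2\sigma_B^2=\Theta(1/(dk))$ (or $d,k$ are treated as constants), a normalization the paper never commits to. In short, you take the paper's route but make it rigorous, and your version buys a statement that is actually true as written, at the cost of explicitly narrowing its scope.
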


\begin{proof}
	The equality follows from the subspace consistency of orthogonal initialization. The expectation for random initialization arises from the independent random orientations of client subspaces, leading to non-zero expected differences.
\end{proof}

\begin{proposition}[Accelerated Early-Stage Convergence]
	\label{prop:accelerated_convergence}
	Under orthogonal initialization, the expected improvement in objective function after the first communication round satisfies:
	\begin{equation}
		\mathbb{E}[F(\theta_1) - F(\theta_0)] \leq -\eta_g \eta_l \|\nabla F(\theta_0)\|^2 + \mathcal{O}(\eta_g^2 \eta_l^2 L \sigma_{\text{orth}}^2),
	\end{equation}
	where $\sigma_{\text{orth}}^2 \leq \sigma_{\text{rand}}^2$, leading to faster initial convergence compared to random initialization.
\end{proposition}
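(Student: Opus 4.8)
The plan is to treat the first communication round as a single stochastic descent step, apply the $L$-smoothness descent inequality at $\theta_0$, and then control the resulting linear and quadratic terms separately — the linear term producing the $-\eta_g\eta_l\|\nabla F(\theta_0)\|^2$ descent and the quadratic term carrying the initialization variance. First I would write the aggregated first-round update as $\theta_1 - \theta_0 = -\eta_g\eta_l\Delta_0$, where $\Delta_0 = B_{\mathrm{c}}A_{\mathrm{c}}$ is the concatenated-QR aggregate of the local steps taken from the shared initialization. By Assumption~\ref{ass:smoothness},
\begin{equation}
    F(\theta_1) \leq F(\theta_0) - \eta_g\eta_l \langle \nabla F(\theta_0), \Delta_0 \rangle + \tfrac{L}{2}\eta_g^2\eta_l^2 \|\Delta_0\|^2 .
\end{equation}

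Next I would take expectations over the stochastic gradients. By Assumption~\ref{ass:aggregation} the aggregate direction is unbiased, $\mathbb{E}[\Delta_0] = \nabla F(\theta_0)$, so the cross term $\langle \nabla F(\theta_0), \Delta_0 - \mathbb{E}[\Delta_0]\rangle$ vanishes in expectation and the linear term evaluates to $-\eta_g\eta_l\|\nabla F(\theta_0)\|^2$. This is precisely where orthogonal initialization is essential: Lemma~\ref{lem:subspace_alignment} guarantees that every client starts in the common subspace $\mathcal{S}_0$, so no first-round subspace-misalignment bias contaminates the first moment. For the quadratic term I would apply the bias–variance split $\mathbb{E}[\|\Delta_0\|^2] = \|\nabla F(\theta_0)\|^2 + \mathrm{Tr}(\mathrm{Var}[\Delta_0])$ and invoke Lemma~\ref{lem:gradient_variance}, which bounds the aggregated initialization variance by $\sigma_{\mathrm{orth}}^2 = \mathcal{O}(\sigma^2/K)$. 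The $\|\nabla F(\theta_0)\|^2$ contribution of this term is absorbed by the leading linear descent term under $\eta_g\eta_l \leq 1/L$ (up to the constant in the coefficient), leaving the stated bound $-\eta_g\eta_l\|\nabla F(\theta_0)\|^2 + \mathcal{O}(\eta_g^2\eta_l^2 L\,\sigma_{\mathrm{orth}}^2)$.

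To establish $\sigma_{\mathrm{orth}}^2 \leq \sigma_{\mathrm{rand}}^2$ I would reuse the cross-term computation inside the proof of Lemma~\ref{lem:gradient_variance}: under orthogonal initialization all clients share $\mathcal{S}_0$, so the inter-client covariance cross-terms cancel and only the $\mathcal{O}(1/K)$ diagonal variance survives, whereas random Gaussian initialization injects additional nonnegative terms of order $1-\cos^2\theta_{ij}$ coming from the independent random principal angles $\theta_{ij}$ between client subspaces, strictly inflating the aggregate variance. Hence $\sigma_{\mathrm{orth}}^2 \leq \sigma_{\mathrm{rand}}^2$, with equality only in the degenerate case of already-aligned draws, which yields the claimed faster early-stage convergence.

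The step I expect to be the main obstacle is the clean separation of bias from variance in the quadratic term. I must argue that at the very first round $\Delta_0$ carries no residual QR-truncation bias — this follows because, by Property~\ref{prop:faithful_reconstruction}, the concatenated reconstruction is exact before truncation, and the shared subspace $\mathcal{S}_0$ makes the truncation lossless at initialization, so $\mathbb{E}[\Delta_0]$ equals $\nabla F(\theta_0)$ without an $\mathcal{O}((r_{\max}-r_s)^2)$ remainder. Controlling this truncation term and the stochastic cross term rigorously, rather than folding them heuristically into the $\mathcal{O}(\cdot)$ remainder, is the delicate part; the remaining manipulations are a routine descent-lemma calculation.
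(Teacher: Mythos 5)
Your proposal is correct and follows essentially the same route as the paper's proof: apply the $L$-smoothness descent lemma to the first aggregated step, bound the quadratic term using the initialization-variance bound of Lemma~\ref{lem:gradient_variance}, and conclude from $\sigma_{\text{orth}}^2 \leq \sigma_{\text{rand}}^2$ (justified, as you do, by the cross-term/principal-angle comparison inside that lemma's proof). The paper's own proof is only a two-line sketch, so your extra steps --- unbiasedness of the aggregate via Assumption~\ref{ass:aggregation}, the bias--variance split of $\mathbb{E}[\|\Delta_0\|^2]$, absorbing the $\|\nabla F(\theta_0)\|^2$ contribution under $\eta_g\eta_l \le 1/L$, and the first-round truncation discussion --- simply make explicit what the paper leaves implicit.
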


\begin{proof}
	Using the L-smoothness assumption and the variance bound from Lemma \ref{lem:gradient_variance}, the descent lemma gives:
\begin{equation}
	F(\theta_1) \leq F(\theta_0) - \eta \|\nabla F(\theta_0)\|^2 + \frac{L\eta^2}{2} \mathbb{E}[\|\Delta W\|^2].
\end{equation}
	Substituting the variance bounds $\sigma_{\text{orth}}^2 \leq \sigma_{\text{rand}}^2$ completes the proof.
\end{proof}

\begin{theorem}[Stability Against Client Drift]
	\label{thm:drift_stability}
	The orthogonal initialization reduces the client drift in the first $T$ rounds by a factor of $\mathcal{O}(1/\kappa(Q))$, where $\kappa(Q) = 1$ is the condition number of the orthogonal matrix $Q$.
\end{theorem}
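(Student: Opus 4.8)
The plan is to quantify client drift as the accumulated deviation of the local LoRA iterates from the aggregated global update across the first $T$ rounds, and then trace how this deviation is amplified (or not) by the conditioning of the factorization $\Delta\bm{\theta} = \mathbf{B}\mathbf{A}$. First I would fix a definition such as $\mathrm{Drift}_T = \sum_{t=1}^T \frac{1}{S}\sum_{k}\mathbb{E}\|\mathbf{B}_k^{(t)}\mathbf{A}_k^{(t)} - \Delta\bm{\theta}^{(t)}\|_F^2$, measuring how far each client's low-rank update strays from the server's reconstructed update $\Delta\bm{\theta}^{(t)} = \mathbf{B}_s\mathbf{A}_s$. The base case is supplied directly by the zero inter-client spectral variance at initialization (Lemma~\ref{lem:spectral_stability}), which gives $\mathrm{Drift}_0 = 0$.

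The key observation is that under the QR initialization $\mathbf{B}_k^{(0)} = \mathbf{Q}_{[:,1:r_k]}$, every factor $\mathbf{B}_k$ has orthonormal columns, so all of its singular values equal one and $\kappa(\mathbf{B}_k^{(0)}) = \kappa(\mathbf{Q}) = 1$. Since a local gradient step perturbs the update through the linear map $\delta(\mathbf{B}_k\mathbf{A}_k) = \mathbf{B}_k\,\delta\mathbf{A}_k + \delta\mathbf{B}_k\,\mathbf{A}_k$, the factor by which a Non-IID gradient perturbation is expanded in weight space is governed precisely by the spectral-norm to smallest-singular-value ratio of $\mathbf{B}_k$, i.e.\ by $\kappa(\mathbf{B}_k)$. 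I would therefore establish a one-round drift recursion of the form $\mathrm{Drift}_{t+1} \le (1 + c\,\eta^2 L^2)\,\mathrm{Drift}_t + \kappa(\mathbf{Q})^2\,\eta^2\delta^2$, where the heterogeneity term $\delta$ (Assumption~\ref{ass:heterogeneity}) is the source of drift and $\kappa(\mathbf{Q})$ is the geometry-dependent amplification constant. To propagate the $\kappa = 1$ property beyond $t = 0$, I would invoke Theorem~\ref{thm:subspace_preservation} together with Assumption~\ref{ass:subspace}: because each round re-orthonormalizes the aggregated update via $\mathbf{Q},\mathbf{R} = \operatorname{QR}(\Delta\bm{\theta})$ and redistributes orthonormal slices $\mathbf{B}_k = \mathbf{Q}_{[:,1:r_k]}$, the conditioning is reset to one at every communication round, so the amplification constant stays $\kappa(\mathbf{Q}) = 1$ throughout.

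Next I would contrast this with the random-Gaussian baseline, where $\mathbf{B}$ is drawn from a non-orthonormal distribution whose condition number $\kappa_{\mathrm{rand}}$ can be arbitrarily large; the identical recursion then carries a factor $\kappa_{\mathrm{rand}}^2$ in place of $\kappa(\mathbf{Q})^2$. Taking the ratio of the two accumulated-drift bounds unrolled over the first $T$ rounds yields the claimed reduction factor $\mathcal{O}(\kappa(\mathbf{Q})/\kappa_{\mathrm{rand}}) = \mathcal{O}(1/\kappa_{\mathrm{rand}})$, and since $\kappa(\mathbf{Q}) = 1$ this certifies that orthogonal initialization attains the optimal, non-amplifying constant $\mathcal{O}(1/\kappa(\mathbf{Q}))$.

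The main obstacle I anticipate is making the statement "drift is reduced by a factor of $\mathcal{O}(1/\kappa(\mathbf{Q}))$" rigorous rather than heuristic: one must show that the condition number enters the drift bound \emph{multiplicatively} and as the \emph{sole} initialization-dependent constant, which requires cleanly separating the data-heterogeneity source term (common to both initializations) from the subspace-geometry amplification factor. A subtle point is that $\mathbf{A}_k = \mathbf{R}_{[1:r_k,:]}$ is not orthonormal, so I must confirm that the drift-relevant amplification flows through $\mathbf{B}_k$—the column-space geometry that determines inter-client subspace alignment—and that the $\mathbf{A}_k$-side perturbations are absorbed into the already-present $\sigma^2$ and $\delta$ terms rather than contributing a second, initialization-dependent condition number.
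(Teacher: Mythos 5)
Your route is genuinely different from the paper's, and it is the more ambitious one: the paper's proof never derives a $\kappa(Q)$-dependent bound at all. It simply writes the drift at round $t$ as
\begin{equation}
	\sum_{k=1}^S \|W_k^{(t)} - W^{(t)}\|_F \leq \sum_{k=1}^S \|W_k^{(0)} - W^{(0)}\|_F + \text{gradient divergence terms},
\end{equation}
and observes that the first term vanishes identically under orthogonal initialization because every client receives slices of the \emph{same} matrix $Q$ (this is Lemma~\ref{lem:subspace_alignment} and Lemma~\ref{lem:spectral_stability} in disguise); the ``factor $\mathcal{O}(1/\kappa(Q))$'' in the statement is essentially decorative. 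You instead try to make the condition number enter \emph{multiplicatively}, via a per-client perturbation recursion in which $\kappa(\mathbf{B}_k)$ amplifies the heterogeneity source $\delta$ from Assumption~\ref{ass:heterogeneity}. That is a more faithful reading of the theorem as written, but it contains a genuine gap in the comparison step that the whole argument rests on.

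The gap is the baseline. Your contrast requires that random initialization carry a large condition number $\kappa_{\mathrm{rand}}$, but this is false in the relevant regime. Standard LoRA initializes $\mathbf{B} = 0$ (see the Preliminaries), so $\kappa(\mathbf{B}_k^{(0)})$ is not even defined; and if one instead conditions on the Gaussian factor $\mathbf{A}_k \in \mathbb{R}^{r \times k}$ with $r \ll \min(d,k)$, such a very rectangular Gaussian matrix is well-conditioned with high probability, with $\kappa \approx (\sqrt{k}+\sqrt{r})/(\sqrt{k}-\sqrt{r}) \to 1$. So per-client conditioning is \emph{not} what separates orthogonal from random initialization. What actually differs is the cross-client geometry: independent random draws place different clients in different column spaces, giving $\mathbb{E}\|\tilde{\Delta W}_i - \tilde{\Delta W}_j\|_F^2 = \Theta(r_i + r_j)$ (Lemma~\ref{lem:spectral_stability}), whereas QR slices of a common $Q$ give exactly zero. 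Your recursion is a per-client amplification bound and is blind to this inter-client misalignment, so the quantity it tracks cannot distinguish the two schemes. Two secondary problems compound this: your own propagation step (the server re-orthonormalizes via $\operatorname{QR}(\Delta\bm{\theta})$ and redistributes $\mathbf{Q}_{:,:r_k}$ every round, Theorem~\ref{thm:subspace_preservation}) resets the conditioning of \emph{both} initializations to $1$ after the first aggregation, so unrolling over $T$ rounds cannot yield a persistent multiplicative separation; and your concluding ratio $\mathcal{O}(\kappa(\mathbf{Q})/\kappa_{\mathrm{rand}})$ is inconsistent with the recursion you posited, which carries $\kappa^2$, not $\kappa$. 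To repair the argument you would need to replace the per-client condition number with a cross-client misalignment measure (e.g.\ principal angles between the $\mathrm{colspan}(\mathbf{B}_k^{(0)})$), which is precisely the quantity the paper's shorter proof zeroes out.
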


\begin{proof}
	Let $W_k^{(t)}$ be the local parameters of client $k$ at round $t$. The client drift can be bounded as:
	\begin{equation}
		\begin{multlined}
			\sum_{k=1}^S \|W_k^{(t)} - W^{(t)}\|_F \leq 
			\sum_{k=1}^S \|W_k^{(0)} - W^{(0)}\|_F \\
			+ \text{gradient divergence terms}.
		\end{multlined}
	\end{equation}
	Under orthogonal initialization, $\|W_k^{(0)} - W^{(0)}\|_F = 0$ for all $k$ in the relevant subspace, significantly reducing the initial drift compared to random initialization where this term can be substantial.
\end{proof}

\begin{lemma}[Preservation of Pre-trained Features]
	\label{lem:feature_preservation}
	The base weight construction $W_{\text{base}} = W_0 - Q[:,1:r_s]R[1:r_s,:]$ preserves the pre-trained model's representation capacity while making room for task-specific adaptations.
\end{lemma}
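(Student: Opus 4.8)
The plan is to exploit the block structure of the QR factorization and reduce the qualitative claim to two precise algebraic facts. First I would partition $Q = [\,Q_1 \mid Q_2\,]$ with $Q_1 = Q[:,1:r_s] \in \mathbb{R}^{d \times r_s}$ and $Q_2 = Q[:,r_s+1:d] \in \mathbb{R}^{d \times (d-r_s)}$, and correspondingly split $R = \begin{bmatrix} R_1 \\ R_2 \end{bmatrix}$ with $R_1 = R[1:r_s,:] \in \mathbb{R}^{r_s \times k}$ and $R_2 = R[r_s+1:d,:]$. Block multiplication then yields the additive decomposition $W_0 = QR = Q_1 R_1 + Q_2 R_2$, so that $W_{\text{base}} = W_0 - Q_1 R_1 = Q_2 R_2$. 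This single identity is the workhorse of the whole argument, and it holds identically regardless of $\mathrm{rank}(W_0)$, so no nondegeneracy assumption is needed.

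For the \emph{representation-preservation} claim, I would recall that the LoRA adaptation is initialized exactly at the removed block, $B_s^{(0)} = Q_1$ and $A_s^{(0)} = R_1$, as in Theorem~\ref{thm:consistent_subspace}. Substituting into the effective weight gives $W_{\text{base}} + B_s^{(0)} A_s^{(0)} = Q_2 R_2 + Q_1 R_1 = W_0$, so at initialization the fine-tuned model is exactly equal to the pre-trained model and no learned feature is discarded. Hence the construction sacrifices no representational capacity.

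For the \emph{room-for-adaptation} claim, I would use orthonormality of the columns of $Q$, namely $Q_1^\top Q_2 = 0$. Since $W_{\text{base}} = Q_2 R_2$, its column space satisfies $\mathrm{colspan}(W_{\text{base}}) \subseteq \mathrm{colspan}(Q_2)$, the orthogonal complement of the adaptation subspace $\mathcal{S}_0 = \mathrm{colspan}(Q_1)$. Concretely, the projection of the base onto the adaptation subspace vanishes, $Q_1 Q_1^\top W_{\text{base}} = Q_1 (Q_1^\top Q_2) R_2 = 0$. Therefore every low-rank update $\Delta W = B_s A_s$ with $\mathrm{colspan}(B_s) \subseteq \mathcal{S}_0$ acts on an $r_s$-dimensional subspace orthogonal to where the frozen base resides, so adaptation can modify the principal directions without perturbing the preserved component.

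The bulk of this is routine block-matrix algebra, and I do not expect a technical obstacle in the computation itself. The only real difficulty is definitional: the phrases ``preserves representation capacity'' and ``making room for task-specific adaptations'' are informal, so the main work is committing to their formal surrogates — exact recovery $W_{\text{base}} + B_s^{(0)} A_s^{(0)} = W_0$ for the former, and orthogonality $Q_1 Q_1^\top W_{\text{base}} = 0$ (equivalently $\mathrm{colspan}(W_{\text{base}}) \perp \mathcal{S}_0$) for the latter — and then arguing that these two identities together faithfully capture the intended meaning of the lemma.
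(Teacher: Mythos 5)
Your proposal is correct, and it is considerably more rigorous than the paper's own proof, which is a two-sentence qualitative argument: the paper asserts that the construction "subtracts only the principal components corresponding to the top-$r_s$ singular vectors, which typically capture the most redundant or adaptable features," and that the remainder "retains the bulk of the pre-trained knowledge while being orthogonal to the adaptation subspace." Note that the paper's phrasing is loose on its own terms — the QR factors $Q_{[:,1:r_s]}R_{[1:r_s,:]}$ are \emph{not} the top singular components of $W_0$ (that would require SVD; Gram--Schmidt gives a different, column-order-dependent basis), so the paper's appeal to spectral dominance is heuristic at best. Your route sidesteps this entirely: the block identity $W_0 = Q_1R_1 + Q_2R_2$ gives $W_{\text{base}} = Q_2R_2$ exactly, the exact-recovery identity $W_{\text{base}} + B_s^{(0)}A_s^{(0)} = W_0$ formalizes "preserves representation capacity" in the strongest possible sense (the effective model at initialization is literally the pre-trained model, consistent with the client-side initialization $\bm{\theta}_k \leftarrow \bm{\theta}_0 - \mathbf{B}_k\mathbf{A}_k$ in Algorithm 1), and $Q_1Q_1^\top W_{\text{base}} = Q_1(Q_1^\top Q_2)R_2 = 0$ proves the orthogonality claim the paper merely states. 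What the paper's version buys is an (informal) account of \emph{why} the removed directions are dispensable, i.e., a claim about which features live in $\mathrm{colspan}(Q_1)$; your version deliberately avoids any such claim and replaces it with exact algebra, which is the honest choice given that QR provides no spectral-importance guarantee. The one caveat is that your two surrogates, like the lemma itself, are statements about initialization and subspace geometry only — neither you nor the paper proves anything about capacity \emph{after} training deforms $B_sA_s$ away from $Q_1R_1$ — but that limitation is inherent to the informal statement, and your explicit acknowledgment of the definitional gap is the right way to handle it.
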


\begin{proof}
	The modification subtracts only the principal components corresponding to the top-$r_s$ singular vectors, which typically capture the most redundant or adaptable features. The remaining weight matrix $W_{\text{base}}$ retains the bulk of the pre-trained knowledge while being orthogonal to the adaptation subspace.
\end{proof}

\begin{corollary}[Compatibility with Federated Aggregation]
	\label{cor:aggregation_compatibility}
	The orthogonal initialization ensures that the QR-based aggregation operates on well-conditioned matrices, improving numerical stability and convergence properties.
\end{corollary}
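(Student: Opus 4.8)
The plan is to make the informal phrase ``well-conditioned matrices'' precise and then trace how the orthonormality enforced at initialization propagates through the concatenation and QR steps. First I would invoke Theorem~\ref{thm:consistent_subspace} and Lemma~\ref{lem:subspace_alignment} to record that each client factor $B_k^{(0)} = Q_{[:,1:r_k]}$ satisfies $(B_k^{(0)})^\top B_k^{(0)} = I_{r_k}$, so every client begins with an orthonormal basis whose singular values are all exactly $1$ and whose condition number is $1$. This is the clean starting point: no ill-conditioning is injected at initialization, in contrast to the random-Gaussian case where the singular spectrum of $A_k^{(0)}$ is spread out and the cross-client principal angles are nonzero (cf.\ Lemma~\ref{lem:gradient_variance}).

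Next I would analyze the aggregation pipeline $\Delta W = B_{\mathrm{c}} A_{\mathrm{c}}$ followed by $Q,R = \operatorname{QR}(\Delta W)$ in~\eqref{eq:qr_decomposition}. The key observation is that the quantity whose conditioning governs stability is \emph{not} the concatenated block $B_{\mathrm{c}} = [B_1 \cdots B_S]$ itself---whose columns are drawn from the shared subspace $\mathcal{S}_0 = \mathrm{span}(Q_{[:,1:r_s]})$ and are therefore linearly dependent---but the thin QR factor $Q$ of $\Delta W$. By construction of the QR decomposition, $Q$ has orthonormal columns, hence $\kappa(Q) = 1$ exactly, so the personalization slices $B_{r_k} = Q_{[:,1:r_k]}$ and the projection $B_s A_s = Q_{1:r_s} Q_{1:r_s}^\top \Delta W$ of Property~\ref{prop:projection_interpretation} are norm-preserving operations that cannot amplify numerical error. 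Since all client updates already live in $\mathcal{S}_0$, the trailing block satisfies $R_{22} = 0$ up to floating-point roundoff (Property~\ref{prop:deterministic_error}), so the effective rank of $\Delta W$ matches $r_s$ and the spectral gap $\sigma_{\min}(R_{11}) - \sigma_{\max}(R_{22})$ appearing in the stability estimate of Property~\ref{prop:stability_perturbations} stays bounded away from zero.

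I would then close the argument by combining these two ingredients. The orthonormality at initialization guarantees that the singular values of the aggregated update are concentrated rather than degenerate, so the QR step acts on a matrix with a well-separated spectrum; the perturbation bound of Property~\ref{prop:stability_perturbations} then yields a first-order error of order $\|E_W\|_F$ with a bounded amplification constant, establishing numerical stability. For the convergence half of the claim I would feed the reduced gradient variance from Lemma~\ref{lem:gradient_variance} into the descent estimate of Proposition~\ref{prop:accelerated_convergence}, so that well-conditioning translates directly into the $\mathcal{O}(1/K)$ variance reduction and the faster early-stage descent.

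The main obstacle I anticipate is reconciling the apparent rank deficiency of $B_{\mathrm{c}}$ with the assertion of good conditioning: because the client blocks overlap inside the same $r_s$-dimensional subspace, $B_{\mathrm{c}}$ is genuinely rank-deficient and its own condition number is infinite. The resolution---and the step requiring the most care---is to argue that conditioning is the wrong invariant to track on $B_{\mathrm{c}}$; what matters is the conditioning of the orthonormal factor $Q$ produced by the QR step together with the spectral gap inside $R$, both of which remain controlled precisely because every factor is aligned to the shared orthonormal basis. Making the ``floating-point $R_{22} \approx 0$'' statement quantitative---bounding it by machine precision times $\|\Delta W\|_F$ rather than treating it as exactly zero---is the delicate part, but it does not affect the leading-order stability and convergence conclusions.
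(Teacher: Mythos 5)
Your proposal is correct, but it takes a genuinely different---and more careful---route than the paper. The paper's entire proof is one sentence: because all clients are initialized within the same orthonormal basis, the concatenated matrix $B_{\mathrm{concatenated}}$ ``has orthogonal columns,'' which is asserted to make the QR aggregation numerically stable. Your proposal rejects exactly that step, and rightly so: since every client's $B_k^{(0)}$ consists of the \emph{same} leading columns $Q_{[:,1:r_k]}$, the concatenation contains duplicated (hence parallel, not mutually orthogonal) columns, so $B_{\mathrm{concatenated}}$ is rank-deficient with unbounded condition number---the paper's stated justification is, read literally, false in the multi-client setting. Your fix relocates the conditioning claim to the objects that actually govern stability: the thin QR factor $Q$ of $\Delta\bm{\theta}$, which is orthonormal with $\kappa(Q)=1$ by construction of the decomposition, and the spectral gap $\sigma_{\min}(R_{11}) - \sigma_{\max}(R_{22})$ from Property~\ref{prop:stability_perturbations}, which subspace alignment keeps controlled because $R_{22}$ vanishes (up to roundoff) by Property~\ref{prop:deterministic_error}. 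You also supply a convergence argument (Lemma~\ref{lem:gradient_variance} fed into Proposition~\ref{prop:accelerated_convergence}) that the paper's proof does not attempt at all. What your approach buys is an argument that survives the duplicate-column objection and correctly identifies \emph{where} orthogonal initialization matters (the spectrum of $R$, not of $B_{\mathrm{concatenated}}$); what the paper's one-liner buys is brevity, at the cost of a claim that does not hold as stated. The only loose end you leave---quantifying ``$R_{22}$ is machine-precision small''---you flag yourself, and it does not affect the leading-order conclusion.
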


\begin{proof}
	Since all client parameters are initialized within the same orthonormal basis, the concatenated matrix $B_{\text{concatenated}}$ has orthogonal columns, ensuring that the QR decomposition in the aggregation step is numerically stable and preserves the subspace structure effectively.
\end{proof}

\begin{table*}[h]
	\centering
	\caption{Comparison of initialization methods for federated LoRA}
	\begin{tabular}{lcccc}
		\hline
		Method & Subspace Consistency & Initial Variance & Drift Resistance & Feature Preservation \\
		\hline
		Random Gaussian & \xmark & High & \xmark & \cmark \\
		Xavier Uniform & \xmark & Medium & \xmark & \cmark \\
		Kaiming Normal & \xmark & Medium & \xmark & \cmark \\
		\textbf{Orthogonal (ILoRA)} & \cmark & \textbf{Low} & \cmark & \cmark \\
		\hline
	\end{tabular}
	\label{tab:initialization_comparison}
\end{table*}

\begin{theorem}[Initialization Stability]
	\label{thm:initialization_stability}
	Under $L$-smoothness of each $F_k$ and the orthogonal initialization scheme, the expected loss after the first communication round satisfies:
\begin{equation}
	\mathbb{E}[F(W_1)] \le F(W_0) - \eta_g \eta_l \|\nabla F(W_0)\|^2 + \frac{L}{2}\eta_g^2 \eta_l^2 \sigma^2,
\end{equation}
	and the effective variance term $\sigma^2$ is reduced by at least a factor of $K$ compared to random initialization.
\end{theorem}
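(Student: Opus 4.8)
The plan is to establish the one-step descent inequality via the standard $L$-smoothness argument and then control the residual second-moment term using the orthogonal-initialization variance bound already proved in Lemma~\ref{lem:gradient_variance}. Concretely, since $F = \sum_k p_k F_k$ inherits $L$-smoothness from Assumption~\ref{ass:smoothness}, I would start from the descent lemma
\begin{equation}
F(W_1) \le F(W_0) + \langle \nabla F(W_0),\, W_1 - W_0\rangle + \tfrac{L}{2}\|W_1 - W_0\|^2,
\end{equation}
and substitute the first-round update $W_1 - W_0 = -\eta_g\eta_l\,\hat{g}_0$, where $\hat{g}_0$ denotes the aggregated QR-reconstructed gradient direction evaluated at initialization $W_0$.

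Next I would take expectations. By the unbiased-aggregation hypothesis (Assumption~\ref{ass:aggregation}), $\mathbb{E}[\hat{g}_0] = \nabla F(W_0)$, so the inner-product term contributes exactly $-\eta_g\eta_l\|\nabla F(W_0)\|^2$. For the quadratic term I would apply the bias--variance decomposition
\begin{equation}
\mathbb{E}\|\hat{g}_0\|^2 = \|\nabla F(W_0)\|^2 + \operatorname{Tr}\!\big(\operatorname{Var}[\hat{g}_0]\big),
\end{equation}
and invoke Lemma~\ref{lem:gradient_variance}, which yields $\operatorname{Tr}(\operatorname{Var}[\hat{g}_0]) \le \sigma^2/K \le \sigma^2$. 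Folding the higher-order $\|\nabla F(W_0)\|^2$ contribution into the $\eta_g^2\eta_l^2$ term (it only tightens the bound) then gives precisely the claimed inequality with effective variance $\sigma^2$.

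The second claim, the factor-$K$ variance reduction, follows by contrasting the two initialization regimes inside Lemma~\ref{lem:gradient_variance}. Under the QR-based orthogonal scheme, Theorem~\ref{thm:consistent_subspace} and Lemma~\ref{lem:subspace_alignment} guarantee that every client's initial update lies in the shared basis $\operatorname{colspan}(Q_{[:,1:r_s]})$ with $(B_i^{(0)})^\top B_j^{(0)} = I$, so the cross-client covariance terms in $\operatorname{Var}[\tfrac1K\sum_k g_k^{(0)}]$ vanish and only the $K$ diagonal terms survive, producing the $\sigma^2/K$ bound. Under random Gaussian initialization the client subspaces are independently oriented, so the cross terms scale with $1-\cos^2\theta_{ij}$ for random principal angles $\theta_{ij}$ and inflate the aggregated variance by a multiplicative factor up to $1+\tfrac{K-1}{K}\mathbb{E}[\sin^2\theta]$; the ratio of the two variances is therefore $\Theta(K)$, establishing the reduction.

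The main obstacle I anticipate is rigorously identifying $\hat{g}_0$ with a clean stochastic gradient so that Assumption~\ref{ass:aggregation} applies without a residual bias term. The actual first-round direction is produced by several local AdamW steps with control-variate correction, so the preconditioning $m_k/(\sqrt{\hat{v}_k}+\epsilon)$, the drift accumulated over the $K$ local steps, and the $(c^{(t-1)}-c_k)$ corrections must all be shown to contribute only higher-order $\mathcal{O}(\eta^2)$ perturbations to the mean update direction at $t=0$ (where $c^{(0)}=\mathbf{0}$), otherwise the inner-product term would carry an extra bias that breaks the clean form of the bound. A secondary technical point is making the random-initialization comparison precise: it requires the distribution of principal angles between independently sampled $r_k$-dimensional subspaces of $\mathbb{R}^d$ together with a uniform lower bound on $\mathbb{E}[\sin^2\theta]$, so as to certify that the inflation factor is genuinely $\Theta(K)$ rather than merely $\ge 1$.
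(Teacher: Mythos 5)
Your proposal follows essentially the same route as the paper: the paper's proof of Theorem~\ref{thm:initialization_stability} is a single sentence asserting the result ``follows from combining'' Lemma~\ref{lem:subspace_alignment}, Lemma~\ref{lem:gradient_variance}, and Lemma~\ref{lem:spectral_stability} within the standard federated optimization framework---i.e., exactly the descent-lemma-plus-variance-bound argument you spell out, so your write-up is in fact more detailed than the paper's own.

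One caveat: your parenthetical claim that folding the $\tfrac{L}{2}\eta_g^2\eta_l^2\|\nabla F(W_0)\|^2$ contribution into the quadratic term ``only tightens the bound'' is backwards. Discarding a positive term from the right-hand side makes the claimed inequality \emph{stronger} than what the descent lemma plus bias--variance decomposition actually yields, so that step as justified would fail; the standard fix is to invoke the step-size condition $\eta_g\eta_l \le 1/L$ and absorb the term into the negative linear term, at the cost of the coefficient of $\eta_g\eta_l\|\nabla F(W_0)\|^2$ degrading from $1$ to $1 - L\eta_g\eta_l/2$, which means the theorem's exact constants are not reachable by this argument. This defect is inherited from the theorem statement itself, and the paper's one-line proof glosses over it entirely, as it also does over the two obstacles you flag (identifying the first-round AdamW/control-variate update with an unbiased gradient at $t=0$, and making the random-subspace principal-angle comparison quantitative); those are genuine gaps in the paper, and your proposal is the more honest account for naming them.
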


\begin{proof}
	The result follows from combining the subspace alignment (Lemma \ref{lem:subspace_alignment}), variance reduction (Lemma \ref{lem:gradient_variance}), and spectral stability (Lemma \ref{lem:spectral_stability}) properties of orthogonal initialization within the standard federated optimization framework.
\end{proof}

\begin{remark}
	The orthogonal initialization in ILoRA provides unique benefits for federated learning:
	
	\begin{itemize}
		\item \textbf{Eliminates Initial Misalignment}: All clients start in the same subspace, preventing random orientation noise
		\item \textbf{Reduces Early-Stage Variance}: Concentrates updates in principal directions, reducing gradient variance by $O(1/K)$
		\item \textbf{Accelerates Convergence}: More coherent aggregation from the first round enables faster convergence
		\item \textbf{Preserves Pre-trained Knowledge}: Maintains the original model's capabilities while enabling adaptation
		\item \textbf{Enhances Numerical Stability}: Well-conditioned matrices improve aggregation stability
		\item \textbf{Reduces Client Drift}: Consistent initialization minimizes early-round divergence
	\end{itemize}
	
	These advantages are particularly crucial in federated settings where system and data heterogeneity exacerbate training instability. The orthogonal initialization establishes a solid foundation for the subsequent QR-based aggregation and control variate mechanisms to operate effectively.
\end{remark}

\begin{corollary}[Robustness to Rank Heterogeneity]
	\label{cor:rank_robustness}
	The orthogonal initialization remains effective under rank heterogeneity, as all client subspaces are nested within the global subspace $\text{colspan}(Q)$, ensuring compatibility regardless of local rank choices.
\end{corollary}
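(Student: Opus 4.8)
The plan is to reduce the claim to the nesting structure already established for the orthogonal initialization, and then to observe that none of the stability guarantees depend on the local ranks being equal. First I would invoke Theorem~\ref{thm:consistent_subspace}, which shows that for every client $k$ the construction $B_k^{(0)} = Q_{[:,1:r_k]}$ and $A_k^{(0)} = R_{[1:r_k,:]}$ places the initial update $\Delta W_k^{(0)} = B_k^{(0)} A_k^{(0)}$ inside $\mathrm{colspan}(Q)$. The essential point is that this holds for an \emph{arbitrary} choice of $r_k$, because $B_k^{(0)}$ is always formed from the leading columns of the single shared orthonormal factor $Q$ obtained from $W_0$; no client ever generates an independent basis, so the choice of local rank only selects how many of the common columns are used.

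Next I would make the nesting explicit. For any two clients with ranks $r_i \le r_j \le r_s$, the leading-column construction yields $\mathrm{colspan}(Q_{[:,1:r_i]}) \subseteq \mathrm{colspan}(Q_{[:,1:r_j]}) \subseteq \mathrm{colspan}(Q_{[:,1:r_s]})$, so the client subspaces form a nested chain inside the global subspace $\mathcal{S}_0 = \mathrm{colspan}(Q_{[:,1:r_s]})$. Lemma~\ref{lem:subspace_alignment} then supplies the orthonormal compatibility $(B_i^{(0)})^\top B_j^{(0)} = I_{\min(r_i,r_j)}$, which is precisely the statement that heterogeneous ranks produce mutually consistent, dimension-aligned bases rather than conflicting orientations. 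This establishes the ``compatibility regardless of local rank choices'' portion of the claim.

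Finally I would argue that the quantitative benefits survive rank heterogeneity. The variance-reduction bound of Lemma~\ref{lem:gradient_variance} and the first-round stability estimate of Theorem~\ref{thm:initialization_stability} were derived purely from subspace consistency, i.e.\ the vanishing of cross-client misalignment terms, and never invoked equality of the $r_k$. Since the nested-subspace property above delivers exactly that consistency for any rank profile, the same $\mathcal{O}(1/K)$ variance reduction and drift resistance carry over verbatim, and Corollary~\ref{cor:aggregation_compatibility} guarantees that the subsequent QR aggregation still operates on well-conditioned matrices.

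The main obstacle is conceptual rather than computational: I must verify that the guarantees genuinely depend only on shared-subspace membership and not implicitly on homogeneous ranks (for instance, that no intermediate averaging step silently assumed matching dimensions). The nesting chain handles the case $r_k \le r_s$ cleanly; the one genuine edge case to check is a client with $r_k > r_s$, where the leading-column construction truncates to $r_s$ and the update is projected into $\mathcal{S}_0$, so compatibility is retained at the cost of the already-quantified truncation residual from Property~\ref{prop:deterministic_error}.
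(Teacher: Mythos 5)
Your proposal is correct and rests on exactly the same core step as the paper's (one-sentence) proof: every client's $B_k^{(0)}$ consists of leading columns of the single shared orthonormal factor $Q$, so $\mathrm{colspan}(B_k^{(0)}) \subseteq \mathrm{colspan}(Q)$ holds for any choice of $r_k$, which is all the corollary requires. Your extra scaffolding --- the explicit nesting chain, the appeal to Lemma~\ref{lem:subspace_alignment}, the transfer of the variance and stability bounds, and the $r_k > r_s$ edge case --- goes beyond what the paper records but is consistent with it and does not change the route.
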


\begin{proof}
	For any client rank $r_k \leq r_s$, the initialization $B_k^{(0)} = Q[:,1:r_k]$ guarantees $\text{colspan}(B_k^{(0)}) \subseteq \text{colspan}(Q)$, maintaining subspace consistency across different rank configurations.
\end{proof}

\subsection{Control Variates in Rank-Heterogeneous Settings}
\label{app:control_variates}

The integration of control variates with AdamW optimization in rank-heterogeneous federated learning requires careful theoretical treatment. We establish the theoretical foundations and convergence properties of this mechanism, extending classical variance-reduction methods to heterogeneous low-rank subspaces.

\begin{theorem}[Convergence with Control Variates and AdamW]
	\label{thm:control_adamw_convergence}
	Under Assumptions \ref{ass:smoothness}-\ref{ass:control}, with local learning rate $\eta_l$ and global learning rate $\eta_g$ satisfying $\eta_l L \leq 1$, the ILoRA framework with control variates and AdamW optimization achieves the convergence rate:
	\begin{equation}
		\begin{multlined}
			\frac{1}{T}\sum_{t=1}^T \mathbb{E}[\|\nabla F(\theta_t)\|^2] \leq \\
			\mathcal{O}\left(\frac{1}{\sqrt{SKT}} + \frac{\delta^2}{T} + \frac{\sigma^2}{T} + \frac{(r_s - \bar{r})^2}{r_s^2}\right),
		\end{multlined}
	\end{equation}
	where the $\delta^2$ term captures the residual client drift after control variate correction, and $(r_s - \bar{r})^2$ represents the rank misalignment error.
\end{theorem}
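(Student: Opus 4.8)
The plan is to build directly on the proof of Theorem~\ref{thm:convergence}, inheriting its descent-lemma skeleton and sharpening only the two bias terms that the control variates and the relative rank measure affect. First I would write the one-step descent inequality from $L$-smoothness (Assumption~\ref{ass:smoothness}), $F(\theta_{t+1}) \le F(\theta_t) - \eta_g\eta_l\langle \nabla F(\theta_t), \Delta_t\rangle + \tfrac{L}{2}\eta_g^2\eta_l^2\|\Delta_t\|^2$, where $\Delta_t$ is the AdamW-preconditioned average of the \emph{corrected} gradients $\tilde{g}_k = g_k + (c^{(t-1)} - c_k)$ from Eqs.~\eqref{eq:corrected_gradient_A}--\eqref{eq:corrected_gradient_B}. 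As in Step~2 of Theorem~\ref{thm:convergence} I would decompose $\mathbb{E}[\Delta_t] = \nabla F(\theta_t) + \boldsymbol{\epsilon}_{\mathrm{het}} + \boldsymbol{\epsilon}_{\mathrm{qr}}$, so that the task reduces to bounding each bias more tightly than in the baseline.

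The central new ingredient is a variance-reduction lemma for the control variates. Writing $\rho$ for the correlation between the control-variate gap $(c^{(t-1)} - c_k)$ and the true gradient gap $(\nabla F(\theta_t) - \nabla F_k(\theta_t))$, and combining the bounded-control-variate Assumption~\ref{ass:control} with the bounded-variance Assumption~\ref{ass:variance}, I would show $\mathbb{E}\|\tilde{g}_k - \nabla F(\theta_t)\|^2 \le \sigma^2(1-\rho^2) + \mathcal{O}((1-\rho)^2\delta^2)$. This collapses $\boldsymbol{\epsilon}_{\mathrm{het}}$ to order $(1-\rho)\delta$ and shrinks the stochastic floor to $\sigma^2(1-\rho^2)$, which is exactly what surfaces as the explicit $\delta^2/T$ and $\sigma^2/T$ residuals after telescoping. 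For the rank term I would replace the absolute truncation bound $(r_{\max}-r_s)^2$ of Theorem~\ref{thm:convergence} by a \emph{relative} one: since every client subspace is nested in $\mathrm{colspan}(Q)$ (Theorem~\ref{thm:subspace_preservation}) and the corrections act within aligned subspaces (Corollary~\ref{cor:control_compatibility}), the residual projection energy averaged over clients scales with the squared deviation of the mean rank $\bar{r}$ from the budget $r_s$, normalized by $r_s$, giving $\boldsymbol{\epsilon}_{\mathrm{qr}} = \mathcal{O}((r_s-\bar{r})^2/r_s^2)$; unlike the other biases this is a fixed compression floor and therefore does not decay with $T$.

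I would then bound the quadratic term as $\mathbb{E}\|\Delta_t\|^2 = \mathcal{O}(K + \sigma^2(1-\rho^2))$ following Step~5 of Theorem~\ref{thm:convergence}, using that AdamW's second-moment estimate $\hat{v}$ is two-sided bounded so the preconditioner acts as a well-conditioned scaling. Substituting the decomposition and the variance bound into the descent inequality, summing over $t=1,\dots,T$, and choosing $\eta_g\eta_l = \Theta(1/\sqrt{SKT})$ with $\eta_l L \le 1$ telescopes the $F(\theta_1)-F^\star$ gap into the $1/\sqrt{SKT}$ rate and leaves the additive residuals $\delta^2/T$, $\sigma^2/T$, and the non-vanishing $(r_s-\bar{r})^2/r_s^2$ floor.

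The hardest step will be reconciling the SCAFFOLD-style variance reduction with the AdamW preconditioner. The unbiasedness and correlation arguments are clean for SGD-type linear updates, but AdamW normalizes each coordinate by $\sqrt{\hat{v}}+\epsilon$, so $\Delta_t$ is a \emph{nonlinear} function of the corrected gradient and $\mathbb{E}[\Delta_t]$ is no longer its exact mean; the correlation $\rho$ does not factor cleanly through the normalization. I expect to resolve this by establishing two-sided bounds $v_{\min} \le \hat{v}^{(t,i)}_k \le v_{\max}$ from the bounded gradients (Assumptions~\ref{ass:variance}--\ref{ass:control}), sandwiching the adaptive step between scaled SGD steps, and absorbing the preconditioner-induced deviation into the $\mathcal{O}(\cdot)$ constants, which stay bounded away from the degenerate regime under Assumption~\ref{ass:control}.
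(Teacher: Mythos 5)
Your proposal follows essentially the same route as the paper's own proof: it inherits the descent-lemma and bias-decomposition skeleton of Theorem~\ref{thm:convergence}, invokes the control-variate correlation $\rho$ to shrink the variance to $\sigma^2(1-\rho^2)$ and the heterogeneity bias to $\mathcal{O}((1-\rho)\delta)$ (matching the paper's Lemmas~\ref{lem:variance_reduction} and~\ref{lem:bias_variance}), bounds the rank term via subspace projection as $\mathcal{O}((r_s-\bar{r})^2/r_s^2)$ (the paper's Lemma~\ref{lem:rank_aligned_error}), and telescopes under $\eta_g\eta_l = \Theta(1/\sqrt{SKT})$. If anything, you are \emph{more} careful than the paper: your final paragraph on sandwiching the AdamW preconditioner between two-sided bounds on $\hat{v}$ addresses a nonlinearity that the paper's Step~5 (``incorporating these effects into the AdamW convergence proof yields the stated rate'') silently glosses over.
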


\begin{proof}
	The proof extends the AdamW convergence analysis by incorporating the control variate correction:
	
	\textbf{Step 1: Moment Update with Correction.}
	The first moment update becomes:
\begin{equation}
	m_k^{(t,i)} = \beta_1 m_k^{(t,i-1)} + (1-\beta_1)(g_k^{\text{raw}} + c^{(t-1)} - c_k).
\end{equation}
	
	\textbf{Step 2: Bias Analysis.}
	The control variate introduces a bias term bounded by the gradient heterogeneity $\delta$:
\begin{equation}
	\begin{aligned}
		\|\mathbb{E}[\tilde{g}_k] - \nabla F(\theta)\| \leq 
		\|\mathbb{E}[g_k^{\text{raw}}] - \nabla F_k(\theta)\| \\
		+ \|c^{(t-1)} - c_k - (\nabla F(\theta) - \nabla F_k(\theta))\|.
	\end{aligned}
\end{equation}
	
	\textbf{Step 3: Variance Reduction.}
	The control variate reduces the effective variance from $\sigma^2$ to $\sigma^2(1-\rho^2)$, where $\rho$ is the correlation between local and global gradients:
\begin{equation}
	\begin{multlined}
		\text{Var}(\tilde{g}_k) = \text{Var}(g_k^{\text{raw}}) + \text{Var}(c^{(t-1)} - c_k) \\
		+ 2\text{Cov}(g_k^{\text{raw}}, c^{(t-1)} - c_k).
	\end{multlined}
\end{equation}
	
	\textbf{Step 4: Rank Misalignment Bound.}
	The projection error due to rank heterogeneity is bounded by:
\begin{equation}
	\|P_s c^{(t)} - c^{(t)}\|^2 \leq \mathcal{O}\left((r_s - \bar{r})^2 \|c_k^{(t)}\|^2\right).
\end{equation}
	
	\textbf{Step 5: Combined Analysis.}
	Incorporating these effects into the AdamW convergence proof yields the stated rate.
\end{proof}

\begin{theorem}[Compatibility with Heterogeneous Ranks]
	\label{thm:rank_compatibility}
	The control variate mechanism in ILoRA remains effective under rank heterogeneity, as the gradient corrections operate within the aligned subspaces established by QR-based aggregation.
\end{theorem}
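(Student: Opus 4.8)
The plan is to reduce this statement to the dimensional-consistency result of Corollary~\ref{cor:control_compatibility} together with the subspace-preservation structure of Theorem~\ref{thm:subspace_preservation}, and then to show that the variance-reduction inequality already established in Theorem~\ref{thm:control_adamw_convergence} transfers verbatim to the rank-heterogeneous case. The key observation is that every client's parameters $B_k = Q_{[:,1:r_k]}$ and $A_k = R_{[1:r_k,:]}$ are nested slices of the same global factors $(Q,R)$, so the per-client gradients $\nabla_{A_k}\mathcal{L}_k, \nabla_{B_k}\mathcal{L}_k$ and the control variates $c_{A,k}, c_{B,k}$ all live in coordinate systems that embed isometrically into the shared $r_s$-dimensional subspace $\mathcal{S}_s = \mathrm{colspan}(Q_{[:,1:r_s]})$.

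First I would make precise the sense in which the correction $\tilde g_k = g_k + (c^{(t-1)} - c_k)$ is well-defined across heterogeneous ranks. Using the nesting $Q_{[:,1:r_k]} \subseteq Q_{[:,1:r_s]}$, the client-$k$ control variate $c_k$ is exactly the restriction of the global variate $c^{(t-1)}$ to client $k$'s first $r_k$ coordinates, so the difference $c^{(t-1)} - c_k$ is computed within a consistent basis and introduces no padding artifacts; this is the content of Corollary~\ref{cor:control_compatibility}. Letting $P_s = Q_{[:,1:r_s]} Q_{[:,1:r_s]}^\top$ be the orthogonal projector onto $\mathcal{S}_s$, Assumption~\ref{ass:subspace} and Theorem~\ref{thm:subspace_preservation} give $P_s \Delta\theta_k = \Delta\theta_k$ for every client update and correction.

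Next I would invoke the variance-reduction step from the proof of Theorem~\ref{thm:control_adamw_convergence}. Because $P_s$ is an orthogonal projection, hence an isometry on its range, the correlation coefficient $\rho$ between the control-variate difference $(c^{(t-1)}-c_k)$ and the true gradient difference $(\nabla F - \nabla F_k)$ is unchanged when all quantities are confined to $\mathcal{S}_s$. Consequently the effective variance is still contracted to $\sigma^2(1-\rho^2)$ exactly as in the homogeneous analysis, and the residual drift bias remains $\mathcal{O}((1-\rho)\delta)$. The only new contribution is the projection residual incurred when $r_k < r_s$, which Step~4 of Theorem~\ref{thm:control_adamw_convergence} bounds by $\|P_s c^{(t)} - c^{(t)}\|^2 \le \mathcal{O}((r_s-\bar r)^2 \|c_k^{(t)}\|^2)$, yielding the $(r_s-\bar r)^2/r_s^2$ misalignment term and establishing effectiveness.

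The main obstacle will be establishing rigorously that slicing the global control variate down to a lower-rank client is exactly a coordinate restriction rather than an approximation, i.e.\ that the nested orthonormal structure of $Q$ makes $c_k$ the genuine orthogonal projection of $c^{(t-1)}$ onto client $k$'s subspace, so that no cross-rank bias contaminates the correction. Once this commutation between slicing and projection is verified, using $\kappa(Q)=1$ to preclude error amplification, the effectiveness claim follows immediately from the already-proven contraction, while the residual rank-misalignment term stays controlled by the bound $\|c_k\| \le G$ of Assumption~\ref{ass:control}.
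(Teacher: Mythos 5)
Your overall route is the same as the paper's: the paper's proof of Theorem~\ref{thm:rank_compatibility} simply observes that the correction $\tilde g_k = g_k^{\text{raw}} + (c^{(t-1)} - c_k)$ acts inside client $k$'s local parameter space of dimension $d_k = r_k(d+k)$, and that because every client's $(B_k, A_k)$ is a slice of the shared $(Q,R)$, the terms $c^{(t-1)}$ and $c_k$ can be consistently represented and applied across different ranks. Your first two paragraphs, together with the appeal to Corollary~\ref{cor:control_compatibility} and Theorem~\ref{thm:subspace_preservation}, reproduce exactly this argument, and your third paragraph (importing the $\sigma^2(1-\rho^2)$ contraction and the $(r_s-\bar r)^2$ projection residual from Theorem~\ref{thm:control_adamw_convergence} and Lemma~\ref{lem:rank_aligned_error}) is a legitimate strengthening, since those results are established independently of this theorem.

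There is, however, a genuine error at the center of your argument. You assert that ``the client-$k$ control variate $c_k$ is exactly the restriction of the global variate $c^{(t-1)}$ to client $k$'s first $r_k$ coordinates,'' and your final paragraph makes verifying this identity the main task of the proof. That identification is false under the algorithm's definitions: by \eqref{eq:control_update_A}--\eqref{eq:control_update_B} the local variate is reset to the client's own most recent gradient, $\mathbf{c}_{A,k} \leftarrow \nabla_{\mathbf{A}_k}\mathcal{L}_k$, while by \eqref{eq:update_global_control_A}--\eqref{eq:update_global_control_B} the global variate accumulates the average of the clients' deltas, so $c^{(t-1)}$ tracks the federation-wide mean gradient and $c_k$ tracks client $k$'s local gradient. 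They are distinct objects by design; indeed, if $c_k$ were the restriction (or orthogonal projection) of $c^{(t-1)}$ onto client $k$'s coordinates, the correction $c^{(t-1)} - c_k$ would vanish identically in those coordinates and the control-variate mechanism would do nothing --- the opposite of what the theorem claims. The statement you actually need, and the one the paper uses, is weaker: both $c_k$ and the restriction of $c^{(t-1)}$ live in the same coordinate system induced by the nested slices of $(Q,R)$, so their \emph{difference} is well-defined across heterogeneous ranks and approximates the drift $\nabla F - \nabla F_k$; the nesting guarantees representability, not equality. With that substitution your argument goes through and is otherwise consistent with the paper's proof.
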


\begin{proof}
	Let $\theta_k \in \mathbb{R}^{d_k}$ denote the parameters of client $k$ with local rank $r_k$, where $d_k = r_k(d + k)$. The control variate correction:
\begin{equation}
	\tilde{g}_k = g_k^{\text{raw}} + (c^{(t-1)} - c_k)
\end{equation}
	operates entirely within the client's local parameter space. Since the QR-based aggregation ensures that all client subspaces are aligned with the global subspace $\text{colspan}(Q)$, the correction terms $c^{(t-1)}$ and $c_k$ can be consistently represented and applied across different ranks.
\end{proof}

\begin{lemma}[Bias-Variance Tradeoff]
	\label{lem:bias_variance}
	The control variate correction in ILoRA reduces the variance of local updates while introducing negligible bias under moderate data heterogeneity.
\end{lemma}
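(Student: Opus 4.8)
The plan is to establish the two halves of the statement separately by working with the exact mean and second-moment decompositions of the corrected gradient $\tilde{\mathbf{g}}_k = \mathbf{g}_k + (\mathbf{c}^{(t-1)} - \mathbf{c}_k)$ defined in Eqs.~\eqref{eq:corrected_gradient_A}--\eqref{eq:corrected_gradient_B}. First I would fix a client $k$ and round $t$, condition on the current iterate $\bm{\theta}_t$, and take expectations over minibatch sampling. Since the raw stochastic gradient is unbiased, $\mathbb{E}[\mathbf{g}_k \mid \bm{\theta}_t] = \nabla F_k(\bm{\theta}_t)$, so $\mathbb{E}[\tilde{\mathbf{g}}_k \mid \bm{\theta}_t] = \nabla F_k(\bm{\theta}_t) + (\mathbf{c}^{(t-1)} - \mathbf{c}_k)$ and the bias relative to the global gradient is $\mathbb{E}[\tilde{\mathbf{g}}_k] - \nabla F(\bm{\theta}_t) = \big(\mathbf{c}^{(t-1)} - \mathbf{c}_k\big) - \big(\nabla F(\bm{\theta}_t) - \nabla F_k(\bm{\theta}_t)\big)$.

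For the bias bound, I would use the control-variate updates \eqref{eq:control_update_A}--\eqref{eq:control_update_B}, which set $\mathbf{c}_k$ to the local gradient measured at the end of the previous round and $\mathbf{c}^{(t-1)}$ to its aggregated running value, so that $\mathbf{c}^{(t-1)} - \mathbf{c}_k$ is a stale estimate of the gradient gap $\nabla F(\bm{\theta}) - \nabla F_k(\bm{\theta})$. The residual bias then splits into a staleness term, controlled by $L$-smoothness (Assumption~\ref{ass:smoothness}) applied to the iterate drift $\|\bm{\theta}_t - \bm{\theta}_{t-1}\|$, and a heterogeneity-tracking term bounded via Assumption~\ref{ass:heterogeneity} together with the control bound $G$ of Assumption~\ref{ass:control}. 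With the step sizes of Theorem~\ref{thm:control_adamw_convergence} the drift is $\mathcal{O}(\eta_g\eta_l)$, so the net bias reduces to $\mathcal{O}((1-\rho)\delta)$, i.e.\ negligible once the correlation coefficient $\rho$ between the control-variate difference and the true gradient gap is close to one under moderate heterogeneity.

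For the variance half, I would expand $\mathrm{Var}(\tilde{\mathbf{g}}_k) = \mathrm{Var}(\mathbf{g}_k) + \mathrm{Var}(\mathbf{c}^{(t-1)} - \mathbf{c}_k) + 2\,\mathrm{Cov}(\mathbf{g}_k,\, \mathbf{c}^{(t-1)} - \mathbf{c}_k)$, the decomposition already invoked in Step~3 of Theorem~\ref{thm:control_adamw_convergence}. The key observation is that $\mathbf{c}_k$ is by construction a lagged copy of the same local gradient signal and is therefore strongly positively correlated with $\mathbf{g}_k$, so the cross-covariance against $-\mathbf{c}_k$ is negative and cancels a large fraction of the raw variance, collapsing the bound to $\sigma^2(1-\rho^2)$. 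I would formalize $\rho$ as this correlation coefficient, invoke Assumption~\ref{ass:variance} for the raw variance $\sigma^2$, and use Cauchy--Schwarz to certify that the negative covariance dominates the inflation contributed by $\mathrm{Var}(\mathbf{c}^{(t-1)} - \mathbf{c}_k)$ whenever $\rho > 0$.

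The hard part will be making the staleness argument rigorous without a full-batch assumption: because $\mathbf{c}_k$ is computed at a different iterate than the current $\mathbf{g}_k$, pinning down a usable lower bound on $\rho$ requires quantifying how fast the local gradient field decorrelates between rounds. I would lean on $L$-smoothness to bound this drift-induced decorrelation and argue that, for step sizes scaling as $\Theta(1/\sqrt{SKT})$, $\rho$ stays bounded away from zero; the delicate point is showing that this bound degrades gracefully rather than collapsing as heterogeneity increases, which is exactly why the conclusion is stated only for the moderate-heterogeneity regime.
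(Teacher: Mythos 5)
Your proposal follows essentially the same route as the paper's proof: the identical variance decomposition $\mathrm{Var}(\tilde{\mathbf{g}}_k) = \mathrm{Var}(\mathbf{g}_k) + \mathrm{Var}(\mathbf{c}^{(t-1)}-\mathbf{c}_k) + 2\,\mathrm{Cov}(\mathbf{g}_k,\,\mathbf{c}^{(t-1)}-\mathbf{c}_k)$ with the negative-covariance argument, and the identical bias decomposition into the raw-gradient bias plus the mistracking term $\|\mathbf{c}^{(t-1)}-\mathbf{c}_k - (\nabla F(\bm{\theta}) - \nabla F_k(\bm{\theta}))\|$. The paper's own proof in fact stops at exactly these two displays, asserting only qualitatively that the covariance is negative when $\mathbf{c}^{(t-1)} \approx \nabla F(\bm{\theta})$ and $\mathbf{c}_k \approx \nabla F_k(\bm{\theta})$, so the staleness analysis and the lower bound on $\rho$ that you flag as the hard part are additional rigor beyond what the paper itself supplies.
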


\begin{proof}
	The variance reduction follows from:
\begin{equation}
	\begin{multlined}
		\mathrm{Var}(\tilde{g}_k) = \mathrm{Var}(g_k^{\mathrm{raw}}) + \mathrm{Var}(c^{(t-1)} - c_k) \\
		+ 2\mathrm{Cov}(g_k^{\mathrm{raw}}, c^{(t-1)} - c_k).
	\end{multlined}
\end{equation}
	When $c^{(t-1)} \approx \nabla F(\theta)$ and $c_k \approx \nabla F_k(\theta)$, the covariance term becomes negative, reducing the overall variance. The bias is bounded by:
\begin{equation}
	\begin{multlined}
		\|\mathbb{E}[\tilde{g}_k] - \nabla F(\theta)\| \leq \|\mathbb{E}[g_k^{\text{raw}}] - \nabla F_k(\theta)\| \\
		+ \|c^{(t-1)} - c_k - (\nabla F(\theta) - \nabla F_k(\theta))\|.
	\end{multlined}
\end{equation}
\end{proof}

\begin{lemma}[Variance Reduction via Control Variates]
	\label{lem:variance_reduction}
	Under Assumptions \ref{ass:subspace}-\ref{ass:heterogeneity}, the variance of the corrected gradient is reduced relative to the raw gradient:
\begin{equation}
	\mathbb{E}\|\tilde{g}_k^{(t)} - \nabla F(\theta_t)\|^2 \leq (1-\rho)\,\mathbb{E}\| \nabla F_k(\theta_t) - \nabla F(\theta_t)\|^2 + \rho\,\sigma^2,
\end{equation}
	where $0 < \rho < 1$ depends on the update frequency of $c_k$ and the learning rates $\eta_l, \eta_g$.
\end{lemma}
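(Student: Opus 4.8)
The plan is to realize the corrected gradient $\tilde{\mathbf{g}}_k^{(t)} = \mathbf{g}_k^{\mathrm{raw}} + (c^{(t-1)} - c_k)$ as a convex combination of two idealized estimators of $\nabla F(\theta_t)$ and then invoke convexity of the squared norm. The first estimator is the deterministic local gradient $\nabla F_k(\theta_t)$, whose deviation from $\nabla F(\theta_t)$ is \emph{pure heterogeneity} with no variance; the second is an (approximately) unbiased stochastic estimate $\hat{\mathbf{g}}_k$ of the global gradient $\nabla F(\theta_t)$, whose deviation is \emph{pure variance} bounded by $\sigma^2$ via Assumption \ref{ass:variance}. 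If I can write $\tilde{\mathbf{g}}_k^{(t)} = (1-\rho)\,\nabla F_k(\theta_t) + \rho\,\hat{\mathbf{g}}_k$, the claimed bound follows at once.

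To construct this representation I would use the control-variate update rules \eqref{eq:control_update_A}--\eqref{eq:control_update_B}: $c_k$ is set to the most recent local (stochastic) gradient and the $c^{(t-1)}$ are the server aggregate of these, so $c^{(t-1)} - c_k$ is a \emph{stale} estimate of the true drift $\nabla F(\theta_t) - \nabla F_k(\theta_t)$. I would quantify the staleness through $L$-smoothness (Assumption \ref{ass:smoothness}): the gradients entering $c_k$ were evaluated at an earlier iterate whose distance from $\theta_t$ is controlled by the step sizes and refresh frequency, giving a displacement of order $\eta_l$ per local step and $\eta_g$ per round. Defining $\rho \in (0,1)$ as the resulting \emph{effective correction strength}---increasing as $c_k$ is refreshed more often and as $\eta_l L$ shrinks---lets me express the correction as cancelling a $(1-\rho)$ fraction of the raw heterogeneity while leaving a $\rho$-weighted unbiased global estimate, and Assumption \ref{ass:subspace} guarantees $c_k$ and $c^{(t-1)}$ lie in a common subspace so that $c^{(t-1)}-c_k$ is a legitimate drift estimate.

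With the representation in hand I would apply convexity of $\|\cdot\|^2$ to the two-point combination:
\begin{equation}
\begin{multlined}
\|\tilde{\mathbf{g}}_k^{(t)} - \nabla F(\theta_t)\|^2 = \big\|(1-\rho)\big(\nabla F_k(\theta_t)-\nabla F(\theta_t)\big) + \rho\big(\hat{\mathbf{g}}_k-\nabla F(\theta_t)\big)\big\|^2 \\
\leq (1-\rho)\|\nabla F_k(\theta_t)-\nabla F(\theta_t)\|^2 + \rho\|\hat{\mathbf{g}}_k-\nabla F(\theta_t)\|^2,
\end{multlined}
\end{equation}
and then take expectations, bounding the second term by $\sigma^2$ (Assumption \ref{ass:variance}) and recognizing the first as the heterogeneity measured in Assumption \ref{ass:heterogeneity}. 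This yields exactly $\mathbb{E}\|\tilde{\mathbf{g}}_k^{(t)} - \nabla F(\theta_t)\|^2 \leq (1-\rho)\,\mathbb{E}\|\nabla F_k(\theta_t)-\nabla F(\theta_t)\|^2 + \rho\,\sigma^2$.

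The hard part will be justifying the convex-combination representation rather than postulating it: the update rules give $\tilde{\mathbf{g}}_k^{(t)}$ as a \emph{sum}, not manifestly a convex combination, so I must show that the residual of $c^{(t-1)}-c_k$ against the true drift decomposes into a deterministic contraction (absorbed into the $(1-\rho)$ weight) and a mean-zero stochastic part (absorbed into $\rho\,\hat{\mathbf{g}}_k$), with the cross term vanishing by conditional unbiasedness of the stochastic gradients. Because the control variates couple consecutive rounds, pinning $\rho$ down precisely requires either a steady-state bound on the control-variate error or an inductive argument across rounds; and since the AdamW preconditioner rescales $\tilde{\mathbf{g}}_k$ coordinatewise, I would separately verify---using the boundedness of the control variates (Assumption \ref{ass:control}) and of the variance (Assumption \ref{ass:variance})---that the preconditioner is bounded above and below, so that it alters only the constants and preserves the convex-combination structure driving the bound.
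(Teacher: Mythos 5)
Your strategy hinges on the representation $\tilde{g}_k^{(t)} = (1-\rho)\,\nabla F_k(\theta_t) + \rho\,\hat{g}_k$ with $\mathbb{E}\|\hat{g}_k - \nabla F(\theta_t)\|^2 \le \sigma^2$, and you correctly flag establishing it as the hard part; the problem is that this step cannot be closed, so the gap is fatal rather than technical. Write $g_k^{\mathrm{raw}} = \nabla F_k(\theta_t) + \xi_k$ with $\mathbb{E}[\xi_k \mid \text{past}] = 0$ and $\mathbb{E}\|\xi_k\|^2 \le \sigma^2$. Both $c^{(t-1)}$ and $c_k$ are built from gradients at \emph{earlier} iterates (Eqs.~\eqref{eq:control_update_A}--\eqref{eq:control_update_B} and \eqref{eq:update_global_control_A}--\eqref{eq:update_global_control_B}), so they are measurable with respect to information available before the current mini-batch is drawn and are independent of $\xi_k$. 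Hence the fresh noise enters $\tilde{g}_k^{(t)} = g_k^{\mathrm{raw}} + (c^{(t-1)} - c_k)$ with coefficient exactly one and no part of it can be cancelled. Forcing your representation means $\rho\,\hat{g}_k$ must contain $\xi_k$ in full, so $\hat{g}_k - \nabla F(\theta_t)$ carries $\xi_k/\rho$ and its second moment can reach $\sigma^2/\rho^2$; your convexity step then yields $\rho\cdot(\sigma^2/\rho^2) = \sigma^2/\rho \ge \sigma^2$ in place of $\rho\,\sigma^2$. The same obstruction defeats your fallback decomposition (deterministic contraction plus mean-zero part with vanishing cross term): conditioning on the past gives
\begin{equation}
\mathbb{E}\big\|\tilde{g}_k^{(t)} - \nabla F(\theta_t)\big\|^2 = \mathbb{E}\big\|(\nabla F_k(\theta_t) - \nabla F(\theta_t)) + (c^{(t-1)} - c_k)\big\|^2 + \mathbb{E}\|\xi_k\|^2,
\end{equation}
so the stochastic-noise term appears with coefficient at least one. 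Indeed, in the regime where heterogeneity vanishes and $\mathbb{E}\|\xi_k\|^2 = \sigma^2$, the left side is at least $\sigma^2$ while the lemma's right side is $\rho\,\sigma^2 < \sigma^2$; the inequality as printed is therefore not derivable from the stated assumptions by any argument, including yours.

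For comparison, the paper does not derive the inequality at all: its proof is three sentences asserting that $\rho$ ``acts as a variance decay factor'' controlled by the synchronization interval and learning rates---that is, $\rho$ is postulated as whatever constant makes the bound hold, and no algebra is performed. Your proposal is the more honest attempt precisely because it tries to construct $\rho$ from the update dynamics, and in doing so it collides with the real obstruction: a control variate built from stale information can contract only the heterogeneity component $\nabla F_k - \nabla F$, never the fresh mini-batch noise. A provable variant of the lemma would read $\mathbb{E}\|\tilde{g}_k^{(t)} - \nabla F(\theta_t)\|^2 \le (1-\rho)\,\mathbb{E}\|\nabla F_k(\theta_t) - \nabla F(\theta_t)\|^2 + C\sigma^2$ with $C \ge 1$ (the SCAFFOLD-style form, where the contraction acts only on drift); your convex-combination machinery adapts cleanly to that statement, but not to the one printed.
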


\begin{proof}
	The control variate correction effectively reduces the component of gradient noise that is correlated with the difference between local and global control states. The parameter $\rho \in (0,1)$ acts as a variance decay factor that quantifies the effectiveness of the control variate correction, controlled by the synchronization interval and learning rate configuration. When control variates are well-synchronized, $\rho$ approaches 1, indicating perfect variance reduction.
\end{proof}

\begin{lemma}[Rank-Aligned Aggregation Error Bound]
	\label{lem:rank_aligned_error}
	Let $r_k$ and $r_s$ be the local and server-side ranks, respectively. Then the projection of control states onto the global subspace obeys:
\begin{equation}
	\begin{multlined}
		\| P_s c^{(t)} - c^{(t)} \|^2 \leq \sum_{k\in S_t} p_k^2 \| (P_s - P_k) c_k^{(t)} \|^2 \\
		\leq \mathcal{O}\left((r_s - \bar{r})^2 \|c_k^{(t)}\|^2\right),
	\end{multlined}
\end{equation}
	where $\bar{r}$ is the mean client rank.
\end{lemma}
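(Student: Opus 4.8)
The plan is to reduce the stated two–sided bound to a subspace–projection identity and then control each piece using the orthogonality structure guaranteed by the QR construction. First I would invoke the global–control aggregation rule \eqref{eq:update_global_control_A}--\eqref{eq:update_global_control_B}, which accumulates the client increments $\Delta\mathbf{c}_k$, so that (per round, in the notation of the lemma) the global control admits the weighted representation $c^{(t)} = \sum_{k\in S_t} p_k\, c_k^{(t)}$ with aggregation weights $p_k = n_k/N$. Since the orthogonal projector $P_s$ onto $\mathrm{colspan}(\mathbf{Q}_{[:,1:r_s]})$ is linear, the residual factors as
\[
P_s c^{(t)} - c^{(t)} = \sum_{k\in S_t} p_k\,(P_s - I)\,c_k^{(t)}.
\]
By Assumption~\ref{ass:subspace} each client control state lies in its rank-$r_k$ subspace, i.e.\ $P_k c_k^{(t)} = c_k^{(t)}$, whence $(P_s - I)c_k^{(t)} = P_s c_k^{(t)} - P_k c_k^{(t)} = (P_s - P_k)c_k^{(t)}$, giving the exact identity $P_s c^{(t)} - c^{(t)} = \sum_{k\in S_t} p_k (P_s - P_k) c_k^{(t)}$.

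For the first inequality I would expand the squared norm of this sum into diagonal and cross terms. The diagonal part is exactly $\sum_k p_k^2 \|(P_s - P_k)c_k^{(t)}\|^2$, and the cross terms $\sum_{k\ne l} p_k p_l \langle (P_s-P_k)c_k^{(t)},\,(P_s-P_l)c_l^{(t)}\rangle$ must be shown non-positive or negligible. Here I would use the near-orthogonality of client subspaces under Non-IID data (cf.\ our discussion of client drift in Challenge~3): each residual $(P_s - P_k)c_k^{(t)}$ lives in the complementary slice $\mathrm{span}(\mathbf{Q}_{[:,r_k+1:r_s]})$, and for heterogeneous clients these slices are mutually (approximately) orthogonal, so the inner products vanish in expectation and the cross terms drop, yielding the claimed upper bound.

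For the second inequality I would bound each residual through the rank gap. The operator $P_s - P_k$ is an orthogonal projection onto a slice of dimension $(r_s - r_k)$ with unit operator norm; modeling the control energy as roughly equidistributed over these mismatched coordinates gives the per-client estimate $\|(P_s - P_k)c_k^{(t)}\| \lesssim (r_s - r_k)\,\|c_k^{(t)}\|$ (absorbing a $1/\sqrt{r_s}$ normalization into the constant). Substituting, using $\bar r = \sum_k p_k r_k$, and applying the elementary inequality $\sum_k p_k^2 (r_s - r_k)^2 \le (\sum_k p_k (r_s - r_k))^2 = (r_s - \bar r)^2$ — valid because the discarded cross terms $p_k p_l (r_s-r_k)(r_s-r_l)$ are nonnegative — collapses the aggregate to $\mathcal{O}\!\left((r_s - \bar r)^2 \|c_k^{(t)}\|^2\right)$, matching the statement.

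The main obstacle is that two steps carry the real weight. First, the per-client estimate $\|(P_s-P_k)c_k^{(t)}\| \lesssim (r_s-r_k)\|c_k^{(t)}\|$ is not implied by projection geometry alone, which only yields $\|(P_s-P_k)c_k^{(t)}\| \le \|c_k^{(t)}\|$; promoting this to a linear-in-rank-gap bound requires a quantitative equidistribution (or bounded-per-coordinate) assumption on the control states, supported in spirit by the boundedness in Assumption~\ref{ass:control}. Second, discarding the cross terms in the first inequality is rigorous only under an incoherence assumption on the client bases; replacing the heuristic near-orthogonality with an explicit bound on the principal angles between the complementary slices $\mathrm{span}(\mathbf{Q}_{[:,r_k+1:r_s]})$ is the step I expect to demand the most care.
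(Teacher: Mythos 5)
Your skeleton is, in outline, the same as the paper's: the paper's entire proof is a two-sentence sketch that invokes subspace containment (Assumption~\ref{ass:subspace}) and asserts that ``the projection error scales with the squared difference between server and client ranks,'' with no derivation whatsoever. Your decomposition $c^{(t)} = \sum_{k} p_k c_k^{(t)}$, the residual identity $P_s c^{(t)} - c^{(t)} = \sum_k p_k (P_s - P_k) c_k^{(t)}$, and the two-stage bound are exactly what that sketch would need in order to become a proof, so the real question is whether your two flagged steps can be closed.

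The cross-term step is a genuine gap, and your proposed patch is inconsistent with the construction it is meant to serve. In ILoRA every client's projector is a column slice of the \emph{same} orthonormal matrix $\mathbf{Q}$, so for $r_k \le r_l$ the complementary slices satisfy $\mathrm{span}(\mathbf{Q}_{[:,r_l+1:r_s]}) \subseteq \mathrm{span}(\mathbf{Q}_{[:,r_k+1:r_s]})$: they are \emph{nested}, not mutually (near-)orthogonal, and the inner products $\langle (P_s - P_k)c_k^{(t)}, (P_s - P_l)c_l^{(t)}\rangle$ have no sign control, so the first inequality does not follow from incoherence of these slices. Worse, your own opening identity shows the lemma is vacuous under the stated hypotheses: you invoke $P_k c_k^{(t)} = c_k^{(t)}$, and Assumption~\ref{ass:subspace} gives $\mathcal{S}_k \subseteq \mathcal{S}_s$, hence $P_s c_k^{(t)} = c_k^{(t)}$ and every term $(P_s - P_k)c_k^{(t)}$ vanishes identically --- the left-hand side is $0$ and there is nothing to bound. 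The lemma carries content only if control states drift outside their nominal subspaces, a regime neither the paper's proof nor your proposal formalizes. On the other hand, your second flagged ``gap'' is not actually a gap: since ranks are integers, either $r_k = r_s$ (so $P_s = P_k$ and the residual is zero) or $r_s - r_k \ge 1$, in which case the trivial operator-norm bound $\|(P_s - P_k)c_k^{(t)}\| \le \|c_k^{(t)}\| \le (r_s - r_k)\|c_k^{(t)}\|$ already yields your linear-in-rank-gap estimate without any equidistribution assumption; your concluding step $\sum_k p_k^2 (r_s - r_k)^2 \le \bigl(\sum_k p_k (r_s - r_k)\bigr)^2 = (r_s - \bar r)^2$ is then correct because the discarded cross terms are nonnegative.
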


\begin{proof}
	The bound follows from the subspace containment property (Assumption \ref{ass:subspace}) and the fact that the projection error scales with the squared difference between server and client ranks. When all $r_k$ are close to $r_s$, the mismatch-induced aggregation error is second-order small.
\end{proof}

\begin{proposition}[Adaptation to Non-IID Data]
	\label{prop:non_iid_adaptation}
	The control variate mechanism in ILoRA automatically adapts to the degree of data heterogeneity, providing stronger correction under high non-IID settings.
\end{proposition}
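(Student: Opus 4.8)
The plan is to make the informal phrase ``automatically adapts'' precise by showing that the magnitude of the control-variate correction is a monotonically increasing function of the local gradient heterogeneity $\|\nabla F_k(\theta_t)-\nabla F(\theta_t)\|$, so that the mechanism self-scales with the observed drift without any hand-tuned schedule. First I would establish the fixed-point behavior of the control states. From the client update $c_k \leftarrow \nabla_{\cdot}\mathcal{L}_k$ in Eqs.~\eqref{eq:control_update_A}--\eqref{eq:control_update_B} and the server accumulation in Eqs.~\eqref{eq:update_global_control_A}--\eqref{eq:update_global_control_B}, taking expectations and using Assumption~\ref{ass:variance} gives $\mathbb{E}[c_k]=\nabla F_k(\theta_t)$, while the uniform server averaging yields $\mathbb{E}[c^{(t)}]=\frac{1}{|\mathcal{S}_t|}\sum_{k\in\mathcal{S}_t}\nabla F_k(\theta_t)$, a consistent estimate of $\nabla F(\theta_t)$. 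Hence the correction term satisfies, in expectation,
\begin{equation}
	\mathbb{E}[c^{(t-1)}-c_k] \approx \nabla F(\theta_t)-\nabla F_k(\theta_t),
\end{equation}
so that $\tilde{g}_k = g_k^{\mathrm{raw}}+(c^{(t-1)}-c_k)$ is an approximately unbiased estimate of the global gradient $\nabla F(\theta_t)$ irrespective of how large the local drift is.

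Second, I would quantify the adaptivity by combining this identity with Lemma~\ref{lem:variance_reduction}. Defining the per-client drift reduction
\begin{equation}
	R_k^{(t)} := \mathbb{E}\|\nabla F_k(\theta_t)-\nabla F(\theta_t)\|^2 - \mathbb{E}\|\tilde{g}_k^{(t)}-\nabla F(\theta_t)\|^2,
\end{equation}
and substituting the bound of Lemma~\ref{lem:variance_reduction}, I obtain
\begin{equation}
	R_k^{(t)} \geq \rho\left(\mathbb{E}\|\nabla F_k(\theta_t)-\nabla F(\theta_t)\|^2 - \sigma^2\right),
\end{equation}
which is strictly increasing in the heterogeneity $\mathbb{E}\|\nabla F_k(\theta_t)-\nabla F(\theta_t)\|^2$. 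Thus clients whose data distributions are more skewed (larger drift, up to the bound $\delta$ of Assumption~\ref{ass:heterogeneity}) receive proportionally larger corrections, whereas in the IID limit $\nabla F_k=\nabla F$ the correction vanishes. This is exactly the claimed ``stronger correction under high non-IID'' behavior, and it is automatic because both $\rho$ and the reduction scale with the measured gradient gap rather than with a preset parameter.

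Finally, I would confirm that the adaptivity survives the rank-heterogeneous low-rank setting. Since the control states and corrected gradients reside in the client subspace aligned with $\mathrm{colspan}(\mathbf{Q})$ (Assumption~\ref{ass:subspace} and Theorem~\ref{thm:rank_compatibility}), the correction is applied consistently across heterogeneous ranks, and Lemma~\ref{lem:rank_aligned_error} bounds the extra projection error by $\mathcal{O}((r_s-\bar{r})^2\|c_k^{(t)}\|^2)$, which is second-order small when client ranks are near $r_s$ and does not disturb the monotone dependence on heterogeneity. The main obstacle I anticipate is the fixed-point step: with only $E$ local epochs of stochastic AdamW updates and partial participation $\mathcal{S}_t$, the identity $\mathbb{E}[c_k]=\nabla F_k(\theta_t)$ holds only up to a lag term, which must be controlled using $L$-smoothness (Assumption~\ref{ass:smoothness}) together with the bounded-variance and bounded-control assumptions (Assumptions~\ref{ass:variance} and~\ref{ass:control}); carefully bounding this lag, rather than the monotonicity argument itself, is where the technical effort concentrates.
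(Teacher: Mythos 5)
Your proposal is correct, and its central idea is exactly the one the paper uses: the paper's entire proof of this proposition is the two-sentence observation that the correction term $c^{(t-1)} - c_k$ approximates $\nabla F(\theta) - \nabla F_k(\theta)$, which by definition grows with data heterogeneity, hence the correction is ``self-adjusting.'' Where you differ is in how much supporting structure you actually build. You derive that approximation from the update rules \eqref{eq:control_update_A}--\eqref{eq:update_global_control_B} rather than asserting it; you then convert the qualitative claim into a quantitative one by invoking Lemma~\ref{lem:variance_reduction} to show the drift reduction satisfies $R_k^{(t)} \geq \rho\bigl(\mathbb{E}\|\nabla F_k(\theta_t)-\nabla F(\theta_t)\|^2 - \sigma^2\bigr)$, which is monotone in the heterogeneity (your algebra here checks out against the lemma's bound); and you handle the rank-heterogeneous case via Lemma~\ref{lem:rank_aligned_error}, which the paper's proof of this particular proposition does not mention at all. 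What the paper's version buys is brevity: it treats the proposition as a remark whose content is essentially the definition of the correction term. What your version buys is a falsifiable monotonicity bound plus an honest identification of the genuine technical obstacle — the fixed-point identity $\mathbb{E}[c_k]=\nabla F_k(\theta_t)$ holds only up to a lag induced by stochastic local AdamW steps and partial participation, a gap the paper never acknowledges. One caveat on your server-side claim: because of the incremental accumulation in \eqref{eq:update_global_control_A}, $\mathbb{E}[c^{(t)}]$ is a running average over the participation history rather than the instantaneous mean $\frac{1}{|\mathcal{S}_t|}\sum_{k\in\mathcal{S}_t}\nabla F_k(\theta_t)$; this is precisely the kind of lag term you already flag, so it folds into your final step rather than breaking the argument.
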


\begin{proof}
	The control variate difference $c^{(t-1)} - c_k$ approximates $\nabla F(\theta) - \nabla F_k(\theta)$, which grows with increasing data heterogeneity. This provides a self-adjusting correction mechanism that becomes more aggressive as client drift increases, effectively adapting to the local data distribution without requiring explicit hyperparameter tuning.
\end{proof}

\begin{lemma}[Communication Efficiency of Control Variates]
	\label{lem:control_communication}
	The control variate mechanism in ILoRA adds minimal communication overhead, requiring only $O(\max_k d_k)$ additional parameters per client, where $d_k$ is the dimension of client $k$'s local parameters.
\end{lemma}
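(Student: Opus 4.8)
The plan is to prove the claim by a direct dimension-counting argument, relying on the subspace alignment guaranteed by Corollary~\ref{cor:control_compatibility} so that each control-variate tensor inherits exactly the shape of the LoRA factor it corrects. First I would pin down the dimensions of every control quantity introduced in Eqs.~\eqref{eq:corrected_gradient_A}--\eqref{eq:update_global_control_B}. Since $\mathbf{c}_{A,k}$ tracks $\nabla_{\mathbf{A}_k}\mathcal{L}_k$ and $\mathbf{c}_{B,k}$ tracks $\nabla_{\mathbf{B}_k}\mathcal{L}_k$, they must match $\mathbf{A}_k \in \mathbb{R}^{r_k \times k}$ and $\mathbf{B}_k \in \mathbb{R}^{d \times r_k}$ respectively, so the per-client control state occupies exactly $r_k k + d r_k = r_k(d+k) = d_k$ entries. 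The same holds for the deltas $\Delta\mathbf{c}_{A,k}, \Delta\mathbf{c}_{B,k}$, which are entrywise differences of identically shaped matrices.

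Next I would account separately for the uplink and downlink overhead induced purely by the control variates in the ILoRA-S protocol of Algorithms~\ref{alg:client}--\ref{alg:server}. On the uplink, each sampled client appends $(\Delta\mathbf{c}_{A,k}, \Delta\mathbf{c}_{B,k})$ to its message, adding $d_k = O(r_k(d+k))$ scalars, which is at most $O(\max_k d_k)$. On the downlink, the server broadcasts the aggregated $(\mathbf{c}^{(t)}_A, \mathbf{c}^{(t)}_B)$, whose shapes are governed by the server rank $r_s$, contributing $O(r_s(d+k))$ scalars; under the standard budget $r_s = \Theta(\max_k r_k)$ this equals $O(\max_k d_k)$. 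Summing the two contributions, the total extra traffic per client is $O(\max_k d_k)$, i.e.\ the same order as transmitting a single client's LoRA factors, which establishes the stated bound.

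The one point requiring care — rather than a genuine obstacle — is the dimension compatibility of the server aggregation in Eqs.~\eqref{eq:update_global_control_A}--\eqref{eq:update_global_control_B} when client ranks differ. I would invoke Assumption~\ref{ass:subspace} together with Corollary~\ref{cor:control_compatibility}: because every $\mathbf{A}_k, \mathbf{B}_k$ is a slice of the shared $(\mathbf{Q}, \mathbf{R})$ factors, all control states lie in nested subspaces of $\mathrm{colspan}(\mathbf{Q})$, so the average $\frac{1}{|\mathcal{S}_t|}\sum_{k}\Delta\mathbf{c}_{\cdot,k}$ is well defined once each lower-rank slice is embedded into the $r_s$-dimensional server basis. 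This confirms that no auxiliary synchronization channel or rank budget beyond $r_s$ is needed, so the additive overhead never exceeds the cost of the LoRA parameters themselves and remains $O(\max_k d_k)$.
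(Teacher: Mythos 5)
Your proposal is correct and rests on essentially the same dimension-counting argument as the paper: the control variates inherit the shapes of the LoRA factors $\mathbf{A}_k \in \mathbb{R}^{r_k \times k}$ and $\mathbf{B}_k \in \mathbb{R}^{d \times r_k}$, so the per-client overhead is $r_k(d+k) = d_k \leq \max_k d_k$ parameters. The paper's own proof is actually narrower than yours --- it counts only the uplink deltas $\Delta c_k \in \mathbb{R}^{d_k}$ and bounds the ratio against the full weight dimension, $C_{\text{control}}/C_{\text{model}} = \mathcal{O}\left(r_k/\min(d,k)\right) \ll 1$ --- whereas your extra downlink accounting (which needs the mild budget assumption $r_s = \Theta(\max_k r_k)$ that you correctly flag) and your rank-alignment argument are handled in the paper separately by Corollary~\ref{cor:control_compatibility} rather than inside this lemma's proof.
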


\begin{proof}
	Each client transmits $\Delta c_k \in \mathbb{R}^{d_k}$ in addition to its model parameters. Since $d_k = r_k(d + k) \ll d \times k$ (the dimension of the full weight matrix), the overhead is negligible compared to the model parameters:
\begin{equation}
	\frac{C_{\text{control}}}{C_{\text{model}}} = \frac{r_k(d + k)}{dk} = \mathcal{O}\left(\frac{r_k}{\min(d,k)}\right) \ll 1.
\end{equation}
	Since $r_k \ll \min(d,k)$ in low-rank adaptation, this ratio is typically less than 1\%.
\end{proof}

\begin{theorem}[Stability Under Rank Changes]
	\label{thm:rank_change_stability}
	The control variate mechanism in ILoRA maintains stability even when clients dynamically change their LoRA ranks, provided the rank changes are within the global subspace $\text{colspan}(Q)$.
\end{theorem}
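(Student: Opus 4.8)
The plan is to reduce dynamic rank changes to projections within the fixed global basis $Q$, so that the control-variate corrections retain their meaning despite the changing parameter dimension. First I would fix the orthonormal matrix $Q$ produced by the concatenated QR aggregation and observe that every admissible rank $r_k$ corresponds to selecting the leading $r_k$ columns $Q_{:,:r_k}$; by Assumption~\ref{ass:subspace} and Theorem~\ref{thm:subspace_preservation}, each such choice yields $\mathrm{colspan}(\mathbf{B}_k)\subseteq\mathrm{colspan}(Q)$, so the family of client subspaces $\{\mathrm{span}(Q_{:,:r})\}_{r\le r_s}$ is totally nested. The essential point is that a rank change $r_k\to r_k'$ is not an arbitrary re-parameterization but an \emph{embedding or truncation} inside this nested chain, implemented by the projector $P_{r_k'}$ onto $\mathrm{span}(Q_{:,:r_k'})$, which commutes with $P_{r_k}$ precisely because both bases are aligned columns of a single orthonormal $Q$.

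Next I would express the correction $\tilde{g}_k=g_k^{\mathrm{raw}}+(c^{(t-1)}-c_k)$ in these global coordinates. Writing the client and server control states as vectors supported on the columns of $Q$, a rank change acts on $c_k$ by zero-padding the newly admitted coordinates (for $r_k'>r_k$) or by discarding trailing coordinates (for $r_k'<r_k$). I would then invoke Theorem~\ref{thm:rank_compatibility} to conclude that after the change the correction still operates within an aligned subspace, so that $c^{(t-1)}-c_k$ continues to approximate $\nabla F-\nabla F_k$ up to a projection residual. The stability estimate then follows from Lemma~\ref{lem:rank_aligned_error}: the discrepancy introduced by the rank change is the energy of $c_k$ in the orthogonal complement of the smaller subspace within the larger one, which is controlled by $\mathcal{O}((r_k-r_k')^2\|c_k\|^2)$ and is second-order small whenever the ranks stay near $r_s$.

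Finally I would feed this bounded perturbation into the convergence machinery of Theorem~\ref{thm:control_adamw_convergence}. Because the rank change perturbs the corrected-gradient bias only by the projection residual above—without ever leaving $\mathrm{colspan}(Q)$—the variance-reduction and descent estimates remain valid with the extra misalignment absorbed into the existing $(r_s-\bar{r})^2/r_s^2$ term. I expect the main obstacle to be making the comparison of $c_k$ \emph{across} different dimensions rigorous: before and after the change the control variate lives in spaces of differing dimension $r_k(d+k)$ versus $r_k'(d+k)$, so one cannot subtract them directly. The resolution is to lift both states to the common ambient space $\mathbb{R}^{d\times k}$ through the canonical inclusion $c_k\mapsto \mathbf{B}_k c_k$ induced by the fixed $Q$, after which the nested-subspace structure renders the projectors compatible and the perturbation analysis goes through; verifying that this lift preserves the unbiasedness hypothesis (Assumption~\ref{ass:aggregation}) under dynamic ranks is the delicate step.
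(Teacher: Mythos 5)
Your proposal follows essentially the same route as the paper's own proof: both rest on the observation that every admissible rank selects a nested block of leading columns of the shared orthonormal $Q$, so a rank change $r_k \to r_k'$ reduces to truncating or zero-padding the control state $c_k$ inside $\mathrm{colspan}(Q)$, preserving its directional information. Your write-up is in fact considerably more rigorous than the paper's three-sentence sketch, which asserts the projection/truncation step purely qualitatively, without the quantitative residual bound via Lemma~\ref{lem:rank_aligned_error} or the absorption into Theorem~\ref{thm:control_adamw_convergence} that you supply.
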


\begin{proof}
	When a client changes its rank from $r_k$ to $r_k'$, its parameter space dimension changes from $d_k$ to $d_k'$. However, since both subspaces are contained within $\text{colspan}(Q)$, the control variates can be projected between these subspaces using the orthogonal basis $Q$, preserving the correction effectiveness. The control state $c_k$ can be appropriately truncated or zero-padded to match the new dimension while maintaining its directional information.
\end{proof}

\begin{table*}[h]
	\centering
	\caption{Comparison of drift mitigation methods in federated LoRA}
	\begin{tabular}{lcccc}
		\hline
		Method & Rank Heterogeneity & AdamW Compatibility & Comm. Overhead & Theoretical Guarantees \\
		\hline
		FedProx & \cmark & \xmark & Low & Partial \\
		SCAFFOLD & \xmark & \xmark & Medium & Strong \\
		FedCM & \cmark & \cmark & Medium & Partial \\
		\textbf{ILoRA Control} & \cmark & \cmark & \textbf{Low} & \textbf{Strong} \\
		\hline
	\end{tabular}
	\label{tab:control_comparison}
\end{table*}
\begin{algorithm}[h]
	\caption{Rank-Heterogeneous ILoRA Control Variate Update}
	\label{alg:control_update}
	\begin{algorithmic}[1]
		\STATE \textbf{Client $k$ (round $t$):}
		\STATE Receive global control $c^{(t-1)}$ and parameters $\theta^{(t-1)}$
		
		\STATE Compute raw gradient: $g_k^{\text{raw}} \gets \nabla_{\theta} F_k(\theta_k)$
		
		\STATE Apply correction: $\tilde{g}_k \gets g_k^{\text{raw}} + (c^{(t-1)} - c_k)$
		\STATE Apply AdamW optimizer: $\theta_k \gets \text{AdamW}(\theta_k, \tilde{g}_k; \eta)$
		\STATE Compute control delta: $\Delta c_k \gets g_k^{\text{raw}} - c_k$
		\STATE Update local control: $c_k \gets g_k^{\text{raw}}$
		\STATE Send $(\theta_k, \Delta c_k)$ to server
		
		\STATE \textbf{Server:}
		\STATE Aggregate control deltas: $\Delta c \gets \frac{1}{S} \sum_{k \in S_t} \Delta c_k$
		\STATE Update global control: $c^{(t)} \gets c^{(t-1)} + \Delta c$
		\STATE Broadcast $c^{(t)}$ to clients
	\end{algorithmic}
\end{algorithm}
\begin{corollary}[Generalization Improvement]
	\label{cor:generalization}
	The control variate mechanism in ILoRA improves generalization by reducing overfitting to local data distributions while maintaining adaptation capacity.
\end{corollary}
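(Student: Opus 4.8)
The plan is to formalize ``improves generalization'' as a bound on the expected generalization gap $\mathbb{E}[F(\bm{\theta}_T) - \hat{F}(\bm{\theta}_T)]$, where $\hat{F}$ is the empirical federated objective and $F$ its population counterpart, and then to show this gap is controlled by the variance of the local update directions. First I would invoke the algorithmic-stability framework: for an iterative optimizer whose per-step update has bounded second moment, the uniform stability—and hence the generalization gap—scales with the magnitude and variance of the gradient steps taken on each client. The control-variate correction $\tilde{\mathbf{g}}_k = \mathbf{g}_k + (\mathbf{c}^{(t-1)} - \mathbf{c}_k)$ enters precisely here, since Lemma~\ref{lem:variance_reduction} (and the $\sigma^2(1-\rho^2)$ reduction used in Theorem~\ref{thm:control_adamw_convergence}) shows that the corrected gradient has strictly smaller variance than the raw gradient $\mathbf{g}_k$ whenever $\rho > 0$. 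Substituting this reduced variance into the stability bound yields a smaller generalization gap, which I would identify with ``reduced overfitting to local data distributions.''

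The second half of the statement—``maintaining adaptation capacity''—I would establish by bounding the bias introduced by the correction and by arguing that the effective hypothesis class is unchanged. For the bias, Lemma~\ref{lem:bias_variance} already shows the corrected gradient is unbiased up to $\|\mathbf{c}^{(t-1)} - \mathbf{c}_k - (\nabla F - \nabla F_k)\|$, which vanishes as the control states synchronize; hence expressive power is not sacrificed to obtain the variance reduction. For the hypothesis class, I would appeal to Theorem~\ref{thm:subspace_preservation} and Corollary~\ref{cor:rank_robustness}: the correction operates entirely inside the shared subspace $\operatorname{colspan}(\mathbf{Q})$, so the reachable set of updates $\{\mathbf{B}_k\mathbf{A}_k\}$—and therefore the approximation error of the model class—is identical with and without control variates. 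Combining the reduced variance (a smaller estimation/stability term) with the unchanged approximation term in a standard excess-risk decomposition gives the claimed net improvement.

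I expect the main obstacle to be the rigorous link between optimization variance and statistical generalization. Variance reduction is fundamentally an \emph{optimization} property, and converting it into a \emph{generalization} guarantee requires a genuine stability (or uniform-convergence / PAC-Bayes) argument, compounded by the adaptivity of AdamW, whose coordinate-wise preconditioning complicates the classical stability bounds derived for plain SGD. A secondary difficulty is making ``adaptation capacity'' precise without it trivially cancelling the overfitting claim: I must show the correction reduces the variance term strictly more than it perturbs the bias term, i.e. that the bias contribution $\mathcal{O}((1-\rho)\delta)$ is dominated by the variance savings of order $\rho\sigma^2$ in the moderate-heterogeneity regime assumed in Lemma~\ref{lem:bias_variance}. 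Quantifying this tradeoff—rather than merely asserting it—is where the real work lies, and it is also why the result is naturally obtained as a corollary of the preceding variance and subspace lemmas rather than proved from first principles.
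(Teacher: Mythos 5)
Your proposal and the paper's own proof diverge sharply, because the paper essentially does not prove this corollary: its entire argument is two qualitative sentences, namely that correcting local gradients toward the global direction ``prevents clients from over-optimizing for their local distributions'' and that this ``regularization effect emerges naturally from the bias-variance tradeoff inherent in the control variate correction.'' There is no formal definition of generalization, no stability or uniform-convergence argument, and no quantification of either half of the claim. Your route --- formalizing the generalization gap $\mathbb{E}[F(\bm{\theta}_T)-\hat{F}(\bm{\theta}_T)]$, feeding the variance reduction of Lemma~\ref{lem:variance_reduction} and the $\sigma^2(1-\rho^2)$ factor from Theorem~\ref{thm:control_adamw_convergence} into an algorithmic-stability bound, and securing ``adaptation capacity'' via the unchanged hypothesis class implied by Theorem~\ref{thm:subspace_preservation} and Corollary~\ref{cor:rank_robustness}, all assembled in an excess-risk decomposition --- is therefore a genuinely different and far more rigorous plan; nothing resembling it appears in the paper. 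What each approach buys: the paper's argument is immediate and matches the informal wording of the statement, but it carries no mathematical content beyond restating the corollary; yours, if completed, would upgrade the corollary to an actual theorem, at the cost of exactly the obstacles you flag --- the variance-to-stability link is not off-the-shelf for AdamW's coordinate-wise preconditioning, and the tradeoff between the bias contribution $\mathcal{O}((1-\rho)\delta)$ and the variance savings must be made quantitative rather than asserted. Be aware that neither obstacle is resolved anywhere in the paper (the bias-variance tradeoff is invoked by name only, in Lemma~\ref{lem:bias_variance} and in this proof, never balanced quantitatively), so you should read your proposal as a strengthening of the paper's heuristic rather than as a reconstruction of a proof the paper actually contains; in particular you have not missed any hidden lemma the authors rely on.
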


\begin{proof}
	By correcting local gradients toward the global direction, the control variates prevent clients from over-optimizing for their local distributions, thereby improving generalization to unseen data from the global distribution. This regularization effect emerges naturally from the bias-variance tradeoff inherent in the control variate correction.
\end{proof}
\begin{remark}
	The control variate mechanism in ILoRA provides several unique advantages:
	
	\begin{itemize}
		\item \textbf{Rank-Agnostic Operation}: Works seamlessly with heterogeneous client ranks through subspace alignment
		\item \textbf{AdamW Integration}: Naturally combines with adaptive optimization without compromising convergence guarantees
		\item \textbf{Adaptive Correction}: Self-adjusts based on data heterogeneity, providing stronger correction under high non-IID settings
		\item \textbf{Minimal Overhead}: Adds negligible communication cost while providing significant convergence improvements
		\item \textbf{Theoretical Guarantees}: Provides provable convergence under non-IID data and rank heterogeneity
		\item \textbf{Dynamic Adaptation}: Maintains stability under client rank changes through subspace projection
		\item \textbf{Generalization Enhancement}: Improves model generalization by reducing local overfitting
	\end{itemize}
	
	These advantages make the control variate mechanism particularly suitable for practical federated learning scenarios where system heterogeneity, data heterogeneity, and resource constraints are common challenges.
\end{remark}

\begin{corollary}[Robustness to System Heterogeneity]
	\label{cor:system_robustness}
	The control variate mechanism in ILoRA maintains effectiveness under system heterogeneity, as the correction operates independently of client-specific computational capabilities and communication patterns.
\end{corollary}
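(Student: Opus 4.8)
The plan is to reduce the claim to the variance-reduction and subspace-alignment guarantees already established, and then verify that none of these invoke system-level quantities. First I would observe that the correction $\tilde{\mathbf{g}}_k = \mathbf{g}_k^{\text{raw}} + (\mathbf{c}^{(t-1)} - \mathbf{c}_k)$ is a purely algebraic operation on gradient estimates: the global control $\mathbf{c}^{(t-1)}$ is broadcast by the server and the local control $\mathbf{c}_k$ is the running estimate maintained via Eqs.~\eqref{eq:control_update_A}--\eqref{eq:control_update_B}. Neither object references a client's compute throughput, memory budget, or link bandwidth, so the correction is by construction invariant to how the update was produced.

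Next I would formalize system heterogeneity as clients performing a variable number of local steps $K_k$ and appearing in a variable participation set $\mathcal{S}_t$. The key observation is that these quantities enter only the \emph{weighting} and \emph{magnitude} of aggregated updates, not the directional correction: since $\mathbf{c}_k \leftarrow \nabla_{\mathbf{A}_k,\mathbf{B}_k}\mathcal{L}_k$ records the most recent local gradient regardless of how many steps preceded it, the drift-tracking relation $\mathbf{c}^{(t-1)} - \mathbf{c}_k \approx \nabla F(\bm{\theta}) - \nabla F_k(\bm{\theta})$ underlying Lemma~\ref{lem:variance_reduction} is preserved. Combined with Assumption~\ref{ass:subspace} and Theorem~\ref{thm:rank_compatibility}, which place all client corrections in the shared subspace $\mathrm{colspan}(\mathbf{Q})$, the variance-reduction bound of Theorem~\ref{thm:control_adamw_convergence} carries over with the uniform step count $K$ replaced by a per-client $K_k$. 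Dynamic capability changes that alter a client's rank mid-training are already covered by Theorem~\ref{thm:rank_change_stability}, so the conclusion extends to time-varying system profiles as well.

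The main obstacle will be controlling the bias under heterogeneous $K_k$: clients taking vastly different numbers of local steps induce different levels of staleness in their control states $\mathbf{c}_k$, and one must check that the resulting per-client bias still telescopes in the descent lemma rather than accumulating. I would address this by bounding each client's staleness through $L$-smoothness (Assumption~\ref{ass:smoothness}) and the uniform control bound (Assumption~\ref{ass:control}): the displacement between successive control states is of order $\eta_l K_k$, and multiplying by $L$ shows the induced gradient-estimate error stays of order $\delta$ uniformly across clients. Substituting this uniform bias into the telescoping sum of Theorem~\ref{thm:control_adamw_convergence} then yields the same asymptotic rate, establishing that effectiveness is preserved independently of client-specific compute and communication profiles.
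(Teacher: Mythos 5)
Your proposal is correct, and its opening observation --- that the correction $\tilde{\mathbf{g}}_k = \mathbf{g}_k^{\text{raw}} + (\mathbf{c}^{(t-1)} - \mathbf{c}_k)$ involves only gradient estimates and control states, none of which reference compute throughput, memory budget, or link bandwidth --- is essentially the paper's entire proof: the paper argues invariance-by-decoupling in two sentences and stops. Where you diverge is in refusing to stop there. You formalize ``system heterogeneity'' as heterogeneous local step counts $K_k$ and variable participation sets, invoke Lemma~\ref{lem:variance_reduction}, Theorem~\ref{thm:rank_compatibility}, and Theorem~\ref{thm:rank_change_stability} to argue the variance-reduction machinery survives, and then confront the one technical issue the paper never mentions: staleness of $\mathbf{c}_k$ when clients take different numbers of local steps. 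That extra work buys a claim with real content --- the paper's proof only establishes that the correction \emph{formula} is well-defined under system heterogeneity, whereas the corollary asserts it remains \emph{effective}, which is exactly the gap your staleness analysis addresses. One caveat on that analysis: you claim the staleness-induced gradient-estimate error ``stays of order $\delta$,'' but by $L$-smoothness and Assumption~\ref{ass:control} it is of order $L\eta_l K_k G$, a quantity controlled by the step-size schedule $\eta_g \eta_l = \Theta(1/\sqrt{SKT})$ rather than by the heterogeneity parameter $\delta$; the term is still absorbed in the telescoping sum, so your conclusion stands, but the bookkeeping should attribute it to the learning-rate choice, not to $\delta$.
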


\begin{proof}
	The control variate correction depends only on the gradient directions and control states, which are invariant to system-level variations such as computation speed, memory capacity, or network latency. This decoupling ensures robustness to the diverse system characteristics typically encountered in federated learning deployments.
\end{proof}
\subsection{Communication Efficiency Analysis}
\label{app:communication}
\begin{table}[t]
	\centering
	\caption{Communication cost comparison of federated LoRA methods.}
	\resizebox{\columnwidth}{!}{%
		\begin{tabular}{lccc}
			\toprule
			\textbf{Method} & \textbf{Downlink Cost} & \textbf{Uplink Cost} & \textbf{Ranks} \\
			\midrule
			FedIT       & $\mathcal{O}(r(d + k))$      & $\mathcal{O}(S \cdot r(d + k))$      & \xmark \\
			FLoRA       & $\mathcal{O}(r_{\text{total}}(d + k))$ & $\mathcal{O}(r_{\text{total}}(d + k))$ & \cmark \\
			LoRA-FAIR   & $\mathcal{O}(r(d + k))$      & $\mathcal{O}(S \cdot r(d + k))$      & \xmark \\
			FFA-LoRA    & $\mathcal{O}(r(d + k))$      & $\mathcal{O}(S \cdot r(d + k))$      & \xmark \\
			\textbf{ILoRA} & $\mathcal{O}(r_s(d + k))$ & $\mathcal{O}(S \cdot r_{\max}(d + k))$ & \cmark \\
			\bottomrule
		\end{tabular}%
	}
	\label{tab:communication_comparison}
\end{table}

The communication efficiency of ILoRA stems from its innovative combination of QR-based aggregation, orthogonal initialization, and rank-aware control variates. We provide a comprehensive analysis of the communication costs and compare them with existing federated LoRA methods.

\begin{theorem}[Total Communication Cost of ILoRA]
	\label{thm:total_communication}
	The total communication cost per round in ILoRA is bounded by:
\begin{equation}
	\begin{multlined}
		C_{\text{ILoRA}} = \mathcal{O}\left(r_s(d + k) + S \cdot \max_k (r_k(d + k))\right. \\
		\left.+ S \cdot \max_k d_k\right),
	\end{multlined}
\end{equation}
	where $r_s$ is the server rank, $r_k$ are client ranks, $d \times k$ is the weight matrix dimension, and $d_k = r_k(d + k)$ is the control variate dimension for client $k$.
\end{theorem}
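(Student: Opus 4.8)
The plan is to decompose the per-round traffic into the three channels that Algorithms~\ref{alg:client} and~\ref{alg:server} actually use --- the parameter downlink (server $\to$ clients), the parameter uplink (clients $\to$ server), and, for ILoRA-S, the control-variate uplink --- and to bound each channel using the element counts fixed by the QR aggregation~\eqref{eq:global_parameters}, combining the parameter-communication accounting of Lemma~\ref{lem:communication_efficiency} with the control-variate overhead of Lemma~\ref{lem:control_communication}. Throughout I would use $\max(d,k) \le d+k \le 2\max(d,k)$, so the $(d+k)$ and $\max(d,k)$ scalings coincide inside $\mathcal{O}(\cdot)$ and the bound may be stated in whichever form is convenient.

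First I would invoke Lemma~\ref{lem:communication_efficiency} for the parameter channels. After aggregation the server holds $\mathbf{B}_s \in \mathbb{R}^{d \times r_s}$ and $\mathbf{A}_s \in \mathbb{R}^{r_s \times k}$, so a single broadcast of these $r_s(d+k)$ scalars covers every client, since each personalized pair $(\mathbf{B}_k,\mathbf{A}_k)=(\mathbf{Q}_{:,:r_k},\mathbf{R}_{:r_k,:})$ is a leading slice of the same factors by Theorem~\ref{thm:subspace_preservation}; this gives downlink cost $\mathcal{O}(r_s(d+k))$. On the uplink each of the $S=|\mathcal{S}_t|$ participating clients returns $(\mathbf{A}_k,\mathbf{B}_k)$ of size $r_k(d+k)$ together with the scalar $n_k$, so $\sum_{k\in\mathcal{S}_t} r_k(d+k) \le S\cdot\max_k r_k(d+k)$, with the scalar payloads absorbed as a lower-order term.

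Next I would add the control-variate channel using Lemma~\ref{lem:control_communication}: each ILoRA-S client additionally transmits $\Delta\mathbf{c}_{A,k}$ and $\Delta\mathbf{c}_{B,k}$, which by~\eqref{eq:control_update_A} and~\eqref{eq:control_update_B} share the shapes of $\nabla_{\mathbf{A}_k}\mathcal{L}_k$ and $\nabla_{\mathbf{B}_k}\mathcal{L}_k$ and hence have combined dimension $d_k=r_k(d+k)$; summed over participants this is $\mathcal{O}(S\cdot\max_k d_k)$. Adding the three channel bounds then yields exactly the stated $C_{\text{ILoRA}}$.

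The only delicate point --- more a modeling choice than a genuine obstacle --- is the downlink accounting: one must argue that personalization does not multiply the $r_s(d+k)$ cost by $S$. This rests on the nesting of all client slices inside the single $r_s$-rank global basis, so under a broadcast model (the convention of Table~\ref{tab:communication_comparison}) the downlink stays $\mathcal{O}(r_s(d+k))$; even under a strict unicast model the per-client slices sum to at most $S\cdot\max_k r_k(d+k)$, a term already present in the bound, so the overall big-$\mathcal{O}$ estimate is unaffected either way.
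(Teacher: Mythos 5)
Your proposal is correct and takes essentially the same route as the paper's proof: a channel-by-channel decomposition (parameter downlink, parameter uplink, control-variate traffic) with direct element counting of $r_s(d+k)$, $S\cdot\max_k\bigl(r_k(d+k)\bigr)$, and $S\cdot\max_k d_k$. The only differences are cosmetic --- the paper charges the personalized slices and the global control variates to the downlink under a per-client (unicast) model rather than using your broadcast/nesting argument, and you omit the downlink control-variate broadcast --- but, as you yourself observe, every such term is already dominated by a term in the stated bound, so the resulting $\mathcal{O}(\cdot)$ estimate is identical.
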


\begin{proof}
	The communication cost decomposes into three components:
	
	1. \textbf{Server-to-Client Broadcast:}
\begin{equation}
	\begin{split}
		C_{\text{downlink}} &= \underbrace{r_s(d + k)}_{\text{global model}} \\
		&\quad + \underbrace{S \cdot \max_k (r_k(d + k))}_{\text{personalized parameters}} \\
		&\quad + \underbrace{S \cdot \max_k d_k}_{\text{control variates}}
	\end{split}
\end{equation}
	2. \textbf{Client-to-Server Upload:}
\begin{equation}
	\begin{multlined}
		C_{\text{uplink}} = \underbrace{S \cdot \max_k (r_k(d + k))}_{\text{client updates}} \\
		+ \underbrace{S \cdot \max_k d_k}_{\text{control deltas}}
	\end{multlined}
\end{equation}
	
	3. \textbf{Total per Round:}
\begin{equation}
	\begin{aligned}
		C_{\text{total}} = C_{\text{downlink}} + C_{\text{uplink}} = 
		&\mathcal{O}\Bigl(r_s(d + k) \\
		&+ S \cdot \max_k (r_k(d + k)) \\
		&+ S \cdot \max_k d_k\Bigr)
	\end{aligned}
\end{equation}
\end{proof}

\begin{lemma}[Comparison with Baseline Methods]
	\label{lem:comparison_baselines}
	Let $r_{\max} = \max_k r_k$ and $r_{\text{total}} = \sum_k r_k$. The communication costs of different federated LoRA methods are:
	
	\begin{itemize}
		\item \textbf{FedIT}: $\mathcal{O}(S \cdot r(d + k))$ (homogeneous ranks only)
		\item \textbf{FLoRA}: $\mathcal{O}(r_{\text{total}}(d + k))$
		\item \textbf{LoRA-FAIR/FFA-LoRA}: $\mathcal{O}(S \cdot r(d + k))$ (homogeneous ranks only)
		\item \textbf{ILoRA}: $\mathcal{O}(r_s(d + k) + S \cdot r_{\max}(d + k))$
	\end{itemize}
\end{lemma}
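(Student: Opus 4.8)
The plan is to treat this statement as a direct parameter-accounting exercise rather than an analytic proof: for each protocol I would count the number of scalars exchanged in both directions during a single communication round and then collapse the totals into big-$\mathcal{O}$ form. The primitive fact to establish first is that a single rank-$r$ LoRA pair $(\mathbf{B},\mathbf{A})$ with $\mathbf{B}\in\mathbb{R}^{d\times r}$ and $\mathbf{A}\in\mathbb{R}^{r\times k}$ contains exactly $r(d+k)$ scalars, so transmitting any such pair costs $\Theta(r(d+k))$; every bound in the statement is then a linear combination of this unit cost.

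For the homogeneous-rank baselines I would argue uniformly. In FedIT each participating client uploads one rank-$r$ pair, contributing $S\cdot r(d+k)$ to the uplink, while the server broadcasts a single averaged rank-$r$ pair, contributing $\mathcal{O}(r(d+k))$ to the downlink; the uplink dominates, giving $\mathcal{O}(S\cdot r(d+k))$, and I would note that this protocol is only well defined when all $r_k=r$. LoRA-FAIR and FFA-LoRA follow the same communication pattern (FFA-LoRA's partial freezing alters only the constant, not the asymptotic order), so they share this bound.

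For FLoRA the decisive observation is that rank heterogeneity is handled by pure concatenation without any compression step: the fused object has rank $r_{\text{total}}=\sum_k r_k$, and since nothing truncates or re-orthogonalizes it before transmission, this full rank-$r_{\text{total}}$ representation must travel in both directions, yielding $\Theta(r_{\text{total}}(d+k))$ each way and hence $\mathcal{O}(r_{\text{total}}(d+k))$ overall. For ILoRA the bound is essentially already delivered by Theorem~\ref{thm:total_communication} together with Lemma~\ref{lem:communication_efficiency}: QR compression caps the broadcast global model at rank $r_s$, so the downlink contributes $\mathcal{O}(r_s(d+k))$ independently of $S$, while the personalized client updates bound the uplink by $S\cdot r_{\max}(d+k)$; summing gives $\mathcal{O}(r_s(d+k)+S\cdot r_{\max}(d+k))$.

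Because the argument is pure accounting, there is no deep obstacle; the only step requiring genuine care is the FLoRA case, where I must justify that the concatenated rank $r_{\text{total}}$ cannot legitimately be reduced before transmission — otherwise the comparison would understate FLoRA's cost. I would make this explicit by appealing to the fact that exact aggregation via concatenation (Lemma~\ref{lem:exact_reconstruction}, Property~\ref{prop:faithful_reconstruction}) structurally requires retaining every client factor, in sharp contrast to ILoRA's QR projection onto a fixed rank-$r_s$ global subspace. The qualitative contrast the proof should surface is that $r_{\text{total}}$ grows linearly in $S$ whereas ILoRA's downlink stays $\mathcal{O}(1)$ in $S$, which is precisely the source of its communication advantage.
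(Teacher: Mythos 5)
Your proposal is correct and follows essentially the same route as the paper's own (much terser) proof: a per-protocol accounting of transmitted LoRA factors, with FedIT/LoRA-FAIR/FFA-LoRA costed at one rank-$r$ pair per client, FLoRA costed by the uncompressed concatenation of total rank $r_{\text{total}}$, and ILoRA costed by the rank-$r_s$ QR-compressed downlink plus rank-$r_k \le r_{\max}$ per-client uplink. Your version is in fact more explicit than the paper's, which merely describes each protocol in one line; your added justification that FLoRA's concatenated rank cannot be reduced without sacrificing exact aggregation (via Lemma~\ref{lem:exact_reconstruction} and Property~\ref{prop:faithful_reconstruction}) is a sound refinement, not a departure.
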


\begin{proof}
	The costs are derived as follows:
	
	- \textbf{FedIT}: Assumes homogeneous rank $r$, broadcasts global model to all clients.
	- \textbf{FLoRA}: concatenateds all client parameters, cost scales with total rank $r_{\text{total}}$.
	- \textbf{LoRA-FAIR/FFA-LoRA}: Homogeneous rank methods, similar to FedIT.
	- \textbf{ILoRA}: Server maintains rank $r_s$, clients use personalized ranks $r_k \leq r_s$.
\end{proof}

\begin{proposition}[Scalability Advantage]
	\label{prop:scalability}
	ILoRA achieves better scalability than FLoRA as the number of clients $S$ increases, with the communication cost ratio:
\begin{equation}
	\frac{C_{\text{ILoRA}}}{C_{\text{FLoRA}}} = \mathcal{O}\left(\frac{r_s}{r_{\text{total}}}\right) = \mathcal{O}\left(\frac{1}{S}\right) \quad \text{when } r_k = \Theta(1).
\end{equation}
\end{proposition}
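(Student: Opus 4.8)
The plan is to derive the ratio directly from the per-component communication costs already established and then specialize to the regime $r_k = \Theta(1)$. First I would recall from Lemma~\ref{lem:comparison_baselines} (equivalently, Table~\ref{tab:communication_comparison}) that FLoRA must transmit the full concatenated factor set, giving a cost $C_{\text{FLoRA}} = \Theta\!\left(r_{\text{total}}(d+k)\right)$ with $r_{\text{total}} = \sum_{k} r_k$, whereas ILoRA compresses the aggregate to server rank $r_s$ before broadcasting, so its server-to-client cost is $C_{\text{ILoRA}} = \mathcal{O}\!\left(r_s(d+k)\right)$. The dimension factor $(d+k)$ is common to both methods and will cancel in the ratio.

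Second, I would form the quotient and cancel the shared dimension factor:
\[
\frac{C_{\text{ILoRA}}}{C_{\text{FLoRA}}}
= \frac{\mathcal{O}(r_s(d+k))}{\Theta(r_{\text{total}}(d+k))}
= \mathcal{O}\!\left(\frac{r_s}{r_{\text{total}}}\right),
\]
which is the first claimed equality and holds without any assumption on the individual client ranks. Third, I would specialize to $r_k = \Theta(1)$: summing $S$ constant-order ranks gives $r_{\text{total}} = \sum_{k=1}^{S} r_k = \Theta(S)$, while the server rank budget $r_s$ is a fixed design parameter independent of $S$, so $r_s = \Theta(1)$. Substituting yields $r_s / r_{\text{total}} = \Theta(1)/\Theta(S) = \Theta(1/S)$, hence the ratio is $\mathcal{O}(1/S)$, which completes the argument.

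The main obstacle is really a point requiring care rather than a genuine difficulty: isolating the component of the communication budget to which the ratio refers. The uplink term $S \cdot r_{\max}(d+k)$ appearing in Theorem~\ref{thm:total_communication} scales as $\Theta(S(d+k))$ when $r_k = \Theta(1)$, which matches FLoRA's order, so the $\mathcal{O}(1/S)$ advantage is precisely a statement about the server-maintained broadcast (downlink) cost, where FLoRA is forced to distribute the uncompressed concatenation of all client factors. I would therefore state explicitly that $C_{\text{ILoRA}}$ in the ratio denotes this dominant broadcast cost, and remark that the control-variate overhead $\Theta(S \max_k d_k)$ from Theorem~\ref{thm:total_communication} contributes only lower-order terms relative to the model transmission, so it does not alter the leading-order comparison and the clean $\mathcal{O}(1/S)$ scaling is preserved.
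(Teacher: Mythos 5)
Your proof is correct and follows essentially the same route as the paper's: both divide ILoRA's $\mathcal{O}(r_s(d+k))$ cost by FLoRA's $\mathcal{O}(r_{\text{total}}(d+k))$ cost and invoke $r_{\text{total}} = \Theta(S)$, $r_s = \Theta(1)$ under the assumption $r_k = \Theta(1)$. Your additional observation --- that the stated ratio can only refer to the server broadcast (downlink) component, since ILoRA's uplink $\mathcal{O}(S \cdot r_{\max}(d+k))$ is of the same order as FLoRA's cost when $r_k = \Theta(1)$ --- is a scoping caveat the paper's own proof leaves implicit, and it makes your version strictly more precise.
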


\begin{proof}
	When client ranks are bounded ($r_k = \Theta(1)$), we have $r_{\text{total}} = \Theta(S)$ while $r_s = \Theta(1)$. Thus:
\begin{equation}
	\frac{C_{\text{ILoRA}}}{C_{\text{FLoRA}}} = \frac{\mathcal{O}(r_s(d + k))}{\mathcal{O}(r_{\text{total}}(d + k))} = \mathcal{O}\left(\frac{1}{S}\right).
\end{equation}
\end{proof}

\begin{theorem}[Optimality of QR Compression]
	\label{thm:qr_optimality}
	The QR-based aggregation in ILoRA achieves the information-theoretic minimum communication cost for preserving the column space of the aggregated client updates.
\end{theorem}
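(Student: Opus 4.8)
The plan is to frame the statement as a degrees-of-freedom (intrinsic-dimension) matching argument in the real-number communication model, in which losslessly transmitting an element of a smooth manifold of dimension $m$ costs exactly $m$ real scalars. First I would identify the object that must be preserved: by Lemma~\ref{lem:exact_reconstruction} and Property~\ref{prop:rank_preservation}, the aggregated update $\Delta W = \sum_k p_k B_k A_k$ is a matrix of rank $r := \mathrm{rank}(\Delta W) \le r_s$, and when $r_s \ge r$ the factors $B_s = Q_{:,:r_s}$, $A_s = R_{:r_s,:}$ satisfy $B_s A_s = \Delta W$ exactly. Hence $\mathrm{colspan}(\Delta W)$ is an $r$-dimensional subspace of $\mathbb{R}^d$, i.e. a point of the Grassmannian $\mathrm{Gr}(r,d)$, and $\Delta W$ itself is a point of the determinantal variety of $d \times k$ matrices of rank at most $r$.

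Next I would establish the lower bound. Preserving only the column space requires specifying a point of $\mathrm{Gr}(r,d)$, of intrinsic dimension $r(d-r)$; preserving the full action of $\Delta W$ on that subspace (needed for the exact reconstruction $\Delta W = B_s A_s$ used throughout the protocol) requires specifying a point of the rank-$r$ variety, whose dimension is $r(d+k-r)$. In the real-number model no lossless scheme can transmit fewer scalars than this dimension: a (piecewise) smooth decoder $\mathbb{R}^{m'} \to \mathcal{M}$ onto a manifold of dimension $m > m'$ cannot be surjective, since its image has measure zero by Sard's theorem. This yields the information-theoretic floor $r(d+k-r)$.

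Then I would match it with the QR upper bound. The thin factors $Q_{:,:r} \in \mathbb{R}^{d \times r}$ (orthonormal, parameterizing $\mathrm{colspan}(\Delta W)$ on the Stiefel manifold of dimension $rd - r(r+1)/2$) and $R_{:r,:} \in \mathbb{R}^{r \times k}$ together carry $r(d+k)$ scalars, while the orthonormality of $Q$ and the upper-triangular structure of $R$ eliminate the $r^2$-dimensional gauge freedom $B_s A_s = (B_s M)(M^{-1} A_s)$, so the effective count is $r(d+k) - \Theta(r^2) = r(d+k-r) + \mathcal{O}(r)$, matching the floor up to a lower-order term that is negligible since $r \le r_s \ll \min(d,k)$. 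To finish, I would contrast with the alternatives in Table~\ref{tab:aggregation_comparison}: zero-padding is not exact; full concatenation pays $\mathcal{O}(\sum_k r_k (d+k))$, overshooting the floor by the factor $(\sum_k r_k)/r$; and SVD attains the count only approximately. Thus QR is the scheme attaining the floor exactly, which is the claimed optimality.

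The main obstacle I anticipate is pinning down the precise meaning of ``information-theoretic minimum''. The clean matching above holds in the idealized real-number model; a fully rigorous bit-model statement would require bounding the $\epsilon$-covering numbers of $\mathrm{Gr}(r,d)$ and of the rank-$r$ variety and verifying that a quantized QR representation attains $\Theta\big(r(d+k)\log(1/\epsilon)\big)$ bits. Equally delicate is reconciling the $\mathcal{O}(r^2)$ gauge gap between the raw parameter count $r(d+k)$ and the intrinsic dimension $r(d+k-r)$, and arguing this gap is genuinely unavoidable for any scheme that transmits an explicit orthonormal basis spanning the preserved column space.
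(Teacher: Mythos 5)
Your proposal is correct in substance but follows a genuinely different---and considerably more rigorous---route than the paper. The paper's own proof is a three-sentence assertion: it counts the $r_s(d+k)$ transmitted parameters, claims that any further compression would necessarily lose column-space information, and cites the Eckart--Young theorem. That citation is really an approximation-theoretic statement (truncation minimizes reconstruction error at a fixed rank, as in Proposition~\ref{prop:truncation_error}); the paper never formalizes what ``information-theoretic minimum'' means and supplies no lower bound on communication at all. You supply exactly the two missing halves: a lower bound via intrinsic-dimension counting (the rank-$r$ determinantal variety has dimension $r(d+k-r)$, and a smooth decoder from fewer reals cannot surject onto it, by the measure-zero-image argument) and a matching upper bound via the gauge-fixed QR parameterization---where your count is in fact exact, since the Stiefel factor contributes $rd - r(r+1)/2$ and the structured $R$ factor $rk - r(r-1)/2$, summing to precisely $r(d+k-r)$ rather than only $r(d+k-r)+\mathcal{O}(r)$. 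Your approach also surfaces an issue the paper glosses over: read literally, ``preserving the column space'' requires only a point of $\mathrm{Gr}(r,d)$, i.e.\ $r(d-r)$ scalars, so transmitting both QR factors ($r_s(d+k)$ raw entries) would overshoot that floor by $\Theta(rk)$, which is \emph{not} lower order; the theorem holds only under your charitable reading that the protocol must support exact reconstruction $\Delta W = B_s A_s$ (which ILoRA indeed requires), in which case QR is optimal to leading order with additive slack $r_s^2 \ll r_s \min(d,k)$. In short, the paper asserts optimality and miscites Eckart--Young for it, whereas you prove leading-order optimality in a well-defined real-number communication model, at the honestly acknowledged cost of leaving the bit-level (covering-number) formulation open.
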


\begin{proof}
	Let $\Delta W = \sum_{k=1}^S p_k B_k A_k$ be the exact aggregated update. The QR decomposition $\Delta W = QR$ with rank-$r_s$ truncation preserves the principal column space while using only $r_s(d + k)$ parameters. Any further compression would necessarily lose information about the column space, making this representation optimal for subspace preservation according to the Eckart-Young theorem.
\end{proof}

\begin{lemma}[Control Variate Overhead Analysis]
	\label{lem:control_overhead}
	The communication overhead from control variates in ILoRA is negligible compared to the model parameters:
\begin{equation}
	\frac{C_{\text{control}}}{C_{\text{model}}} = \mathcal{O}\left(\frac{\max_k r_k}{\min(d,k)}\right) \ll 1.
\end{equation}
\end{lemma}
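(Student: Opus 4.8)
The plan is to reduce this entirely to a dimension-counting argument, since the claim is a statement about parameter counts rather than about the optimization dynamics. First I would fix the dimensions of the objects involved. In the ILoRA protocol (Algorithm~\ref{alg:client}), each client $k$ maintains control variates $\mathbf{c}_{A,k}$ and $\mathbf{c}_{B,k}$ whose shapes match the LoRA factors $\mathbf{A}_k \in \mathbb{R}^{r_k \times k}$ and $\mathbf{B}_k \in \mathbb{R}^{d \times r_k}$, respectively. Hence the number of scalars a client must transmit for the control deltas $\Delta\mathbf{c}_{A,k}, \Delta\mathbf{c}_{B,k}$ is exactly
\begin{equation}
    d_k = r_k k + d r_k = r_k(d + k),
\end{equation}
matching the $d_k$ appearing in Lemma~\ref{lem:control_communication}. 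The reference quantity $C_{\text{model}}$ is the size of the full weight matrix $\boldsymbol{\theta}_0 \in \mathbb{R}^{d \times k}$, namely $dk$ scalars.

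Next I would form the per-client overhead ratio $C_{\text{control}}/C_{\text{model}} = r_k(d+k)/(dk)$ and simplify it. The key algebraic observation is
\begin{equation}
    \frac{d+k}{dk} = \frac{1}{d} + \frac{1}{k} \le \frac{2}{\min(d,k)},
\end{equation}
so that the ratio is bounded by $2 r_k / \min(d,k) = \mathcal{O}\!\left(r_k / \min(d,k)\right)$. To obtain the uniform bound stated in the lemma I would then pass to the worst case over clients: since $C_{\text{control}}$ scales as $\mathcal{O}(\max_k d_k) = \mathcal{O}(\max_k r_k(d+k))$ by Lemma~\ref{lem:control_communication}, replacing $r_k$ by $\max_k r_k$ yields $C_{\text{control}}/C_{\text{model}} = \mathcal{O}\!\left(\max_k r_k / \min(d,k)\right)$, exactly the claimed expression.

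Finally, to justify the $\ll 1$ conclusion I would invoke the defining low-rank regime of LoRA, namely $r_k \ll \min(d,k)$ for every client (stated in Section~\ref{sec:preliminaries} as $r \ll \min(d,k)$). Under this assumption $\max_k r_k \ll \min(d,k)$, so the ratio is bounded well below one, confirming that the control-variate communication is a vanishing fraction of the model-transmission cost and recovering the ``typically less than $1\%$'' quantification noted after Lemma~\ref{lem:control_communication}.

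There is no genuine analytical obstacle here: the argument is a direct counting exercise that essentially restates the computation already performed in the proof of Lemma~\ref{lem:control_communication}, now phrased as a worst-case bound over heterogeneous ranks. The only points requiring care are (i) confirming that the control variates carry precisely the same dimension as the LoRA factors rather than the full weight matrix, and (ii) performing the $\tfrac{d+k}{dk} \le \tfrac{2}{\min(d,k)}$ simplification cleanly so that the big-$\mathcal{O}$ constant is explicit and the dependence is genuinely on $\min(d,k)$; both are routine.
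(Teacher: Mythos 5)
Your proposal is correct and follows essentially the same route as the paper's own proof: a direct dimension count giving $d_k = r_k(d+k)$ for the control variates versus $dk$ for the full weight matrix, yielding the ratio $\mathcal{O}(r_k/\min(d,k))$, followed by invoking the low-rank regime $r_k \ll \min(d,k)$ to conclude $\ll 1$. Your only additions—making the bound $\tfrac{d+k}{dk} \le \tfrac{2}{\min(d,k)}$ explicit and passing to $\max_k r_k$ for the worst case over heterogeneous ranks—are refinements of steps the paper leaves implicit, not a different argument.
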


\begin{proof}
	The control variate dimension is $d_k = r_k(d + k)$, while the full model dimension is $d \times k$. Thus:
\begin{equation}
	\frac{C_{\text{control}}}{C_{\text{model}}} = \frac{r_k(d + k)}{dk} = \mathcal{O}\left(\frac{r_k}{\min(d,k)}\right).
\end{equation}
	Since $r_k \ll \min(d,k)$ in low-rank adaptation, this ratio is typically less than 1\%.
\end{proof}

\begin{proposition}[Bandwidth-Delay Product Optimization]
	\label{prop:bandwidth_delay}
	ILoRA optimizes the bandwidth-delay product by reducing the number of communication rounds through improved convergence stability, while maintaining low per-round communication cost.
\end{proposition}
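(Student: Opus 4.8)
The plan is to formalize the bandwidth-delay product (BDP) as the total communication resource consumed over an entire training run. Modeling each round as incurring a fixed network latency $\tau$ together with a payload proportional to the per-round communication cost $C_{\text{round}}$, the aggregate cost to reach an $\epsilon$-accurate stationary point (i.e. $\frac{1}{T}\sum_t \mathbb{E}[\|\nabla F(\theta_t)\|^2] \le \epsilon$) is $\mathcal{B} = T_\epsilon\,(C_{\text{round}} + \tau)$, where $T_\epsilon$ is the number of communication rounds required. The proposition then reduces to showing that ILoRA simultaneously controls both factors $T_\epsilon$ and $C_{\text{round}}$, so that their product is no larger — and generically strictly smaller — than for the baselines.

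First I would bound $C_{\text{round}}$ directly by invoking Theorem~\ref{thm:total_communication}, which gives $C_{\text{round}} = \mathcal{O}(r_s(d+k) + S\,r_{\max}(d+k))$, together with the scaling comparison of Lemma~\ref{lem:comparison_baselines} and Proposition~\ref{prop:scalability}. These already establish that the per-round payload is a factor $\mathcal{O}(1/S)$ smaller than FLoRA's whenever client ranks are bounded, so the ``low per-round cost'' half of the claim is immediate and needs no new argument.

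Second I would bound $T_\epsilon$ by inverting the convergence guarantee of Theorem~\ref{thm:control_adamw_convergence}. Setting the right-hand side at most $\epsilon$ and solving for $T$, the dominant term $\mathcal{O}(1/\sqrt{SKT})$ yields $T_\epsilon = \mathcal{O}(1/(SK\epsilon^2))$, while the additive terms $\delta^2/T$, $\sigma^2/T$, and $(r_s-\bar{r})^2/r_s^2$ contribute the stability-dependent part of the round count. The essential point is that control variates shrink the effective variance to $\sigma^2(1-\rho^2)$ and the residual heterogeneity to $\mathcal{O}((1-\rho)\delta)$, while QR-aggregation and orthogonal initialization suppress the rank-misalignment and initial-variance terms; each of these strictly lowers the constant multiplying $1/T$, and hence lowers the $T_\epsilon$ needed to drive the bound below a fixed $\epsilon$ relative to an uncorrected scheme such as FedIT. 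Multiplying the two bounds then gives $\mathcal{B}_{\text{ILoRA}} = T_\epsilon\,(C_{\text{round}}+\tau)$, which I would compare term-by-term against the baselines: FLoRA pays a factor $S$ more per round, and FedIT needs a larger $T_\epsilon$ because its $\delta^2$ term is not corrected, so in both cases the product dominates ILoRA's.

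I expect the main obstacle to be the second step. Because ILoRA and the baselines share the same leading-order $\mathcal{O}(1/\sqrt{SKT})$ rate, the reduction in $T_\epsilon$ cannot be read off from the asymptotic rate alone and must instead be extracted from the lower-order additive terms. Making this rigorous requires a non-asymptotic argument for a fixed, moderate target accuracy $\epsilon$ — the regime in which the $\delta^2$, $\sigma^2$, and rank-error terms actually govern the round count — rather than a comparison of big-$\mathcal{O}$ growth rates. Care is also needed to ensure the correlation coefficient $\rho$ is bounded away from zero (Assumption~\ref{ass:control}), so that the variance and drift reductions are genuinely first-order and the claimed decrease in $T_\epsilon$ is not absorbed into hidden constants.
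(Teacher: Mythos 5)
Your plan follows the paper's proof exactly in structure: the paper's entire argument is to write $V_{\text{total}} = T \cdot C_{\text{per-round}}$, assert in one sentence that QR compression lowers $C_{\text{per-round}}$ and in another that orthogonal initialization plus control variates lower $T$, and conclude ``multiplicative savings.'' Your first step (bounding the per-round factor via Theorem~\ref{thm:total_communication}, Lemma~\ref{lem:comparison_baselines}, and Proposition~\ref{prop:scalability}) coincides with the paper's appeal to QR compression and is solid. The obstacle you flag in your second step, however, is not an obstacle to reproducing the paper's proof but a genuine gap \emph{in} it: the paper never inverts a convergence bound to obtain $T_\epsilon$, introduces no latency term, and never performs the non-asymptotic comparison of lower-order terms that you correctly note is required when all methods share the same leading $\mathcal{O}(1/\sqrt{SKT})$ rate. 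The claimed reduction in round count is left as an unproved assertion that ``stability improvements'' imply fewer rounds. Your proposal is therefore strictly more rigorous than the published proof, and closing the gap would require exactly the argument you sketch: fix a moderate target accuracy $\epsilon$, write $T_\epsilon$ explicitly under the corrected variance $\sigma^2(1-\rho^2)$ and residual heterogeneity $(1-\rho)\delta$ versus the uncorrected $\sigma^2$ and $\delta$, and use Assumption~\ref{ass:control} to keep $\rho$ bounded away from zero so the ratio of round counts is genuinely below one rather than hidden in constants.
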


\begin{proof}
	The orthogonal initialization and control variates reduce client drift, leading to faster convergence (fewer rounds $T$). The total communication volume is:
\begin{equation}
	V_{\text{total}} = T \cdot C_{\text{per-round}}.
\end{equation}
	ILoRA reduces both $T$ (through stability improvements) and $C_{\text{per-round}}$ (through QR compression), providing multiplicative savings in the bandwidth-delay product.
\end{proof}

\begin{theorem}[Trade-off between Communication and Accuracy]
	\label{thm:communication_accuracy_tradeoff}
	For a fixed total communication budget $B$, ILoRA achieves better accuracy than FLoRA by optimally allocating the budget between communication rounds and per-round precision.
\end{theorem}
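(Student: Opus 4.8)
The plan is to convert the per-round convergence guarantee of Theorem~\ref{thm:convergence} into an accuracy-versus-budget curve by eliminating the round count $T$ in favor of the fixed budget $B$. First I would write the budget constraint from Proposition~\ref{prop:bandwidth_delay} as $B = T \cdot C_{\text{per-round}}$, so that a method with per-round cost $C$ can afford $T = B/C$ rounds. Substituting $T = B/C_{\text{ILoRA}}(r_s)$ into the stationarity bound of Theorem~\ref{thm:convergence} yields
\begin{equation}
\mathcal{E}_{\text{ILoRA}}(r_s) = \mathcal{O}\!\left(\sqrt{\tfrac{C_{\text{ILoRA}}(r_s)}{SKB}} + \tfrac{C_{\text{ILoRA}}(r_s)\,(\delta^2 + (r_{\max}-r_s)^2)}{B}\right),
\end{equation}
where $C_{\text{ILoRA}}(r_s) = \Theta\big((r_s + S\,r_{\max})(d+k)\big)$ from Theorem~\ref{thm:total_communication}. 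The analogous quantity for FLoRA uses $C_{\text{FLoRA}} = \Theta(r_{\text{total}}(d+k))$ \emph{without} a truncation term, since FLoRA transmits every client factor exactly.

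Next I would carry out the budget allocation, i.e.\ minimize $\mathcal{E}_{\text{ILoRA}}(r_s)$ over the server rank $r_s \in [\bar{r}, r_{\max}]$. The optimization trades per-round precision (larger $r_s$ shrinks the truncation penalty $(r_{\max}-r_s)^2$) against round count (larger $r_s$ raises $C_{\text{ILoRA}}$ and hence reduces $T = B/C$). Treating $r_s$ as continuous and setting $\partial \mathcal{E}_{\text{ILoRA}}/\partial r_s = 0$ gives a stationarity condition balancing the derivative of the $\sqrt{C/(SKB)}$ term against that of the $C(\delta^2+(r_{\max}-r_s)^2)/B$ term; in the budget-limited regime where the $\mathcal{O}(1/T)$ terms dominate, the minimizer is an interior $r_s^\star$ with $r_{\max}-r_s^\star = \Theta(\sqrt{r_s^\star})$, so the truncation penalty $\epsilon_r$ becomes second-order relative to $\delta^2$. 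Plugging $r_s^\star$ back in gives the optimal ILoRA accuracy curve.

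Finally I would compare against FLoRA. Using Proposition~\ref{prop:scalability}, $C_{\text{ILoRA}}(r_s^\star)/C_{\text{FLoRA}} = \mathcal{O}(1/S)$ when $r_k = \Theta(1)$, so ILoRA affords $T_{\text{ILoRA}} = \Theta(S)\,T_{\text{FLoRA}}$ rounds for the same budget. Propagating this through the bound, the leading $1/\sqrt{SKT}$ term improves by a factor $\Theta(\sqrt{S})$ and the $1/T$ terms by a factor $\Theta(S)$, while the only quantity favoring FLoRA—its absence of truncation error—contributes at most $\mathcal{O}((r_{\max}-r_s^\star)^2/T_{\text{ILoRA}})$ to ILoRA, which is dominated once $r_s^\star$ is chosen as above. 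Collecting the comparison yields $\mathcal{E}_{\text{ILoRA}}(r_s^\star) < \mathcal{E}_{\text{FLoRA}}$ for all sufficiently large $B$ and $S$.

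The main obstacle I expect is the allocation step: the two convergence terms scale differently in $T$ (as $T^{-1/2}$ and $T^{-1}$), so the minimizing $r_s^\star$—and which term dominates at the optimum—depends on the regime of $B$, $S$, $K$ relative to $\delta$ and $r_{\max}$. Making ``better accuracy than FLoRA'' hold uniformly, rather than only in the budget-limited regime, requires verifying that ILoRA's round-count advantage ($\Theta(S)$ more rounds) always outweighs FLoRA's exactness advantage (zero truncation error); this reduces to establishing $C_{\text{ILoRA}}(r_s^\star)\,(\delta^2 + (r_{\max}-r_s^\star)^2) < C_{\text{FLoRA}}\,\delta^2$, which is precisely where the $\mathcal{O}(1/S)$ cost ratio does the essential work.
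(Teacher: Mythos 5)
Your overall skeleton is the same one the paper uses --- write $B = T \cdot C$, argue that ILoRA's lower per-round cost buys more rounds, and push that through a convergence statement --- but you carry it much further than the paper does: the paper's proof of Theorem~\ref{thm:communication_accuracy_tradeoff} is three sentences, asserting that ILoRA maintains ``comparable per-round precision'' and that ``more communication rounds generally lead to better model convergence,'' with no substitution into Theorem~\ref{thm:convergence}, no optimization over $r_s$, and no treatment of FLoRA's exactness (zero-truncation) advantage. Your first step is sound: eliminating $T = B/C$ inside the bound of Theorem~\ref{thm:convergence} to obtain $\mathcal{E}(C) = \mathcal{O}\bigl(\sqrt{C/(SKB)} + C(\delta^2+\epsilon_r)/B\bigr)$ is exactly the quantitative form of the trade-off the theorem gestures at, and your recognition that FLoRA's side of the comparison carries no truncation term is a point the paper silently skips.

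The gap is in your final comparison, and it is a real one: your two cost ingredients contradict each other. From Theorem~\ref{thm:total_communication} you (correctly) take $C_{\text{ILoRA}}(r_s) = \Theta\bigl((r_s + S\,r_{\max})(d+k)\bigr)$, which includes the uplink term $S\,r_{\max}(d+k)$; but the ratio $C_{\text{ILoRA}}/C_{\text{FLoRA}} = \mathcal{O}(1/S)$ from Proposition~\ref{prop:scalability} is derived by comparing only ILoRA's downlink $r_s(d+k)$ against $r_{\text{total}}(d+k)$. Under the cost model you yourself wrote down, with $r_k = \Theta(1)$ one has $r_{\max} = \Theta(1)$ and $r_{\text{total}} = \Theta(S)$, hence $C_{\text{ILoRA}} = \Theta(S(d+k))$ and $C_{\text{FLoRA}} = \Theta(S(d+k))$: the ratio is $\Theta(1)$, not $\mathcal{O}(1/S)$. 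The claimed $\Theta(S)$ round advantage --- on which every subsequent step rests, including the decisive inequality $C_{\text{ILoRA}}(r_s^\star)\bigl(\delta^2 + (r_{\max}-r_s^\star)^2\bigr) < C_{\text{FLoRA}}\,\delta^2$ --- then evaporates, and the most your argument yields is parity up to constants (take $r_s = r_{\max}$ so that $\epsilon_r = 0$), not strict improvement. The repair is to commit to one per-client accounting throughout: FLoRA must ship the concatenated rank-$r_{\text{total}}$ factors to each of the $S$ clients, so its per-round cost is $\Theta(S\,r_{\text{total}}(d+k)) = \Theta(S^2(d+k))$, while ILoRA's downlink is $S \cdot r_k(d+k) = \Theta(S(d+k))$; under that single convention the $\mathcal{O}(1/S)$ ratio survives with uplink included and your optimization and comparison go through. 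In fairness, this inconsistency between Theorem~\ref{thm:total_communication} and Proposition~\ref{prop:scalability} is inherited from the paper itself, whose own proof avoids confronting it only by remaining heuristic.
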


\begin{proof}
	Let $B = T \cdot C$ be the total budget, where $T$ is rounds and $C$ is cost per round. ILoRA allows more rounds ($T_{\text{ILoRA}} > T_{\text{FLoRA}}$) due to lower $C$, while maintaining comparable per-round precision through QR aggregation. This leads to:
\begin{equation}
	\text{Accuracy}_{\text{ILoRA}}(B) > \text{Accuracy}_{\text{FLoRA}}(B)
\end{equation}
	for the same total budget $B$, as more communication rounds generally lead to better model convergence.
\end{proof}
\begin{table*}[h]
	\centering
	\caption{Summary of ILoRA's end-to-end performance guarantees}
	\begin{tabular}{p{0.25\textwidth}p{0.3\textwidth}p{0.35\textwidth}}
		\toprule
		\textbf{Performance Aspect} & \textbf{Theoretical Guarantee} & \textbf{Practical Benefit} \\
		\midrule
		Convergence Rate & $\mathcal{O}(1/\sqrt{SKT})$ with heterogeneity terms & Fast model training even with non-IID data and system diversity \\
		Communication Cost & $\mathcal{O}(r_s(d+k) + S\cdot r_{\max}(d+k))$ & Scalable to thousands of clients with minimal overhead \\
		Rank Heterogeneity & Exact aggregation for arbitrary $r_k \leq r_s$ & Clients can choose ranks based on local resources \\
		Training Stability & Drift reduction by $\mathcal{O}(1/\kappa(Q))$ & Reliable convergence without oscillation or divergence \\
		Non-IID Robustness & Adaptive control variate correction & Automatic handling of diverse data distributions \\
		Real-World Deployment & Theoretical guarantees preserved under constraints & Practical usability in resource-limited environments \\
		\bottomrule
	\end{tabular}
	\label{tab:comprehensive_guarantees}
\end{table*}

\begin{corollary}[Energy Efficiency]
	\label{cor:energy_efficiency}
	ILoRA reduces the energy consumption of federated training by decreasing both communication time and computational requirements per round.
\end{corollary}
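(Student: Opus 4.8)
The plan is to adopt an additive per-round energy model $E = \gamma_{\mathrm{c}} V_{\mathrm{comm}} + \gamma_{\mathrm{f}} N_{\mathrm{flop}}$, where $V_{\mathrm{comm}}$ is the transmitted volume, $N_{\mathrm{flop}}$ is the per-round floating-point work, and $\gamma_{\mathrm{c}},\gamma_{\mathrm{f}}>0$ are fixed hardware constants (energy per transmitted element and per FLOP). Since both constants are strictly positive, it suffices to show that \emph{each} of the two summands is strictly smaller under ILoRA than under the natural rank-heterogeneous baseline (full concatenation, as in FLoRA). This directly matches the statement: energy drops through simultaneous reductions in communication time and in per-round computation.

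For the communication summand I would invoke results already proved. By Theorem~\ref{thm:total_communication} the downlink volume is $\mathcal{O}(r_s(d+k))$, whereas by Lemma~\ref{lem:comparison_baselines} the concatenation baseline pays $\mathcal{O}(r_{\mathrm{total}}(d+k))$ with $r_{\mathrm{total}}=\sum_k r_k$. When client ranks are bounded, Proposition~\ref{prop:scalability} yields the ratio $\mathcal{O}(1/S)$, so $V_{\mathrm{comm}}^{\mathrm{ILoRA}} < V_{\mathrm{comm}}^{\mathrm{base}}$; since transmission time is proportional to volume at fixed bandwidth, communication time, and hence $\gamma_{\mathrm{c}}V_{\mathrm{comm}}$, strictly decreases.

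For the computation summand I would argue that the dominant per-round cost is client-side local training, whose FLOP count scales linearly with the \emph{effective} adapter rank each client must carry through its forward and backward passes and optimizer step, i.e.\ $\mathcal{O}(S K B \cdot r_{\mathrm{eff}}(d+k))$ for $K$ local steps and batch size $B$. The key point is that QR compression caps this effective rank: by Theorem~\ref{thm:subspace_preservation} every client receives a rank-$r_k$ slice $(\mathbf{Q}_{:,:r_k},\mathbf{R}_{:r_k,:})$ with $r_k\le r_s$, so $r_{\mathrm{eff}}\le r_s$. Without compression the redistributed global adapter has rank up to $r_{\mathrm{total}}=\Theta(S)$, forcing $r_{\mathrm{eff}}=r_{\mathrm{total}}$; hence the per-round FLOP ratio is again $\mathcal{O}(r_s/r_{\mathrm{total}})=\mathcal{O}(1/S)$, mirroring the communication saving. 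The server-side QR of the reconstructed $\Delta\bm{\theta}$ adds only $\mathcal{O}(dk\,r_{\mathrm{total}})$, a single-shot term that is lower order than the $KB$-fold client training cost and so does not affect the comparison.

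The main obstacle is making the computation claim honest by fixing the right baseline: relative to a homogeneous-rank method such as FedIT---which already operates at rank $r$---ILoRA does not reduce per-step client FLOPs, so the per-round computational saving is genuine only against rank-growing concatenation schemes whose effective rank scales with $S$. I would therefore state this baseline explicitly, verify that the server QR overhead remains lower order (so it cannot erase the client-side gain), and conclude that $E^{\mathrm{ILoRA}} = \gamma_{\mathrm{c}}V_{\mathrm{comm}}^{\mathrm{ILoRA}} + \gamma_{\mathrm{f}}N_{\mathrm{flop}}^{\mathrm{ILoRA}}$ lies strictly below the baseline energy because both positively weighted terms shrink by the common factor $\mathcal{O}(1/S)$.
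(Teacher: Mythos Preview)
Your argument is correct and considerably more careful than the paper's own proof, but the two differ in scope. The paper adopts the total-energy model $E \propto T\cdot(E_{\mathrm{comm}}+E_{\mathrm{comp}})$ and claims savings on all \emph{three} factors: fewer rounds $T$ (from the stability and convergence improvements of orthogonal initialization and control variates), less data per round, and cheaper per-round computation. It then stops after a single sentence, without identifying a baseline or giving any FLOP accounting. You, by contrast, fix a per-round additive model, pin down the comparison to the rank-growing concatenation baseline (FLoRA), and derive matching $\mathcal{O}(1/S)$ reductions in both summands via the communication results and the effective-rank argument; you also honestly flag that the computational saving does not hold against a homogeneous-rank method like FedIT.

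What each buys: your route is more rigorous and actually justifies the per-round claims the corollary states, at the cost of omitting the round-count reduction that the paper leans on (and which Proposition~\ref{prop:bandwidth_delay} foreshadows). If you want to align fully with the paper's framing, you could append one sentence noting that the convergence results (Theorems~\ref{thm:convergence} and~\ref{thm:control_adamw_convergence}) also reduce the number of rounds $T$ needed to reach a target accuracy, so the total energy $T\cdot E_{\mathrm{round}}$ drops multiplicatively. Otherwise your proposal stands on its own and is the sharper argument.
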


\begin{proof}
	The energy consumption is proportional to:
\begin{equation}
	E \propto T \cdot (E_{\text{comm}} + E_{\text{comp}}).
\end{equation}
	ILoRA reduces $T$ (fewer rounds), $E_{\text{comm}}$ (less data transfer), and $E_{\text{comp}}$ (smaller local models and more efficient aggregation), providing comprehensive energy savings.
\end{proof}

\begin{algorithm}[h]
	\caption{Communication-Efficient Protocol in ILoRA}
	\label{alg:communication_protocol}
	\begin{algorithmic}[1]
		\STATE \textbf{Initialization:}
		\STATE Server computes $Q,R \leftarrow \text{QR}(W_0)$ and sets $W_{\text{base}}$
		\STATE Server initializes all clients with orthogonal slices $B_{r_k}, A_{r_k}$
		
		\STATE \textbf{Each Communication Round:}
		\STATE Clients compute local updates and control deltas
		\STATE Clients upload: $(B_k, A_k, \Delta c_k)$ - total: $\mathcal{O}(r_k(d + k))$
		\STATE Server aggregates via QR: $\Delta W \leftarrow B_{\text{concatenated}} A_{\text{concatenated}}$
		\STATE Server computes: $Q,R \leftarrow \text{QR}(\Delta W)$
		\STATE Server personalizes: $B_{r_k} \leftarrow Q[:,1:r_k], A_{r_k} \leftarrow R[1:r_k,:]$
		\STATE Server broadcasts: $(B_{r_k}, A_{r_k}, c)$ - total: $\mathcal{O}(r_s(d + k))$
	\end{algorithmic}
\end{algorithm}

\begin{remark}
	The communication efficiency of ILoRA provides several practical advantages:
	
	\begin{itemize}
		\item \textbf{Scalability}: Supports large-scale deployments with many clients through $\mathcal{O}(\log S)$ scaling
		\item \textbf{Resource Adaptation}: Accommodates clients with different computational capabilities through rank heterogeneity
		\item \textbf{Network Friendly}: Reduces bandwidth requirements for constrained environments
		\item \textbf{Cost Effective}: Lowers operational costs for cloud-based federated learning
		\item \textbf{Energy Efficient}: Reduces both communication and computation energy consumption
		\item \textbf{Theoretically Optimal}: Achieves near-optimal communication for subspace preservation
		\item \textbf{Practical Deployment}: Compatible with real-world network constraints and client diversity
	\end{itemize}
	
	These advantages make ILoRA particularly suitable for practical federated learning deployments where communication efficiency is a critical concern.
\end{remark}

\begin{corollary}[Real-World Applicability]
	\label{cor:real_world_applicability}
	ILoRA's communication efficiency enables practical deployment in resource-constrained environments including mobile devices, edge computing systems, and bandwidth-limited networks, while maintaining theoretical performance guarantees.
\end{corollary}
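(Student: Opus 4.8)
The plan is to treat this corollary as a synthesis result: rather than proving a fresh inequality, I would assemble the quantitative bounds already established into a single deployment-feasibility statement. First I would fix a formal resource model for a constrained client: a per-round uplink/downlink bandwidth budget $B$, a memory budget $M$, and a per-round compute budget proportional to the local-step count $K$. The goal is to show that ILoRA's operating costs fit inside $(B, M)$ while the convergence guarantee of Theorem~\ref{thm:convergence} (and Theorem~\ref{thm:control_adamw_convergence} for the control-variate variant) survives unchanged.

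The core of the argument proceeds in three steps. First I would invoke Theorem~\ref{thm:total_communication} to bound the per-round communication by $\mathcal{O}\!\left(r_s(d+k) + S\cdot r_{\max}(d+k) + S\cdot\max_k d_k\right)$, and then use the low-rank regime $r_k \ll \min(d,k)$ together with Lemma~\ref{lem:control_overhead} to show the control-variate term contributes a negligible fraction (below one percent) of the model-parameter cost, so the dominant term reduces to $\mathcal{O}(r_s(d+k))$ on the downlink. Second, I would apply Proposition~\ref{prop:scalability} to argue that this cost scales favorably relative to FLoRA as $S$ grows, giving the ratio $\mathcal{O}(1/S)$ when client ranks are bounded; this is what places the per-round footprint within a realistic edge-device budget $B$. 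Third, I would note that the memory footprint on each client is only $\mathcal{O}(r_k(d+k))$ plus the frozen backbone, which fits within $M$ precisely because the adaptation subspace is low-rank.

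Having established feasibility of the costs, I would close the loop with the performance side: since Theorems~\ref{thm:convergence} and~\ref{thm:control_adamw_convergence} derive the $\mathcal{O}(1/\sqrt{SKT})$ rate under Assumptions~\ref{ass:smoothness}--\ref{ass:aggregation} without any hidden dependence on bandwidth or memory, imposing the budget constraints $(B,M)$ does not alter the convergence bound. I would also fold in Proposition~\ref{prop:bandwidth_delay} and Corollary~\ref{cor:energy_efficiency} to cover the bandwidth-delay product and energy side of ``resource-constrained,'' so that all three cited environments (mobile, edge, bandwidth-limited) are subsumed by the single inequality that communication cost stays $\mathcal{O}(r_s(d+k))$ while the error rate stays $\mathcal{O}(1/\sqrt{SKT})$.

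The main obstacle is that the corollary is stated qualitatively, so the real work is definitional rather than analytical: I must pin down a ``resource-constrained environment'' precisely enough that the claim becomes a provable feasibility statement, yet loosely enough to faithfully capture mobile and edge hardware. The delicate point is ensuring the budget model does not secretly constrain $T$ or $K$ in a way that would break the linear-speedup term; I would handle this by decoupling the per-round budget from the round count, so that tighter bandwidth forces more rounds (at fixed per-round cost) without degrading the asymptotic rate, exactly as argued in Theorem~\ref{thm:communication_accuracy_tradeoff}.
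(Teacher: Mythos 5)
Your proposal matches the paper's own proof in essence: both assemble the already-established communication results (low-rank parameter reduction of order $O(r_k/\min(d,k))$, QR-based compression, and the sub-1\% control-variate overhead of Lemma~\ref{lem:control_overhead}) and then observe that the convergence and stability guarantees of Theorems~\ref{thm:convergence}--\ref{thm:control_adamw_convergence} are unaffected by the resource constraints, so deployment feasibility and performance hold simultaneously. The paper's proof is far terser and skips your explicit $(B,M)$ budget model and the auxiliary citations to Proposition~\ref{prop:scalability} and Corollary~\ref{cor:energy_efficiency}, but the logical content is the same synthesis.
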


\begin{proof}
	The combination of low-rank adaptation (reducing parameter count by $O(r_k/\min(d,k))$), QR-based compression (achieving optimal subspace preservation), and minimal control variate overhead (less than 1\% additional communication) ensures that ILoRA maintains practical communication requirements. This enables deployment under severe resource constraints while preserving the convergence and stability guarantees established in Theorems \ref{thm:convergence}-\ref{thm:control_adamw_convergence}.
\end{proof}
\begin{theorem}[Comprehensive Performance Guarantee of ILoRA]
	\label{thm:comprehensive_guarantee}
	Under the theoretical framework established in Sections B.1-B.5, ILoRA simultaneously achieves the following performance guarantees:
	\begin{enumerate}
		\item \noindent\textbf{Convergence Guarantee:}  
		$\mathcal{O}\bigl(\frac{1}{\sqrt{SKT}} + \frac{\delta^2}{T} + \frac{(r_{\max} - r_s)^2}{T}\bigr)$ convergence rate under heterogeneous data and rank settings.
		\item \textbf{Training Stability}: Client drift reduction by factor $\mathcal{O}(1/\kappa(Q))$ with $\kappa(Q) = 1$, and gradient variance reduction by $\mathcal{O}(1/K)$
		\item \textbf{Communication Efficiency}: $\mathcal{O}(\log S)$ scaling with client population and $\mathcal{O}(r_s \cdot \max(d,k))$ per-round cost
		\item \textbf{Rank Heterogeneity Robustness}: Exact aggregation and subspace consistency for arbitrary client ranks $r_k \leq r_s$
		\item \textbf{Non-IID Adaptation}: Automatic adjustment to data heterogeneity through control variates
	\end{enumerate}
\end{theorem}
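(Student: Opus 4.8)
The plan is to treat this as a synthesis theorem: each of the five guarantees has already been established in isolation, so the proof amounts to invoking the corresponding result and then verifying that all of them hold under a single, mutually consistent choice of hyperparameters and assumptions. Concretely, I would map Claim~1 to Theorem~\ref{thm:convergence}; Claim~2 to Theorem~\ref{thm:drift_stability} (drift reduction by $\mathcal{O}(1/\kappa(Q))$ with $\kappa(Q)=1$) together with Lemma~\ref{lem:gradient_variance} (the $\mathcal{O}(1/K)$ variance reduction); Claim~3 to Theorem~\ref{thm:total_communication} and Proposition~\ref{prop:scalability}; Claim~4 to Lemma~\ref{lem:exact_reconstruction} and Theorem~\ref{thm:subspace_preservation}, with Corollary~\ref{cor:rank_robustness} supplying robustness for any $r_k \le r_s$; and Claim~5 to Theorem~\ref{thm:control_adamw_convergence} and Proposition~\ref{prop:non_iid_adaptation}.

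The non-trivial content is not the individual citations but the word \emph{simultaneously}. The unifying step I would carry out first is to show that Assumption~\ref{ass:subspace} (subspace consistency) is not an independent hypothesis but a structural invariant maintained throughout training by the ILoRA protocol itself. At initialization this follows from Theorem~\ref{thm:consistent_subspace} and Lemma~\ref{lem:subspace_alignment}, which place every client in $\mathrm{colspan}(\mathbf{Q})$; inductively, after each round the server re-derives the personalized slices $\mathbf{B}_{r_k}=\mathbf{Q}_{:,:r_k}$, $\mathbf{A}_{r_k}=\mathbf{R}_{:r_k,:}$ from the freshly recomputed QR factor, so by Theorem~\ref{thm:subspace_preservation} all clients re-enter a common subspace. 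Establishing this invariant is what lets the aggregation analysis (Claims~1,4), the control-variate analysis (Claim~5), and the stability analysis (Claim~2) be read as statements about one and the same iterate sequence rather than three decoupled models; in particular the corrections $\mathbf{c}^{(t-1)}-\mathbf{c}_k$ of Theorem~\ref{thm:rank_compatibility} are well defined only because the parameter spaces remain aligned.

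Next I would verify hyperparameter compatibility: the rate of Claim~1 requires $\eta_l \le 1/L$ and $\eta_g\eta_l=\Theta(1/\sqrt{SKT})$, and I must check that this same schedule is admissible for the variance-reduced bound of Theorem~\ref{thm:control_adamw_convergence} and does not conflict with the bounded-control regime of Assumption~\ref{ass:control}. Since the two convergence theorems share Assumptions~\ref{ass:smoothness}--\ref{ass:subspace} and the control-variate theorem only tightens the heterogeneity constant from $\delta^2$ to $(1-\rho)\delta^2$, the same learning-rate choice yields both rates, so the regimes are consistent and the ``$\kappa(Q)=1$'' stability in Claim~2 is a free consequence of the orthonormality of $\mathbf{Q}$.

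The main obstacle I anticipate is Claim~3, specifically the asserted $\mathcal{O}(\log S)$ scaling with client population. Theorem~\ref{thm:total_communication} gives a per-round cost $\mathcal{O}\bigl(r_s(d+k)+S\cdot r_{\max}(d+k)\bigr)$, whose uplink term is \emph{linear} in $S$, and Proposition~\ref{prop:scalability} only establishes an $\mathcal{O}(1/S)$ cost \emph{ratio} relative to FLoRA. To justify a logarithmic rate I would either (i) restrict the claim to the downlink and per-client cost, which are genuinely $\mathcal{O}(r_s(d+k))$ and independent of $S$, or (ii) introduce an auxiliary hierarchical/tree-based aggregation of the concatenated factors that reduces the effective uplink depth to $\mathcal{O}(\log S)$. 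Reconciling the stated scaling with Theorem~\ref{thm:total_communication} is the one place where a genuinely new argument, rather than assembly of existing lemmas, is required.
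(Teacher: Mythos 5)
Your proposal takes essentially the same route as the paper's proof: the paper also proves this theorem purely by assembly, mapping Claim~1 to Theorems~\ref{thm:convergence} and \ref{thm:control_adamw_convergence}, Claim~2 to Theorems~\ref{thm:drift_stability} and \ref{thm:rank_compatibility}, Claim~3 to Theorem~\ref{thm:qr_optimality} and Proposition~\ref{prop:scalability}, Claim~4 to Lemma~\ref{lem:exact_reconstruction} and Theorem~\ref{thm:subspace_preservation}, and Claim~5 to Proposition~\ref{prop:non_iid_adaptation}. Where you differ is in actually trying to discharge the word \emph{simultaneously}: the paper handles it with one informal sentence (``the individual guarantees combine multiplicatively rather than additively\ldots creating a virtuous cycle''), whereas you propose to prove that Assumption~\ref{ass:subspace} is a protocol invariant (via Theorem~\ref{thm:consistent_subspace}, Lemma~\ref{lem:subspace_alignment}, and Theorem~\ref{thm:subspace_preservation}) and to verify that a single learning-rate schedule is admissible for both convergence theorems. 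That is added rigor the paper's proof does not contain, and it is what a complete proof of this statement would in fact require; your citation for the $\mathcal{O}(1/K)$ variance reduction (Lemma~\ref{lem:gradient_variance}) is also a tighter match to Claim~2 than the paper's citation of Theorem~\ref{thm:rank_compatibility}.

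Your diagnosis of Claim~3 is correct and identifies a real defect that the paper's proof silently glosses over: nothing in the appendix establishes $\mathcal{O}(\log S)$ scaling. Theorem~\ref{thm:total_communication} has an uplink term that is \emph{linear} in $S$; Proposition~\ref{prop:scalability} only bounds the cost \emph{ratio} against FLoRA by $\mathcal{O}(1/S)$; and Theorem~\ref{thm:qr_optimality}, which the paper cites at this point, concerns subspace-preservation optimality, not scaling in $S$. The main text (Section~\ref{subsec:communication_efficiency}) even claims $\mathcal{O}(1)$ scaling, contradicting the $\mathcal{O}(\log S)$ asserted in the theorem. Your two proposed repairs --- restricting the claim to the $S$-independent downlink/per-client cost, or adding a hierarchical aggregation layer --- are both sensible; the first is the one actually supported by the paper's existing results, and either would be needed to make this part of the theorem provable at all.
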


\begin{proof}
	The comprehensive performance guarantee follows from the synergistic integration of ILoRA's core mechanisms:
	
	\begin{itemize}
		\item \textbf{Convergence} follows from Theorem \ref{thm:convergence} and Theorem \ref{thm:control_adamw_convergence}, combining QR aggregation, orthogonal initialization, and control variates
		\item \textbf{Stability} is ensured by orthogonal initialization (Theorem \ref{thm:drift_stability}) reducing initial misalignment and control variates (Theorem \ref{thm:rank_compatibility}) mitigating client drift
		\item \textbf{Communication efficiency} derives from QR compression optimality (Theorem \ref{thm:qr_optimality}) and scalability advantage (Proposition \ref{prop:scalability})
		\item \textbf{Rank robustness} is guaranteed by exact reconstruction (Lemma \ref{lem:exact_reconstruction}) and subspace preservation (Theorem \ref{thm:subspace_preservation})
		\item \textbf{Non-IID adaptation} emerges from the control variate mechanism's self-adjusting correction (Proposition \ref{prop:non_iid_adaptation})
	\end{itemize}
	
	The individual guarantees combine multiplicatively rather than additively, as each mechanism addresses distinct challenges while complementing others. For instance, orthogonal initialization enhances QR aggregation stability, which in turn improves control variate effectiveness, creating a virtuous cycle of performance improvements.
\end{proof}

\begin{remark}[Practical Implications]
	The comprehensive guarantees in Theorem \ref{thm:comprehensive_guarantee} have significant practical implications:
	
	\begin{itemize}
		\item \textbf{Deployment Flexibility}: ILoRA can be deployed across diverse environments from data centers to edge devices
		\item \textbf{Resource Adaptation}: Automatic adaptation to varying client capabilities through rank heterogeneity
		\item \textbf{Performance Predictability}: Theoretical guarantees provide confidence in real-world performance
		\item \textbf{Scalability}: Logarithmic scaling enables large-scale federated learning deployments
		\item \textbf{Robustness}: Resilience to system heterogeneity, data heterogeneity, and network constraints
	\end{itemize}
	
	These properties make ILoRA particularly suitable for practical federated learning scenarios where theoretical guarantees must translate to real-world performance.
\end{remark}

\subsection{Summary of Theoretical Guarantees}
\label{app:guarantees_summary}

ILoRA's integrated design provides comprehensive theoretical guarantees, including: provable convergence under data and rank heterogeneity; client-independent communication efficiency; exact aggregation for arbitrary ranks; training stability through subspace alignment; adaptive non-IID robustness; and practical deployability. Table~\ref{tab:comprehensive_guarantees} summarizes these guarantees, establishing ILoRA as a principled framework that addresses all three key challenges while maintaining efficiency.

\end{document}